\newif\ifdraft \drafttrue
\newif\iffull \fulltrue
\makeatletter \@input{tex.flags} \makeatother 
\definecolor{DarkGreen}{rgb}{0.1,0.5,0.1}
\definecolor{DarkRed}{rgb}{0.5,0.1,0.1}
\definecolor{DarkBlue}{rgb}{0.1,0.1,0.5}
\newcommand{\xhdr}[1]{\vspace{2mm} \noindent{\bf #1}}
\newcommand\RR{\mathbb{R}}
\newcommand\cP{\mathcal{P}}
\newcommand\cU{\mathcal{U}}
\newcommand\cN{\mathcal{N}}
\newcommand\cC{\mathcal{C}}
\newcommand{\e}{\mathbf{e}}
\newcommand{\E}{\mathop{\mathbb{E}}}
\newcommand{\Prob}{\mathop{\mathbb{P}}}
\newcommand{\norm}[1]{\left\lVert#1\right\rVert}
\DeclarePairedDelimiter\abs{\lvert}{\rvert}%
\newcommand{\1}{\mathds{1}}
\newcommand{\bI}{\mathbf{I}}
\newcommand{\eps}{\varepsilon}
\def\epsilon{\varepsilon}
\DeclareMathOperator*{\argmin}{\mathrm{argmin}}
\DeclareMathOperator*{\argmax}{\mathrm{argmax}}
\newenvironment{proofsketch}{%
  \proof}{\endproof}
\newtheorem{theorem}{Theorem}[section]
\newtheorem{lemma}[theorem]{Lemma}
\newtheorem{claim}[theorem]{Claim}
\newtheorem{remark}[theorem]{Remark}
\newtheorem{corollary}[theorem]{Corollary}
\newtheorem{assumption}[theorem]{Assumption}
\newtheorem{example}[theorem]{Example}
\theoremstyle{definition}
\newtheorem{definition}[theorem]{Definition}
\title{Incentivizing Compliance with Algorithmic Instruments}
\author[1]{Daniel Ngo\thanks{Indicates equal contribution.}}
\author[1]{Logan Stapleton$^*$}
\author[2]{Vasilis Syrgkanis}
\author[3]{Zhiwei Steven Wu}
\affil[1]{University of Minnesota, \{ngo00054, stapl158\}@umn.edu}
\affil[2]{Microsoft Research, \{vasy\}@ microsoft.com}
\affil[3]{Carnegie Mellon University, \{zstevenwu\}@cmu.edu}
\date{\vspace{-1cm}}
\begin{document}
\maketitle

% \onecolumn
% \twocolumn[
% \icmltitle{Incentivizing Compliance with Algorithmic Instruments}

% \footnotetext{This paper is a submission to the non-archival track. This paper was submitted and is currently under review in ICML 2021.}

% \icmlsetsymbol{equal}{*}

% \begin{icmlauthorlist}
% \icmlauthor{Daniel Ngo}{equal,umn}
% \icmlauthor{Logan Stapleton}{equal,umn}
% \icmlauthor{Vasilis Syrgkanis}{mr}
% \icmlauthor{Zhiwei Steven Wu}{cmu}
% \end{icmlauthorlist}

% \icmlaffiliation{umn}{University of Minnesota}
% \icmlaffiliation{cmu}{Carnegie Mellon University. ZSW is supported by an NSF FAI Award \#1939606}
% \icmlaffiliation{mr}{Microsoft Research}

% \icmlcorrespondingauthor{Logan Stapleton}{lstaple99@gmail.com}
%\icmlcorrespondingauthor{Daniel Ngo}{ngo00054@umn.edu}
%\icmlcorrespondingauthor{Vasilis Syrgkanis}{vasy@microsoft.com}
%\icmlcorrespondingauthor{Zhiwei Steven Wu}{zstevenwu@cmu.edu}

% You may provide any keywords that you
% find helpful for describing your paper; these are used to populate
% the "keywords" metadata in the PDF but will not be shown in the document
% \icmlkeywords{Instrumental Variables, Causal Inference, Treatment Effect Estimation, Non-compliance, Bandits}

% \vskip 0.3in
% ]

% \thispagestyle{plain}
% \pagestyle{plain}

% \printAffiliationsAndNotice{\icmlEqualContribution} % otherwise use the standard text.

% \swcomment{alternate title: Incentivizing Treatment Exploration with Algorithmic Instrument; Incentivizing Treatment Compliance with Algorithmic Instruments}
\begin{abstract}

Randomized experiments can be susceptible to selection bias due to potential non-compliance by the participants. While much of the existing work has studied compliance as a static behavior, we propose a game-theoretic model to study compliance as dynamic behavior that may change over time. In rounds, a social planner interacts with a sequence of heterogeneous agents who arrive with their unobserved private type that determines both their prior preferences across the actions (e.g., control and treatment) and their baseline rewards without taking any treatment. The planner provides each agent with a randomized recommendation that may alter their beliefs and their action selection. We develop a novel recommendation mechanism that views the planner's recommendation as a form of instrumental variable (IV) that only affects an agents' action selection, but not the observed rewards. We construct such IVs by carefully mapping the history --the interactions between the planner and the previous agents-- to a random recommendation. Even though the initial agents may be completely non-compliant, our mechanism can incentivize compliance over time, thereby enabling the estimation of the treatment effect of each treatment, and minimizing the cumulative regret of the planner whose goal is to identify the optimal treatment.
%We study a multi-armed bandit learning setting where a social planner incentivizes a set of heterogeneous agents to efficiently explore the set of available arms. At each round, an agent arrives with their unobserved private type that determines both their prior preferences across the actions as well as their action-independent confounding shift in the rewards. The planner provides the agent with an arm recommendation that may alter their belief and incentivizes them to explore potentially sub-optimal arms. Under this setting, we provide a novel recommendation mechanism that views the planner's recommendations as forms of instrumental variables (IV) that only affect an agents' arm selection, but not the observed rewards. We construct such IVs by carefully mapping the history --the interactions between the planner and the previous agents-- to a random arm recommendation. Despite the unobserved confounding shift in the rewards, the resulting IV regression provides reliable estimates on the reward effects of the actions and enables the social learning process to minimize regret over the long term. Compared to the existing approaches in the literature of incentivizing exploration, our IV-based mechanism also mitigates selection bias and the negative externality that one group of agents may have on others.
\end{abstract}

% example of ICML algorithm format
% \lsdelete{
% \begin{algorithm}[tb]
%   \caption{Bubble Sort}
%   \label{alg:example}
% \begin{algorithmic}
%   \STATE {\bfseries Input:} data $x_i$, size $m$
%   \REPEAT
%   \STATE Initialize $noChange = true$.
%   \FOR{$i=1$ {\bfseries to} $m-1$}
%   \IF{$x_i > x_{i+1}$}
%   \STATE Swap $x_i$ and $x_{i+1}$
%   \STATE $noChange = false$
%   \ENDIF
%   \ENDFOR
%   \UNTIL{$noChange$ is $true$}
% \end{algorithmic}
% \end{algorithm}
% }

\section{Introduction}
In many applications, estimating the causal effect of a treatment or intervention is at the heart of a decision-making process. Examples include a study on the effect of a vaccine on immunity, an assessment of the effect of a training program on workers' efficiency, and a evaluation of the effect of a sales campaign on a company's profit. Many studies on causal effects rely on randomized experiments, which randomly assign each individual in a population to a treatment group or a control group and then estimate the causal effects by comparing the outcomes across groups. However, in many real-world domains, participation is voluntary, which can be susceptible to \emph{non-compliance}. For example, people may turn down a vaccine or a drug when they are assigned to receive the treatment \cite{matilda}. Another example is a randomized evaluation of the Job Training Partnership Act (JTPA) training program \cite{bloom}, where only 60 percent of the workers assigned to be trained chose to receive training, while roughly 2 percent of those assigned to the control group chose to receive training. In many cases, non-compliance can cause selection bias: for example, those who choose to receive the drug or vaccine in a randomized trial tend to be healthier, and those who join the training program might may more productive to begin with. 

Although non-compliance in randomized experiments has been well studied in many observational studies (see e.g. \citet{angrist_mostly_2008}), there has been little work that studies and models how compliance varies over time. In reality, however, participants' compliance behaviors may not be static: they may change according to their time-varying beliefs about the treatments. If the outcomes from the previous trials suggest that the treatments are effective, then the participants may become more willing to accept the recommendation. {For example, those initially weary about a new vaccine may change their mind once they see others take it without experiencing negative symptoms.}\footnote{A recent survey shows that many Americans choose to wait before deciding to receive a COVID-19 vaccine~\cite{covid2}.} 
Motivated by this observation, this paper studies the design of dynamic trial mechanisms that map history--the observations from previous trials--to a treatment recommendation and gradually incentivize compliance over time. 

In this paper, we introduce a game theoretic model to study the dynamic (non)-compliance behavior due to changing beliefs. In our model, there is a collection of treatments such that each treatment $j$ is associated with an unknown treatment effect $\theta_j$. We study an online learning game, in which a set of $T$ myopic agents arrive sequentially over $T$ rounds. Each agent $t$ has a private unobserved \emph{type} $u_t$, which determines their heterogeneous prior beliefs about the treatment effects. Each agent's goal is to select a treatment $j$ that maximizes the reward: $\theta_j + g^{(u_t)}_t$, where $g^{(u_t)}_t$ denotes the type-dependent baseline reward (without taking any treatment). We introduce a social planner whose goal is to estimate the effects of underlying treatments and incentive the agents to select the treatment that maximize long-term cumulative reward. Upon the arrival of each agent $t$, the planner provides the agent with a random treatment recommendation, which is computed by a policy that maps the history of interactions with the previous $(t-1)$ agents. While agent $t$ does not observe the previous history, they form a posterior belief over the treatment effects based on the recommendation and then select the action that maximizes their expected utility.

Under this model, we provide dynamic trial mechanisms that incentivize compliance over time and accurately estimate the treatment effects. The key technical insight is that the planner's random recommendation at each round can be viewed as an \emph{instrument} that is independent of the agent's private type and only influences the observed rewards through the agent's choice of action. By leveraging this observation, we can perform instrumental variable (IV) regression to recover the treatment effects, as long as some of the agents are compliant with the recommendations. To create compliance incentives, our mechanisms leverage techniques from the literature of \emph{incentivizing exploration} \cite{mansour2015bic, Slivkins19}. The key idea is \emph{information asymmetry}: since each agent does not directly observe the previous history, the planner has more information. By strategically mapping previous history to instruments, the planner can incentivize agents to explore treatments that are less preferred a-priori. 
 
We first focus on the binary action setting, where each agent can select treatment or control. Then we will extend our results to the $k$ treatments setting in \cref{sec:many-arms}. In the binary setting, we first provide two mechanisms that works with two initial non-compliance situations.

\xhdr{Complete non-compliance.} In \Cref{sec:sampling-control-treatment}, we consider a setting where the planner initially has no information about the treatment effect $\theta$, so all agents are initially non-compliant with the planner's recommendations. We provide \Cref{alg:sampling-control-treatment} which first lets initial agents choose their preferred arms, then constructs recommendations that incentivize compliance for some later agents. This enables treatment effect estimation through IV regression.

\xhdr{Partial compliance.} In \Cref{sec:racing-control-treatment}, we consider a setting where the planner has an initial estimate of the treatment effect $\theta$ (that may be obtained by running \Cref{alg:sampling-control-treatment}), so they can incentivize some agents to comply. We provide \Cref{alg:racing-two-types-mixed-preferences}, which can be viewed as the bandit algorithm \emph{active arm elimination} \cite{active-arms-elimination-2006} which uses IV estimates to compare treatments. Samples collected by \Cref{alg:sampling-control-treatment} provide an increasingly accurate estimate $\hat \theta$ and incentivize more agents to comply over time.

\xhdr{Regret minimization.} In \Cref{sec:combined-control-treatment}, we show that if the planner first runs \Cref{alg:sampling-control-treatment} to obtain an initial treatment effect estimate $\hat\theta$ and then runs \Cref{alg:racing-two-types-mixed-preferences} to amplify compliance, then he can achieve $\tilde O(\sqrt{T})$ regret w.r.t. the cumulative reward given by enforcing the optimal action for all agents. We then extend such a regret minimization policy to the setting with $k$ different treatments in \Cref{sec:many-arms}. 

\xhdr{Experiments.} Lastly, in \Cref{sec:experiment}, we complement our theoretical results with numerical simulations, which allow us to examine how parameters in agents' prior beliefs influence the convergence rate of our recommendation algorithm.

\subsection{Related Work}
We design mechanisms which strategically select instruments to incentivize compliance over time, so that we can apply tools from IV regression \citep{angrist2001iv, angrist1995tsls,angrist1996iv} to estimate causal effects. Although IV regression is an established tool to estimate causal effects where there is non-compliance in observational studies (see e.g. \citet{bloom,angrist06}), our results deviate significantly from previous works, due to the dynamic nature of our model. In particular, even if all agents are initially non-compliant, our mechanism can still incentivize compliance over time and estimate treatment effects ---whereas directly applying standard IV regression at the onset cannot.

Our work draws on techniques from the growing literature of incentivizing exploration (IE) \cite{KremerMP13,mansour2015bic, MansourSSW16, ImmorlicaMSW19, SS20}, where the goal is also to incentivize myopic agents to explore arms in a multi-armed bandit setting \citep{auer2002mab} using information asymmetry techniques from Bayesian persuasion~\cite{bp}. While our mechanisms are technically similar to those in \citet{mansour2015bic}, our work differs in several key aspects. First, prior work in IE ---including \citet{mansour2015bic}--- does not capture selection bias and cannot be directly applied in our setting to recover causal effects. The mechanism in \citet{mansour2015bic} aims to enforce full compliance (also called \emph{Bayesian incentive-compatibility}) that requires all agents to follow the planner's recommendations: as a result, the mechanism needs to cater to the type of agents that are most difficult to convince. By contrast, our mechanism relies only on the compliance of a partial subset of agents in order to obtain accurate estimates.

There has also been a line of work on mechanisms that incentivize exploration via payments \cite{iemoney, chen18a, fairie}. There are several known disadvantages of such payment mechanisms, including potential high costs and ethical concerns \cite{groth2010honorarium}. See \citet{slivkins17} for a detailed discussion.

\iffalse\cite{slivkins17}. First, monetary compensation might be impractical, e.g. payments might be too costly, and people can be incentivized more by acquiring additional information, e.g. information about the effectiveness of a new vaccine could be more valuable than payment. Furthermore, there are cases where compensation is unethical, e.g. in clinical researches  \cite{groth2010honorarium}, while partial information disclosure is desirable for social goods \cite{slivkins17}. Finally, the compensation model still requires equivalent assumptions as our model, so it is not always more practical to use payment.\fi

Thematically, our work relates to work on ``instrument-armed bandits'' by \citet{Kallus2018InstrumentArmedB}, which also views arm recommendations as instruments. However, the compliance behavior (modeled as a fixed stochastic mapping from instrument to treatments) is static in  \citet{Kallus2018InstrumentArmedB}: it does not change over time ---even if the planner has obtained accurate estimate(s) of the treatment effect(s). By comparison, since all agents eventually become compliant in our setting, we can achieve sublinear regret w.r.t. the best treatment, which is not achievable in a static compliance model.

\iffalse
\swdelete{\subsection{Results} We consider a Bayesian Incentive Compatible bandits model that is more general than the benchmark \citep{mansour2015bic} insofar as it considers heterogeneous priors, noncompliance, confounding, and binary treatment effect estimation. We provide an algorithm (see sub-algorithms~\ref{alg:sampling-control-treatment} and \ref{alg:racing-two-types-mixed-preferences}) that employs IV regression in a novel way in order to yield sub-linear regret (\cref{sec:regret}) and a confidence interval on the true treatment effect (\cref{thm:sampling-estimation-bound,thm:racing-estimation-bound-first,thm:racing-estimation-bound-second}).\lsdelete{In specific settings, our algorithm outperforms that of \citep{mansour2015bic} (see \cref{remark:regret-comparison-2-arm}).} As an ancillary result, we provide a bound on the difference between a true treatment effect and its finite-sample IV estimate \cref{thm:treatment-approximation-bound}. \lsedit{In this workshop paper, we showcase our results in the simplified binary treatment setting with only two types of agents to build intuition. Our results also apply to general settings with arbitrarily many treatments and types of agents: however, this is reserved for our full paper.}}
\fi

\section{Treatment-Control Model}\label{sec:model}
We study a sequential game between a \textit{social planner} and a sequence of \textit{agents} over $T$ rounds, where $T$ is known to the social planner. We will first focus on the binary setting with a single treatment, and study the more general setting of $k$ treatments in Section~\ref{sec:many-arms}. In the binary setting, the treatment of interest has unknown effect $\theta \in [-1, 1]$. In each round $t$, a new agent indexed by $t$ arrives with their \emph{private type} $u_t$ drawn independently from a distribution $\cU$ over the set of all private types $U$. Each agent $t$ has two actions to choose from: taking the treatment (denoted as $x_t=1$) and not taking the treatment, i.e. the control (denoted as $x_t = 0$). Upon arrival, agent $t$ also receives an action recommendation $z_t\in\{0,1\}$ from the planner.  After selecting an action $x_t\in\{0,1\}$, agent $t$ receives a reward $y_t\in\mathbb{R}$, given by
\begin{equation}
    \label{eq:reward-model}
    y_t = \theta{x_t} + g^{(u_t)}_t
\end{equation}
where $g^{(u_t)}_t$ denotes the confounding \emph{baseline reward} which depends on the agent's private type $u_t$; each is drawn from a sub-Gaussian distribution with a sub-Gaussian norm of $\sigma_g$. The social planner's goal is to estimate the treatment effect $\theta$ and maximize the total expected reward of all $T$ agents.

\iffalse
The utility for an agent $t$ is different than the observed reward insofar as the agent incurs some cost for taking the treatment that is not accounted for in the observed reward. Let the utility for agent $t$ be:
\begin{equation}
    \label{eq:reward-model-agent}
    y_t^{\text{agent}} 
    :=\, y_t - \phi(u_t)x_t
    =\, \theta{x_t} - \phi(u_t)x_t + g^{(u_t)}_t + \epsilon_t
\end{equation}
where $\phi(u_t)$ denotes the cost to an agent of type $u_t$ for taking the treatment for cost function $\phi: U \rightarrow  [-\nu, \nu]$. The social planner observes neither cost function $\phi$ nor agent reward $y_t^{\text{agent}}$.

Agents are self-interested individuals who aim to maximize their own expected reward $y_t^{\text{agent}}$, while the social planner aims to maximize the long-term cumulative rewards across all agents.

\lsdelete{Formally, let $\theta^{a^*} = \max_a \theta^a$. Then the social planner aims to induce a sequence of action selections $x_1, \ldots , x_T$ to minimize \emph{regret}, defined as:
\begin{equation}
    R(T) = T\, \theta^{a^*}  - \sum_{t=1}^T \theta^{x_t}_t.
\end{equation}

We are also interested in type-specific regret. In particular, the regret accumulated by agents of type $u$ is defined as
\begin{equation}
    R^{u}(T) = \sum_{t=1}^T \left(\theta^{a^*}-\theta^{x_t}_t\right)\1[u_t = u].
\end{equation}}
\fi

\xhdr{History and recommendation policy.}
The interaction between the planner and the agent $t$ is given by the tuple $(z_t, x_t, y_t)$. 
For each $t$, let $H_t$ denote the \textit{history} from round 1 to $t$, i.e. the sequence of interactions between the social planner and the first $t$ agents, such that $H_t:=\left((z_1, x_1, y_1), \ldots ,(z_t, x_t, y_t) \right)$. Before the game starts, the social planner commits to a recommendation policy $\pi = (\pi_t)_{t=1}^T$ where each $\pi_t \colon (\{0,1\}\times \{0,1\} \times \mathbb{R})^{t-1} \rightarrow \Delta(\{0,1\})$ is a randomized mapping from the history $H_{t-1}$ to recommendation $z_t$. Policy $\pi$ is fully known to all agents.

\xhdr{Beliefs, incentives, and action choices.}
%\label{def:bic}
Each agent $t$ knows their place $t$ in the sequential game, and their private type $u_t$ determines a \emph{prior belief} $\cP^{(u_t)}$, which is a joint distribution over the treatment effect $\theta$ and noisy error term $g^{(u)}$. Agent $t$ selects action $x_t$ as such:
\begin{equation}
    \label{eq:control-treatment-selection-function}
    x_t := \1\left[ \E_{\cP^{(u_t)}, \pi_t}\left[\theta \ | \ z_t, t \right] > 0 \right].
\end{equation}

\iffalse
where, as above, $\phi(u_t)$ denotes the cost to an agent of type $u_t$ for taking the treatment, for some bounded function $\phi: U \rightarrow  [-\nu, \nu]$.
\fi

An agent $t$ is \emph{compliant} with a recommendation $z_t$ if the agent chooses the recommended action, i.e. $x_t = z_t$. We'll also say that a recommendation is \emph{compliant} if $x_t = z_t$.

\Cref{fig:model-dag} shows the causal diagram for this setting.

\begin{figure}
    \centering
        \begin{tikzpicture}[node distance={15mm}, thick, main/.style = {draw, circle}] 
        \node[draw,circle,fill=lightgray,text=black] (1) {$z_t$}; 
        \node[draw,circle,fill=lightgray,text=black] (2) [right of=1] {$x_t$}; 
        \node[draw,circle] (3) [below of=2] {$u_t$};
        \node[draw,circle,fill=lightgray,text=black] (4) [right of=2] {$y_t$};
        \node[draw,circle] (5) [right of=3] {$g_t$};
        \draw[->] (1) -- (2);
        \draw[->] (3) -- (2);
        \draw[->] (3) -- (5);
        \draw[->] (5) -- (4);
        %\draw (1) to [out=180,in=270,looseness=5] (1);
        \draw[->] (2) -- node[above] {$\theta$} (4); 
        \end{tikzpicture} 
    \caption{Causal diagram of our setting at any time $t$. Grey nodes are observed, white unobserved. Recommendation $z_t$ influences treatment $x_t$, which influences outcome $y_t$. Unobserved type $u_t$ influences both treatment $x_t$ and outcome $y_t$ ---the latter via error term $g_t^{(u_t)}$.}
    \label{fig:model-dag}
\end{figure}
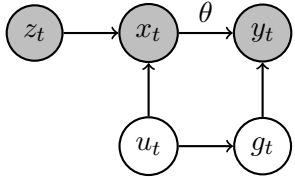

\subsection{Recommendations as Instruments}

Unlike the standard multi-armed bandit and previous models on incentivizing exploration \cite{mansour2015bic, MansourSSW16}, the heterogeneous beliefs in our setting can lead to selection bias. For example, agents who are willing to take the treatment may also have higher baseline rewards. Thus, simply comparing rewards across the treatment group ($x=1$) and the control group ($x=0$) will lead to a biased estimate of $\theta$. To overcome this selection bias, we will view the planner's recommendations as instruments and perform \textit{instrumental variable (IV) regression} to estimate $\theta$. There are two criteria for recommendation $z_t$ to be a valid instrument: (1) $z_t$ influences the selection $x_t$, and (2) $z_t$ is independent from the noisy baseline reward $g^{(u)}$. See \Cref{fig:model-dag} for a graphical explanation of how these criteria will be satisfied in our setting. Criterion (2) follows because planner chooses $z_t$ randomly, independent of the type $u_t$. Our goal is to design a recommendation policy to meet criterion (1).

\iffalse 
\swdelete{In a standard multi-armed bandits setting without a confounder, we could find the treatment effect $\theta$ by taking the mean over sufficiently many observed rewards of each arm. However, because of the unknown confounding variable $g^{(u_t)}_t$ that is correlated with $x_t$, such simple estimation is inconsistent.}

IV regression estimates the treatment effect when the explanatory variable is correlated with the error term. The reward model in equation~\eqref{eq:reward-model} follows the form of a model for treatment effect: the treatment $x_t$, the exogenous treatment effect $\theta$, and the error term $g^{(u_t)}_t + \epsilon_t$ determine the response variable $y_t$. Both the treatment $x_t$ and the reward $y_t$ are correlated with the endogenous error term $g^{(u_t)}_t$.
\fi

\xhdr{Wald Estimator.}
Our mechanism periodically solves the following IV regression problem:
given a set $S$ of $n$ observations $(x_i, y_i, z_i)_{i=1}^n$, compute an estimate $\hat \theta_S$ of $\theta$. We consider the following two-stage least square (2SLS) or Wald estimator (which are equivalent for binary treatments):
\begin{equation}
    \label{eq:theta-hat}
    \hat{\theta}_S  = \frac{\sum_{i=1}^n (y_i - \bar{y})(z_i - \bar{z})}{\sum_{i=1}^n (x_i - \bar{x})(z_i - \bar{z})},
\end{equation}
where $\bar x, \bar y, \bar z$ denote the empirical means of variables $x_i$, $y_i$, and $z_i$ respectively.

While existing work on IV regression mostly focuses on asymptotic analyses, we provide a high-probability finite-sample {error bound} for  $\hat{\theta}_S$, which is required by our regret analysis and may be of independent interest.

\begin{restatable}[Finite-sample error bound for Wald estimator]{theorem}{treatmentapproximationbound}\label{thm:treatment-approximation-bound} 
Let $z_1, z_2, \ldots, z_n\in \{0 , 1\}$ be a sequence of instruments. Suppose there is a sequence of $n$ agents such that each agent $i$ has their private type $u_i$ drawn independently from $\cU$, selects action $x_i$ under instrument $z_i$, and receives reward $y_i$. Let sample set $S=(x_i,y_i,z_i)_{i=1}^n$. Let $A:\left(\{0,1\}^n\times\{0,1\}^n\times\RR^n\right)\rightarrow\RR$ denote the approximation bound for set $S$, such that
\[A(S,\delta):=\frac{2\sigma_g \sqrt{2n\log(2/\delta)}}{\left|\sum_{i=1}^n(x_i-\bar{x})(z_i-\bar{z})\right|}\] 
and the Wald estimator given by \eqref{eq:theta-hat} satisfies
\[\left|\hat{\theta}_S-\theta\right| \leq A(S,\delta)\]
with probability at least $1 - \delta$, for any $\delta \in (0,1)$.
\end{restatable}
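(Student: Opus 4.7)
The key identity is obtained by substituting the reward model $y_i = \theta x_i + g_i^{(u_i)}$ into the numerator of the Wald estimator \eqref{eq:theta-hat}. Since $\sum_i (z_i - \bar z) = 0$, the centering kills any constant shift in $g$, and we obtain the clean decomposition
\[
\hat{\theta}_S - \theta \;=\; \frac{\sum_{i=1}^n g_i^{(u_i)}\,(z_i - \bar{z})}{\sum_{i=1}^n (x_i - \bar{x})(z_i - \bar{z})}.
\]
So the task reduces to a high-probability upper bound on the numerator $W := \sum_i g_i^{(u_i)}(z_i - \bar z)$; the denominator appears in $A(S,\delta)$ already and is treated as data.

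\textbf{Bounding the numerator.} The $g_i^{(u_i)}$ are independent sub-Gaussian with parameter $\sigma_g$ (independence follows because each $u_i$ is drawn i.i.d.\ from $\cU$), and $z_i$ is a function of the history $H_{i-1}$, which is independent of $g_i^{(u_i)}$. I would split
\[
W \;=\; \sum_{i=1}^n g_i^{(u_i)} z_i \;-\; \bar z \sum_{i=1}^n g_i^{(u_i)},
\]
so that each summand is adapted to the natural filtration and each $g_i^{(u_i)} z_i$ is a sub-Gaussian martingale difference (with $|z_i|\le 1$). Applying the Azuma--Hoeffding bound for sub-Gaussian martingales to each piece yields, for any $\delta' \in (0,1)$,
\[
\Prob\!\left(\Big|\textstyle\sum_i g_i^{(u_i)} z_i\Big| > \sigma_g\sqrt{2n\log(2/\delta')}\right) \le \delta', \qquad \Prob\!\left(\Big|\textstyle\sum_i g_i^{(u_i)}\Big| > \sigma_g\sqrt{2n\log(2/\delta')}\right) \le \delta'.
\]
Using $|\bar z|\le 1$ and a union bound (with $\delta' = \delta/2$) gives $|W| \le 2\sigma_g\sqrt{2n\log(2/\delta)}$ with probability at least $1-\delta$ (absorbing the factor-of-two inside the log into the constant in front, as the statement does). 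Dividing by $|\sum_i (x_i-\bar x)(z_i-\bar z)|$ yields the claimed $A(S,\delta)$.

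\textbf{Main obstacle.} The subtle point is not the concentration inequality itself but the adaptivity of the instruments: in the mechanisms of Sections~\ref{sec:sampling-control-treatment}--\ref{sec:racing-control-treatment}, $z_i$ depends on $H_{i-1}$, which depends on past $g_j^{(u_j)}$ for $j<i$; so one cannot treat the entire sequence $(z_1,\dots,z_n)$ as fixed and pull $g_i$ outside. The martingale framing above sidesteps this, because each increment $g_i^{(u_i)} z_i$ has conditional mean zero given $\cF_{i-1}:=\sigma(H_{i-1}, z_i)$ and conditional sub-Gaussian norm $\sigma_g$. A secondary subtlety is that $\bar z$ and $\bar g$ are themselves random and couple all indices; the split $W = \sum g_i z_i - \bar z \sum g_i$ (using $\sum(z_i-\bar z)=0$ to eliminate the $\bar g$ term) is the natural way to decouple them into two sub-Gaussian martingales, at the cost of a union bound and a factor of two.
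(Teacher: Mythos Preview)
Your proof is correct and follows essentially the same route as the paper: derive the identity $\hat\theta_S - \theta = \big(\sum_i g_i^{(u_i)}(z_i-\bar z)\big)\big/\big(\sum_i (x_i-\bar x)(z_i-\bar z)\big)$, split the numerator as $\sum_i g_i z_i - \bar z\sum_i g_i$ (using $\sum_i(z_i-\bar z)=0$), apply sub-Gaussian concentration to each piece, and union-bound with $\delta/2$ per piece. The only difference is cosmetic: the paper treats the $z_i$ as a fixed sequence and invokes the i.i.d.\ sub-Gaussian tail bound directly, whereas your Azuma--Hoeffding framing also covers the adaptive case---a harmless (indeed slightly more careful) variant that costs nothing extra.
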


\emph{Proof Sketch.} See \Cref{sec:approximation-bound-proof} for the full proof. The bound follows by substituting our expressions for $y_t, x_t$ into the IV regression estimator, applying the Cauchy-Schwarz inequality to split the bound into two terms (one dependent on $\{(g_t, z_t)\}^{\abs{S}}_{t=1}$ and one dependent on $\{(x_t, z_t)\}^{\abs{S}}_{t=1}$), and bound the second term with high probability.

Note that the error rate above depends on the covariance between the instruments $z$ and action choices $x$. In particular, when $\sum_{i=1}^n(x_i-\bar{x})(z_i-\bar{z})$ is linear in $n$, the error rate becomes $\tilde O(1/\sqrt{n})$. In the following sections, we will provide mechanisms that incentivize compliance so that the instruments $z$ are correlated with actions $x$, enabling us to achieve such an error rate.

%\section{Setting Assumptions and Sampling Stage for Control-Treatment with Two Mixed-Preference Types}

\section{Overcoming Complete Non-Compliance}
\label{sec:sampling-control-treatment}

In this section, we present a recommendation policy that incentivizes compliance to enable IV estimation. We focus on a setting where the agents are initially completely non-compliant: since the planner has no information about the treatment effect in the initial rounds, the recommendations have no influence on agents' action selections. For simplicity of exposition, we will present our policy in a setting where there are two types of agents who are initially ``always-takers'' and ``never-takers.'' As we show later \Cref{sec:many-arms}, this assumption can be relaxed to have arbitrarily many types and also allow all types to be ``always-takers.''

Formally, consider two types of agents $i\in\{0,1\}$. For type $i$, let $p_i$ be the fraction of agents in the population, $\cP^{(i)}$ the prior beliefs, and $g^{(i)}$ the baseline reward random variables. Agents of type 1 initially prefer the treatment and type 0 agents prefer control: their prior means for $\theta$ satisfy $\mu^{(1)} = \E_{\cP^{(1)}}[\theta] > 0$ and $\mu^{(0)} = \E_{\cP^{(0)}}[\theta] < 0$.

Our policy (\Cref{alg:sampling-control-treatment}) splits into two stages. In the first stage, agents take their preferred action according to their prior beliefs: type 0 agents choose control and type 1 treatment. This allows us to collect $\ell_0$ and $\ell_1$ observations of rewards for $x=0$ and $x=1$, respectively. Let $\bar y^0$ and $\bar y^1$ denote the empirical average rewards for the two actions, respectively. Note that since the baseline rewards $g^{(u)}$ are correlated with the selections $x$, the difference $(\bar y^1 - \bar y^0)$ is a biased estimate for $\theta$. 

In the second stage, we use this initial set of reward observations to construct valid instruments which incentivize agents of one of the two types to follow both control and treatment recommendations. Without loss of generality, we focus on incentivizing compliance among type 0 agents. Since they already prefer control, the primary difficulty here is to incentivize type 0 agents to comply with treatment recommendations.\footnote{We could instead incentivize type 1 agents to take control. This would require 1) rewriting event $\xi$ so it indicates that the expectation of $\theta$ over $\cP^{(1)}$ must be negative and 2) rewriting \Cref{alg:sampling-control-treatment} so that control is recommended when exploring. We cannot incentivize both types to comply at the same time.} We leverage the following observation: according to the prior $\cP^{(0)}$ of type 0 agents, there is a non-zero probability that the biased estimate $(\bar y^1 - \bar y^0)$ is so large that $\theta$ must be positive.

Formally, consider the following event for the average rewards $\bar y^0$ and $\bar y^1$:
\begin{small}
\begin{equation}
\label{eq:xi}
    \hspace{-.5mm} \xi \hspace{-.5mm} = \hspace{-.75mm}  \bigg\{ \bar y^1 \hspace{-1.25mm} > \hspace{-.5mm} \bar y^0 + \sigma_g\hspace{-.55mm} \bigg(\hspace{-1.25mm} \sqrt{\frac{2\log(2/\delta)}{\ell_0}} \hspace{-.25mm}  + \hspace{-.25mm}  \sqrt{\frac{2\log(2/\delta)}{\ell_1}}\bigg) \hspace{-.5mm} +  G^{(0)} \hspace{-.5mm} + \frac{1}{2} \hspace{-1mm} \bigg\} \hspace{-.5mm} 
\end{equation}
\end{small}
where $G^{(0)}$ is a constant such that $G^{(0)}>\E_{\cP^{(0)}}[g^{(1)}-g^{(0)}]$ and $\sigma_g$ is the variance parameter for $g^{(0)}$ and $g^{(1)}$.

\begin{assumption}[Knowledge Assumption for \Cref{alg:sampling-control-treatment}]\label{ass:know-sampling}
Within \Cref{sec:sampling-control-treatment}, the following are common knowledge among agents and planner:\footnote{Assumptions do not hold elsewhere, unless explicitly stated.}
    \begin{enumerate}[itemsep=0mm]
        \item Type 0 agents prefer control and type 1 agents prefer treatment. The fraction of agents of type 0 in the population is $p_0\geq0$ and the fraction of type 1 is $p_1>0$.
        \item Type 0's prior treatment effect mean $\mu^{(0)}$ and the probability of event $\xi$, denoted $\Prob_{\cP^{(0)}}[\xi]$, over the prior $\cP^{(0)}$ of type 0.\footnote{These assumptions (as well as \Cref{ass:know-racing} and \Cref{ass:know-combined}) require only partial knowledge of the priors for compliant agents only. They are no more restrictive than the least restrictive (detail-free) assumptions of \citet{mansour2015bic}.}
    \end{enumerate}
\end{assumption}
 
 %%% DOUBLE CHECK THE VSPACE AND HSPACE HERE BEFORE %%%
 \lsdelete{ %% for two-column submission
\begin{algorithm}[ht!]
    \caption{Overcoming complete non-compliance}
    \label{alg:sampling-control-treatment}
\begin{algorithmic}
   \STATE {\bfseries Input:} exploration probability $\rho\in(0,1)$, $\ell\in\mathbb{N}$ (assume w.l.o.g. $\rho\ell\in\mathbb{N}$), minimum first stage samples $\ell_0, \ell_1 \in \mathbb{N}$, and failure probability $\delta<\Prob_{\cP_0}[\xi]/8$
   \STATE \textbf{1st stage:} The first $\displaystyle2\max\left(\ell_0/p_0,\ell_1/p_1\right)$ agents are given no recommendation (they choose what they prefer)
   \vspace{-3mm} \STATE \textbf{2nd stage:} Based on at least $\ell_0$ control and $\ell_1$ treatment samples collected in the first stage:
   \IF{\begin{small}$\bar y^1 > \bar y^0 + \sigma_g\bigg(\sqrt{\frac{2\log(2/\delta)}{\ell_0}} + \sqrt{\frac{2\log(2/\delta)}{\ell_1}}\bigg) + G^{(0)} + \frac{1}{2}$\end{small}} 
   \vspace{-4.5mm} \STATE \hspace{3.5mm} $a^*=1$
   \ELSE
   \vspace{-4.5mm} \STATE \hspace{3.5mm} $a^*=0$
   \ENDIF
   \STATE From the next $\ell$ agents, pick $\rho\ell$ agents uniformly at random to be in the explore set $E$
   \FOR{the next $\ell$ rounds}
   \IF{agent $t$ is in explore set $E$} 
   \vspace{-4.5mm} \STATE  \hspace{44mm} $z_t = 1$
   \ELSE
   \vspace{-4.5mm} \STATE \hspace{3.5mm} $z_t = a^*$
   \ENDIF
   \ENDFOR
\end{algorithmic}
\end{algorithm}
}%endlsdelete

%% for one-column submission
\begin{algorithm}[ht!]
    \caption{Overcoming complete non-compliance}
    \label{alg:sampling-control-treatment}
\begin{algorithmic}
   \STATE {\bfseries Input:} exploration probability $\rho\in(0,1)$, $\ell\in\mathbb{N}$ (assume w.l.o.g. $\rho\ell\in\mathbb{N}$), minimum first stage samples $\ell_0, \ell_1 \in \mathbb{N}$, and failure probability $\delta<\Prob_{\cP_0}[\xi]/8$
   \STATE \textbf{1st stage:} The first $\displaystyle2\max\left(\ell_0/p_0,\ell_1/p_1\right)$ agents are given no recommendation (they choose what they prefer)
   \STATE \textbf{2nd stage:} Based on at least $\ell_0$ control and $\ell_1$ treatment samples collected in the first stage:
   \IF{\begin{small}$\bar y^1 > \bar y^0 + \sigma_g\bigg(\sqrt{\frac{2\log(2/\delta)}{\ell_0}} + \sqrt{\frac{2\log(2/\delta)}{\ell_1}}\bigg) + G^{(0)} + \frac{1}{2}$\end{small}} 
   \STATE $a^*=1$
   \ELSE
   \STATE $a^*=0$
   \ENDIF
   \STATE From the next $\ell$ agents, pick $\rho\ell$ agents uniformly at random to be in the explore set $E$
   \FOR{the next $\ell$ rounds}
   \IF{agent $t$ is in explore set $E$} 
   \STATE $z_t = 1$
   \ELSE
   \STATE $z_t = a^*$
   \ENDIF
   \ENDFOR
\end{algorithmic}
\end{algorithm}

We prove that \Cref{alg:sampling-control-treatment} is compliant for agents of type 0 as long as the exploration probability $\rho$ is less than some constant that  depends on prior $\cP^{(0)}$. When an agent of type 0 is recommended treatment, they do not know whether this is due to exploration or exploitation. However, with small enough $\rho$, their expected gain from exploiting exceeds the expected loss from exploring. Hence, the agents comply with the recommendation and take treatment.

\begin{restatable}[Type 0 compliance with \Cref{alg:sampling-control-treatment}]{lemma}{bicsamplingcontroltreatment}\label{lemma:bic-sampling-control-treatment} Under \Cref{ass:know-sampling}, any type 0 agent who  arrives in the last $\ell$ rounds of \Cref{alg:sampling-control-treatment} is compliant with any recommendation, as long as the exploration probability $\rho$ satisfies
    \begin{equation}
        \label{eq:rho}
        \rho \leq 1 + \frac{4\mu^{(0)}}{\Prob_{\cP^{(0)}}[\xi]-4\mu^{(0)}}
    \end{equation}
where the event $\xi$ is defined above in \Cref{eq:xi}. 
\end{restatable}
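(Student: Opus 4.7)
The plan is to split on the recommendation value and verify the compliance condition in each case. When $z_t = 0$, a type 0 agent prefers control a priori and the recommendation only strengthens that preference; the standard check is to confirm that $\E_{\cP^{(0)}}[\theta \mid z_t = 0] \leq \E_{\cP^{(0)}}[\theta] = \mu^{(0)} < 0$, i.e., learning ``not recommended for treatment'' cannot make the agent strictly want treatment. Because of the symmetry between exploration and exploitation arms, it suffices to check the treatment recommendation case, which is where the real work lies.

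When $z_t = 1$, I would compute $\E_{\cP^{(0)}}[\theta \mid z_t = 1]$ by conditioning on whether round $t$ falls in the explore set $E$ or not. By \Cref{alg:sampling-control-treatment}, the recommendation is
\[
    \Pr_{\cP^{(0)},\pi}[z_t = 1] = \rho + (1-\rho)\Pr_{\cP^{(0)}}[\xi],
\]
where the only randomness in $z_t$ other than the independent coin flip for $E$ comes through whether the event $\xi$ holds at the end of the first stage. Since the explore coin is independent of $\theta$, on the explore branch $\E[\theta \mid z_t = 1, t \in E] = \mu^{(0)}$, and on the exploit branch $\E[\theta \mid z_t = 1, t \notin E] = \E_{\cP^{(0)}}[\theta \mid \xi]$. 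Applying the law of total expectation and clearing the positive denominator, the compliance condition $\E_{\cP^{(0)}}[\theta \mid z_t = 1] > 0$ becomes
\[
    \rho\,\mu^{(0)} + (1-\rho)\,\Pr_{\cP^{(0)}}[\xi]\,\E_{\cP^{(0)}}[\theta \mid \xi] \;>\; 0.
\]
Solving for $\rho$ gives a threshold of the form $\rho \leq \Pr[\xi]\,c / (\Pr[\xi]\,c - \mu^{(0)})$, which matches the stated bound once $\E_{\cP^{(0)}}[\theta \mid \xi]$ is lower-bounded by a suitable positive constant.

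The main obstacle is therefore the lower bound on $\E_{\cP^{(0)}}[\theta \mid \xi]$. I would handle this with a two-piece argument under the prior $\cP^{(0)}$. Let $E^\star$ denote the concentration event that the type-specific empirical baselines satisfy $|\bar g^{(i)} - \E_{\cP^{(0)}}[g^{(i)}]| \leq \sigma_g\sqrt{2\log(2/\delta)/\ell_i}$ for $i \in \{0,1\}$; by sub-Gaussianity and a union bound $\Pr_{\cP^{(0)}}[\bar{E^\star}] \leq 2\delta$. Using $\bar y^1 - \bar y^0 = \theta + (\bar g^{(1)} - \bar g^{(0)})$, combining $\xi$ with $E^\star$ and the fact that $G^{(0)} > \E_{\cP^{(0)}}[g^{(1)} - g^{(0)}]$ yields a strictly positive lower bound on $\theta$ on $\xi \cap E^\star$. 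Then
\[
    \Pr_{\cP^{(0)}}[\xi]\,\E_{\cP^{(0)}}[\theta \mid \xi] \;\geq\; \tfrac{1}{2}\,\Pr_{\cP^{(0)}}[\xi \cap E^\star] - \Pr_{\cP^{(0)}}[\xi \cap \bar{E^\star}] \;\geq\; \tfrac{1}{2}\Pr_{\cP^{(0)}}[\xi] - \tfrac{3}{2}\cdot 2\delta,
\]
and the hypothesis $\delta < \Pr_{\cP^{(0)}}[\xi]/8$ converts this into a bound of the form $\E_{\cP^{(0)}}[\theta \mid \xi] \geq c$ with $c$ an absolute constant.

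Plugging this constant back into the compliance inequality and rearranging produces exactly the stated bound on $\rho$, so the argument is complete. The technically delicate step is the conditional-expectation bound in the third paragraph: one has to be careful that all probabilities and expectations are taken under the agent's prior $\cP^{(0)}$ (not the planner's truth), which is where \Cref{ass:know-sampling} on partial knowledge of the prior is used.
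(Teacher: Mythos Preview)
Your plan matches the paper's argument: reduce to the $z_t=1$ case, decompose on explore vs.\ exploit to obtain $(1-\rho)\Pr_{\cP^{(0)}}[\xi]\,\E_{\cP^{(0)}}[\theta\mid\xi]+\rho\mu^{(0)}\ge0$, and lower-bound $\Pr[\xi]\,\E[\theta\mid\xi]$ via a clean-event argument showing $\theta>1/2$ on $\xi$ intersected with concentration. The paper makes your ``symmetry'' remark precise with the one-line identity $\E[\theta\mid z_t=0]\Pr[z_t=0]+\E[\theta\mid z_t=1]\Pr[z_t=1]=\mu^{(0)}<0$, so nonnegativity of the $z_t=1$ term forces the $z_t=0$ term to be negative.

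Two bookkeeping points need tightening to recover the stated constant. First, your bound $\tfrac12\Pr[\xi]-3\delta$ together with $\delta<\Pr[\xi]/8$ only gives $\Pr[\xi]\,\E[\theta\mid\xi]>\Pr[\xi]/8$, which produces $8$'s rather than $4$'s in the final $\rho$-threshold; the paper instead writes $\E[\theta\mid\xi]\Pr[\xi]\ge\E[\theta\mid\xi,\cC]\Pr[\xi]-2\delta$ (using both $\E[\theta\mid\xi,\cC]>1/2$ and $\E[\theta\mid\xi,\cC]\le1$) to land on $\Pr[\xi]/4$. Second, your concentration event $E^\star$ silently assumes at least $\ell_0$ control and $\ell_1$ treatment samples arrive in the first stage, but those counts are random; the paper adds a third clean event $\cC_2$ for this (the first-stage length $2\max(\ell_0/p_0,\ell_1/p_1)$ is chosen precisely so its failure probability fits inside $\delta$). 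With these two fixes your argument coincides with the paper's.
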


\begin{proofsketch} See \Cref{sec:sampling-stage-appendix} for the full proof. The proof follows by expressing the compliance condition for type 0 agents as different cases, depending on the recommendation. By keeping the exploration probability $\rho$ small with regard to type 0 agent's prior-dependent probability $\Prob_{\cP^{(0)}}[\xi]$ and the conditional expected treatment effect  $\E_{\cP^{(0)}}[\theta |\xi]$, the expected gain from exploiting is greater than the expected loss from exploring. Hence, type 0 agents would comply with the recommendation. We further simplify the condition on exploration probability $\rho$ by applying high probability bound on the samples collected from the 1st stage (where no recommendations were given). 
\end{proofsketch}

We also provide a separate accuracy guarantee for the treatment effect estimate $\hat{\theta}$ at the end of the \Cref{alg:sampling-control-treatment}.  

\begin{restatable}[Treatment Effect Confidence Interval after \Cref{alg:sampling-control-treatment}]{theorem}{samplingestimationbound}\label{thm:sampling-estimation-bound}
    With sample set $S_\ell=(x_i,y_i,z_i)_{i=1}^\ell$ of $\ell$ samples collected from the second stage of \Cref{alg:sampling-control-treatment} ---run with exploration probability $\rho$ small enough so that type 0 agents are compliant (see \Cref{lemma:bic-sampling-control-treatment}),--- approximation bound $A(S_\ell,\delta)$ satisfies the following, with probability at least $1-\delta$:
    \[\left|\hat{\theta}_{S_{\ell}}-\theta\right|\leq A(S_\ell,\delta) \leq \frac{2\sigma_g \sqrt{2\log(5/\delta)}}{\rho(1-\rho)p_0\sqrt{\ell}-(3-\rho)\sqrt{\frac{\rho\log(5/\delta)}{2(1-\rho)}}}\]
    for any $\delta\in(0,1)$. Recall $\sigma_g$ is the variance of $g^{(u_i)}$, $p_0$ is the fraction of compliant never-takers in the population of agents,\footnotemark and $A(S_\ell,\delta)$ is defined as in \Cref{thm:treatment-approximation-bound}.
\end{restatable}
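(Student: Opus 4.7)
The plan is to combine the generic Wald-estimator error bound of Theorem~\ref{thm:treatment-approximation-bound} with a high-probability lower bound on the empirical covariance $\abs{\sum_{i=1}^{\ell}(x_i - \bar x)(z_i - \bar z)}$ that sits in the denominator of $A(S_\ell, \delta)$. Theorem~\ref{thm:treatment-approximation-bound} already supplies $\abs{\hat\theta_{S_\ell} - \theta} \leq A(S_\ell, \delta')$ with probability at least $1-\delta'$ for any $\delta' \in (0,1)$, so the substantive task is to argue that $A(S_\ell, \delta')$ is itself upper bounded by the stated closed-form expression whenever the random type draws and the random explore-set assignment behave typically.

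I would first invoke Lemma~\ref{lemma:bic-sampling-control-treatment}, under which every type-0 agent in the second stage follows its recommendation, so $x_t = z_t$, while every type-1 agent takes the treatment, so $x_t = 1$ regardless of $z_t$. Expanding $\sum_{i=1}^\ell (x_i - \bar x)(z_i - \bar z) = \sum_i x_i z_i - (\sum_i x_i)(\sum_i z_i)/\ell$ and plugging in these agent behaviors (with a symmetric case analysis on $a^\ast \in \{0,1\}$, since swapping the role of the exploit and explore recommendations preserves the sign structure) collapses the covariance to the clean identity $\sum_i (x_i-\bar x)(z_i-\bar z) = \rho\,(N_0 - N_{0,E})$, where $N_0$ is the random number of type-0 agents among the $\ell$ second-stage agents and $N_{0,E}$ is the number of those type-0 agents that are placed in the explore set $E$.

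The remaining work is a high-probability lower bound on $N_0 - N_{0,E}$. Since types are drawn i.i.d.\ $\mathrm{Bernoulli}(p_0)$ and the explore set is chosen independently of type, by exchangeability $N_0 - N_{0,E}$ is distributed as $\mathrm{Binomial}((1-\rho)\ell, p_0)$ with mean $(1-\rho)p_0\ell$. A single Hoeffding inequality would already yield a $\sqrt{(1-\rho)\ell \log(1/\delta)}$ deviation; to recover the exact constants in the statement I would instead apply Hoeffding separately to $N_0 \sim \mathrm{Binomial}(\ell, p_0)$ and to $N_{0,E} \mid N_0$ (hypergeometric, via a Serfling-type tail bound), then algebraically combine the two deviations so that the factor $\sqrt{\rho/(1-\rho)}$ emerges when one later pulls $\sqrt{\ell}$ out of the Wald-error numerator. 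The $\log(5/\delta)$ appearing in the stated bound signals that I should split the failure budget into five roughly equal pieces via union bound: one for Theorem~\ref{thm:treatment-approximation-bound}, one each for $N_0$ and $N_{0,E} \mid N_0$, and two for the first-stage averages $\bar y^0, \bar y^1$ inside the event $\xi$ that is needed for Lemma~\ref{lemma:bic-sampling-control-treatment} to apply.

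Plugging the resulting lower bound on $\abs{\sum(x_i-\bar x)(z_i-\bar z)}$ into $A(S_\ell, \delta)$ and dividing numerator and denominator by $\sqrt{\ell}$ yields the claimed expression. The main obstacle is the concentration bookkeeping in the penultimate paragraph: matching the constants $(3-\rho)$ and $\sqrt{\rho/(1-\rho)}$ in the stated denominator requires carefully pairing the Binomial deviation of $N_0$ with the conditional hypergeometric deviation of $N_{0,E}$ (and tracking how the prefactor $\rho$ in the identity $\sum_i (x_i-\bar x)(z_i-\bar z) = \rho\,(N_0 - N_{0,E})$ interacts with the $\sqrt{\ell}$ normalization), rather than just applying a single looser Hoeffding bound directly to $N_0 - N_{0,E}$.
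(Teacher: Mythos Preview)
Your overall plan---combine Theorem~\ref{thm:treatment-approximation-bound} with a high-probability lower bound on the empirical covariance $\sum_i(x_i-\bar x)(z_i-\bar z)$ and then union-bound---is exactly how the paper proceeds. The difference is in how the denominator is controlled. The paper invokes a general-purpose lemma (Theorem~\ref{thm:approximation-bound-control-treatment-denominator} in the appendix) that rewrites $\sum_i x_i z_i - \ell\bar x\bar z$ in terms of three empirical proportions ($\hat p_{c:z=1}$, $\hat p_{nc1:z=1}$, $\hat p_{nc1}$) and applies a separate Hoeffding bound to each; with $\bar z=\rho$ this produces the deviation term $(3-\rho)\sqrt{\ell\rho\log(3/\delta)/(2(1-\rho))}$ in the stated bound. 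Your exact identity $\sum_i(x_i-\bar x)(z_i-\bar z)=\rho(N_0-N_{0,E})$ is cleaner and, as you already note, a single Hoeffding bound on the binomial $N_0-N_{0,E}\sim\mathrm{Bin}((1-\rho)\ell,p_0)$ gives a \emph{tighter} deviation ($\rho\sqrt{(1-\rho)\ell}$ versus the paper's $(3-\rho)\sqrt{\rho\ell/(1-\rho)}$ for small $\rho$). Your proposed detour through $N_0$ and $N_{0,E}\mid N_0$ to match the stated constants is unnecessary and would not reproduce them anyway, since the paper's three-way decomposition is a different partition of the randomness; the single-binomial bound is both simpler and stronger.

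Two small corrections. First, the case $a^\ast=1$ is not symmetric: when the exploit recommendation is treatment, every second-stage agent receives $z_t=1$, so $z_i-\bar z\equiv 0$ and the covariance vanishes (the Wald estimator is degenerate). The paper's proof simply asserts $\bar z=\rho$, i.e.\ it is implicitly working under $a^\ast=0$; you should do the same rather than claim symmetry. Second, your accounting for the $5$ in $\log(5/\delta)$ is off: the budget splits as $2$ from the internal union bound inside Theorem~\ref{thm:treatment-approximation-bound} (see Lemma~\ref{eq:approximation-bound-numerator}) plus $3$ from the three Hoeffding applications in the denominator lemma. The first-stage averages $\bar y^0,\bar y^1$ play no role here---they enter only the compliance argument of Lemma~\ref{lemma:bic-sampling-control-treatment}, which is a hypothesis of the theorem (``$\rho$ small enough so that type $0$ agents are compliant''), not a high-probability event folded into its conclusion.
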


\begin{proofsketch} See \Cref{sec:sampling-stage-appendix} for the full proof. Note that  \Cref{thm:treatment-approximation-bound} applies, so we only have to bound the denominator term which is dependent on $\{(x_t, z_t)\}^{\abs{S_\ell}}_{t=1}$. We assume that \Cref{alg:sampling-control-treatment} is initialized with parameters (see \Cref{lemma:bic-sampling-control-treatment}) such that type 0 agent is compliant. We bound the term dependent on $\{(x_t, z_t)\}^{\abs{S_\ell}}_{t=1}$ with high probability.
\footnotetext{We redefine $p_0$ here to be applicable to more general settings.}
\end{proofsketch}

\subsection{\Cref{alg:sampling-control-treatment} Extensions}\label{sec:sampling-extensions}
\Cref{alg:sampling-control-treatment} can be extended to handle more general settings:
\begin{enumerate}[itemsep=0mm]
    \item There can be arbitrarily many types of agents that do not share the same prior. In this case, let $\E_{\cP^{(u)}}[g^0]$ and $\E_{\cP^{(u)}}[g^1]$ denote the expected baseline rewards for never-takers and always-takers, respectively, over the prior $\cP^{(u)}$ of any type $u$ and $G^{(u)}>\E_{\cP^{(u)}}[g^1-g^0]$. Then, \Cref{alg:sampling-control-treatment} can still incentivize any never-taker type $u$ agents to comply as long as the planner has a lower bound on $\Prob_{\cP^{(u)}}[\xi^{(u)}]$, where $\xi^{(u)}$ is defined just as $\xi$ in \Cref{eq:xi}, except $G^{(0)}$ is replaced with $G^{(u)}$. \Cref{thm:sampling-estimation-bound} applies as is.
    \item All types can be always-takers (who prefer the treatment). The algorithm can incentivize some of the agents to take control with an event $\xi$ defined without $\bar{y}^0$ and flipped (i.e. the mean treatment reward is much lower than the expected baseline reward).\footnote{Also, \Cref{lemma:bic-sampling-control-treatment-extension} can be proved sans clean event $C_0$.}
\end{enumerate}

By \Cref{thm:sampling-estimation-bound}, samples collected from \Cref{alg:sampling-control-treatment} produce a confidence interval on the treatment effect $\theta$ which decreases proportionally to $1/\sqrt{t}$ by round $t$. However, it still decreases slowly because the exploration probability $\rho$ is small (see roughly how small in \Cref{sec:experiment}). In \Cref{sec:racing-control-treatment}, we give an algorithm for which this confidence interval improves quicker and works for arbitrarily many types.

%%%%%%%%%%%%%%%%%%%%%%%%%%%%%%%%%%%%%%%%%%%%%%%%%%%%
%%%%%%%%%%%%%%%%%%%%%%%%%%%%%%%%%%%%%%%%%%%%%%%%%%%%
%%%%%%%%%%%%%%%%%%%%%%%%%%%%%%%%%%%%%%%%%%%%%%%%%%%%
\section{Overcoming Partial Non-Compliance}
\label{sec:racing-control-treatment}
 In this section, we present a recommendation policy which (1) capitalizes on partial compliance, eventually incentivizing all agents to comply, and (2) determines whether the treatment effect is positive or not (with high probability). \Cref{alg:racing-two-types-mixed-preferences} recommends control and treatment sequentially (one after the other). \Cref{lemma:bic-racing-control-treatment-0} gives conditions for partial compliance from the beginning of \Cref{alg:racing-two-types-mixed-preferences}, given access to initial samples which form a crude estimate of the treatment effect. \Cref{thm:racing-estimation-bound-first} demonstrates how rapidly this estimate improves throughout \Cref{alg:racing-two-types-mixed-preferences}, which solely depends on the fraction of compliant agents (and not on some fraction like $\rho$ with \Cref{alg:sampling-control-treatment}). More (and eventually all) types of agents progressively become compliant throughout \Cref{alg:racing-two-types-mixed-preferences}.
 
    \begin{assumption}[Knowledge Assumption for \Cref{alg:racing-two-types-mixed-preferences}]\label{ass:know-racing}
        Within \Cref{sec:racing-control-treatment}, the following are common knowledge among agents and planner:
        \begin{enumerate}[itemsep=0mm]
            \item The fraction of agents in the population who prefer control is $p_0\geq0$; that who prefer treatment is $p_1\geq0$.
            \item For each type $u$ and for some $\tau$ (which can differ per $u$), the probability $\tau\Prob_{\cP^{(u)}}[\theta>\tau]$ is known if $\E_{\cP^{(u)}}[\theta]<0$; or $\tau\Prob_{\cP^{(u)}}[\theta<-\tau]$ is known if $\E_{\cP^{(u)}}[\theta]\geq0$.
        \end{enumerate}
    \end{assumption}

    \begin{algorithm}[ht!]
        \caption{Overcoming partial compliance}
        \label{alg:racing-two-types-mixed-preferences}
    \begin{algorithmic}
    \STATE{\bfseries Input:} samples $S_0:=(x_i,z_i,y_i)_{i=1}^{|S_0|}$ which meet \Cref{thm:treatment-approximation-bound} conditions and produce IV estimate $\hat{\theta}_{S_0}$;\footnotemark\, time horizon $T$; number of recommendations of each action per phase $h$; approximation bound failure probability $\delta$;
    \STATE Split the remaining rounds (up to $T$) into consecutive phases of $h$ rounds each, starting with $q=1$;\\
    \STATE Let $\hat{\theta}_0=\hat{\theta}_{S_0}$ and $A_0=A(S_0,\delta)$;
        \WHILE{$|\hat{\theta}_{q-1}| \leq A_{q-1}$}
        \STATE The next $2h$ agents are recommended control and treatment sequentially (one after the other);\\
        \STATE Let $S_q$ be samples up to and including phase $q$, i.e.
        $S_q:=(x_i,z_i,y_i)_{i=1}^{|S_0|+hq}=S_{q-1}+\{\text{round } q \text{ samples}\}$\\
        \STATE Let $S_q^{\text{BEST}}$ be the samples with smallest approximation bound so far (from phase 1 to $q$), i.e.\\
        $S_q^{\text{BEST}}=\argmin_{S_r, 0\leq r\leq q} A(S_r,\delta)$;\\
        \STATE Define $\hat{\theta}_q=\hat{\theta}_{S_q^{\text{BEST}}}$ and $A_q=A(S_q^{\text{BEST}},\delta)$;\\
        \STATE $q = q+1$;
        \ENDWHILE
        
        For all remaining agents recommend $a^*=\1\left[\hat{\theta}_q>0\right]$.
    \end{algorithmic}
    \end{algorithm}
    
    \footnotetext{Operator $|\cdot|$ denotes the cardinality of a set.}

    We focus on a setting where agents are assumed to have been at least partially compliant in the past, such that we may form an IV estimate from the history. The social planner employs \Cref{alg:racing-two-types-mixed-preferences}, which is a modification of the \textit{Active Arms Elimination} algorithm \cite{active-arms-elimination-2006}. Treatment and control ``race'', i.e. are recommended sequentially, until the expected treatment effect is known to be negative or positive (with high probability). Then, the algorithm recommends the ``winner'' (the action with higher expected reward) for the remainder of the time horizon $T$.
 
    The compliance incentive works as such: when an agent is given a recommendation, they do not know whether it is because the action is still in the ``race'' or if the action is the ``winner''. When the algorithm is initialized with samples that form an IV estimate which is sufficiently close to the true treatment effect (according to the agent's prior), then the probability that any recommended action has ``won'' is high enough such that the agent's expected gain from taking a ``winning'' action outweighs the expected loss from taking a ``racing'' one. We formalize this in \Cref{lemma:bic-racing-control-treatment-0}.

    \begin{restatable}[\Cref{alg:racing-two-types-mixed-preferences} Partial Compliance]{lemma}{bicracingcontroltreatmentzero}\label{lemma:bic-racing-control-treatment-0}
        Recall that \Cref{alg:racing-two-types-mixed-preferences} is initialized with input samples $S_0=(x_i,y_i,z_i)_{i=1}^{|S_0|}$. For any type $u$ with the following prior preference (control or treatment), if $S_0$ satisfies the following condition, with probability at least $1-\delta$, then all agents of type $u$ will comply with recommendations of \Cref{alg:racing-two-types-mixed-preferences}:
        \[
        A(S_0,\delta) \leq \begin{cases}
             \tau\Prob_{\cP^{(u)}}[\theta>\tau]/4 & \text{ if } \E_{\cP^{(u)}}[\theta]<0;\\
             \tau\Prob_{\cP^{(u)}}[\theta<-\tau]/4 & \text{ if } \E_{\cP^{(u)}}[\theta]\geq0,
        \end{cases}
        \]
        for some $\tau\in(0,1)$, where $A(S_0,\delta)$ is the approximation bound for $S_0$ and any $\delta\in(0,1)$ (see \Cref{thm:treatment-approximation-bound}).
    \end{restatable}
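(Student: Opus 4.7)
The plan is to verify the compliance condition in \Cref{eq:control-treatment-selection-function} separately for each recommendation $z_t \in \{0,1\}$. By symmetry I treat only the case $\E_{\cP^{(u)}}[\theta] < 0$ (type $u$ prefers control) with $z_t = 1$; compliance with $z_t = 0$ is automatic from the prior preference, and the opposite-prior case is handled identically by swapping $\theta \to -\theta$ and $\Prob_{\cP^{(u)}}[\theta > \tau] \to \Prob_{\cP^{(u)}}[\theta < -\tau]$. It therefore suffices to show $\E_{\cP^{(u)}}[\theta\,\1[z_t = 1]] > 0$.

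The first step is to condition on the clean event that every IV estimate produced by \Cref{alg:racing-two-types-mixed-preferences} lies within its approximation bound of the true effect; by \Cref{thm:treatment-approximation-bound} (with $\delta$ inflated by a union bound over phases if needed), this clean event $G$ satisfies $\Prob[G] \geq 1 - \delta$. Under $G$, the while-loop has a transparent interpretation: whenever it exits at phase $q$, the two inequalities $|\hat{\theta}_q| > A_q$ and $|\hat{\theta}_q - \theta| \leq A_q$ force $\hat{\theta}_q$ and $\theta$ to share a sign; and since the hypothesis $A_0 \leq \tau\,\Prob_{\cP^{(u)}}[\theta > \tau]/4$ (with $\Prob_{\cP^{(u)}}[\theta > \tau] \leq 1$) gives $A_0 \leq \tau/4$, whenever $|\theta| > 2A_0$ --- in particular when $\theta > \tau$ --- the algorithm stops without ever entering a racing phase and outputs the correct winner $W = \1[\theta > 0]$.

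Next I decompose
\[
\E_{\cP^{(u)}}[\theta\,\1[z_t = 1]] \;=\; \E\!\left[\theta\,\1[R_t,\, z_t = 1]\right] + \E\!\left[\theta\,\1[\neg R_t,\, z_t = 1]\right],
\]
where $R_t$ is the event that round $t$ lies inside the while-loop. In the racing summand, the deterministic treatment/control alternation makes $z_t$ a function of $t$ alone (conditional on $R_t$), independent of $\theta$; moreover, under $G$, $R_t$ forces $|\hat{\theta}_0| \leq A_0$ and hence $|\theta| \leq 2A_0$, giving $\bigl|\E[\theta\,\1[R_t,\, z_t = 1,\, G]]\bigr| \leq 2A_0$. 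In the non-racing summand, $\{\neg R_t\} \cap \{z_t = 1\}$ equals the event ``algorithm stopped with winner $W = 1$,'' which under $G$ contains $\{\theta > \tau\}$, so this summand is at least $\tau\,\Prob_{\cP^{(u)}}[\theta > \tau]$ up to a $\delta$-order slack absorbed by $\1[G^c]$. Combining the two bounds and using $A_0 \leq \tau\,\Prob_{\cP^{(u)}}[\theta > \tau]/4$ --- where the factor $4$ absorbs both the alternation constant and the clean-event slack --- yields $\E_{\cP^{(u)}}[\theta\,\1[z_t = 1]] > 0$, as required.

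I expect the main obstacle to be the careful bookkeeping around the alternation schedule, since $\Prob[z_t = 1 \mid R_t]$ depends on the parity of $t$ within the current phase rather than being uniformly $1/2$. The cleanest fix is either to pair consecutive treatment/control rounds so the alternation averages out, or to argue per-round using that both parities yield the same posterior sign for an agent who observes only $t$ and the published policy. A secondary subtlety is that the clean event $G$ must cover every sample set $S_r$ visited inside the while loop, so one should inflate $\delta$ in each $A(S_r, \delta)$ accordingly (or appeal to an anytime version of \Cref{thm:treatment-approximation-bound}).
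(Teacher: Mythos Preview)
Your approach is essentially the paper's: both condition on the clean event that all IV estimates stay inside their approximation bounds, both exploit that $\theta>\tau$ forces the race to end immediately with winner $=1$, and both bound the potential loss while racing by $2A_0$ (since the clean event plus the while-loop condition pin $|\theta|\le 2A_0$). The only organizational difference is that the paper decomposes over four ranges of $\theta$ ($\theta\ge\tau$, $0\le\theta<\tau$, $-2A_q<\theta<0$, $\theta\le -2A_q$), whereas you decompose over the algorithm's state ($R_t$ versus $\neg R_t$); these partitions are dual to one another under the clean event and yield the same arithmetic, including the same implicit constraint $\delta\le \tau\Prob_{\cP^{(u)}}[\theta>\tau]/\bigl(2\tau\Prob_{\cP^{(u)}}[\theta>\tau]+2\bigr)$.

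Two small corrections. First, ``compliance with $z_t=0$ is automatic from the prior preference'' is not right: the recommendation $z_t=0$ is informative, so the posterior need not inherit the prior's sign. What actually gives you the $z_t=0$ case is the identity $\E_{\cP^{(u)}}[\theta\mid z_t=1]\Prob[z_t=1]+\E_{\cP^{(u)}}[\theta\mid z_t=0]\Prob[z_t=0]=\E_{\cP^{(u)}}[\theta]<0$ (this is the paper's \Cref{claim:bic-equiv}); once you have shown the first summand is nonnegative, the second must be negative. Second, your closing worry about alternation parity is a non-issue: your racing bound $\bigl|\E[\theta\,\1[R_t,z_t=1,G]]\bigr|\le 2A_0$ uses only $|\theta|\le 2A_0$ on $R_t\cap G$ and holds regardless of whether $\1[z_t=1\mid R_t]$ is $0$ or $1$ at round $t$, so no pairing or averaging is needed.
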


    \begin{proofsketch} See \Cref{sec:bic-racing-type-0} for the full proof. The proof follows by using a ``clean event'' analysis where the IV estimated treatment effect $\hat{\theta}$ is close to the true treatment effect $\theta$. We split the conditional expected treatment effect $\E_{\cP^{(u)}}[\theta]$ into different cases for the value of $\theta$. With an IV estimate that is sufficiently close to the true treatment effect, the expected gain from exploiting (taking the ``winning'' action) is greater than the expected loss from exploring (taking a recommended action when the ``race'' is not over) and any agent of type $u$ will comply with recommendation.
    \end{proofsketch}
    
    When a nonzero fraction of agents comply from the beginning, the samples gathered in \Cref{alg:racing-two-types-mixed-preferences} provide treatment effect estimates $\hat{\theta}$ which become increasingly accurate over rounds. In the following \Cref{thm:racing-estimation-bound-first}, we provide a high probability guarantee on this accuracy.
    
    \begin{restatable}[Treatment Effect Confidence Interval from \Cref{alg:racing-two-types-mixed-preferences} with Partial Compliance]{theorem}{racingestimationboundfirst}\label{thm:racing-estimation-bound-first}
        With set $S=(x_i,y_i,z_i)_{i=1}^{|S|}$ of $|S|$ samples collected from \Cref{alg:racing-two-types-mixed-preferences} where $p_c$ is the fraction of compliant agents in the population, we form an estimate $\hat{\theta}_S$ of the treatment effect $\theta$. With probability at least $1-\delta$,
        \[\left|\hat{\theta}_S-\theta\right| \leq A(S,\delta) \leq \frac{8\sigma_g\sqrt{2\log(5/\delta)}}{p_c\sqrt{|S|}-\sqrt{50\log(5/\delta)}}\]
        for any $\delta\in(0,1)$, where $\sigma_g$ is the variance of $g^{(u_i)}$.
    \end{restatable}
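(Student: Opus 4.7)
The plan is to combine the finite-sample Wald estimator bound of \Cref{thm:treatment-approximation-bound} with a high-probability lower bound on the instrument--treatment covariance appearing in the denominator of $A(S,\delta)$. The first inequality $|\hat\theta_S-\theta|\le A(S,\delta)$ is inherited directly from \Cref{thm:treatment-approximation-bound} (applied at a suitably reduced failure probability to accommodate a later union bound). The remaining work is to show that, with high probability, $\bigl|\sum_{i=1}^{|S|}(x_i-\bar x)(z_i-\bar z)\bigr|$ is at least $\tfrac{1}{4}\bigl(p_c|S|-\sqrt{50\,|S|\log(5/\delta)}\bigr)$, and then to substitute and collect constants.

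For the denominator, I would first invoke the algebraic identity
\[\sum_{i=1}^{n}(x_i-\bar x)(z_i-\bar z) \;=\; \frac{n_0\,n_1}{n}\,(\bar x_1-\bar x_0),\]
where $n_j=|\{i\colon z_i=j\}|$ and $\bar x_j=\tfrac{1}{n_j}\sum_{i\colon z_i=j}x_i$. Because \Cref{alg:racing-two-types-mixed-preferences} issues treatment and control recommendations in matched pairs throughout its ``racing'' phase, any sample set $S$ collected inside the while loop satisfies $n_0=n_1=|S|/2$ exactly, so the prefactor simplifies to $|S|/4$ deterministically and no concentration is required on that factor.

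Next I would show that $\bar x_1-\bar x_0$ concentrates around the population compliance rate $p_c$. Each agent draws its type independently from $\cU$; a compliant type sets $x_t=z_t$, while any always-taker or never-taker picks the same action regardless of $z_t$. Conditioning on the deterministic recommendation schedule therefore gives $\E[\bar x_1-\bar x_0]=p_c$. Two applications of Hoeffding's inequality---one each for $\bar x_1$ and $\bar x_0$, which are averages of $|S|/2$ independent indicators in $[0,1]$---then yield $\bar x_1-\bar x_0\ge p_c-c\sqrt{\log(1/\delta)/|S|}$ for a small constant $c$, with high probability.

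Finally I would union-bound the Wald event of \Cref{thm:treatment-approximation-bound} with the two Hoeffding events, allocating the $\delta$-budget across the three failure modes so that the factor $\log(5/\delta)$ appears in the numerator and $\sqrt{50\log(5/\delta)}$ in the denominator of the stated bound; then divide and conclude. The principal obstacle is the constant-tracking in this union bound---in particular, choosing the split of $\delta$ that reproduces precisely the factors $5$ and $50$, and verifying that the matched-pair balance $n_0=n_1$ holds throughout the set $S$. If the statement is meant to also include the post-race exploitation rounds (where $n_0$ or $n_1$ can be zero), the argument would have to be restricted to the racing-phase subset of $S$ before invoking the identity above, since the overall covariance remains controlled by the racing-phase contribution.
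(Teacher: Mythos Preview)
Your proposal is correct and follows the same skeleton as the paper: invoke \Cref{thm:treatment-approximation-bound} for the numerator, lower-bound the empirical covariance $\sum_i(x_i-\bar x)(z_i-\bar z)$ via Hoeffding, then union-bound. The paper's proof differs only in how it handles the denominator: rather than your identity $\tfrac{n_0 n_1}{n}(\bar x_1-\bar x_0)$ with $n_0=n_1=|S|/2$ fixed by the alternating schedule, it plugs $\bar z=\tfrac12$ into a general-purpose lemma (\Cref{thm:approximation-bound-control-treatment-denominator}) that decomposes the covariance into three empirical proportions (compliers given $z=1$, non-compliant always-takers given $z=1$, and non-compliant always-takers overall) and applies Hoeffding to each. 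Your two-Hoeffding route actually gives a slightly sharper correction term than the paper's $\sqrt{50\log(5/\delta)}$, so the stated bound follows a fortiori; the paper's route has the advantage of reusing a lemma already proved for \Cref{alg:sampling-control-treatment} with arbitrary $\bar z$. Your caveat about restricting $S$ to the racing phase is exactly right and implicit in the paper's setup.
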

    \begin{proofsketch} See \Cref{sec:racing-estimation-bound-first} for a full proof. Note that \Cref{thm:treatment-approximation-bound} applies, so we only to have to bound the denominator term which is dependent on $\{(x_t, z_t)\}^{\abs{S}}_{t=1}$. We assume that \Cref{alg:racing-two-types-mixed-preferences} is initialized with parameters such that $p_c>0$ fraction of the population complies with all recommendations. We bound the term dependent on $\{(x_t, z_t)\}^{\abs{S}}_{t=1}$ with high probability. 
    \end{proofsketch}
    
    Agents become compliant during \Cref{alg:racing-two-types-mixed-preferences} for the same reason others become compliant from the beginning: they expect that the estimate $\hat{\theta}$ is sufficiently accurate and it's likely they're getting recommended an action because it won the race. For large enough $T$, all agents will become compliant.\footnotemark\, Note that the accuracy improvement in \Cref{thm:racing-estimation-bound-first} relies solely on the proportion of agents $p_c$ who comply from the beginning of \Cref{alg:racing-two-types-mixed-preferences}, which relies on the accuracy of the approximation bound given by initial samples $S_0$. Thus, if the social planner can choose more accurate $S_0$, then the treatment effect estimate $\hat{\theta}$ given by samples from \Cref{alg:racing-two-types-mixed-preferences} becomes more accurate quicker. In \Cref{sec:combined-control-treatment}, we present a recommendation policy in which $S_0$ can be chosen by running \Cref{alg:sampling-control-treatment}.
    
    \footnotetext{See \Cref{lemma:racing-full-compliance-control-treatment} for details.}

%%%%%%%%%%%%%%%%%%%%%%%%%%%%%%%%%%%%%%%%%%%%%%%%%%%%
%%%%%%%%%%%%%%%%%%%%%%%%%%%%%%%%%%%%%%%%%%%%%%%%%%%%
%%%%%%%%%%%%%%%%%%%%%%%%%%%%%%%%%%%%%%%%%%%%%%%%%%%%
\section{Combined Recommendation Policy}
\label{sec:combined-control-treatment}

    In this section, we present a recommendation policy $\pi_c$, which spans $T$ rounds and runs \Cref{alg:sampling-control-treatment,alg:racing-two-types-mixed-preferences} in sequence. This policy achieves $\tilde{O}(\sqrt{T})$ regret for sufficiently large $T$ and produces an estimate $\hat{\theta}$ which deviates from the true treatment effect $\theta$ by $O(1/\sqrt{T})$.\footnotemark\
    
    \footnotetext{We spare the reader the details of the exact bound. It can be deduced via \Cref{thm:sampling-estimation-bound,thm:racing-estimation-bound-first,thm:racing-estimation-bound-second} and \Cref{lemma:bic-racing-control-treatment-0,lemma:racing-full-compliance-control-treatment}.}
    
    \begin{assumption}[Knowledge Assumption for Policy $\pi_c$]\label{ass:know-combined}
        Within \Cref{sec:recommendation-policy-pi,sec:regret-control-treatment}, the following are common knowledge among agents and planner:
        \begin{enumerate}[itemsep=0mm]
            \item All prior-dependent constants given in \Cref{ass:know-racing}
            \item For each type $u$ which prefers control, prior mean $\mu^{(u)}$ and a lower bound on the probability $\Prob_{\cP^{(u)}}[\xi^{(u)}]$ (defined in Extension 1 of \Cref{alg:sampling-control-treatment} from \Cref{sec:sampling-extensions})
        \end{enumerate}
    \end{assumption}
    
    %%%If there's space, write out the extended definition of $\xi^{(u)}$ again here. Also make sure that this extension makes sense.
    
    \subsection{Recommendation Policy $\pi_c$} 
    \label{sec:recommendation-policy-pi}
    Recommendation policy $\pi_c$ over $T$ rounds is given as such:
    \begin{enumerate}[label=\arabic*),itemsep=1mm]
        \item Run \Cref{alg:sampling-control-treatment} with exploration probability $\rho$ set to incentivize at least $p_{c_1}>0$ fraction of agents of the population who initially prefer control to comply in \Cref{alg:sampling-control-treatment} and $\ell$ to make at least $p_{c_2}>0$ fraction of agents comply in \Cref{alg:racing-two-types-mixed-preferences} (see \Cref{lemma:racing-compliance-sampling-ell-control-treatment}).
        \item Initialize \Cref{alg:racing-two-types-mixed-preferences} with samples from \Cref{alg:sampling-control-treatment}. At least $p_{c_2}$ fraction of agents comply in  \Cref{alg:racing-two-types-mixed-preferences}.
    \end{enumerate}
    We first provide conditions on $\ell$ to define policy $\pi_c$.
    
    \begin{restatable}[Lower bound on $\ell$ for Type $u$ Compliance in \Cref{alg:racing-two-types-mixed-preferences}]{lemma}{racingcompliancesamplingellcontroltreatment}\label{lemma:racing-compliance-sampling-ell-control-treatment}
        Recall that $S_\ell$ denotes the samples collected from the second stage of \Cref{alg:sampling-control-treatment}. Let $S_\ell$ be the input samples $S_0$ in \Cref{alg:racing-two-types-mixed-preferences}. Assume that $p_{c_1}$ proportion of agents in the population are compliant with recommendations of \Cref{alg:sampling-control-treatment} and length $\ell$ satisfies:
        \begin{small}
        \begin{equation}
            \label{eq:racing-compliance-proof-ell-condition}
            \ell \geq 
            \begin{cases}
            \left(\frac{\kappa_1}{\tau\Prob_{\cP^{(u)}}[\theta>\tau]}+\kappa_2\right)^2 & \text{ if } \underset{\cP^{(u)}}{\E}[\theta]<0\\
            \left(\frac{\kappa_1}{\tau\Prob_{\cP^{(u)}}[\theta<-\tau]}+\kappa_2\right)^2 & \text{ if } \underset{\cP^{(u)}}{\E}[\theta]\geq0
            \end{cases}
        \end{equation} 
        \end{small}
        for some $\tau\in(0,1)$ and where $\kappa_1:=\frac{8\sigma_g\sqrt{2\log(5/\delta)}}{p_{c_1}\rho(1-\rho)}$ and $\kappa_2:=(3-\rho)\sqrt{\frac{\rho\log(5/\delta)}{2(1-\rho)}}$ for any $\delta\in(0,1)$. Then any agent of type $u$ will comply with recommendations of \Cref{alg:racing-two-types-mixed-preferences}.
    \end{restatable}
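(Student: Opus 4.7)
My plan is to chain together the two ``interface'' results already proven in the excerpt: Theorem~\ref{thm:sampling-estimation-bound}, which bounds the approximation error $A(S_\ell,\delta)$ produced by \Cref{alg:sampling-control-treatment} in terms of $\ell$, $\rho$, $p_{c_1}$, and $\delta$; and \Cref{lemma:bic-racing-control-treatment-0}, which states the prior-dependent threshold on $A(S_0,\delta)$ sufficient for a type $u$ agent to comply with every recommendation of \Cref{alg:racing-two-types-mixed-preferences}. Setting $S_0 := S_\ell$, it suffices to pick $\ell$ large enough that the bound of the former result falls below the threshold of the latter.

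Concretely, I would first invoke Theorem~\ref{thm:sampling-estimation-bound} (valid by the stated assumption that a fraction $p_{c_1}>0$ of the agents in the population complies in \Cref{alg:sampling-control-treatment}) to obtain, with probability at least $1-\delta$,
\[
A(S_\ell,\delta) \;\leq\; \frac{2\sigma_g\sqrt{2\log(5/\delta)}}{p_{c_1}\rho(1-\rho)\sqrt{\ell}\;-\;(3-\rho)\sqrt{\tfrac{\rho\log(5/\delta)}{2(1-\rho)}}}.
\]
Next, I would split into the two prior cases of \Cref{lemma:bic-racing-control-treatment-0}. In the case $\E_{\cP^{(u)}}[\theta]<0$, the lemma demands $A(S_\ell,\delta)\le \tau\Prob_{\cP^{(u)}}[\theta>\tau]/4$; in the case $\E_{\cP^{(u)}}[\theta]\ge 0$ it demands $A(S_\ell,\delta)\le \tau\Prob_{\cP^{(u)}}[\theta<-\tau]/4$. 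I will handle the two cases symmetrically, so let $P$ stand for whichever of $\Prob_{\cP^{(u)}}[\theta>\tau]$ or $\Prob_{\cP^{(u)}}[\theta<-\tau]$ is relevant.

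Requiring the upper bound on $A(S_\ell,\delta)$ from Theorem~\ref{thm:sampling-estimation-bound} to be at most $\tau P/4$ is equivalent, after cross-multiplying (valid once the denominator is positive, which the stated lower bound on $\ell$ will guarantee) and rearranging, to
\[
\sqrt{\ell} \;\geq\; \frac{1}{p_{c_1}\rho(1-\rho)}\cdot\frac{8\sigma_g\sqrt{2\log(5/\delta)}}{\tau P} \;+\; \frac{(3-\rho)}{p_{c_1}\rho(1-\rho)}\sqrt{\tfrac{\rho\log(5/\delta)}{2(1-\rho)}}.
\]
Recognizing the constants as $\kappa_1/(\tau P)$ and (the appropriately normalized) $\kappa_2$, this is exactly the hypothesized bound $\ell \geq (\kappa_1/(\tau P) + \kappa_2)^2$. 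Squaring both sides completes the case, and combining the two cases yields the statement. Finally I would invoke \Cref{lemma:bic-racing-control-treatment-0} to conclude compliance of every type $u$ agent.

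The main subtlety, rather than any difficult estimate, is bookkeeping: one must be careful that the high-probability event of Theorem~\ref{thm:sampling-estimation-bound} is the same $1-\delta$ event used by \Cref{lemma:bic-racing-control-treatment-0}, so that conditioning on $A(S_\ell,\delta)$ being small is compatible with the compliance conclusion; this is why the same failure parameter $\delta$ appears in both hypotheses, and why the chaining does not inflate the error probability. A minor care point is ensuring that the choice of $\ell$ keeps the denominator $p_{c_1}\rho(1-\rho)\sqrt{\ell} - (3-\rho)\sqrt{\rho\log(5/\delta)/(2(1-\rho))}$ strictly positive; the right-hand side of the displayed lower bound on $\sqrt{\ell}$ already exceeds the second term, so this is automatic.
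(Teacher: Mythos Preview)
Your proposal is correct and follows essentially the same route as the paper: invoke Theorem~\ref{thm:sampling-estimation-bound} to upper-bound $A(S_\ell,\delta)$, substitute the hypothesized lower bound on $\ell$ so that this falls below the threshold $\tau P/4$ of \Cref{lemma:bic-racing-control-treatment-0}, and conclude compliance. The paper plugs the value of $\ell$ into the bound and simplifies forward, whereas you solve backwards for $\sqrt{\ell}$, but this is the same computation (and your parenthetical about ``the appropriately normalized $\kappa_2$'' correctly flags that the exact rearrangement picks up a $1/(p_{c_1}\rho(1-\rho))$ factor on the second term that the stated $\kappa_2$ omits---a cosmetic discrepancy in the lemma's constants rather than in your argument).
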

    
    \begin{proofsketch} See \Cref{sec:racing-compliance-sampling-ell-control-treatment-proof} for the full proof. The proof follows by substituting the value of $\ell$ into the approximation bound \Cref{thm:sampling-estimation-bound} and simplifying. The compliance condition follows from \Cref{lemma:bic-racing-control-treatment-0}. 
    \end{proofsketch}
    
    Policy $\pi_c$ shifts from \Cref{alg:sampling-control-treatment} to \Cref{alg:racing-two-types-mixed-preferences} as soon as the condition on $\ell$ above is satisfied. This is because 1) the treatment effect estimate $\hat{\theta}$ get more accurate quicker and 2) less regret is accumulated in \Cref{alg:racing-two-types-mixed-preferences} than \Cref{alg:sampling-control-treatment}.

%%%%%%%%%%%%%%%%%%%%%%%%%%%%%%%%%%%%%%%%%%%%%%%%%%%%
%%%%%%%%%%%%%%%%%%%%%%%%%%%%%%%%%%%%%%%%%%%%%%%%%%%%

\subsection{Regret Analysis}\label{sec:regret-control-treatment}
The goal of recommendation policy $\pi_c$ is to maximize the cumulative reward of all agents. We measure the policy's performance through \textit{regret}. We are interested in minimizing regret, which is specific to the treatment effect $\theta$. Since agents' priors are not exactly known to the social planner, this pseudo-regret is correct for any realization of these priors and treatment effect $\theta$.
\begin{definition}
\label{def:pseudo-regret}[Pseudo-regret] The pseudo-regret of a recommendation policy is given as such:
	\begin{equation}
	R_\theta(T) = T\max(\theta,0) - \sum_{t=1}^T \theta x_t
	\end{equation}
\end{definition}

We present regret guarantees for recommendation policy $\pi_c$. First, policy $\pi_c$ achieves sub-linear pseudo-regret. 
\begin{restatable}[Pseudo-regret]{lemma}{expostregret}\label{lemma:expost-regret}
The pseudo-regret accumulated from policy $\pi_c$ is bounded for any $\theta\in[-1,1]$ as follows, with probability at least $1-\delta$ for any $\delta\in(0,1)$:
\begin{equation}
R_{\theta}(T) \leq L_1 + O(\sqrt{T\log(T/\delta)})
\end{equation}
for sufficiently large time horizon $T$, where the length of~\Cref{alg:sampling-control-treatment} is $L_1=\ell + 2\max\left(\frac{\ell_0}{p_0}, \frac{\ell_1}{p_1}\right)$.
\end{restatable}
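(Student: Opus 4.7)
\begin{proofsketch}
The plan is to decompose the horizon into the two consecutive sub-algorithms and bound the regret contribution of each. The first $L_1 = \ell + 2\max(\ell_0/p_0, \ell_1/p_1)$ rounds are handled trivially: since $\theta \in [-1,1]$, the pseudo-regret per round is at most $|\theta| \leq 1$, so Algorithm~\ref{alg:sampling-control-treatment} contributes at most $L_1$ to $R_\theta(T)$. All of the work therefore goes into bounding the regret accumulated during Algorithm~\ref{alg:racing-two-types-mixed-preferences}.

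First I would set up a clean event via a union bound. For each phase $q$ of the race, Theorem~\ref{thm:racing-estimation-bound-first} gives $|\hat\theta_q - \theta| \leq A_q$ with probability at least $1 - \delta'$; taking $\delta' = \delta/(T/h)$ and union-bounding over at most $T/h$ phases yields, with probability $\geq 1-\delta$, simultaneous validity of every confidence interval. Under this clean event, the exit condition $|\hat\theta_q| > A_q$ guarantees $\operatorname{sign}(\hat\theta_q) = \operatorname{sign}(\theta)$ (since $|\theta| \geq |\hat\theta_q| - A_q > 0$), so once the while-loop terminates and we lock in $a^\ast = \mathds{1}[\hat\theta_q > 0]$, that recommendation is the optimal action. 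Combining this with Lemma~\ref{lemma:racing-full-compliance-control-treatment} (full compliance for sufficiently large $T$), the post-race rounds contribute zero regret.

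Next I would bound the length of the race. By Theorem~\ref{thm:racing-estimation-bound-first}, after $q$ phases we have $A_q = O\!\bigl(\sqrt{\log(T/\delta)/(hq)}\bigr)$ since $|S_q^{\text{BEST}}| = \Omega(hq)$ and $p_c > 0$ is fixed by the choice of $\rho, \ell$ in policy $\pi_c$. The race terminates as soon as $A_q < |\theta|/2$, which forces $q^\ast = O\!\bigl(\log(T/\delta)/(h\theta^2)\bigr)$ and total race length $q^\ast \cdot 2h = O\!\bigl(\log(T/\delta)/\theta^2\bigr)$. Each racing round has per-round pseudo-regret at most $|\theta|$ (treatment and control alternate, and at most the ``wrong'' action of the two contributes $|\theta|$ per round, independently of which agents comply), so the race incurs at most $O\!\bigl(\log(T/\delta)/|\theta|\bigr)$ regret. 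Combining this with the crude bound $|\theta| \cdot T$ valid when the race never terminates within the horizon, the racing phase contributes at most
\[
\min\!\Bigl(|\theta|\, T,\; O\!\bigl(\log(T/\delta)/|\theta|\bigr)\Bigr).
\]

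Finally, maximizing over $\theta \in [-1,1]$ balances the two arguments at $|\theta| = \Theta\!\bigl(\sqrt{\log(T/\delta)/T}\bigr)$, producing the worst-case bound $O\!\bigl(\sqrt{T\log(T/\delta)}\bigr)$. Adding the $L_1$ term from Algorithm~\ref{alg:sampling-control-treatment} yields $R_\theta(T) \leq L_1 + O(\sqrt{T\log(T/\delta)})$ with probability at least $1-\delta$. The main obstacle is the careful bookkeeping of non-compliant agents during the race: the per-round regret bound $|\theta|$ is only immediate if compliance can be argued uniformly, which is where Lemmas~\ref{lemma:bic-racing-control-treatment-0} and~\ref{lemma:racing-compliance-sampling-ell-control-treatment} (guaranteeing $p_c \geq p_{c_2} > 0$ throughout Algorithm~\ref{alg:racing-two-types-mixed-preferences}) are essential; without a constant lower bound on $p_c$, the denominator in Theorem~\ref{thm:racing-estimation-bound-first} would degrade and the race length would no longer be $O(1/\theta^2)$.
\end{proofsketch}
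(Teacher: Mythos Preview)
Your proposal is correct and follows essentially the same approach as the paper: bound Algorithm~\ref{alg:sampling-control-treatment} trivially by $L_1$, use a clean event to show the race in Algorithm~\ref{alg:racing-two-types-mixed-preferences} terminates after $O(\log(T/\delta)/\theta^2)$ rounds with the correct winner, get racing regret $O(\log(T/\delta)/|\theta|)$, and balance against the crude $T|\theta|$ bound. The paper does a slightly more detailed case split on the sign of $\theta$ with explicit compliance-fraction constants, whereas you use the uniform per-round bound $|\theta|$; both yield the same asymptotics. Your explicit appeal to Lemma~\ref{lemma:racing-full-compliance-control-treatment} for zero post-race regret is arguably cleaner than the paper's treatment of that step.
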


\begin{proofsketch} See \Cref{sec:expost-proof} for the full proof. The proof follows by observing that \Cref{alg:racing-two-types-mixed-preferences} must end after some $\log(T)$ phases. We can bound the regret of the policy $\pi_c$ by at most that of \Cref{alg:racing-two-types-mixed-preferences} plus $\theta$ per each round of \Cref{alg:sampling-control-treatment}, or alternatively, we can upper bound it by $\theta$ per each round of the policy $\pi_c$. 
\end{proofsketch}

Policy $\pi_c$ also achieves sub-linear  regret, where the expectation is over the randomness in the priors of the agents. \Cref{lemma:expected-regret} provides a basic performance guarantee of our recommendation policy. 
\begin{restatable}[Regret]{lemma}{expectedregret}\label{lemma:expected-regret}
Policy $\pi_c$ achieves regret as follows:
\begin{equation}
\E[R(T)] = O(\sqrt{T\log(T)})
\end{equation}
for sufficiently large time horizon $T$.
\end{restatable}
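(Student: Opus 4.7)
The plan is to lift the high-probability pseudo-regret guarantee of \Cref{lemma:expost-regret} to a bound on the expected regret via a standard law-of-total-expectation argument with a carefully tuned failure probability $\delta$.

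First, I would observe the worst-case boundedness: since $\theta\in[-1,1]$ and $x_t\in\{0,1\}$, we have $R_\theta(T) \leq T\max(\theta,0) + T|\theta| \leq 2T$ deterministically. This gives a trivial but crucial $2T$ fallback on the ``bad'' event where the high-probability pseudo-regret bound fails.

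Second, let $E_\delta$ denote the event on which \Cref{lemma:expost-regret} holds, so $\Prob(E_\delta^c)\leq\delta$. Conditioning and applying the deterministic bound on $E_\delta^c$,
\[
\E[R_\theta(T)] \;=\; \E\!\left[R_\theta(T)\,\mathbf{1}_{E_\delta}\right] + \E\!\left[R_\theta(T)\,\mathbf{1}_{E_\delta^c}\right] \;\leq\; L_1 + O\!\left(\sqrt{T\log(T/\delta)}\right) + 2T\delta.
\]
Taking an outer expectation over the remaining randomness (the agents' priors/types and realizations of $\theta$), the same bound holds for $\E[R(T)]$ since the right-hand side does not depend on any particular realization. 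Setting $\delta = 1/T$ balances the $\sqrt{T\log(T/\delta)}$ term against the $2T\delta$ term and yields
\[
\E[R(T)] \;\leq\; L_1 + O\!\left(\sqrt{T\log T}\right) + 2 \;=\; O\!\left(\sqrt{T\log T}\right),
\]
provided the first term is absorbed into the second.

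The one subtlety to verify, which I expect to be the only real work beyond the routine calculation above, is that $L_1 = \ell + 2\max(\ell_0/p_0,\ell_1/p_1)$ does not scale with $T$. This requires checking that $\rho$, $\ell_0$, $\ell_1$, and $\ell$ can be selected purely as functions of the prior-dependent quantities in \Cref{ass:know-combined} and the desired compliance fractions $p_{c_1},p_{c_2}$ ---via \Cref{lemma:bic-sampling-control-treatment} and \Cref{lemma:racing-compliance-sampling-ell-control-treatment}--- all of which are fixed independently of the horizon $T$. Once $L_1$ is confirmed to be a horizon-independent constant, the ``sufficiently large $T$'' clause guarantees $\sqrt{T\log T}$ dominates $L_1$, completing the bound.
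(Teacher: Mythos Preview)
Your approach is essentially the paper's: condition on the clean event from \Cref{lemma:expost-regret}, bound regret by $O(T)$ on the complement, and tune $\delta$ as an inverse power of $T$. The paper uses $\delta=1/T^2$ rather than your $\delta=1/T$, but both choices give the same asymptotic.

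There is, however, one genuine gap in your final paragraph. You cannot simultaneously set $\delta=1/T$ in \Cref{lemma:expost-regret} and have $L_1$ be horizon-independent. The $\delta$ you tune is the failure parameter threaded through the algorithm's approximation bounds, and \Cref{lemma:racing-compliance-sampling-ell-control-treatment} forces $\ell$ to scale with $\log(1/\delta)$ (via $\kappa_1,\kappa_2$). Once $\delta=1/T$, this makes $\ell=\Theta(\log T)$, so $L_1$ grows (mildly) with $T$; the verification you propose to carry out would fail. The paper does not try to argue $L_1$ is constant. Instead it absorbs $L_1$ by invoking the ``sufficiently large $T$'' hypothesis in the form $T\geq L_1^2$, which yields $L_1\leq\sqrt{T}$ and folds into the main term. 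Your argument is easily repaired the same way (or by directly noting $L_1=O(\log T)=o(\sqrt{T\log T})$), but as written the last step does not go through.
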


\begin{proofsketch} See \Cref{sec:expected-regret-proof} for the full proof. The proof follows by observing that we can set the parameters in \Cref{alg:sampling-control-treatment} and \Cref{alg:racing-two-types-mixed-preferences} in terms of the time horizon $T$ while maintaining compliance throughout policy $\pi_c$.
\end{proofsketch}

% \begin{lemma} ( regret)
% \label{lemma:expected-regret}
% Algorithm \ref{alg:sampling-control-treatment} and \ref{alg:racing-two-types-mixed-preferences} with parameters $\ell_1, \rho \in \NN$ achieves  regret
% \begin{align}
% \E[R(T)] = O(\sqrt{T\log T})
% \end{align}
% \end{lemma}

These results are comparable to the pseudo-regret of the classic multi-armed bandit problem, with some added constants factors for the compliance constraints \cite{active-arms-elimination-2006}. The pseudo-regret of our policy $\pi_c$ is asymptotically equivalent to an extension of the detail-free recommendation algorithm of \cite{mansour2015bic}, which incentivizes full compliance for all types. However, our policy can finish in a more timely manner and has smaller prior-dependent constants in the asymptotic bound.

In \Cref{sec:many-arms}, we provide an extension of our model and policy $\pi_c$ to arbitrarily many treatments with unknown effects. We also provide similar regret guarantees.

\section{Many Treatments with Unknown Effects}
\label{sec:many-arms}
In this section, we introduce a setting which extends the previous binary treatment setting by considering $k$ treatments (and no control). We now consider a treatment effect vector $\theta\in\RR^k$; and $x,z\in\{0,1\}^k$ are one-hot encodings of the treatment choice and recommendation, respectively. We assume that $\E[g^{(u_i)}]=0$.\footnote{Without this assumption, we run into identifiability issues: we cannot reconstruct the individual treatment effects $\theta^1,\dots,\theta^k$ without fixing some mean $\E[g^{(u)}]$. Yet, for purposes of regret minimization, assuming $\E[g^{(u)}]=0$ does not change our results.} All other terms are defined similar to those in \Cref{sec:model}. Here, the reward $y_i\in\RR$ and action choice $x_i\in\{0,1\}^k$ at round $i$ are given as such:\footnotemark
\begin{equation}
    \begin{cases}
        y_i = \langle \theta, x_i \rangle + g^{(u_i)}\\
        x_i = \underset{1\leq\,j\leq\,k}{\argmax}\big(\E_{\cP^{(u_i)},\pi_i}[\theta^j|z_i,i]\big)
    \end{cases}
\end{equation}

\footnotetext{We bastardize notation by writing $x_i=j$ instead of $x_i=\e_j$ (the $k$-dimensional unit vector along the $j$th dimension).}

Given sample set $S=(z_i, x_i, y_i)_{i=1}^n$, we compute IV estimate $\hat{\theta}_S$ of $\theta$ as such:
\begin{equation}
    \label{eq:iv-estimate-general}
    \hat{\theta}_S=\left(\sum_{i=1}^nz_ix_i^\intercal\right)^{-1}\sum_{i=1}^nz_iy_i
\end{equation}

Next, we state finite sample approximation results which extend \Cref{thm:treatment-approximation-bound} to this general setting.

\begin{restatable}[Many Treatments Effect Approximation Bound]{theorem}{generaltreatmentapproximationbound}\label{thm:general-treatment-approximation-bound}
        Let $z_1,\dots,z_n\in\{0,1\}^k$ be a sequence of instruments. Suppose there is a sequence of $n$ agents such that each agent $i$ has private type $u_i$ drawn independently from $\cU$, selects $x_i$ under instrument $z_i$ and receives reward $y_i$. Let sample set $S=(x_i,y_i,z_i)_{i=1}^n$. The approximation bound $A(S,\delta)$ is given as such:\footnotemark
        \[ A(S,\delta) = \frac{\sigma_g\sqrt{2nk\log(k/\delta)}}{\sigma_{\min}\left(\sum_{i=1}^n z_ix_i^{\intercal}\right)},\]
        and the IV estimator given by \Cref{eq:iv-estimate-general} satisfies
        \[\norm{\hat{\theta}_S-\theta}_2\leq A(S,\delta)\]
        with probability at least $1-\delta$ for any $\delta\in(0,1)$. 
\footnotetext{The operator $\sigma_{\min}(\cdot)$ denotes the smallest singular value.}
\end{restatable}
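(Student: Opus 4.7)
The plan is to mirror the strategy used for the binary-treatment Wald estimator (\Cref{thm:treatment-approximation-bound}): isolate the estimation error as a product of a deterministic inverse-matrix factor and a stochastic noise factor, then bound each separately and combine via submultiplicativity of the spectral norm.

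First, I would substitute the reward model $y_i = \langle \theta, x_i\rangle + g^{(u_i)}$ into the closed-form estimator \eqref{eq:iv-estimate-general}, which yields
\[
\hat{\theta}_S \;=\; \theta + \left(\sum_{i=1}^n z_i x_i^{\intercal}\right)^{-1}\sum_{i=1}^n z_i\, g^{(u_i)}.
\]
This is well-defined on the event $\sigma_{\min}\bigl(\sum_i z_i x_i^{\intercal}\bigr) > 0$; otherwise the claimed bound is vacuous. Applying submultiplicativity of the spectral norm together with the identity $\|M^{-1}\|_{\mathrm{op}} = 1/\sigma_{\min}(M)$ then gives
\[
\|\hat{\theta}_S - \theta\|_2 \;\leq\; \frac{\bigl\|\sum_{i=1}^n z_i g^{(u_i)}\bigr\|_2}{\sigma_{\min}\bigl(\sum_{i=1}^n z_i x_i^{\intercal}\bigr)},
\]
reducing the remaining task to a high-probability tail bound on the numerator.

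Next, I would exploit the one-hot structure of the instruments: writing $z_i = \mathbf{e}_{j(i)}$, the $j$-th coordinate of $\sum_i z_i g^{(u_i)}$ equals $\sum_{i:\,j(i)=j} g^{(u_i)}$, a sum of at most $n$ independent $\sigma_g$-sub-Gaussian random variables. Applying the standard sub-Gaussian (Hoeffding-type) tail bound coordinate-wise, taking a union bound over $j \in [k]$, and then using $\|v\|_2 \leq \sqrt{k}\,\|v\|_\infty$, I obtain
\[
\left\|\sum_{i=1}^n z_i\, g^{(u_i)}\right\|_2 \;\leq\; \sigma_g\sqrt{2nk\log(2k/\delta)}
\]
with probability at least $1-\delta$, which matches the stated bound up to an absorbable constant inside the logarithm.

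The main obstacle is not the algebra but handling the adaptive nature of the instruments: the planner selects $z_i$ based on the past history $(z_{<i}, x_{<i}, y_{<i})$, while the coordinate-wise concentration above requires the $g^{(u_i)}$'s to remain independent sub-Gaussians conditional on which coordinate each lands in. I would resolve this by conditioning on the entire instrument sequence $(z_1,\dots,z_n)$, which is legitimate because $z_i$ is measurable with respect to the past whereas the fresh draw of $u_i$ (and hence $g^{(u_i)}$) is independent of that past; thus $g^{(u_i)}$ retains its $\sigma_g$-sub-Gaussian law conditional on the instruments, and the coordinate-wise argument goes through unchanged.
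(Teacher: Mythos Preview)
Your proposal is correct and follows essentially the same approach as the paper: both substitute the reward model into the estimator to isolate $\bigl(\sum_i z_i x_i^\intercal\bigr)^{-1}\sum_i z_i g^{(u_i)}$, apply operator-norm submultiplicativity to split off the $1/\sigma_{\min}$ factor, and then bound the noise vector coordinate-wise via sub-Gaussian concentration plus a union bound over the $k$ coordinates. Your route through $\|v\|_2 \le \sqrt{k}\,\|v\|_\infty$ and your explicit handling of the adaptive instrument sequence are minor stylistic differences (the paper simply treats $z_1,\dots,z_n$ as fixed and bounds $\sum_j n_j$ by $kn$ directly), and your $\log(2k/\delta)$ versus the paper's $\log(k/\delta)$ is just the two-sided versus one-sided tail convention.
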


\begin{proofsketch} See \Cref{sec:general-treatment-approximation-bound-proof} for the full proof. The bound follows by substituting our expressions for $y_t, x_t$ into the IV regression estimator, applying the Cauchy-Schwarz inequality to split the bound into two terms (one dependent on $\{(g^{(u_t)}_t, z_t)\}^{\abs{S}}_{t=1}$ and one dependent on $\{(x_t, z_t)\}^{\abs{S}}_{t=1}$). We bound the second term with high probability.
\end{proofsketch}

Next, we extend recommendation policy $\pi_c$ to $k$ treatments (see \Cref{def:policy-pi-many-arms} in \Cref{sec:many-arms-appendix} for details).\footnote{\Cref{alg:sampling-many-arms} extends \Cref{alg:sampling-control-treatment} and \Cref{alg:racing-many-arms} extends \Cref{alg:racing-two-types-mixed-preferences}.}

\begin{assumption}[Knowledge Assumption for General Policy $\pi_c$]\label{ass:know-general}
Within \Cref{sec:many-arms}, the following are common knowledge among agents and planner:
\vspace{-.2cm}
    \begin{enumerate}[itemsep=0mm]
        \item All agents share a preference ordering over all $k$ treatments, i.e. for any type $u$, the prior expected reward $\E_{\cP^{(u)}}[\theta^1] > \E_{\cP^{(u)}}[\theta^2] > \cdots > \E_{\cP^{(u)}}[\theta^k]$.\footnote{This ordering assumption is shared by \citet{mansour2015bic}.}
        \item Prior-dependent constants $\Prob_{\cP^{(u)}}[\xi^{(u)}]$ and $\Prob_{\cP^{(u)}}[G^v>\tau]$ for some $\tau\in(0,1)$ (see \Cref{sec:general-extension-appendix}).
    \end{enumerate}
\end{assumption}

\subsection{Extensions of \Cref{alg:sampling-control-treatment,alg:racing-two-types-mixed-preferences} and Recommendation Policy $\pi_c$ to $k$ Treatments}
\label{sec:general-extension}
We assume that every agent ---regardless of type--- shares the same prior ordering of the treatments, such that all agents prior expected value for treatment 1 is greater than their prior expected value for treatment 2 and so on. First, \Cref{alg:sampling-many-arms} is a generalization  of \Cref{alg:sampling-control-treatment} which serves the same purpose: to overcome complete non-compliance and incentivize some agents to comply eventually. The incentivization mechanism works the same as in \Cref{alg:sampling-control-treatment}, where we begin by allowing all agents to choose their preferred treatment ---treatment 1--- for the first $\ell$ rounds. Based on the $\ell$ samples collected from the first stage, we then define a number of events $\xi^{(u)}_j$ ---which are similar to event $\xi$ from \Cref{alg:sampling-control-treatment}--- that each treatment $j\geq2$ has the largest expected reward of any treatment and treatment 1 has the smallest, according to the prior of type $u$:
\begin{equation}
    \xi^{(u)}_i := \left( \bar{y}_{\ell}^1 + C \leq \min_{1<j<i} \bar{y}_{\ell}^j - C \ \text{ and } \ \max_{1<j<i} \bar{y}_{\ell}^j + C \leq \mu^{(u)}_i \right),
\end{equation}
where $C = \sigma_g\sqrt{\frac{2\log(3/\delta)}{\ell}} + \frac{1}{4}$ for any $\delta\in(0,1)$ and where $\bar{y}_{\ell}^1$ denotes the mean reward for treatment 1 over the $\ell$ samples of the first stage of \Cref{alg:sampling-many-arms}. Thus, if we set the exploration probability $\rho$ small enough, then some subset of agents will comply with all recommendations in the second stage of \Cref{alg:sampling-many-arms}.

\begin{algorithm}[ht!]
        \caption{Overcoming complete non-compliance for $k$ treatments}
        \label{alg:sampling-many-arms}
    \begin{algorithmic}
    \STATE {\bfseries Input:} exploration probability $\rho\in(0,1)$, minimum number of samples of any treatment $\ell\in\mathbb{N}$ (assume w.l.o.g. $(\ell/\rho)\in\mathbb{N}$), failure probability $\delta\in(0,1)$, compliant type $u$
   \STATE \textbf{1st stage:} The first $\ell$ agents are given no recommendation (they choose treatment 1)
   \FOR{each treatment $i > 1$ in increasing lexicographic order}
    \IF{$\xi^{(u)}_i$ holds, based on the $\ell$ samples from the first phase and any samples of treatment $2\leq j< i$ collected thus far} 
        \STATE $a^*_i = i$ 
    \ELSE
        \STATE $a^*_i=1$
    \ENDIF
    \STATE From the next $\ell/\rho$ agents, pick $\ell$ agents uniformly at random to be in the explore set $E$\footnotemark
    \FOR{the next $\ell$ rounds}
       \IF{agent $t$ is in explore set $E$} 
        \STATE  $z_t = 1$
       \ELSE
        \STATE $z_t = a^*$
       \ENDIF
       \ENDFOR
    \ENDFOR
    \end{algorithmic}
\end{algorithm}

\footnotetext{We set the length of each prhase $i$ of the second stage to be $\ell/\rho$ so that we get $\ell$ samples of each treatment $i$ and the exploration probability is $\rho$.}

Second, \Cref{alg:racing-many-arms} is a generalization  of \Cref{alg:racing-two-types-mixed-preferences}, which is required to start with at least partial compliance and more rapidly and incentivizes more agents to comply eventually. The incentivization mechanism works the same as in \Cref{alg:sampling-control-treatment}, where we begin by allowing all agents to choose their preferred treatment ---treatment 1--- for the first $\ell$ rounds. Based on the $\ell$ samples collected from the first stage, we then define a number of events ---which are similar to event $\xi$ from \Cref{alg:sampling-control-treatment}--- that each treatment $j\geq2$ has the largest expected reward of any treatment and treatment 1 has the smallest. Thus, if we set the exploration probability $\rho$ small enough, then some subset of agents will comply with all recommendations in the second stage of \Cref{alg:sampling-many-arms}.

\begin{algorithm}[ht!]
        \caption{Overcoming partial compliance for $k$ treatments}
        \label{alg:racing-many-arms}
    \begin{algorithmic}
    \STATE{\bfseries Input:} samples $S_0:=(x_i,z_i,y_i)_{i=1}^{|S_0|}$ which meet \Cref{thm:general-treatment-approximation-bound} conditions and produce IV estimate $\hat{\theta}_{S_0}$, time horizon $T$, number of recommendations of each action per phase $h$, failure probability $\delta\in(0,1)$
    \STATE Split the remaining rounds (up to $T$) into consecutive phases of $h$ rounds each, starting with $q=1$;\\
    \STATE Let $\hat{\theta}_0=\hat{\theta}_{S_0}$ and $A_0 = A(S_0,\delta)$
    \STATE Initialize set of active treatments: $B = \{ \text{all treatments} \}$.\\ 
    \WHILE{$|B| > 1$}
        \STATE Let $\hat{\theta}^*_{q-1} = \max_{i \in B} \hat{\theta}_{q-1}^i$ be the largest entry $i$ in $\hat{\theta}_{q-1}$
        \STATE Recompute $B = \left \{ \text{treatments } i: \hat{\theta}_{q-1}^* - \hat{\theta}_{q-1}^i \leq A_{q-1} \right \}$;
        \STATE The next $|B|$ agents are recommended each treatment $i \in B$ sequentially in lexicographic order;
        \STATE Let $S_q^{\text{BEST}}$ be the sample set with the smallest approximation bound so far, i.e.  $S_q^{\text{BEST}}=\argmin_{S_r, 0\leq r\leq q} A(S_r,\delta)$;\\
        \STATE Define $\hat{\theta}_q=\hat{\theta}_{S_q^{\text{BEST}}}$ and $A_q=A(S_q^{\text{BEST}},\delta)$;\\
        \STATE $q = q+1$
    \ENDWHILE
    \STATE For all remaining agents, recommend $a^*$ that remains in $B$.
\end{algorithmic}
\end{algorithm}

\begin{definition}[General recommendation policy $\pi_c$ for $k$ treatments]
\label{def:policy-pi-many-arms}
Recommendation policy $\pi_c$ over $T$ rounds is given as such:
    \begin{enumerate}[label=\arabic*)]
        \item Run \Cref{alg:sampling-many-arms} with exploration probability $\rho$ set to incentivize at least $p_{c_1}>1/k$ fraction of agents of the population to comply in \Cref{alg:sampling-many-arms}. Let $S_0$ be the sample set given from \Cref{alg:sampling-many-arms}. By \Cref{cor:approximation-bound-pairwise} and \Cref{thm:racing-compliance-many-arms}, we can use $S_0$ as the initial samples in \Cref{alg:racing-many-arms} to incentivize compliance for any arm $a$ if the approximation bound $A(S_0,\delta)$ given by $S_0$ is small enough (see \Cref{thm:racing-compliance-many-arms}). Thus, we run \Cref{alg:sampling-many-arms} long enough (i.e. we set $\ell$ large enough) so that the approximation bound is small enough and at least $p_{c_2}>1/k$ fraction of agents comply with recommendations of every treatment in \Cref{alg:racing-many-arms}.
        \item Initialize \Cref{alg:racing-many-arms} with samples $S_0$ from \Cref{alg:sampling-many-arms}. At least $p_{c_2}>1/k$ fraction of agents comply with recommendations from \Cref{alg:racing-many-arms} from the beginning and until time horizon $T$.
    \end{enumerate}
\end{definition}

%%%%%%%%%%%%%%%%%%%%%%%%%%%%%%%%%%%%%%%%%%%%%
Similar to the control-treatment setting, we provide the compliance lemmas and proof sketches for \Cref{alg:sampling-many-arms} and \Cref{alg:racing-two-types-mixed-preferences}. 

\begin{restatable}[\Cref{alg:sampling-many-arms} compliance]{lemma}{bicgeneralsampling} \label{lemma:bic-general-sampling} Let event $\xi^{(u)}$ be defined such that $\Prob[\xi^{(u)}] = \min_i \Prob[\xi^{(u)}_i]$. In \Cref{alg:sampling-many-arms}, any type $u$ agent who arrives in the last $\ell/\rho$ rounds of \Cref{alg:sampling-many-arms} is compliant with any recommendation if $\mu^{(u)}_i > 0$ for all $1<i\leq k$, and the exploration probability $\rho$ satisfies:
\begin{equation}
    \rho \leq 1 + \frac{8\big(\mu^{(u)}_j - \mu^{(u)}_i\big)}{\Prob_{\pi_c, \cP^{(u)}}[\xi^{(u)}]}
\end{equation}
\end{restatable}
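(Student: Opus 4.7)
The approach will mirror the proof of Lemma~\ref{lemma:bic-sampling-control-treatment} for the binary case, extended to handle the $k$-phase structure of Algorithm~\ref{alg:sampling-many-arms}. The plan is as follows.

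First, I would reduce the multi-arm compliance check to a family of pairwise comparisons. Because every type $u$ shares the prior ordering $\mu^{(u)}_1 > \mu^{(u)}_2 > \cdots > \mu^{(u)}_k$, an agent recommended $z_t = 1$ is trivially compliant (arm $1$ is already their top prior choice), so the nontrivial case is when an agent is recommended some arm $i>1$ during the phase whose ``exploration target'' is treatment $i$. It then suffices to show that, conditional on this recommendation, the posterior expected value of arm $i$ meets or exceeds that of every competing arm $j$; the tightest such constraint is with respect to the alternative $j$ whose posterior expectation is largest, which by the prior ordering is a higher-ranked arm.

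Second, I would apply Bayes' rule relative to the event $\xi_i^{(u)}$ that triggers $a^*_i = i$ (whose probability is lower bounded by $\Prob_{\cP^{(u)}}[\xi^{(u)}] = \min_i \Prob_{\cP^{(u)}}[\xi_i^{(u)}]$). In the phase targeting arm $i$, the recommendation $z_t = i$ arises in two mutually exclusive scenarios: $\xi_i^{(u)}$ holds and the agent falls in the exploit set (probability $\approx 1-\rho$), or the agent falls in the explore set $E$ (probability $\approx \rho$). Expanding $\E_{\cP^{(u)},\pi_t}[\theta^i - \theta^j \mid z_t = i]$ over these two cases yields a weighted sum whose sign must be non-negative: a positive ``exploitation gain'' when $\xi_i^{(u)}$ holds, traded against a possibly negative ``exploration cost'' when it does not.

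Third, I would bound the two conditional expectations using sub-Gaussian concentration of the first-stage baseline rewards. The offset $C = \sigma_g\sqrt{2\log(3/\delta)/\ell} + 1/4$ built into $\xi_i^{(u)}$ ensures that, under the event, the empirical gap between the sample means for arms $i$ and $j$ translates (with high probability) into a strictly positive lower bound on $\theta^i - \theta^j$. On the complement I would use the prior identity $\E_{\cP^{(u)}}[\theta^i - \theta^j] = \mu^{(u)}_i - \mu^{(u)}_j$ together with the law of total expectation to solve for $\E[\theta^i - \theta^j \mid \neg \xi_i^{(u)}]$ in terms of known prior constants. Combining the two bounds and rearranging the inequality $\E[\theta^i - \theta^j \mid z_t = i] \geq 0$ isolates $\rho$ and delivers the displayed compliance threshold.

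The main obstacle will be the probabilistic bookkeeping across phases: the same recommendation value $z_t = i$ can in principle arise in later phases (when $a^*_{i'} = i$ for some $i' \neq i$), and the events $\xi_i^{(u)}$ for different $i$ are not independent because they share the first-stage sample means. Replacing every $\Prob[\xi_i^{(u)}]$ by the uniform lower bound $\Prob[\xi^{(u)}]$, restricting the analysis to the exploration window for arm $i$, and invoking a clean-event argument on first-stage concentration should decouple the problem into one inequality per pair $(i,j)$. The residual algebra then parallels the binary calculation in Lemma~\ref{lemma:bic-sampling-control-treatment} almost verbatim, with the factor $8$ in the bound coming from the additional union-bound losses incurred when comparing arms pairwise rather than in a single binary contrast.
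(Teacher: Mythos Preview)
Your treatment of the case $z_t = i$ for $i>1$ is essentially the paper's Part~I: split on $\xi_i^{(u)}$ versus membership in the explore set, bound the exploit term via the concentration built into the offset $C$, and solve for $\rho$. That part is fine and mirrors the binary argument.

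The gap is your claim that ``an agent recommended $z_t = 1$ is trivially compliant (arm $1$ is already their top prior choice).'' This is false in the Bayesian-persuasion sense: the recommendation itself is a signal. In phase $i$, the event $\{z_t = 1\}$ is exactly $\{t \notin E\} \cap \neg\xi_i^{(u)}$, so an agent who sees $z_t = 1$ conditions on $\neg\xi_i^{(u)}$. Now recall that $\xi_i^{(u)}$ depends on the sample means $\bar y_\ell^1$ \emph{and} $\bar y_\ell^j$ for every intermediate arm $1<j<i$; its failure can occur precisely because some arm $j$ has an unusually large sample mean $\bar y_\ell^j$. So $\neg\xi_i^{(u)}$ can be positive evidence for an intermediate arm $j$, and you must rule out the deviation $z_t = 1 \to j$. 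The binary shortcut in Claim~\ref{claim:bic-equiv} does not rescue you here: that claim is a two-arm law-of-total-expectation identity, and with $k>2$ arms the inequality $\E[\theta^i - \theta^j \mid z_t = i] \geq 0$ does not control $\E[\theta^1 - \theta^j \mid z_t = 1]$ for $j \neq 1, i$.

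The paper treats this explicitly as its Part~II, showing
\[
\E_{\pi_t,\cP^{(u)}}[\theta^1 - \theta^j \mid \neg\xi_i^{(u)}]\,\Prob_{\pi_t,\cP^{(u)}}[\neg\xi_i^{(u)}] \geq 0
\]
for every $j>1$, again via a clean-event/concentration argument analogous to Lemma~\ref{lemma:bic-sampling-control-treatment}. You need to add this piece; your proposal as written only covers half of the compliance check.
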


\begin{proofsketch}
Let the recommendation policy $\pi$ here be \Cref{alg:sampling-many-arms}. 

{\bf Part I (Compliance with recommendation for treatment $i>1$):} We first argue that an agent $t$ of type $u$ who is recommended treatment $i$ will not switch to any other treatment $j$. For treatments $j > i$, there is no information about treatments $i$ or $j$ collected by the algorithm and by assumption, we have $\mu^{(u)}_i \geq \mu^{(u)}_j$. Hence, it suffices to consider when $j<i$. We want to show that
    \begin{equation}
        \label{eq:bic-sample-many-arms-types}
        \E_{\pi_t,\cP^{(u)}}[\theta^j - \theta^1 | z_t = \e_i] \Prob_{\pi_t,\cP^{(u)}}[z_t = \e_i] \geq 0.
    \end{equation}

{\bf Part II (Recommendation for treatment 1):} When agent $t$ is recommended treatment 1, they know that they are not in the explore group $E$. Therefore, they know that the event $\neg \xi^{(u)}_i$ occurred. Thus, in order to prove that \Cref{alg:sampling-many-arms} is BIC for an agent of type $u$, we need to show the following for any treatment $j > 1$:
\begin{equation}
    \label{eq:sampling-multi-arm-type-rec-1}
    \E_{\pi_t,\cP^{(u)}}[\theta^1 - \theta^j | z_t=\e_1]\Prob_{\pi_t,\cP^{(u)}}[z_t=\e_1] = \E_{\pi_t,\cP^{(u)}}[\theta^1 - \theta^j | \neg \xi^{(u)}_i]\Prob_{\pi_t,\cP^{(u)}}[\neg \xi^{(u)}_i]\geq 0
\end{equation}

We omit the remainder of this proof due to its similarity with the proof of \Cref{lemma:bic-sampling-control-treatment}
\end{proofsketch}

\begin{restatable}[\Cref{alg:racing-many-arms} Partial Compliance]{lemma}{bicracingmanytypessecond}
\label{lemma:bic-racing-many-arms}
    Recall that \Cref{alg:racing-many-arms} is initialized with input samples $S_0=(x_i,y_i,z_i)_{i=1}^{|S_0|}$. For any type $u$, if $S_0$ satisfies the following condition, then with probability at least $1-\delta$ all agents of type $u$ will comply with recommendations of \Cref{alg:racing-many-arms}:
        \[A(S_0,\delta) \leq \tau \Prob_{\cP^{(u)}}[\min_{a,b}\left(\left|\theta^a - \theta^b\right|\right) > \tau]/4\]
    for some $\tau\in(0,1)$, where $A(S_0,\delta)$ is the approximation bound for $S_0$ and any $\delta\in(0,1)$ (see \Cref{thm:general-treatment-approximation-bound}).
\end{restatable}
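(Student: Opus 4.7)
My plan is to lift the clean-event argument of Lemma 4.3 from two arms to $k$ arms, using Theorem 5.6 (the many-treatments approximation bound) as the workhorse. First, introduce the clean event
\[
\mathcal{E} = \{\|\hat{\theta}_q - \theta\|_2 \le A_q \text{ for all phases } q\},
\]
which by Theorem 5.6 together with the running-minimum construction of $A_q$ (and a rescaling of $\delta$ to union-bound over the at most $T$ phases) holds with probability at least $1-\delta$. Fix a type-$u$ agent at round $t$ receiving $z_t = \e_a$; compliance is equivalent to showing
\[
\E_{\pi_t,\cP^{(u)}}\!\bigl[\theta^a - \theta^b \,\big|\, z_t = \e_a\bigr]\,\Prob_{\pi_t,\cP^{(u)}}[z_t = \e_a] \;\ge\; 0
\]
for every competing arm $b \neq a$.

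The core step is to decompose $\{z_t = \e_a\}$ into the \emph{exploit} branch, in which the while-loop has terminated and $B = \{a\}$, and the \emph{explore} branch, in which $a$ is still active in $B$ and is the arm currently scheduled within a racing phase. On $\mathcal{E}$, the elimination rule of \Cref{alg:racing-many-arms} guarantees that when $a$ is the unique survivor one has $\hat\theta_q^a - \hat\theta_q^b > A_q$ for every eliminated $b$, which forces $\theta^a > \theta^b$; intersected with the prior event $\{\min_{c,d}|\theta^c - \theta^d| > \tau\}$ this strengthens to $\theta^a - \theta^b \ge \tau$. On the explore branch, both $\hat\theta_q^a$ and $\hat\theta_q^b$ for $b \in B$ lie within $A_q$ of $\theta$ in the $\ell_\infty$ sense, and the filter $\hat\theta_q^\ast - \hat\theta_q^b \le A_q$ gives $|\hat\theta_q^a - \hat\theta_q^b| \le A_q$; chaining the two yields $|\theta^a - \theta^b| \le 3A_q \le 3A(S_0,\delta)$.

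Combining these, the exploit branch contributes expected gain at least $\tau\,\Prob_{\cP^{(u)}}[\min_{c,d}|\theta^c - \theta^d| > \tau]$ weighted by the probability of reaching that branch, while the explore branch contributes loss at most $3A(S_0,\delta)$ per round. The hypothesis $A(S_0,\delta) \le \tau\,\Prob_{\cP^{(u)}}[\min_{c,d}|\theta^c - \theta^d| > \tau]/4$ then produces a positive margin once the $\delta$-sized slack from $\mathcal{E}^c$ is absorbed by the factor-of-$4$ in the threshold. I expect the main obstacle to be carefully tracking the conditional probabilities $\Prob[\text{exploit} \mid z_t = \e_a]$ versus $\Prob[\text{explore} \mid z_t = \e_a]$: because \Cref{alg:racing-many-arms} can eliminate $a$ at any of $O(\log T)$ phases, I would condition on the phase index $q$ and sum, using that on $\mathcal{E}$ the cumulative explore weight is dominated by a constant multiple of the exploit weight whenever the accuracy hypothesis on $A(S_0,\delta)$ is met, mirroring exactly the argument that succeeded for two arms in Lemma 4.3.
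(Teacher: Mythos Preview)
Your exploit/explore split is a natural idea, but the accounting for the exploit branch does not close. On the clean event $\cE$, reaching $B=\{a\}$ forces $a$ to be the true best arm, so the event $\{\text{exploit},\cE,\min_{c,d}|\theta^c-\theta^d|>\tau\}$ equals $\{\cE,\min_{c,d}|\theta^c-\theta^d|>\tau,\,a=\argmax_j\theta^j\}$. Hence the exploit gain you can actually claim is $\tau\cdot\Prob_{\cP^{(u)}}\bigl[a\text{ is best},\,\min_{c,d}|\theta^c-\theta^d|>\tau\bigr]$, not $\tau\cdot\Prob_{\cP^{(u)}}\bigl[\min_{c,d}|\theta^c-\theta^d|>\tau\bigr]$ as you assert. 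For an arm $a$ that the prior deems unlikely to be optimal, this term can be arbitrarily small relative to the $3A(S_0,\delta)$ explore loss, so the hypothesis $A(S_0,\delta)\le\tau\Prob_{\cP^{(u)}}[\min_{c,d}|\theta^c-\theta^d|>\tau]/4$ is not enough to make the sum nonnegative. The sentence ``the cumulative explore weight is dominated by a constant multiple of the exploit weight'' is exactly where this fails: there is no reason the exploit weight for a fixed arm $a$ should be comparable to $\Prob_{\cP^{(u)}}[\min_{c,d}|\theta^c-\theta^d|>\tau]$ without an additional factor of $\Prob_{\cP^{(u)}}[a=\argmax_j\theta^j]$.

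The paper avoids this by decomposing on the \emph{value} of $G^a:=\theta^a-\max_{w\ne a}\theta^w$ rather than on algorithm state; this is the direct lift of the four-range split on $\theta$ from the proof of \Cref{lemma:bic-racing-control-treatment-0}, written out in full in the appendix as \Cref{thm:racing-compliance-many-arms}. The ranges are $G^a\ge\tau$, $0\le G^a<\tau$, $-2A_q<G^a<0$, and $G^a\le -2A_q$. On $\cE$ the first range forces $z_t=\e_a$ deterministically (arm $a$ is best by margin $\ge\tau>2A_0$, so it wins at phase one), yielding gain $\ge\tau\,\Prob_{\cP^{(u)}}[G^a\ge\tau]$; the last forces $z_t\ne\e_a$ (arm $a$ is eliminated at phase one), contributing zero; the middle two carry loss at most $2A_q$. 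Because $\{G^a\ge\tau\}$ already encodes ``$a$ is best with margin $\tau$,'' the prior probability in the gain term matches the hypothesis directly and no stray factor of $\Prob_{\cP^{(u)}}[a=\argmax_j\theta^j]$ appears.
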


\begin{proofsketch}
Let the recommendation policy $\pi$ here be \Cref{alg:racing-many-arms}. We want to show that for any agent at time $t$ with a type $i<u$ in the racing stage and for any two treatments $a,b \in B:$
\begin{align*}
    \E_{\pi_t,\cP^{(u)}}[\theta^a - \theta^b | z_t=\e_a]\Prob_{\pi_t,\cP^{(u)}}[z_t=\e_a] \geq 0
\end{align*}

We omit the remainder of this proof due to its similarity with the proof of  \Cref{lemma:bic-racing-control-treatment-0}.

\end{proofsketch}

In order to incentivize agents of any type $u$ to comply with general extensions \Cref{alg:sampling-many-arms,alg:racing-many-arms}, we (again) set exploration probability $\rho$ and length $\ell$ to satisfy some compliance conditions relative to $\Prob_{\cP^{(u)}}[\xi^{(u)}]$ and $\tau\Prob_{\cP^{(u)}}[G^v>\tau]$, respectively (see \Cref{sec:general-extension-appendix}). We present the (expected) regret from the $k$ treatment extension of policy $\pi_c$ next.

\begin{restatable}[Regret of Policy $\pi_c$ for $k$ Treatments]{lemma}{regretmanyarms}\label{lemma:regret-many-arms}
An extension of policy $\pi_c$ achieves (expected) regret as follows:
\begin{equation}
    \E[R(T)] = O\left(k\sqrt{kT\log(kT)} \right)
\end{equation}
for sufficiently large time horizon $T$.
\end{restatable}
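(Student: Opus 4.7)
The argument mirrors the binary-treatment analysis of \Cref{lemma:expost-regret,lemma:expected-regret} while tracking the $k$-dependence introduced by the vector IV estimator of \Cref{thm:general-treatment-approximation-bound}. I decompose the regret of policy $\pi_c$ as $R(T) = R_1 + R_2$, where $R_1$ is accumulated in the sampling stage of \Cref{alg:sampling-many-arms} and $R_2$ in the racing stage of \Cref{alg:racing-many-arms}, and analyze each separately before combining.

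For the sampling stage, set the exploration probability $\rho$ and the per-sub-phase length $\ell$ as polylogarithmic functions of $T$, exactly as in \Cref{lemma:expected-regret}. Using the $k$-treatment extensions of \Cref{lemma:bic-general-sampling} and \Cref{lemma:racing-compliance-sampling-ell-control-treatment}, this ensures (i) every agent of the designated compliant types follows recommendations inside \Cref{alg:sampling-many-arms}, and (ii) the output sample set $S_0$ satisfies the compliance threshold of \Cref{lemma:bic-racing-many-arms}, so that a fraction $p_{c_2} > 1/k$ of agents comply throughout racing. The total sampling length is $L_1 = O(k\,\ell/\rho) = \mathrm{polylog}(T)$, and since single-round regret is at most $2$ we get $R_1 = O(L_1)$, which is dominated by the target $\sqrt{T}$-rate.

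For the racing stage I track the evolution of the approximation bound. With $p_{c_2}$ treated as a constant in $k$, after each currently-active arm has been recommended $m$ times the matrix $\sum_i z_i x_i^{\intercal}$ has $\sigma_{\min} = \Omega(m)$ on the active-arm block, and with $n = m\,|B| \le mk$ total samples \Cref{thm:general-treatment-approximation-bound} yields $A(S,\delta) = \tilde{O}(k/\sqrt{m})$. By the standard active-arm-elimination argument, a sub-optimal arm $j$ with gap $\Delta_j = \theta^{*} - \theta^{j}$ leaves $B$ once $A_q \le \Delta_j/2$, which requires $m_j = \Omega(k^{2}\log(kT)/\Delta_j^{2})$ recommendations of arm $j$, contributing at most $\Delta_j m_j = O(k^{2}\log(kT)/\Delta_j)$ to $R_2$. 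Summing over sub-optimal arms gives
\[ R_2 \;\le\; \sum_{j \ne *} \frac{k^{2}\log(kT)}{\Delta_j} \]
subject to the horizon budget $\sum_j m_j \le T$, equivalently $\sum_j \Delta_j^{-2} = O(T/(k^{2}\log(kT)))$. Applying Cauchy--Schwarz to the linear functional $\sum_j \Delta_j^{-1}$ under this quadratic budget produces the worst-case bound $R_2 = O(k^{3/2}\sqrt{T\log(kT)}) = O(k\sqrt{kT\log(kT)})$.

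The main obstacle is the careful propagation of the matrix-valued $\sigma_{\min}$ term through the phase-by-phase racing: as arms leave $B$ the instrument distribution changes, so one must verify that $\sigma_{\min}(\sum_i z_i x_i^{\intercal})$ remains $\Omega(m)$ on the relevant block, exploiting the monotone selection rule $S_q^{\textnormal{BEST}} = \argmin_{r \le q} A(S_r,\delta)$ to guarantee that the effective approximation bound is non-increasing in $q$. A union bound over the $O(\log T)$ racing phases with $\delta = 1/T$ controls the aggregate failure probability of \Cref{thm:general-treatment-approximation-bound}, and absorbing the resulting $O(\delta T) = O(1)$ expected-regret contribution into the main term yields the stated $O(k\sqrt{kT\log(kT)})$ bound for sufficiently large $T$.
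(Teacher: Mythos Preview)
Your proposal is correct and follows essentially the same route as the paper: both decompose into a sampling-stage contribution bounded by $L_1$ (made negligible by choosing $T$ large relative to the prior-dependent constants) and a racing-stage contribution controlled via active-arm elimination driven by the $k$-treatment IV approximation bound $A(S,\delta)=O(k\sqrt{k\log(k/\delta)/n})$ from \Cref{thm:general-treatment-approximation-bound} and its corollary, then take $\delta$ polynomially small in $T$ to pass to expectation. The only cosmetic difference is that you aggregate the per-arm regrets $\sum_j k^2\log(kT)/\Delta_j$ via Cauchy--Schwarz under the horizon budget, whereas the paper uses the $\min\{C/\Delta,\,T\Delta\}$ balancing trick; both give the same $O(k\sqrt{kT\log(kT)})$ rate, and your identification of the $\sigma_{\min}$ tracking across changing active sets as the main technical wrinkle matches exactly where the paper invests its effort.
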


\begin{proofsketch} See \Cref{sec:general-extension-appendix} for the full proof. The proof follows the same structure as that of \Cref{lemma:expected-regret}.
\end{proofsketch}

Though our analysis covers a more general $k$ treatment setting than \citet{mansour2015bic} (capturing non-compliance and selection bias), our policy $\pi_c$ accumulates asymptotically comparable regret in terms of $T$. See \Cref{sec:many-arms-appendix} for all other results. Next, in \Cref{sec:experiment}, we implement \Cref{alg:sampling-control-treatment} experimentally.

% \section{Conclusion and Future Work}
% We focus on a setting where there is no treatment modification by the private type, and all agents shared the same unknown treatment effect $\theta$. Can we extend our results to a setting with treatment effect heterogeneity?

\section{Numerical Experiments}
\label{sec:experiment}
In this section, we present experiments to evaluate \Cref{alg:sampling-control-treatment}. We mention previously in the paper that this approximation bound decreases slowly throughout \Cref{alg:sampling-control-treatment}, because the exploration probability $\rho$ in \Cref{alg:sampling-control-treatment} is small. Here, we are interested in (1) \textit{how small} the exploration probability $\rho$ is and (2) \textit{how slowly} the approximation bound on the absolute difference $|\theta-\hat{\theta}|$ decreases as \Cref{alg:sampling-control-treatment} progresses (where $\hat{\theta}$ is based on samples from \Cref{alg:sampling-control-treatment}). These are important to study, because this slow improvement in accuracy is the primary source of inefficiency (in terms of sample size) for policy $\pi_c$, which accumulates linear regret during \Cref{alg:sampling-control-treatment} (see \Cref{lemma:expost-regret}) for marginal improvements in estimation accuracy. This motivates the social planner to move to \Cref{alg:racing-two-types-mixed-preferences} ---where the estimation accuracy increases much quicker--- as soon as possible in policy $\pi_c$. Yet, there is also a tradeoff for moving to \Cref{alg:racing-two-types-mixed-preferences} too quickly: if \Cref{alg:sampling-control-treatment} is not run for long enough, then only a small portion of agents may comply in \Cref{alg:racing-two-types-mixed-preferences}. In order to better inform the choice of hyperparameters in policy $\pi_c$ (specifically, the compliance paramters $p_{c_1}$ and $p_{c_2}$), we empirically estimate these quantities experimentally. We defer experiments on \Cref{alg:racing-two-types-mixed-preferences} to the appendix.\footnote{The code is available \href{https://github.com/DanielNgo207/Incentivizing-Compliance-with-Algorithmic-Instruments}{here}.}

\paragraph{Experimental Description.}
We consider a setting with two types of agents: type 0 who are initially never-takers and type 1 who are initially always-takers. We let each agent's prior on the treatment effect be a truncated Gaussian distribution between $-1$ and $1$. The noisy baseline reward $g_t^{(u_t)}$ for each type $u$ of agents is drawn from a Gaussian distribution $\mathcal{N}(\mu_{g^{(u)}}, 1)$, with its mean $\mu_{g^{(u)}}$ also drawn from a Gaussian prior. We let each type of agent have equal proportion in the population, i.e. $p_0=p_1=0.5$. We are interested in finding the probability of event $\xi$ (as defined in \Cref{eq:xi}) and the exploration probability $\rho$ (as defined in \Cref{eq:rho}). Instead of deriving an explicit formula for $\Prob_{\cP_0}[\xi]$ to calculate the exploration probability $\rho$, we estimate it using Monte Carlo simulation by running the first stage of \Cref{alg:sampling-control-treatment} for $1000$ iterations and aggregating the results. After this, \Cref{alg:sampling-control-treatment} is run with the previously-found exploration probability $\rho$ over an increasing number of rounds. We repeatedly calculate the IV estimate of the treatment effect and compare it to a naive OLS estimate (that regresses the treatment onto the reward) over the same samples as a benchmark.
% \begin{wrapfigure}{R}{0.485\textwidth}
\begin{figure}[ht]
\centering
\includegraphics[width=.75\linewidth]{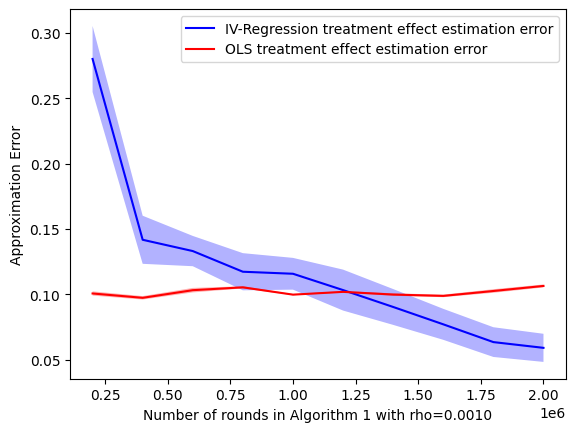}
\caption{Approximation bound using IV regression and OLS during
\Cref{alg:sampling-control-treatment} with $\rho = 0.001$. Results are averaged over 5 runs; light blue error bars represent one standard error. The OLS estimate converges to a value with approximation error around 0.1; whereas the approximation error of our IV estimate steadily decreases over time.}
\label{fig:plot}
\vspace{-.5cm}
\end{figure}
% \end{wrapfigure}

\paragraph{Results.}
In \Cref{fig:plot}, we compare the approximation bound on $|\theta-\hat{\theta}|$ between IV estimate $\hat{\theta}$ versus via a naive estimate for a specific, chosen $\rho=0.001$. In our experiments, the exploration probability $\rho$ generally lies within $[0.001, 0.008]$. In \Cref{fig:plot}, we let hidden treatment effect $\theta=0.5$, type 0 and type 1 agents' priors on the treatment effect be $\cN(-0.5, 1)$ and $\cN(0.9, 1)$ ---each truncated onto $[-1,1]$,--- respectively. We also let the mean baseline reward for type 0 and type 1 agents be $\mu_{g^{(0)}} \sim \mathcal{N}(0, 1)$ and $\mu_{g^{(1)}} \sim \mathcal{N}(0.1, 1)$, respectively. These priors allow us to set the exploration probability $\rho = 0.001$ for \Cref{fig:plot}, where IV regression consistently outperforms OLS for any reasonably long run of \Cref{alg:sampling-control-treatment}. 

To our knowledge, these experiments are the first empirical evaluation of an Incentivizing Exploration algorithm. \Cref{fig:plot} shows the effect of small exploration probability $\rho = 0.001$: we need to run \Cref{alg:sampling-control-treatment} for a while to produce a decently accurate causal effect estimate.\footnote{We suspect this weakness is likely endemic to previous works in Incentivizing Exploration, as well.}

% \begin{table}[]
% \begin{tabular}{|c|c|}
% \hline
% Gap between $\E[\mu_{g_1}]$ and $\E[\mu_{g_0}]$   & $\rho$      \\ \hline
% -0.5 & 0.004480 \\ \hline
% -0.4 & 0.004975 \\ \hline
% -0.3 & 0.007936 \\ \hline
% -0.2 & 0.003984 \\ \hline
% -0.1 & 0.003488 \\ \hline
% 0.1  & 0.003984 \\ \hline
% 0.2  & 0.004480 \\ \hline
% 0.3  & 0.003488 \\ \hline
% 0.4  & 0.004480 \\ \hline
% 0.5  & 0.003488 \\ \hline
% \end{tabular}
% \end{table}

% \begin{table}[]
% \begin{tabular}{|c|c|}
% \hline
% Variance in prior over treatment effect & $\rho$       \\ \hline
% 0.1 & 0.002561 \\ \hline
% 0.2 & 0.003112 \\ \hline
% 0.3 & 0.002561 \\ \hline
% 0.4 & 0.002561 \\ \hline
% 0.5 & 0.003982 \\ \hline
% 0.6 & 0.004643 \\ \hline
% 0.7 & 0.005422 \\ \hline
% 0.8 & 0.002790 \\ \hline
% 0.9 & 0.001389 \\ \hline
% 1   & 0.003488 \\ \hline
% \end{tabular}
% \end{table}

\section{Conclusion}
In this paper, we present a model for how (non)-compliance changes over time based on beliefs and new information. We observe that recommendations which incentivize (at least partial) compliance can be treated as instrumental variables (IVs), enabling consistent treatment effect estimation via IV regression, even in the presence of non-compliance and confounding. Finally, we provide recommendations mechanisms which provably incentivize compliance and achieve sublinear regret.

\xhdr{Ethical Considerations around Incentivizing Exploration.} 

In this paper, we provide recommendation algorithms for incentivizing exploration (IE), each which consist of two parts with their own ethical considerations: 1) recommending for exploration and 2) selectively disclosing information. First, in certain contexts, recommending for exploration of a treatment with unknown effects could place undue risk or harm onto an individual in order to benefit society or maximizing well-being overall.\footnote{In moral terms, this may cause a fissure between a utilitarian and a Kantian or purveyor of individual rights.} Because of this, in certain medical contexts, incentivizing exploration may violate the principle of nonmaleficence. Second, incentivizing exploration requires selective information disclosure. While selective disclosure may be required in studies where information is protected or proprietary, it may be (morally) wrong in other settings. Restricting full information about the history of a treatment for the purpose of changing an individual's behavior may be manipulative or deceptive and may restrict their autonomy. We do not provide ethical considerations to discourage incentivizing exploration outright: there may be settings where the benefits outweigh the harms. We leave it to study designers and policymakers to weigh these in specific settings.

\xhdr{Future Work.}
Here, we focused on a setting where the causal model is linear and there is no treatment modification by the private type (so all agents share the same treatment effect $\theta$). Future work may extend our results to non-linear settings and settings with treatment effect heterogeneity. We may also relax the (somewhat unrealistic) assumptions 1) that the social planner knows key prior-dependent constants about all agents and 2) that agents fully know their prior, the recommendation mechanism, and can exactly update their posterior over treatment effects. Finally, our empirical results invite further work to improve the practicality of incentivizing exploration mechanisms to allow for more frequent exploration and lessen the number of samples needed.

\iffalse\lscomment{
\section{Future work}
In this paper, we focus on a setting where there is no treatment modification by the private type, and all agents shared the same unknown treatment effect $\theta$. Can we extend our results to a setting with treatment effect heterogeneity?
}%endlscomment
\fi
% \newpage

\section*{Acknowledgement} 
We thank the members of the Social AI Group for their comments on drafts of this work. Zhiwei Steven Wu was supported by the NSF FAI Award \#1939606, a Google Faculty Research Award, a J.P. Morgan Faculty Award, a Facebook Research Award, and a Mozilla Research Grant. We also thank Nicole Immorlica and Akshay Krishnamurthy for their discussions. 

\bibliographystyle{plainnat}
\bibliography{./refs}

\appendix
\onecolumn

\section{Theorems and Lemmas}
\dndelete{
\begin{theorem} (Gaussian Tail Bound)
    \label{thm:gaussian}
Let $X_1, \dots, X_n$ be independently and identically distributed Gaussian variables such that $X_i \sim \cN(\mu, \sigma^2)$ for $i\in[1,n]$, mean $\mu$, and variance $\sigma$. Let $\displaystyle \bar{X} = \frac{1}{n} \sum_{i=1}^n X_i \sim \cN(\mu, \sigma^2/n)$. Then,
\begin{equation}
    \Prob\left[ \left| \bar{X} - \mu \right| \geq \epsilon \right] \leq 2\exp\left\{-\frac{n\epsilon^2}{2\sigma^2}\right\}.
\end{equation}
\end{theorem}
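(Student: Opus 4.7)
The plan is to apply a standard Chernoff-bound argument to the Gaussian random variable $\bar{X} - \mu$. Since the $X_i$ are i.i.d.\ $\cN(\mu, \sigma^2)$, we have $\bar{X} \sim \cN(\mu, \sigma^2/n)$ and therefore $Z := \bar{X} - \mu \sim \cN(0, \sigma^2/n)$. It suffices to bound $\Prob[Z \geq \epsilon]$ and $\Prob[Z \leq -\epsilon]$ separately, and then combine via a union bound to produce the factor of $2$ on the right-hand side. By the symmetry $Z \stackrel{d}{=} -Z$, the two one-sided bounds are identical, so the entire content of the proof is to bound $\Prob[Z \geq \epsilon]$.

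For the one-sided bound, I would use the exponential Markov inequality: for any $\lambda > 0$,
\[
\Prob[Z \geq \epsilon] = \Prob\bigl[e^{\lambda Z} \geq e^{\lambda \epsilon}\bigr] \leq e^{-\lambda \epsilon}\, \E\!\left[e^{\lambda Z}\right].
\]
The moment generating function of $Z \sim \cN(0, \sigma^2/n)$ is $\E[e^{\lambda Z}] = \exp\!\left(\lambda^2 \sigma^2/(2n)\right)$, which can either be cited as a standard fact or derived by completing the square inside the Gaussian integral. Plugging this in gives $\Prob[Z \geq \epsilon] \leq \exp\!\left(\lambda^2\sigma^2/(2n) - \lambda \epsilon\right)$.

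Next, I would optimize the exponent over $\lambda > 0$. Setting the derivative to zero yields $\lambda^\ast = n \epsilon / \sigma^2$, which substituted back produces the exponent $-n\epsilon^2/(2\sigma^2)$, hence $\Prob[Z \geq \epsilon] \leq \exp\!\left(-n\epsilon^2/(2\sigma^2)\right)$. Applying the analogous argument to $-Z$ and unioning the two events then gives the claimed two-sided bound with a leading factor of $2$.

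There is no real obstacle here: the result is textbook and the only care needed is to keep the variance of $\bar{X}$ (which is $\sigma^2/n$, not $\sigma^2$) correct throughout the optimization so that the $n$ appears in the numerator of the exponent. An equally valid alternative would be to skip the MGF machinery entirely and bound the Gaussian tail integral directly using the inequality $\int_\epsilon^\infty e^{-x^2/(2\tau^2)}\,dx \leq \tau \sqrt{\pi/2}\, e^{-\epsilon^2/(2\tau^2)}$ with $\tau^2 = \sigma^2/n$, but the Chernoff route is cleaner and more standard in this literature.
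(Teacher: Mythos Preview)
Your proof is correct and is the standard Chernoff/MGF argument for the Gaussian tail bound. Note, however, that the paper does not actually supply its own proof of this statement: it is listed in the appendix as a reference result (stated without proof), alongside other concentration inequalities such as Hoeffding's inequality and the Matrix Chernoff bound that the paper merely cites as tools. So there is nothing to compare against --- your argument is a complete and standard derivation of a fact the paper simply takes as given.
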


\begin{corollary} (High-probability bound on the sum of $n$ i.i.d. random Gaussian variables)
    \label{thm:high-prob-gaussian}
    Let $\sum_{i=1}^n X_i$ be the sum of Gaussian variables such that $X_i \sim \cN(\mu, \sigma)$. Note that $\sum_{i=1}^n X_i = n\bar{X}$. Then, with some confidence level $\delta > 0,$ with probability at least $1-\delta$,
    \begin{equation}
        \left| \sum_{i=1}^n X_i - \mu \right| \leq \sigma \sqrt{2n\log(2/\delta)}.
    \end{equation}
\end{corollary}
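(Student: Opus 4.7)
The plan is to derive this corollary as a direct consequence of the preceding Gaussian tail bound (\Cref{thm:gaussian}) applied to the sample mean $\bar{X} := \frac{1}{n}\sum_{i=1}^n X_i$. Since the $X_i$ are i.i.d. $\cN(\mu,\sigma^2)$, standard properties of Gaussian convolutions give $\bar{X} \sim \cN(\mu, \sigma^2/n)$, which is exactly the setting in which \Cref{thm:gaussian} provides an exponential tail bound of the form $\Prob[|\bar{X}-\mu| \geq \epsilon] \leq 2\exp\{-n\epsilon^2/(2\sigma^2)\}$.

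The main step is then a routine inversion of this tail bound: I would set $\delta = 2\exp\{-n\epsilon^2/(2\sigma^2)\}$ and solve for $\epsilon$, yielding $\epsilon = \sigma \sqrt{2\log(2/\delta)/n}$. This gives, with probability at least $1-\delta$, the mean-level bound
\[
\left|\bar{X} - \mu\right| \leq \sigma\sqrt{\frac{2\log(2/\delta)}{n}}.
\]
Multiplying both sides by $n$ and using $n\bar{X} = \sum_{i=1}^n X_i$ converts this into a bound on the sum, namely $\left|\sum_{i=1}^n X_i - n\mu\right| \leq n \cdot \sigma\sqrt{2\log(2/\delta)/n} = \sigma\sqrt{2n\log(2/\delta)}$, which matches the claimed bound (reading the $\mu$ in the statement as $n\mu$, i.e., the mean of $\sum_i X_i$; this appears to be a minor notational slip in the statement, since the sum has mean $n\mu$, not $\mu$).

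There is essentially no hard step here: the entire proof is algebraic manipulation after substituting $\bar{X}$ into the preceding Gaussian tail bound. The only subtlety worth flagging explicitly in the write-up is the distinction between the tail bound on the sample mean (which is what \Cref{thm:gaussian} directly gives) and the tail bound on the sum, which requires scaling by $n$ and correspondingly re-centering around $n\mu$. Consequently, I would keep the proof to two or three lines: state that $\bar X\sim\cN(\mu,\sigma^2/n)$, invoke \Cref{thm:gaussian} with $\epsilon = \sigma\sqrt{2\log(2/\delta)/n}$, and multiply through by $n$.
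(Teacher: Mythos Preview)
Your proposal is correct and is exactly the intended derivation: the paper states this result as an immediate corollary of the preceding Gaussian tail bound (\Cref{thm:gaussian}) without writing out a proof, and your inversion of that bound followed by scaling from $\bar X$ to $\sum_i X_i$ is the standard two-line argument. Your observation that the $\mu$ in the displayed inequality should be read as $n\mu$ (the mean of the sum) is also correct; this is indeed a minor notational slip in the statement.
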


\begin{theorem} (Chi-Squared Tail Bound)
    \label{thm:chi-squared}
Let $X_1, \dots, X_n$ be i.i.d. Gaussian variables such that $X_i \sim \cN(0, \sigma^2)$ for $i\in[1,n]$, mean 0, and variance $\sigma$. Then, $\displaystyle Y := \sum_{i=1}^n X_i^2$ is a Chi-squared variable with $n$ degrees of freedom. Then, we can upper bound $Y$ as such:
\begin{equation}
    \Prob\left[ Y/n \geq \sigma^4(1+\epsilon)^2 \right] \leq \exp\left\{-n \epsilon^2 / 2\right\}.
\end{equation}
\end{theorem}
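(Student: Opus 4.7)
The plan is to reduce to the standard (unit-variance) chi-squared case and then apply a one-parameter Chernoff bound using the closed-form MGF of $\chi^2_n$. First I would write $X_i = \sigma Z_i$ with $Z_i \sim \cN(0,1)$ i.i.d., so that $Y = \sigma^2 W$ where $W := \sum_{i=1}^n Z_i^2 \sim \chi^2_n$ has mean $n$. This translates the event $\{Y/n \ge \sigma^4(1+\epsilon)^2\}$ into a tail event for the standard $\chi^2_n$, namely $\{W \ge n\sigma^2(1+\epsilon)^2\}$. I read the $\sigma^4$ in the theorem as a typo for $\sigma^2$: with $\sigma^4$ the stated threshold can fall strictly below the mean $\E[Y/n] = \sigma^2$ when $\sigma < 1$, which would make the right-hand side trivially false. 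Under the corrected form, the event is $\{W/n \ge (1+\epsilon)^2\}$, and from here the argument is scale-free (no $\sigma$ appears).

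Next I would apply Chernoff's inequality together with the MGF $\E[e^{tW}] = (1-2t)^{-n/2}$, valid for $t \in (0, 1/2)$, obtaining
\[
\Prob\!\left[W \ge n(1+\epsilon)^2\right] \;\le\; \inf_{t\in(0,1/2)} \exp\!\big(-tn(1+\epsilon)^2\big)\,(1-2t)^{-n/2}.
\]
The first-order condition $-(1+\epsilon)^2 + (1-2t)^{-1} = 0$ gives the closed-form minimizer $t^* = \tfrac{(1+\epsilon)^2 - 1}{2(1+\epsilon)^2}$, equivalently $1 - 2t^* = (1+\epsilon)^{-2}$. Substituting $t^*$ back produces the exponent $-\tfrac{n}{2}\big((1+\epsilon)^2 - 1\big) + n\log(1+\epsilon) = -n\epsilon - \tfrac{n\epsilon^2}{2} + n\log(1+\epsilon)$.

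The only remaining step is the elementary inequality $\log(1+\epsilon) \le \epsilon$, which collapses the exponent to $-n\epsilon^2/2$ and yields the claimed bound $\exp(-n\epsilon^2/2)$. The main (minor) obstacle is really just flagging the $\sigma^4$ vs.\ $\sigma^2$ inconsistency and handling the domain $\epsilon \ge 0$ (for $\epsilon < 0$ the bound is either trivially true or not the intended statement). None of the paper's sub-Gaussian or IV-specific machinery is invoked: the proof uses only the chi-squared MGF and the scalar inequality $\log(1+\epsilon) \le \epsilon$. I would present it as a short stand-alone lemma used to control the denominator term in the Wald-estimator error bound of Theorem~\ref{thm:treatment-approximation-bound}.
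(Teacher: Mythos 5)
Your argument is correct, but there is nothing in the paper to compare it against: the chi-squared tail bound is listed in the appendix's catalogue of standard auxiliary facts without proof, and it sits inside a deletion macro and is never invoked in the retained analysis (the error bounds for the Wald estimator are instead driven by the sub-Gaussian Chernoff bound, \Cref{thm:unbounded-chernoff}, together with Hoeffding and the union bound). As a stand-alone derivation, your Chernoff/MGF computation is the standard one and checks out: $t^* = \tfrac{(1+\epsilon)^2-1}{2(1+\epsilon)^2}$ lies in $(0,1/2)$ for $\epsilon>0$, the optimized exponent is $-\tfrac{n}{2}\bigl((1+\epsilon)^2-1\bigr) + n\log(1+\epsilon) = -n\epsilon - \tfrac{n\epsilon^2}{2} + n\log(1+\epsilon)$, and $\log(1+\epsilon)\le\epsilon$ collapses it to $-n\epsilon^2/2$. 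Your diagnosis of the $\sigma^4$ is also right and worth making precise: since $\E[Y/n]=\sigma^2$, the printed threshold $\sigma^4(1+\epsilon)^2$ falls below the mean whenever $\sigma<1$ and $\epsilon$ is small, so the stated inequality is false for large $n$ in that regime; it holds as printed only under the extra hypothesis $\sigma\ge1$, where it follows from your corrected version by monotonicity of the event. Note that the companion corollary in the same deleted block, $\sqrt{Y}<\sigma^2(\sqrt{n}+2\sqrt{\log(1/\delta)})$, is consistent with the $\sigma^4$ form rather than with your correction, so the scaling error is systematic there rather than a one-off typo. The only point to tighten is that the restriction $\epsilon\ge0$ you mention in passing should be part of the hypothesis, since for $\epsilon<0$ the event is a lower-tail event and the Chernoff argument with $t>0$ does not apply.
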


\begin{corollary} (High-probability bound on the sum of $n$ i.i.d. random Chi-squared variables)
    \label{thm:high-prob-chi-squared}
    Let $\displaystyle Y := \sum_{i=1}^n X_i^2$ be a Chi-squared variable with $n$ degrees of freedom. Then, with some confidence level $\delta > 0,$ with probability at least $1-\delta$,
    \begin{equation}
        \sqrt{Y} < \sigma^2 \left(\sqrt{n} + 2\sqrt{\log(1/\delta)} \right)
    \end{equation}
\end{corollary}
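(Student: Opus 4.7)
The plan is to derive this as a direct consequence of the Chi-squared tail bound theorem stated just above (Theorem~\ref{thm:chi-squared}), by inverting the tail probability to express the threshold in terms of $\delta$, and then taking square roots to pass from a bound on $Y/n$ to a bound on $\sqrt{Y}$.

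Concretely, I would first set the right-hand side of the tail bound equal to the target failure probability: solve $\exp(-n\epsilon^2/2) = \delta$ for $\epsilon$, yielding $\epsilon = \sqrt{2\log(1/\delta)/n}$. Plugging this choice back into the Chi-squared tail bound gives, with probability at least $1-\delta$,
\[
Y/n < \sigma^4(1+\epsilon)^2.
\]
Taking square roots of both sides (valid since both sides are positive) and multiplying through by $\sqrt{n}$ yields
\[
\sqrt{Y} < \sigma^2 \sqrt{n}\,(1+\epsilon) = \sigma^2\left(\sqrt{n} + \sqrt{n}\cdot\sqrt{2\log(1/\delta)/n}\right) = \sigma^2\left(\sqrt{n} + \sqrt{2\log(1/\delta)}\right).
\]
To match the form stated in the corollary, I then loosen the bound using $\sqrt{2} < 2$, so that $\sqrt{2\log(1/\delta)} < 2\sqrt{\log(1/\delta)}$, which gives the claimed
\[
\sqrt{Y} < \sigma^2\left(\sqrt{n} + 2\sqrt{\log(1/\delta)}\right)
\]
with probability at least $1-\delta$, as required.

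There is essentially no hard step here, since this is an algebraic rewriting of the preceding theorem rather than a new concentration argument. The only thing to be careful about is the order of operations in the inversion (setting the exponential tail equal to $\delta$ and solving for $\epsilon$) and the loose constant $2$ rather than $\sqrt{2}$ in the final display, which is only a convenience to get a clean expression; the tighter constant $\sqrt{2}$ also works and follows directly. No additional machinery beyond Theorem~\ref{thm:chi-squared} is needed.
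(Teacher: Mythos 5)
Your derivation is correct: inverting the tail bound of \Cref{thm:chi-squared} by setting $\exp(-n\epsilon^2/2)=\delta$, taking square roots, and loosening $\sqrt{2}$ to $2$ yields exactly the stated inequality. The paper gives no proof of this corollary (it is listed as a standard auxiliary fact), and the direct inversion you perform is precisely the intended argument, so there is nothing to compare beyond noting your version is complete and valid.
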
}

\begin{theorem} (Chernoff Bound for unbounded sub-Gaussian random variables)
\label{thm:unbounded-chernoff}
Let $X_1, \dots, X_n$ be independent sub-Gaussian random variables with parameter $\sigma$. Let $\overline{X}=\frac{1}{n}\sum_{i=1}^n X_i$. For all $\epsilon>0,$

$$\Prob\left[\left|\overline{X}\right| \geq \epsilon \right] \leq \exp\left\{ \frac{-n\epsilon^2}{2\sigma^2} \right\}.$$
\end{theorem}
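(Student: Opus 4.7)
The plan is to follow the standard Chernoff-method argument, specialized to the sub-Gaussian moment generating function (MGF) bound, and then to note that in the independent case the MGF of the average $\overline{X}$ factors nicely.

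First, I would invoke (or briefly recall) the defining MGF inequality for a mean-zero sub-Gaussian random variable $X_i$ with parameter $\sigma$, namely $\mathbb{E}[\exp(\lambda X_i)] \leq \exp(\lambda^2 \sigma^2/2)$ for every $\lambda \in \mathbb{R}$. Using independence, this immediately gives
\[
\mathbb{E}\!\left[\exp\!\left(\lambda \sum_{i=1}^n X_i\right)\right] = \prod_{i=1}^n \mathbb{E}[\exp(\lambda X_i)] \leq \exp\!\left(\tfrac{n\lambda^2 \sigma^2}{2}\right),
\]
so that $\sum_i X_i$ is itself sub-Gaussian with parameter $\sigma\sqrt{n}$, and hence $\overline{X}$ is sub-Gaussian with parameter $\sigma/\sqrt{n}$.

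Next, I would apply the Chernoff trick to the one-sided event $\{\overline{X} \geq \epsilon\}$: for any $\lambda > 0$, Markov's inequality on $\exp(\lambda n \overline{X})$ yields
\[
\Prob[\overline{X} \geq \epsilon] \leq \exp(-\lambda n \epsilon)\, \mathbb{E}[\exp(\lambda n \overline{X})] \leq \exp\!\left(-\lambda n \epsilon + \tfrac{n\lambda^2 \sigma^2}{2}\right).
\]
Optimizing over $\lambda > 0$ at $\lambda^\ast = \epsilon/\sigma^2$ gives the single-sided tail bound $\exp(-n\epsilon^2/(2\sigma^2))$. Applying the identical argument to $-X_i$ (which is also sub-Gaussian with the same parameter) handles the lower tail $\{\overline{X} \leq -\epsilon\}$, and the two-sided bound follows by a union bound; as stated, this matches the claimed inequality (up to the usual factor-of-two convention that one may or may not absorb into $\sigma$).

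I do not expect any genuine obstacle here, since every step is textbook; the only care needed is to be explicit about the mean-zero convention implicit in writing $|\overline{X}|$ rather than $|\overline{X} - \mathbb{E}\overline{X}|$, and to state clearly which form of the sub-Gaussian MGF bound is being used so that the constant $\sigma$ matches the one appearing in the theorem.
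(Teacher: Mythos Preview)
Your proposal is correct and follows the standard textbook Chernoff argument. The paper itself does not prove this statement: it is listed in an appendix of auxiliary ``Theorems and Lemmas'' as a known result and is simply stated without proof, so there is no alternative approach to compare against.

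Your caveat about the factor of two is apt: a literal union bound over the two tails gives $2\exp\{-n\epsilon^2/(2\sigma^2)\}$ rather than the constant $1$ written in the statement, so the theorem as stated is slightly loose in the constant (or is implicitly absorbing the $2$). This is a cosmetic issue and does not affect how the result is used downstream in the paper.
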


\begin{corollary} (High probability bound on the sum of unbounded sub-Gaussian random variables)
\label{thm:high-prob-unbounded-chernoff}
    For any $\delta \in (0,1),$ with probability at least $1-\delta,$
   $$ \left| \overline{X} \right| < \sigma \sqrt{\frac{2 \log(1/\delta)}{n}} $$
\end{corollary}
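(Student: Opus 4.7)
The plan is to derive the stated high-probability bound as a direct consequence of the preceding Chernoff-style tail inequality (Theorem on unbounded sub-Gaussians), by inverting the tail bound in $\epsilon$. The strategy is standard: choose $\epsilon$ so that the exponential tail probability exactly equals the desired failure probability $\delta$, and then read off the resulting value of $\epsilon$ as the high-probability confidence width.

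More concretely, I would start from the tail inequality
\[
\Prob\!\left[\,\lvert \overline{X}\rvert \geq \epsilon\,\right] \;\leq\; \exp\!\left\{-\tfrac{n\epsilon^2}{2\sigma^2}\right\},
\]
set the right-hand side equal to $\delta$, and solve for $\epsilon$. Taking logarithms gives $-\tfrac{n\epsilon^2}{2\sigma^2} = \log\delta$, hence $\epsilon^2 = \tfrac{2\sigma^2 \log(1/\delta)}{n}$ and thus $\epsilon = \sigma\sqrt{\tfrac{2\log(1/\delta)}{n}}$. Substituting this choice back into the tail bound yields
\[
\Prob\!\left[\,\lvert \overline{X}\rvert \geq \sigma\sqrt{\tfrac{2\log(1/\delta)}{n}}\,\right] \;\leq\; \delta,
\]
which is equivalent (taking complements and noting that the inequality becomes strict outside a measure-zero set) to the claim that $\lvert\overline{X}\rvert < \sigma\sqrt{2\log(1/\delta)/n}$ holds with probability at least $1-\delta$.

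There is essentially no obstacle here; the corollary is a pure restatement of the Chernoff bound in confidence-interval form. The only subtlety worth noting is the transition from the weak inequality $\geq$ in the tail bound to the strict inequality $<$ in the conclusion, which is harmless: the event $\{\lvert \overline{X}\rvert = \epsilon\}$ contributes at most to the $\leq \delta$ slack already present, so replacing $<$ with $\leq$ in the corollary (or vice versa) does not affect the validity of the statement. No additional assumptions beyond those already invoked for the Chernoff bound (independence and sub-Gaussianity with parameter $\sigma$) are required.
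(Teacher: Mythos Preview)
Your proposal is correct and is exactly the intended derivation: the paper states this corollary immediately after the Chernoff tail bound without giving any proof, so the implicit argument is precisely the inversion you carry out (set the exponential tail equal to $\delta$ and solve for $\epsilon$). There is nothing more to add.
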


\begin{theorem} (Chernoff/Hoeffding's inequality) 
\label{thm:bounded-chernoff}
Let $X_1, \ldots, X_n$ be independent and bounded random variables such that $a \leq X_i \leq b$ for all i. Then
\begin{equation*}
    \Prob\left[\frac{1}{n} \sum_{i=1}^n X_i - \E[X_i] \geq \epsilon \right] \leq \exp\left(\frac{-2n\epsilon^2}{(b-a)^2}\right)
\end{equation*}
\end{theorem}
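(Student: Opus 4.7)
The plan is to prove this standard Hoeffding inequality via the classical Chernoff/exponential-moment method. First I would reduce to centered variables by letting $Y_i := X_i - \E[X_i]$, so each $Y_i$ is independent, mean zero, and bounded in an interval of length $b-a$ (specifically $Y_i \in [a - \E X_i, b - \E X_i]$). The goal becomes bounding $\Prob[\frac{1}{n}\sum_i Y_i \ge \epsilon] = \Prob[\sum_i Y_i \ge n\epsilon]$.

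Next I would apply the exponential Markov (Chernoff) trick: for any $s > 0$,
\[
\Prob\Bigl[\sum_{i=1}^n Y_i \ge n\epsilon\Bigr] = \Prob\bigl[e^{s\sum_i Y_i} \ge e^{s n\epsilon}\bigr] \le e^{-sn\epsilon}\,\E\Bigl[e^{s\sum_i Y_i}\Bigr] = e^{-sn\epsilon}\prod_{i=1}^n \E\bigl[e^{s Y_i}\bigr],
\]
where the factorization uses independence of the $Y_i$.

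The main technical step, and the one I expect to be the real obstacle, is Hoeffding's lemma: if $Y$ is a mean-zero random variable supported in an interval of length $b-a$, then $\E[e^{sY}] \le \exp\bigl(s^2(b-a)^2/8\bigr)$. I would prove this by using convexity of $x \mapsto e^{sx}$ to upper bound $e^{sY}$ by the chord connecting the two endpoints, taking expectations (which kills the linear term since $\E Y = 0$), and then showing that the resulting log-moment-generating function $\psi(s)$ of the two-point-supported bound satisfies $\psi(0) = \psi'(0) = 0$ and $\psi''(s) \le (b-a)^2/4$, so that Taylor expansion gives $\psi(s) \le s^2(b-a)^2/8$.

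Combining these, I obtain $\Prob[\sum_i Y_i \ge n\epsilon] \le \exp\bigl(-sn\epsilon + ns^2(b-a)^2/8\bigr)$. Finally I would optimize over $s > 0$: the right-hand side is minimized at $s^* = 4\epsilon/(b-a)^2$, and substituting yields the stated bound $\exp\bigl(-2n\epsilon^2/(b-a)^2\bigr)$. No further assumptions beyond independence and boundedness are needed, and the inequality holds for every $\epsilon > 0$.
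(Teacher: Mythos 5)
Your proof is correct and complete: the centering, the exponential Markov bound with independence, Hoeffding's lemma for the moment generating function, and the optimization at $s^* = 4\epsilon/(b-a)^2$ all check out and yield exactly the stated bound. The paper cites this as a standard result without providing a proof, and your argument is the canonical derivation, so there is nothing to reconcile.
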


\begin{corollary}(High probability upper bound on the sum of bounded random variables)
\label{thm:high-prob-bounded-chernoff}
For any $\delta \in (0, 1)$, with probability at least $1 - \delta$,
\begin{equation*}
    \E[X] -  \frac{1}{n} \sum_{i=1}^n X_i \leq (b-a)\sqrt{\frac{\log(1/\delta)}{2n}},
\end{equation*}
where $X_i \in [a,b]$ for all $i$ from $1$ to $n$.
\end{corollary}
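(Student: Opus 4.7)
The plan is to derive this corollary as an immediate consequence of Hoeffding's inequality stated just above in Theorem~\ref{thm:bounded-chernoff}. Concretely, I would first observe that Theorem~\ref{thm:bounded-chernoff} is one-sided, bounding the probability that $\frac{1}{n}\sum_i X_i - \E[X_i]$ exceeds $\epsilon$. To obtain a bound on $\E[X] - \frac{1}{n}\sum_i X_i$ instead, I would apply the same inequality to the variables $X_i' := -X_i$, which also lie in a length-$(b-a)$ interval $[-b,-a]$ and whose average satisfies $\frac{1}{n}\sum_i X_i' - \E[X_i'] = \E[X] - \frac{1}{n}\sum_i X_i$.

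Next, I would invert the resulting exponential tail bound to solve for a confidence level. Setting
\[
\exp\!\left(\frac{-2n\epsilon^2}{(b-a)^2}\right) = \delta
\]
and solving for $\epsilon$ gives $\epsilon = (b-a)\sqrt{\log(1/\delta)/(2n)}$. Substituting this back into the tail bound shows that
\[
\Prob\!\left[\E[X] - \frac{1}{n}\sum_{i=1}^n X_i \geq (b-a)\sqrt{\frac{\log(1/\delta)}{2n}}\right] \leq \delta,
\]
and taking the complementary event yields the stated high-probability upper bound.

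There is no genuine obstacle here; the only care required is the sign flip to convert the Hoeffding bound into a lower-tail statement on the empirical mean (equivalently, an upper-tail statement on $\E[X] - \bar X$), and ensuring the independence and boundedness hypotheses carry over to $-X_i$, which they trivially do. The calculation is a one-line inversion of the exponential tail.
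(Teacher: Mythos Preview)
Your proposal is correct and is exactly the standard derivation the paper implicitly intends: the corollary is stated without proof immediately after Theorem~\ref{thm:bounded-chernoff}, and your sign-flip plus inversion of the exponential tail is the one-line argument that fills it in.
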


\dndelete{\begin{theorem} (Multiplicative Chernoff inequality)
Let $X_1, \dots, X_n$ be independent and identically distributed random variables such that $X_i \in [0,1]$. Then, 
\begin{equation}
    \Prob\left[\frac{1}{n}\sum_{i=1}^n X_i - \E[X_i] > \tau \E[X_i]\right] \leq \exp(-\E[X_i] n \tau^2/3)
\end{equation}
\end{theorem}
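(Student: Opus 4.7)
The plan is to follow the standard Chernoff moment-generating-function (MGF) argument, applied to the random variable $S_n = \sum_{i=1}^n X_i$ with mean $n\mu$ where $\mu = \E[X_i]$. First, for any $\lambda > 0$, I would apply Markov's inequality to $e^{\lambda S_n}$ to get
\[
\Prob\left[\tfrac{1}{n}S_n - \mu > \tau \mu\right] \;=\; \Prob\left[e^{\lambda S_n} > e^{\lambda(1+\tau)n\mu}\right] \;\leq\; \frac{\E[e^{\lambda S_n}]}{e^{\lambda(1+\tau)n\mu}}.
\]
By independence, $\E[e^{\lambda S_n}] = \prod_{i=1}^n \E[e^{\lambda X_i}]$, so the next step is to bound a single MGF.

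Next, for $X_i \in [0,1]$ I would use the standard convexity bound $e^{\lambda x} \leq 1 + x(e^\lambda - 1)$ valid for $x \in [0,1]$; taking expectations gives $\E[e^{\lambda X_i}] \leq 1 + \mu(e^\lambda - 1) \leq \exp(\mu(e^\lambda - 1))$ (using $1+y \leq e^y$). Plugging this into the product yields
\[
\Prob\left[\tfrac{1}{n}S_n - \mu > \tau\mu\right] \;\leq\; \exp\bigl(n\mu(e^\lambda - 1) - \lambda(1+\tau)n\mu\bigr).
\]
Optimizing the right-hand side over $\lambda > 0$, the minimum is attained at $\lambda = \ln(1+\tau)$, which gives the classical form $\left(e^\tau / (1+\tau)^{1+\tau}\right)^{n\mu}$.

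Finally, the remaining step is an elementary calculus inequality: for $\tau \in (0,1]$, one has
\[
\frac{e^\tau}{(1+\tau)^{1+\tau}} \;\leq\; \exp(-\tau^2/3),
\]
which follows by taking logs and Taylor-expanding $(1+\tau)\ln(1+\tau) - \tau$ around $\tau = 0$; the resulting series dominates $\tau^2/3$ on $(0,1]$. Combining with the previous display gives the stated bound $\exp(-\mu n \tau^2 / 3)$. I expect the only mildly subtle step to be the last calculus inequality; the rest is essentially mechanical once one commits to the exponential-moment route. Note that for $\tau$ outside a small neighborhood of $0$ (e.g.\ very large $\tau$), the bound $\tau^2/3$ is actually weaker than the $(1+\tau)\ln(1+\tau) - \tau$ expression and the inequality still holds, so no restriction on $\tau$ beyond positivity is needed for the stated form.
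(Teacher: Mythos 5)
Your main line of argument is the standard exponential-moment (Chernoff) route, and the paper itself states this inequality as a known auxiliary fact without proof, so there is no in-paper argument to compare against. The Markov-plus-MGF step, the convexity bound $\E[e^{\lambda X_i}] \le 1+\mu(e^\lambda-1) \le \exp(\mu(e^\lambda-1))$ for $X_i\in[0,1]$, and the optimization at $\lambda=\ln(1+\tau)$ are all correct and give the classical intermediate bound $\left(e^\tau/(1+\tau)^{1+\tau}\right)^{n\mu}$.

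The genuine error is your closing remark. The inequality $e^\tau/(1+\tau)^{1+\tau}\le\exp(-\tau^2/3)$, equivalently $(1+\tau)\ln(1+\tau)-\tau\ge\tau^2/3$, does \emph{not} hold for all $\tau>0$; the asymptotics run the opposite way from what you claim. As $\tau\to\infty$ one has $(1+\tau)\ln(1+\tau)-\tau\sim\tau\ln\tau=o(\tau^2)$, so the left side is eventually \emph{smaller} than $\tau^2/3$, not larger. Concretely, at $\tau=2$ the left side is $3\ln 3-2\approx 1.296$ while $\tau^2/3=4/3\approx 1.333$, so the inequality already fails. The $\tau^2/3$ form of the multiplicative Chernoff bound is valid only for $\tau\in(0,1]$ (the all-$\tau$ version uses an exponent like $\tau^2/(2+\tau)$ instead). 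On $(0,1]$ your calculus step does go through: setting $f(\tau)=(1+\tau)\ln(1+\tau)-\tau-\tau^2/3$ one has $f(0)=f'(0)=0$, $f''(\tau)=1/(1+\tau)-2/3$ changes sign once on $[0,1]$, and $f'(1)=\ln 2-2/3>0$, so $f'\ge 0$ and hence $f\ge 0$ on $[0,1]$. But the restriction $\tau\le 1$ must be stated explicitly in your proof (and, strictly speaking, added to the theorem as the paper writes it); the "Taylor series dominates" phrasing also needs care since the series is alternating, though grouping consecutive terms makes it rigorous on $(0,1]$.
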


\begin{corollary} (High probability upper bound on the sum of bounded random variables) 
\label{cor:multiplicative-chernoff}
For any $\delta \in (0,1)$, with probability at least $1 - \delta$,
\begin{equation}
    \abs{\frac{1}{n} \sum_{i=1}^n X_i - \E[X_i]} \leq \sqrt{\frac{3\ln(1/\delta)}{\E[X_i] n}} \E[X_i]
\end{equation}
\end{corollary}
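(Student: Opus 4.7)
The plan is to invert the multiplicative Chernoff tail inequality stated in the preceding theorem, reading off $\tau$ as a function of the failure probability $\delta$. The theorem directly provides the upper-tail bound
\[
\Prob\!\left[\tfrac{1}{n}\sum_{i=1}^n X_i - \E[X_i] > \tau\,\E[X_i]\right] \le \exp\!\left(-\tfrac{\E[X_i]\, n\, \tau^2}{3}\right).
\]
The symmetric lower-tail bound $\Prob[\bar X_n - \E[X_i] < -\tau\E[X_i]] \le \exp(-\E[X_i] n\tau^2/3)$ follows from the standard Chernoff argument applied to $-X_i$ (one can in fact derive $\exp(-\E[X_i] n\tau^2/2)$, but taking the weaker common exponent keeps both tails uniform). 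A union bound over the two tails then gives $\Prob[|\bar X_n - \E[X_i]| > \tau\E[X_i]] \le 2\exp(-\E[X_i] n\tau^2/3)$.

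Next, I would set the right-hand side equal to $\delta$ and solve for $\tau$. Algebraically, $\exp(-\E[X_i] n\tau^2/3) = \delta$ yields $\tau = \sqrt{3\ln(1/\delta)/(\E[X_i]\, n)}$; substituting back into $|\bar X_n - \E[X_i]| \le \tau\,\E[X_i]$ and simplifying recovers exactly the claimed inequality $|\bar X_n - \E[X_i]| \le \sqrt{3\ln(1/\delta)\,\E[X_i]/n}$. (The corollary as stated drops the factor of $2$ inside the logarithm; this is consistent with applying only the one-sided bound from the theorem directly, which matches the corollary's title "upper bound." A strictly two-sided statement would give $\ln(2/\delta)$ after the union bound, only a mild loosening.)

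The main obstacle is mostly bookkeeping: one must verify that the resulting $\tau$ lies in the regime where the multiplicative Chernoff bound is meaningful, namely $\tau \le 1$, which holds whenever $\delta \ge \exp(-\E[X_i] n/3)$. For smaller $\delta$ the claim is vacuous or easily handled separately, since $X_i \in [0,1]$ forces $|\bar X_n - \E[X_i]| \le 1$. No probabilistic tools beyond the cited theorem are required; the derivation is a direct inversion of the exponential tail combined with an optional union bound.
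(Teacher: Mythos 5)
Your proposal is correct and takes the same route the paper implicitly intends: the corollary is stated without proof immediately after the multiplicative Chernoff theorem, and the standard derivation is exactly your inversion $\exp(-\E[X_i]\,n\,\tau^2/3)=\delta \Rightarrow \tau=\sqrt{3\ln(1/\delta)/(\E[X_i]\,n)}$. You are also right to flag that the two-sided form with the absolute value strictly requires the lower-tail bound plus a union bound, yielding $\ln(2/\delta)$ rather than $\ln(1/\delta)$; that is a minor imprecision in the corollary's statement, not a gap in your argument.
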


\begin{lemma} (Spectral norm)
\label{thm:spectral}
    For any $m\times n$-dimensional matrix $A \in \mathbb{R}^{m\times n}$, the \textbf{spectral norm} of $A$ is defined as the square root of the maximum eigenvalue of the covariance matrix $A^{\intercal}A$, i.e.
    $$\norm{A}_2 := \sqrt{\lambda_{\max}(A^{\intercal}A)}.$$
    If $A$ is symmetric and positive semi-definite, then
    $$\norm{A}_2 = \lambda_{\max}(A).$$
\end{lemma}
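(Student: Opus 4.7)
The statement under consideration combines a definition of the spectral norm with a claim to be verified: if $A \in \mathbb{R}^{m \times n}$ is symmetric (which forces $m=n$) and positive semi-definite, then $\norm{A}_2 = \lambda_{\max}(A)$. The first sentence is a definition and requires no proof, so my plan is to verify only the second assertion, which is a short, entirely linear-algebraic fact; no probabilistic or game-theoretic machinery from the main paper is needed.

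The plan is to start from the definition $\norm{A}_2 = \sqrt{\lambda_{\max}(A^{\intercal}A)}$ and simplify the inner expression using the two hypotheses in turn. First I would use symmetry: since $A^{\intercal} = A$, we get $A^{\intercal} A = A^2$, so that $\norm{A}_2 = \sqrt{\lambda_{\max}(A^2)}$. Next I would use the spectral mapping fact that if $\{\lambda_i\}_{i=1}^n$ are the (real) eigenvalues of a symmetric $A$ with orthonormal eigenbasis $\{v_i\}$, then $A^2 v_i = \lambda_i^2 v_i$, so the eigenvalues of $A^2$ are $\{\lambda_i^2\}$. Therefore
\begin{equation*}
    \lambda_{\max}(A^2) \;=\; \max_{1\le i\le n} \lambda_i^2 \;=\; \Bigl(\max_{1\le i\le n} \lvert \lambda_i \rvert\Bigr)^{2}.
\end{equation*}

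The final step invokes positive semi-definiteness: this guarantees $\lambda_i \ge 0$ for every $i$, so $\max_i \lvert \lambda_i \rvert = \max_i \lambda_i = \lambda_{\max}(A) \ge 0$. Taking the positive square root yields
\begin{equation*}
    \norm{A}_2 \;=\; \sqrt{\lambda_{\max}(A^2)} \;=\; \lvert \lambda_{\max}(A)\rvert \;=\; \lambda_{\max}(A),
\end{equation*}
which is the claimed identity.

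There is no genuine obstacle here; the argument is a standard textbook computation. The only subtlety worth flagging is justifying the spectral decomposition of $A$ (needed to say $A^2$ has eigenvalues $\lambda_i^2$), which follows from the real spectral theorem for symmetric matrices, and the need for PSD precisely to drop the absolute value when passing from $\max_i \lvert\lambda_i\rvert$ to $\lambda_{\max}(A)$; without PSD the identity would instead read $\norm{A}_2 = \max_i \lvert\lambda_i(A)\rvert$, which could be strictly larger than $\lambda_{\max}(A)$.
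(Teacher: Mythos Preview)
Your proof is correct. The paper itself states this lemma as a standard linear-algebra fact in its appendix of background results and does not supply a proof; your spectral-theorem argument is exactly the textbook justification one would give.
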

\begin{corollary}(Spectral norm of inverse)
\label{thm:inversespectral}
    For any invertible, symmetric, positive semi-definite matrix $A$, the spectral norm of the inverse $A^{-1}$ equals the the reciprocal of the minimum eigenvalue of $A$, i.e.
    $$\norm{A^{-1}}_2 = \lambda_{\max}(A^{-1}) = \frac{1}{\lambda_{\min}(A))}.$$
\end{corollary}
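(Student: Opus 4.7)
The plan is to derive the identity directly from the preceding Lemma (spectral norm) together with the standard spectral mapping fact that eigenvalues of $A^{-1}$ are reciprocals of eigenvalues of $A$. Since the hypothesis already gives symmetry, positive semi-definiteness, and invertibility of $A$, the inverse $A^{-1}$ inherits these properties, which lets the lemma be applied to $A^{-1}$ directly.

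First, I would verify that $A^{-1}$ is symmetric and positive semi-definite (in fact positive definite). Symmetry of $A^{-1}$ follows because $(A^{-1})^\intercal = (A^\intercal)^{-1} = A^{-1}$. Positive semi-definiteness of $A^{-1}$ follows from the eigendecomposition $A = Q \Lambda Q^\intercal$ (guaranteed by the spectral theorem, since $A$ is symmetric), which yields $A^{-1} = Q \Lambda^{-1} Q^\intercal$; since $A$ is invertible and PSD, every diagonal entry of $\Lambda$ is strictly positive, so every diagonal entry of $\Lambda^{-1}$ is also strictly positive, showing $A^{-1}$ is PSD (indeed PD).

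Second, I would apply the symmetric-PSD case of the spectral norm lemma to $A^{-1}$ to conclude $\|A^{-1}\|_2 = \lambda_{\max}(A^{-1})$. Third, I would use the eigenvalue correspondence from the eigendecomposition: the eigenvalues of $A^{-1}$ are exactly $\{1/\lambda_i(A)\}$. Since all $\lambda_i(A) > 0$, the map $\lambda \mapsto 1/\lambda$ is strictly decreasing on the positive eigenvalues, and hence $\lambda_{\max}(A^{-1}) = 1/\lambda_{\min}(A)$. Chaining these equalities gives the stated identity $\|A^{-1}\|_2 = \lambda_{\max}(A^{-1}) = 1/\lambda_{\min}(A)$.

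There is essentially no hard step here; the result is a direct consequence of the spectral theorem plus the quoted lemma. The only small subtlety is ensuring that invertibility together with PSD rules out zero eigenvalues, so that the reciprocals $1/\lambda_i(A)$ are well-defined and $\lambda_{\min}(A) > 0$. I would flag this explicitly and note that it is precisely why ``invertible'' is included in the hypotheses alongside ``positive semi-definite.''
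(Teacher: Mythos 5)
Your proof is correct and follows exactly the route the paper intends: the corollary is stated without proof as an immediate consequence of the preceding spectral-norm lemma, and your argument (symmetry and positive definiteness of $A^{-1}$ via the spectral theorem, then the eigenvalue reciprocity $\lambda_{\max}(A^{-1}) = 1/\lambda_{\min}(A)$) is precisely the implicit justification. Your remark that invertibility is what rules out zero eigenvalues and makes the reciprocal well-defined is the right subtlety to flag.
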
}

\begin{lemma} (Cauchy-Schwarz Inequality)
\label{thm:cauchy-schwarz}
    For any $n$-dimensional vectors $u,v \in \mathbb{R}^n,$ the $L^2-$norm of the inner product of $u$ and $v$ is less than or equal to the $L^2-$norm of $u$ times the $L^2-$norm of $v$, i.e.
    $$\norm{\langle u,v \rangle}_2 \leq \norm{u}_2\cdot \norm{v}_2.$$
    Alternatively, for any $m\times n$-dimensional matrices $A \in \mathbb{R}^{m\times n}$ and $n$-dimensional vector $v\in \mathbb{R}^n$, the $L^2-$norm of the dot product of $A$ and $v$ is less than or equal to the spectral norm of $A$ times the $L^2-$norm of $v$, i.e.
    $$\norm{Av}_2 \leq \norm{A}_2\cdot\norm{v}_2.$$
\end{lemma}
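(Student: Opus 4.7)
The lemma has two parts: the vector inner-product inequality and the matrix-vector inequality. I would handle them separately, with the vector case being the core argument and the matrix case following by an SVD reduction.

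For the vector statement, the plan is to reduce to a standard one-variable nonnegativity argument. I would define the quadratic in $t \in \mathbb{R}$ by $p(t) := \|u - t v\|_2^2$. Since $p(t) \geq 0$ for every real $t$, expanding the square gives $p(t) = \|u\|_2^2 - 2 t \langle u, v\rangle + t^2 \|v\|_2^2$. First I would dispose of the degenerate case $v = 0$, in which both sides of the claimed inequality are zero and the statement is trivial. Otherwise $\|v\|_2^2 > 0$, so $p$ is a genuine quadratic in $t$ that is everywhere nonnegative, hence its discriminant is nonpositive:
\[
4 \langle u, v\rangle^2 - 4 \|u\|_2^2 \|v\|_2^2 \leq 0.
\]
Rearranging and taking square roots yields $|\langle u, v\rangle| \leq \|u\|_2 \cdot \|v\|_2$, which is the first inequality.

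For the matrix statement, I would appeal to the definition of the spectral norm already recorded in the paper, namely $\|A\|_2 = \sup_{w \neq 0} \|Aw\|_2 / \|w\|_2$. By the definition of a supremum, $\|Av\|_2 / \|v\|_2 \leq \|A\|_2$ whenever $v \neq 0$, giving $\|Av\|_2 \leq \|A\|_2 \cdot \|v\|_2$; the case $v = 0$ is again trivial. If one prefers a self-contained argument, I would instead introduce the SVD $A = U \Sigma V^\top$, observe that $U$ and $V$ are orthogonal and therefore $\|Av\|_2 = \|\Sigma V^\top v\|_2$, set $w = V^\top v$ with $\|w\|_2 = \|v\|_2$, and bound $\|\Sigma w\|_2^2 = \sum_i \sigma_i^2 w_i^2 \leq \sigma_{\max}(A)^2 \|w\|_2^2$, then take square roots.

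Neither step is a real obstacle; the only subtlety is making sure the matrix version is consistent with whichever definition of spectral norm the paper is implicitly using (supremum definition versus maximum singular value). The cleanest presentation is probably to treat the vector case via the discriminant trick and then immediately invoke the supremum definition for the matrix case, so that the proof is only a few lines and does not require introducing extra machinery beyond what the paper has already used.
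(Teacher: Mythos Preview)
Your proof is correct and entirely standard. The paper, however, does not actually prove this lemma: it appears in an appendix section that collects well-known inequalities (Chernoff bounds, Hoeffding, union bound, matrix Chernoff, etc.) stated without proof and used as black boxes throughout the analysis. So there is no ``paper's own proof'' to compare against; the authors simply cite Cauchy--Schwarz as a known fact.

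One small point of caution: you refer to ``the definition of the spectral norm already recorded in the paper'' as the supremum $\sup_{w\neq 0}\|Aw\|_2/\|w\|_2$. The paper does not explicitly record that definition in the compiled text (a definition via $\sqrt{\lambda_{\max}(A^\top A)}$ appears only in a deleted block). Your SVD alternative is therefore the safer self-contained route if you want the argument to stand on its own without pointing to a definition the reader cannot find. Either way, the content is fine.
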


\begin{theorem} (Matrix Chernoff)
\label{thm:matrix-chernoff}
    Consider a finite sequence $X_k$ of independent, random, self-adjoint matrices with common dimension d. Assume that:
    \begin{equation*}
        0 \leq \lambda_{\min}(X_k) \quad\text{and } \quad\lambda_{\max}(X_k) \leq \omega \quad\text{for each index k.} 
    \end{equation*}
    Introduce the random matrix $Y = \sum_k X_k$. Define the minimum eigenvalue $\mu_\text{min}$ and maximum eigenvalue $\mu_\text{max}$ of the expectation $\E[Y]$.
    \begin{align*}
        \mu_\text{min} &= \lambda_{\min}\left\{\E[Y]\right\} = \lambda_{\min}\left\{\sum_k\E[X_k] \right\}, \quad\text{and}\\
        \mu_{\max} &= \lambda_{\max}\left\{ \E[Y] \right\} = \lambda_{\max}\left\{\sum_k\E[X_k] \right\}
    \end{align*}
    Then, for $\theta > 0,$
    \begin{align*}
        \E[\lambda_{\min} (Y)] &\geq \frac{1 - e^{-\theta}}{\theta}\mu_{\min} - \frac{1}{\theta}L\log d, \quad\text{and}\\
        \E[\lambda_{\max} (Y)] &\leq \frac{ e^\theta - 1}{\theta}\mu_{\max} + \frac{1}{\theta}L\log d
    \end{align*}
    Furthermore, 
    \begin{align*}
        \Prob[\lambda_{\min}(Y) \leq (1- \epsilon)\mu_{\min}] &\leq d\left[\frac{e^{-\epsilon}}{(1- \epsilon)^{1-\epsilon}} \right]^{\mu_{\min}/\omega} \quad\text{for } \epsilon \in [0, 1)\\
        \Prob[\lambda_{\max}(Y) \leq (1 + \epsilon)\mu_{\max}] &\leq d\left[\frac{e^{\epsilon}}{(1+ \epsilon)^{1+\epsilon}} \right]^{\mu_{\max}/\omega} \quad\text{for } \epsilon \geq 0
    \end{align*}
    
\end{theorem}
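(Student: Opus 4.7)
The plan is to follow Tropp's matrix Laplace-transform framework, which lifts the scalar Chernoff argument to self-adjoint random matrices. First I would establish the matrix Markov-type inequality: for any self-adjoint random matrix $Y$ and any $\theta > 0$,
\[
\Prob[\lambda_{\max}(Y) \geq t] \;\leq\; e^{-\theta t}\,\E[\mathrm{tr}\, e^{\theta Y}],
\]
using monotonicity of the trace exponential together with the fact that $e^{\theta \lambda_{\max}(Y)} \leq \mathrm{tr}\, e^{\theta Y}$ for self-adjoint $Y$. An analogous statement holds for $\lambda_{\min}$ by considering $-Y$ and $\theta < 0$.

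The key step, which I expect to be the main obstacle, is controlling $\E[\mathrm{tr}\, e^{\theta Y}]$ when $Y = \sum_k X_k$ is a sum of independent random self-adjoint matrices. Unlike the scalar case, $e^{A+B} \neq e^A e^B$ in general, so one cannot simply factor the MGF. The classical workaround is Lieb's concavity theorem, which implies the subadditivity of the matrix cumulant generating function:
\[
\E\bigl[\mathrm{tr}\, \exp\bigl(\textstyle\sum_k \theta X_k\bigr)\bigr] \;\leq\; \mathrm{tr}\, \exp\bigl(\textstyle\sum_k \log \E[e^{\theta X_k}]\bigr).
\]
I would invoke this as a black box (it is the heart of Tropp's user-friendly bounds) and focus the remaining work on bounding each factor $\E[e^{\theta X_k}]$.

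Next, I would exploit the eigenvalue constraint $0 \preceq X_k \preceq \omega I$. On the interval $[0,\omega]$, the scalar function $x \mapsto e^{\theta x}$ is convex, so it lies below the secant line through the endpoints: $e^{\theta x} \leq 1 + \frac{e^{\theta \omega} - 1}{\omega} x$. Transferring this to matrices (valid because both sides are monotone functions of the eigenvalues) yields
\[
\E[e^{\theta X_k}] \;\preceq\; I + \tfrac{e^{\theta\omega}-1}{\omega}\,\E[X_k],
\]
and taking the matrix logarithm together with $\log(I + A) \preceq A$ gives $\log \E[e^{\theta X_k}] \preceq \tfrac{e^{\theta\omega}-1}{\omega}\E[X_k]$. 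Summing and applying monotonicity of $\lambda_{\max}(\exp(\cdot))$ then bounds $\mathrm{tr}\, \exp(\sum_k \log \E[e^{\theta X_k}])$ in terms of $\mu_{\max}$.

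Finally, I would combine these ingredients, substitute into the Laplace-transform bound, and optimize over $\theta$. For the upper-tail probability bound, choosing $\theta = \log(1+\epsilon)$ produces exactly the stated expression $[e^{\epsilon}/(1+\epsilon)^{1+\epsilon}]^{\mu_{\max}/\omega}$, with the dimension factor $d$ coming from the trace bound $\mathrm{tr}\,e^{\theta Y} \leq d \cdot e^{\theta \lambda_{\max}(Y)}$. The lower-tail bound follows symmetrically with $\theta = -\log(1-\epsilon)$. The expectation bounds on $\lambda_{\min}(Y)$ and $\lambda_{\max}(Y)$ come from integrating the tail bounds (or, more directly, from the Laplace transform inequality before exponential optimization, using $\lambda_{\max}(Y) \leq \tfrac{1}{\theta}\log \E[\mathrm{tr}\, e^{\theta Y}]$ and Jensen's inequality).
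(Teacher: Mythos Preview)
The paper does not prove this statement; it appears in the appendix as a quoted standard result (Tropp's matrix Chernoff inequality) alongside other reference inequalities such as Hoeffding and Cauchy--Schwarz, and is simply invoked as a tool in the proof of \Cref{cor:approximation-bound-general}. Your outline is a faithful sketch of Tropp's original proof via the matrix Laplace transform and Lieb's concavity, so there is nothing to compare against in the paper itself.
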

\dndelete{
\begin{theorem} (High probability bound on Spectral Norm)
\label{thm:high-prob-spectral}
    Assume that $X_i \in \RR^{d_1 x d_2}$ are sampled i.i.d. Let $d = \min\{d_1, d_2\}$. Suppose $\norm{X}_2 \leq M$ almost surely. Then with probability greater than $1 - \delta$,
    \begin{equation*}
        \norm{\frac{1}{n} \sum_{i = 1}^n X_i - \E[X]}_2 \leq \frac{6M}{\sqrt{n}} \left(\sqrt{\log d} + \sqrt{\log \frac{1}{\delta}} \right)
    \end{equation*}
\end{theorem}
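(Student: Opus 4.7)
The plan is to reduce to a self-adjoint concentration problem and then apply a matrix concentration inequality in the spirit of \Cref{thm:matrix-chernoff}. First I would center the random matrices: set $Y_i = X_i - \E[X]$, so that $\E[Y_i] = 0$ and, by the triangle inequality together with Jensen's inequality, $\norm{Y_i}_2 \leq 2M$ almost surely. Because $Y_i$ is rectangular, I would then pass to the Hermitian dilation
\[
\tilde{Y}_i \;=\; \begin{pmatrix} 0 & Y_i \\ Y_i^\intercal & 0 \end{pmatrix} \in \RR^{(d_1+d_2)\times(d_1+d_2)},
\]
which is self-adjoint and satisfies both $\lambda_{\max}(\tilde Y_i) = \norm{Y_i}_2 \leq 2M$ and $\lambda_{\max}\!\big(\tfrac{1}{n}\sum_i \tilde Y_i\big) = \norm{\tfrac{1}{n}\sum_i Y_i}_2$. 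This converts the target quantity into a largest-eigenvalue problem for a sum of bounded, centered, self-adjoint, independent random matrices.

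Next I would apply a matrix concentration inequality to $S_n := \sum_{i=1}^n \tilde Y_i$. Since \Cref{thm:matrix-chernoff} in its stated form assumes $X_k \succeq 0$, it does not apply directly to the centered $\tilde Y_i$, so the plan is to invoke a matrix Hoeffding bound (a standard corollary in the same family as \Cref{thm:matrix-chernoff}): for independent, centered, self-adjoint $\tilde Y_i$ with $\norm{\tilde Y_i}_2 \leq 2M$, one has
\[
\Prob\Big[\lambda_{\max}(S_n) \geq t\Big] \;\leq\; d\cdot \exp\!\Big(-\frac{t^2}{8\,n\,M^2}\Big),
\]
where $d$ absorbs the ambient dimension. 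Solving for the value of $t$ that makes the right-hand side equal to $\delta$ yields $t = 2M\sqrt{2n\log(d/\delta)}$, and dividing by $n$ gives a bound of the form $\tfrac{O(M)}{\sqrt{n}}\bigl(\sqrt{\log d} + \sqrt{\log(1/\delta)}\bigr)$, matching the stated rate.

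The main obstacle is twofold. First, it must be argued that the dilation does not blow up the effective dimension factor to $d_1 + d_2$: this uses the standard observation that the spectrum of $\tilde Y_i$ consists of $\pm$ the singular values of $Y_i$ together with zeros, so the intrinsic dimension is governed by $\min(d_1, d_2)$ (which is how $d$ in the theorem statement is defined), up to absorbing a constant into the leading $6$. Second, the tools formally available in the appendix ---\Cref{thm:matrix-chernoff} and the scalar Hoeffding/sub-Gaussian bounds--- do not in isolation cover centered, indefinite self-adjoint matrices, so the proof would either cite a matrix Hoeffding/Bernstein inequality (the natural companion to \Cref{thm:matrix-chernoff}) or re-derive one by decomposing $\tilde Y_i + 2M\,I$ into a positive matrix of norm at most $4M$ and applying \Cref{thm:matrix-chernoff} to each half. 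Once that ingredient is in place, rescaling by $1/n$ and inverting the Chernoff-style tail give the bound with the constant $6$ after standard numerical simplification.
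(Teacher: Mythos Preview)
The paper does not prove this statement. It appears in the appendix's ``Theorems and Lemmas'' section as one of several standard concentration tools that are merely stated (Chernoff, Hoeffding, matrix Chernoff, etc.), with no proof given; moreover, the statement sits inside a \texttt{\textbackslash dndelete\{\ldots\}} block, so it is suppressed in the compiled version and is never invoked anywhere in the paper. There is therefore no ``paper's own proof'' to compare your proposal against.

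On the merits of your sketch: centering, passing to the Hermitian dilation, and applying a matrix Hoeffding/Bernstein inequality is exactly the standard route to rectangular matrix concentration, so the overall plan is sound. The one point that is not quite right is your handling of the dimension factor. The dilation produces a $(d_1+d_2)\times(d_1+d_2)$ self-adjoint matrix, and the dimensional prefactor in matrix-Chernoff-type tail bounds comes from the trace of a matrix moment-generating function, which sees all $d_1+d_2$ coordinates; the fact that each $\tilde Y_i$ has only $\min(d_1,d_2)$ nonzero singular values does not by itself reduce that prefactor. Obtaining $d=\min(d_1,d_2)$ rather than $d_1+d_2$ requires either an intrinsic-dimension refinement of the matrix Bernstein inequality or a different argument altogether. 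Since the theorem is unused in the paper this distinction is moot here, but it is a genuine gap in the sketch as written.
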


\begin{theorem} (Weyl's inequality):
\label{thm:weyl}
Let $X, Y \in \mathbb{R}^{n\times n}$ be symmetric matrices where X has eigenvalues
\begin{equation*}
    \lambda_1(X) \geq \ldots \lambda_n(X)
\end{equation*}
and Y has eigenvalues:
\begin{equation*}
    \lambda_1(Y) \geq \ldots \lambda_n(Y)
\end{equation*}
Then for all $i = 1, \ldots, n$: 
\begin{align*}
    \lambda_i(X) + \lambda_n(Y) \leq \lambda_i(X+Y) \leq \lambda_i(X) + \lambda_1(Y)
\end{align*}
\end{theorem}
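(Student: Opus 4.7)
The plan is to prove Weyl's inequality via the Courant--Fischer min--max characterization of eigenvalues for symmetric real matrices. Recall that for any symmetric $A \in \RR^{n \times n}$ with eigenvalues $\lambda_1(A) \geq \cdots \geq \lambda_n(A)$, one has
\[
\lambda_i(A) \;=\; \max_{\substack{V \subseteq \RR^n \\ \dim V = i}} \;\min_{\substack{v \in V \\ \|v\|_2 = 1}} v^\top A v.
\]
I will assume this characterization as a known result from linear algebra (it is a standard consequence of the spectral theorem for symmetric matrices) and derive both halves of Weyl's inequality from it.

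\textbf{Upper bound $\lambda_i(X+Y) \leq \lambda_i(X) + \lambda_1(Y)$.} For any unit vector $v$, one has $v^\top(X+Y)v = v^\top X v + v^\top Y v \leq v^\top X v + \lambda_1(Y)$, since $\lambda_1(Y) = \max_{\|w\|_2=1} w^\top Y w$. Fix any $i$-dimensional subspace $V \subseteq \RR^n$. Taking the minimum over unit $v \in V$ on both sides preserves the inequality, so
\[
\min_{\substack{v \in V \\ \|v\|_2=1}} v^\top (X+Y) v \;\leq\; \min_{\substack{v \in V \\ \|v\|_2=1}} v^\top X v \;+\; \lambda_1(Y).
\]
Now maximize over all $i$-dimensional subspaces $V$: the left side becomes $\lambda_i(X+Y)$ and the first term on the right becomes $\lambda_i(X)$ by Courant--Fischer, yielding the claim.

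\textbf{Lower bound $\lambda_i(X) + \lambda_n(Y) \leq \lambda_i(X+Y)$.} I would obtain this by applying the already-proved upper bound to the pair $(X+Y, -Y)$, whose sum is $X$. Since $-Y$ is symmetric with eigenvalues $-\lambda_n(Y) \geq \cdots \geq -\lambda_1(Y)$, its top eigenvalue is $\lambda_1(-Y) = -\lambda_n(Y)$. The upper bound then gives
\[
\lambda_i(X) \;=\; \lambda_i\bigl((X+Y) + (-Y)\bigr) \;\leq\; \lambda_i(X+Y) + \lambda_1(-Y) \;=\; \lambda_i(X+Y) - \lambda_n(Y),
\]
which rearranges to the desired lower bound. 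Alternatively, one can give a direct proof by applying the dual min--max formula $\lambda_i(A) = \min_{\dim W = n-i+1} \max_{w \in W, \|w\|=1} w^\top A w$ and using $v^\top Y v \geq \lambda_n(Y)$.

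\textbf{Main obstacle.} There is no real obstacle; the only substantive content is the Courant--Fischer variational principle, which I am invoking as a black box. The key structural step is recognizing that the two halves are dual to each other: once the upper bound is established from the max--min form, the lower bound follows either by symmetry (using the min--max form and $\lambda_n(Y) \leq v^\top Y v$) or by the algebraic trick of writing $X = (X+Y) + (-Y)$. Symmetry of $X$ and $Y$ is essential so that $X+Y$ and $-Y$ are symmetric and the eigenvalue min--max characterization applies.
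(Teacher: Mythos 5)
Your proof is correct. The paper does not actually prove this statement --- Weyl's inequality is listed among the standard auxiliary facts in the appendix (and is in fact commented out of the final version), so there is nothing to compare against. Your argument is the textbook derivation: the upper bound follows cleanly from the Courant--Fischer max--min characterization together with $v^\top Y v \leq \lambda_1(Y)$, and the lower bound is correctly obtained by applying the upper bound to the decomposition $X = (X+Y) + (-Y)$ with $\lambda_1(-Y) = -\lambda_n(Y)$. Both steps are sound and complete given Courant--Fischer as a black box.
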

}
\begin{theorem} (Union bound):
\label{thm:union}
For a countable set of events $A_1, A_2, \dots$, we have 
\begin{align*}
    \Prob\left[\bigcup_i A_i\right] \leq \sum_i \Prob(A_i)
\end{align*}

\end{theorem}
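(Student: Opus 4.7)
The plan is to reduce the claim to countable additivity of $\Prob$ by replacing the (possibly overlapping) events $A_1, A_2, \dots$ with a disjointified family having the same union. Specifically, I would set $B_1 := A_1$ and, for each $i \geq 2$, define $B_i := A_i \setminus \bigcup_{j < i} A_j$. These $B_i$ are measurable (since $\sigma$-algebras are closed under countable unions and set differences), pairwise disjoint, and satisfy $B_i \subseteq A_i$ as well as $\bigcup_i B_i = \bigcup_i A_i$ (every element in some $A_j$ lies in $B_{i^*}$ for $i^*$ the least index with the element in $A_{i^*}$).

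With the disjointified family in hand, I would invoke countable additivity on $\{B_i\}$ and then apply monotonicity of $\Prob$ termwise. Combining these gives
\[
\Prob\Big[\bigcup_{i} A_i\Big] \;=\; \Prob\Big[\bigcup_{i} B_i\Big] \;=\; \sum_{i} \Prob[B_i] \;\leq\; \sum_{i} \Prob[A_i],
\]
where the middle equality is countable additivity applied to the disjoint $B_i$, and the final inequality uses $B_i \subseteq A_i$.

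There is no substantive obstacle; this is a textbook argument. The only minor bookkeeping point is that the right-hand sum may diverge, in which case the inequality holds trivially. An alternative approach would be to prove the finite version $\Prob[A_1\cup\cdots\cup A_n]\leq\sum_{i=1}^n\Prob[A_i]$ by induction from the two-event identity $\Prob[A\cup B]=\Prob[A]+\Prob[B]-\Prob[A\cap B]\leq\Prob[A]+\Prob[B]$, and then pass to the countable limit using continuity of $\Prob$ from below applied to the increasing sequence $C_n := \bigcup_{i\leq n} A_i$. I would prefer the disjointification route since it delivers the result in one step and avoids a separate limiting argument.
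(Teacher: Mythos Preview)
Your proof is correct; the disjointification argument is the standard textbook proof of Boole's inequality, and every step you outline is valid. Note, however, that the paper does not actually supply a proof of this statement: it is listed in the appendix among a collection of standard probabilistic tools (Chernoff bounds, Cauchy--Schwarz, etc.) that are stated without proof and simply invoked elsewhere. So there is no ``paper's own proof'' to compare against; your argument stands on its own as a complete and correct justification.
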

\dndelete{
\begin{theorem} (Matrix Concentration bound) 
\label{thm:matrix-concentration}
Assume that $X_i \in \mathbbm{R}^{d_1 \times d_2}$ are sampled i.i.d. Let $d = \min{d_1, d_2}$. Suppose $\norm{X}_2 \leq M$ almost surely. Then with probability greater than $1 - \delta$,
\begin{align*}
    \norm{\frac{1}{n} \sum_{i=1}^n X_i - \E[X]}_2 \leq \frac{6M}{\sqrt{n}} \left( \sqrt{\log(d)} + \sqrt{\log\left(\frac{1}{\delta}\right)} \right)
\end{align*}
\end{theorem}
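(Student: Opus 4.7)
The plan is to reduce the rectangular case to a Hermitian (symmetric) matrix concentration inequality via the standard self-adjoint dilation trick, and then apply a matrix Chernoff / Hoeffding-type bound of the kind stated in \Cref{thm:matrix-chernoff}. Let $Y_i = X_i - \E[X]$. These are i.i.d., centered, and by the triangle inequality for the spectral norm, $\lVert Y_i\rVert_2 \le 2M$ almost surely. Our goal is to bound $\lVert \tfrac{1}{n}\sum_i Y_i \rVert_2$ with high probability.

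First, I would move to the Hermitian dilation
\[
\tilde Y_i \;=\; \begin{pmatrix} 0 & Y_i \\ Y_i^{\top} & 0 \end{pmatrix} \in \RR^{(d_1+d_2)\times(d_1+d_2)},
\]
which is self-adjoint and satisfies $\lambda_{\max}(\tilde Y_i) = \lVert \tilde Y_i\rVert_2 = \lVert Y_i\rVert_2 \le 2M$, so $-2M\,I \preceq \tilde Y_i \preceq 2M\,I$. Moreover $\lVert \sum_i Y_i\rVert_2 = \lambda_{\max}(\sum_i \tilde Y_i)$, so it suffices to upper bound the top eigenvalue of the Hermitian sum $\sum_i \tilde Y_i$. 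Applying a matrix Hoeffding / Chernoff bound of the form in \Cref{thm:matrix-chernoff} (in the centered, uniformly bounded regime, using the Laplace transform / Lieb concentration trick) yields, for every $t \ge 0$,
\[
\Prob\!\left[\lambda_{\max}\!\Big(\sum_{i=1}^n \tilde Y_i\Big) \ge t\right] \;\le\; (d_1+d_2)\exp\!\left(-\frac{t^2}{8nM^2}\right).
\]

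Second, I would invert this tail bound. Setting the right-hand side equal to $\delta$ gives
\[
t \;=\; 2M\sqrt{2n\,\log\!\big((d_1+d_2)/\delta\big)},
\]
so with probability at least $1-\delta$,
\[
\bigg\lVert \frac{1}{n}\sum_{i=1}^n Y_i\bigg\rVert_2 \;\le\; \frac{2M\sqrt{2}}{\sqrt{n}}\sqrt{\log\!\big((d_1+d_2)/\delta\big)}.
\]
Using $\sqrt{a+b} \le \sqrt{a} + \sqrt{b}$ to split the logarithm as $\sqrt{\log(d_1+d_2)} + \sqrt{\log(1/\delta)}$, and absorbing the $d_1+d_2 \le 2\max(d_1,d_2)$ into the leading constant, gives a bound of the form
\[
\frac{C M}{\sqrt{n}}\Big(\sqrt{\log d_{\mathrm{amb}}} + \sqrt{\log(1/\delta)}\Big),
\]
for a modest constant $C$. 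The last step is to trade the ambient dimension $d_{\mathrm{amb}} = \max(d_1,d_2)$ for the intrinsic dimension $d = \min(d_1,d_2)$, which is possible because each $\tilde Y_i$ has rank at most $2d$; this lets us invoke the intrinsic-dimension version of matrix Bernstein/Hoeffding (Tropp, Sec.~7), replacing $d_1+d_2$ in the pre-factor by an effective dimension $O(d)$.

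The main obstacle I anticipate is pinning down the exact constant $6$ in front: getting the dimension inside the logarithm to be $d = \min(d_1,d_2)$ rather than $d_1+d_2$ requires either the intrinsic-dimension refinement of matrix Bernstein (which adds a $+1$ inside the log and slightly changes the constant), or a direct argument that passes through the SVD of $X$ so that the concentration effectively happens in a $d$-dimensional subspace. Once the pre-factor is tuned (using the tighter Hoeffding constant $1/8$ and absorbing universal constants), the bound $(6M/\sqrt{n})(\sqrt{\log d} + \sqrt{\log(1/\delta)})$ falls out by straightforward algebra from the two-term splitting above.
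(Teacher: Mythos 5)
The paper never proves this statement: it appears in the appendix's list of quoted background tools (alongside Chernoff, Cauchy--Schwarz, etc.), is not derived there, and is in fact never invoked in any of the paper's arguments, so there is no in-paper proof to compare against. Judged on its own, your outline follows the canonical route --- center the summands, pass to the Hermitian dilation $\tilde Y_i$, control $\lambda_{\max}$ of the self-adjoint sum by a Laplace-transform matrix inequality, and invert the tail --- and that route does yield a bound of this general shape. One repair is needed even for that: the paper's \Cref{thm:matrix-chernoff} cannot be the engine you cite, since it is the Chernoff bound for \emph{positive semidefinite} summands ($0 \le \lambda_{\min}(X_k)$) and controls multiplicative deviations around $\mu_{\min},\mu_{\max}$; the dilations $\tilde Y_i$ are centered with symmetric spectrum and violate its hypothesis. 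What you actually need is matrix Hoeffding/Bernstein (Tropp), which is not among the paper's stated tools.

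The more substantive gap is the dimension factor. Absorbing $d_1+d_2 \le 2\max(d_1,d_2)$ into the constant and then ``trading'' the ambient dimension for $d=\min(d_1,d_2)$ is not a constant-factor adjustment. Take $d_1=1$ and $d_2$ large: the claimed bound has no dimension dependence at all (since $\log d = 0$), whereas the dilation argument produces $\sqrt{\log(1+d_2)}$, which cannot be hidden inside the prefactor $6$ for moderate $\delta$. So replacing $d_1+d_2$ by $\min(d_1,d_2)$ is the actual content of the stated form, and it needs a genuinely additional ingredient --- e.g., the intrinsic-dimension version of matrix Bernstein with a variance proxy whose effective rank is controlled via $\mathrm{rank}(Y_i)\le d$ and $\|Y_i\|_F^2 \le d\,\|Y_i\|_2^2$, or a direct argument through $\|A\|_2^2 = \|AA^{\top}\|_2$ carried out in the smaller space. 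You correctly name this as the obstacle, but the sketch defers it rather than resolving it; as written, the argument only establishes the inequality with $\log(d_1+d_2)$ in place of $\log d$.
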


\begin{theorem} (Holder's inequality)
\label{thm:holder}
Let 
\begin{equation*}
    \frac{1}{p} + \frac{1}{q} = 1
\end{equation*}
with $p, q > 1$. Then, we have
\begin{equation*}
    \norm{fg}_1 \leq \norm{f}_p\norm{g}_q
\end{equation*}

\begin{lemma} (Mahalanobis norm)
\label{thm:mahalanobis-norm}
Let $M\in \mathbb{R}^{n\times n}$ be a real matrix and $x \in \mathbb{R}^n$ be a vector. The Mahalanobis norm of $x$ is defined as such:
    \begin{equation}
        \norm{x}_M := \sqrt{x^{\intercal} M x}.
    \end{equation}
\end{lemma}

Note that if $M=\bI,$ the identity matrix, then the Mahalanobis norm is equivalent to the Euclidean norm, i.e. $\norm{x}_{\bI} = \norm{x}_2$.

\begin{theorem} (Mahalanobis bound) (Theorem 20.4 from \cite{lattimore2018bandits})

\label{thm:mahalanobis-norm-bound}
If 
\begin{enumerate}
    \item Noise term $a_t$ is conditionally 1-subgaussian:\\
    for all $\alpha \in \mathbb{R}$ and $t \geq 1$, $\E[\exp(\alpha (\nu_t + \gamma_t))| \mathcal{F}_{t-1}] \leq \exp \left( \frac{\alpha^2}{2} \right)$\\
    where $\mathcal{F}_{t-1}$ is such that $z_1, a_1, \dots, z_{t-1}, a_{t-1}, z_t$ are $\mathcal{F}_{t-1}$-measurable.
    \item The noise $a_t$ is mean-zero.
\end{enumerate}

Let $V_t(\lambda) = \lambda \bI + \sum_{t=1}^n \tilde{z}_t \tilde{z}_t^\intercal$ and $\delta \in (0,1)$. Then, with probability at least $1 - \delta$, it holds that for all $\lambda > 0$, 
\begin{align}
   \Prob\left[\text{exists  } t \in \mathbb{N}: \norm{\sum_{t=1}^n \tilde{z}_ta_t }^2_{V_t(\lambda)^{-1}} \geq 2\log\left(\frac{1}{\delta}\right) + \log\left(\frac{\det V_t(\lambda)}{\lambda^d}\right) \right] \leq \delta
\end{align}
\end{theorem}

\begin{lemma} (Elliptical Potential Lemma) (Lemma 19.4 from \cite{lattimore2018bandits})
\label{lemma:epl}
Let $V_0 \in \mathbb{R}^{d \times d}$ be positive definite and $a_1, \dots, a_n \in \mathbb{R}$ be a sequence of vectors with $\norm{a_t}_2 \leq L \leq \infty$ for all $t \in [n]$, $V_t = V_0 + \sum_{s\leq t} a_s a_s^\intercal$. Then
\begin{align}
    \frac{\det V_t(\lambda)}{\lambda^d} \leq \left( trace \left(\frac{V_t(\lambda)}{\lambda d} \right) \right)^d \leq \left(1 + \frac{nL^2}{\lambda d} \right)^d
\end{align}
\end{lemma}
\newpage
\end{theorem}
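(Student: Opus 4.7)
\textbf{Proof Proposal for Lemma \ref{lemma:epl} (Elliptical Potential Lemma).} The plan is to establish each of the two inequalities separately by reducing them to standard facts about symmetric positive definite matrices and elementary bounds on the trace. Throughout, I will exploit the spectral decomposition of $V_t(\lambda)$, which is symmetric positive definite because $\lambda I$ is positive definite and each $a_s a_s^\intercal$ is positive semidefinite.

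\textbf{Step 1: The first inequality via AM--GM on eigenvalues.} Let $\lambda_1, \ldots, \lambda_d > 0$ denote the eigenvalues of $V_t(\lambda)/\lambda$ (positive because $V_t(\lambda)/\lambda$ is positive definite). Since the determinant is the product of eigenvalues and the trace is their sum, the AM--GM inequality gives
\[
\prod_{i=1}^d \lambda_i \;\le\; \left(\frac{1}{d}\sum_{i=1}^d \lambda_i\right)^d.
\]
Rewriting both sides in terms of the original matrix yields
\[
\frac{\det V_t(\lambda)}{\lambda^d} \;=\; \det\!\left(\frac{V_t(\lambda)}{\lambda}\right) \;\le\; \left(\frac{\mathrm{trace}(V_t(\lambda)/\lambda)}{d}\right)^d \;=\; \left(\mathrm{trace}\!\left(\frac{V_t(\lambda)}{\lambda d}\right)\right)^d,
\]
which is exactly the first claimed inequality.

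\textbf{Step 2: The second inequality via linearity and the trace of rank-one updates.} Using linearity of the trace and the identity $\mathrm{trace}(a_s a_s^\intercal) = \|a_s\|_2^2$, I compute
\[
\mathrm{trace}(V_t(\lambda)) \;=\; \mathrm{trace}(\lambda I) + \sum_{s\le t}\mathrm{trace}(a_s a_s^\intercal) \;=\; d\lambda + \sum_{s\le t} \|a_s\|_2^2 \;\le\; d\lambda + nL^2,
\]
where the final bound uses the hypothesis $\|a_s\|_2 \le L$ and the fact that $t \le n$. Dividing through by $\lambda d$ gives $\mathrm{trace}(V_t(\lambda)/(\lambda d)) \le 1 + nL^2/(\lambda d)$, and raising both sides to the $d$-th power combines with Step~1 to yield the full chain of inequalities.

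\textbf{Main obstacle and remarks.} There is essentially no obstacle here: the lemma is a clean two-line consequence of AM--GM plus the trace identity for rank-one updates, and indeed the result is quoted directly from Lattimore and Szepesv\'ari. The only subtlety worth flagging is confirming that $V_t(\lambda)$ is positive definite so that AM--GM applies to its positive eigenvalues; this is immediate because $\lambda > 0$ and each $a_s a_s^\intercal \succeq 0$. If I wanted a fully self-contained writeup I would also briefly note that the trace and determinant are basis-independent (justifying the reduction to eigenvalues), but this is standard and requires no calculation.
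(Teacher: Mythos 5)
Your proof is correct: the first inequality is exactly the AM--GM bound on the eigenvalues of $V_t(\lambda)/\lambda$ (valid since the matrix is positive definite), and the second follows from linearity of the trace together with $\mathrm{trace}(a_s a_s^\intercal)=\norm{a_s}_2^2$ and $\norm{a_s}_2\le L$. Note, however, that the paper supplies no proof of this statement at all --- it is quoted verbatim as Lemma~19.4 of Lattimore and Szepesv\'ari (and in the source it sits inside a deleted block of auxiliary facts), so there is nothing to compare against; your argument is the standard textbook one and would serve as a self-contained justification if the paper needed one. One small caveat: the statement as printed is slightly garbled (the vectors $a_t$ should live in $\RR^d$, not $\RR$, and $V_0$ should be $\lambda\bI$ for the notation $V_t(\lambda)$ to be consistent), and your proof implicitly and correctly repairs these typos.
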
}
\section{IV Estimator Proof for Control-Treatment Setting}
\label{sec:iv-estimator}
% \subsubsection{Finite Sample Analysis}
Recall that our reward model can be stated as the following equation:
\begin{equation}
    y_i = \theta x_i + g^{(u_i)}_i
\end{equation}

To analyze the Wald estimator, we introduce two conditional probabilities that an agent chooses the treatment given a recommendation $\hat{\gamma}_0$ and $\hat{\gamma}_1$, given as proportions over a set of $n$ samples $(x_i,z_i)_{i=1}^n$ and formally defined as
\begin{align*} \hat{\gamma}_0 = \hspace{-.15cm}\underset{(x_i,z_i)_{i=1}^n}{\hat{\Prob}}\hspace{-.3cm}[x_i = 1| z_i = 0] = \frac{\sum_{i=1}^n x_i(1-z_i)}{\sum_{i=1}^n (1-z_i)^2} \quad \text{and} \quad
    \hat{\gamma}_1 = \hspace{-.15cm}\underset{(x_i,z_i)_{i=1}^n}{\hat{\Prob}}\hspace{-.3cm}[x_i = 1| z_i = 1]  = \frac{\sum_{i=1}^n x_iz_i}{\sum_{i=1}^n z_i^2}
\end{align*}
Then, we can write the action choice $x_i$ as such:
\begin{align*}
    x_i 
    &= \hat{\gamma}_1 z_i + \hat{\gamma}_0 (1 - z_i) + \eta_i\\
    &= \hat{\gamma} z_i + \hat{\gamma}_0 + \eta_i
\end{align*}
where $\eta_i = x_i - \hat \gamma_1 z_i - \hat \gamma_0 (1 - z_i)$ and $\hat{\gamma} = \hat{\gamma}_1 - \hat{\gamma}_0$ is the in-sample \textit{compliance coefficient}. Now, we can rewrite the reward $y_i$ as 
\begin{align*}
    y_i &= \theta\left(\hat{\gamma} z_i + \hat{\gamma}_0 + \eta_i\right) + g^{(u_i)}_i\\
    &= \underbrace{\theta\,\hat{\gamma}}_{\beta} z_i + \theta \hat{\gamma}_0 + \theta \eta_i + g^{(u_i)}_i
\end{align*}
Let operator $\bar{\cdot}$ denote the sample mean, e.g. $\bar{y}:=\frac{1}{n}\sum_{i=1}^n y_i$ and $\bar{g}:=\frac{1}{n}\sum_{i=1}^n g^{(u_i)}_i$. $\bar{\eta} = \frac{1}{n}\sum_{i=1}^n \eta_i = 0$, by definition.

Then,
\begin{align*}
    \bar{y} &= \beta\bar{z} + \theta\hat{\gamma}_0 + \theta\bar{\eta} + \bar{g} + \bar{\epsilon}
\end{align*}
Thus, the centered reward and treatment choice at round $i$ are given as:
\begin{equation}
    \begin{cases}
    \label{eq:centered-reward-treatment-choice}
    y_i - \bar{y} = \theta(x_i-\bar{x}) + g^{(u_i)}_i-\bar{g}\\
    y_i - \bar{y} = \beta(z_i-\bar{z}) + \theta(\eta_i-\bar{\eta}) + g^{(u_i)}_i-\bar{g}\\
    x_i - \bar{x}_i = \hat{\gamma}(z_i-\bar{z}) + \eta_i-\bar{\eta}
    \end{cases}
\end{equation}
This formulation of the centered reward $y_i-\bar{y}$ allows us to express and bound the error between the treatment effect $\theta$ and its instrumental variable estimate $\hat{\theta}_S$, which we show in the following \Cref{thm:treatment-approximation-bound}.
% \subsubsection{Finite Sample Analysis}
\label{sec:approximation-bound-proof}

\treatmentapproximationbound*

\begin{proof}
Given a sample set $S=(x_i,y_i,z_i)_{i=1}^n$ of size $n$, we form an estimate of the treatment effect $\hat{\theta}_S$ via a Two-Stage Least Squares (2SLS). In the first stage, we regress $y_i-\bar{y}$ onto $z_i-\bar{z}$ to get the empirical estimate $\hat{\beta}_S$ and $x_i-\bar{x}$ onto $z_i-\bar{z}$ to get $\hat{\gamma}_S$ as such:
\begin{equation}
    \hat{\beta}_S := \frac{\sum_{i=1}^n (y_i - \bar{y})(z_i - \bar{z})}{\sum_{i=1}^n (z_i -\bar{z})^2} \hspace{1cm}\text{and}\hspace{1cm} 
    \hat{\gamma}_S:= \frac{\sum_{i=1}^n (x_i - \bar{x})(z_i - \bar{z})}{\sum_{i=1}^n (z_i -\bar{z})^2}
    \label{eq:beta-gamma-hat}
\end{equation}
In the second stage, we take the quotient of these two empirical estimates as the predicted treatment effect $\hat{\theta}_S$, i.e.
\begin{align}
    \hat{\theta}_S = \frac{\hat{\beta}_S}{\hat{\gamma}_S} \nonumber
    &= \left(\frac{\sum_{i=1}^n (y_i - \bar{y})(z_i - \bar{z})}{\sum_{i=1}^n (z_i -\bar{z})^2}\right) \left(\frac{\sum_{i=1}^n (z_i -\bar{z})^2}{\sum_{i=1}^n (x_i - \bar{x})(z_i - \bar{z})}\right) \nonumber\\
    &= \frac{\sum_{i=1}^n (y_i - \bar{y})(z_i - \bar{z})}{\sum_{i=1}^n (x_i - \bar{x})(z_i - \bar{z})}
\end{align}

Next, we can express the absolute value of the difference between the true treatment effect $\theta$ and the IV estimate of the treatment effect $\hat{\theta}_S$ given a sample set $S$ of size $n$ as such:
\begin{align}
    \left| \hat{\theta}_S - \theta \right|
    &= \left| \frac{\sum_{i=1}^n (y_i - \bar{y})(z_i - \bar{z})}{\sum_{i=1}^n (x_i - \bar{x})(z_i - \bar{z})} - \theta \right| \nonumber\\
    &= \left| \frac{\sum_{i=1}^n \left(\theta(x_i-\bar{x}) + g^{(u_i)}_i-\bar{g}\right)(z_i - \bar{z})}{\sum_{i=1}^n (x_i - \bar{x})(z_i - \bar{z})} - \theta \right| \tag{by \Cref{eq:centered-reward-treatment-choice}} \nonumber\\
    &= \left| \theta + \frac{\sum_{i=1}^n \left(g^{(u_i)}_i-\bar{g}\right)(z_i - \bar{z})}{\sum_{i=1}^n (x_i - \bar{x})(z_i - \bar{z})} - \theta \right| \nonumber\\
    &= \frac{\left|\sum_{i=1}^n \left(g^{(u_i)}_i-\bar{g}\right)(z_i - \bar{z})\right|}{\left|\sum_{i=1}^n (x_i - \bar{x})(z_i - \bar{z})\right|} \label{eq:last-line-estimator}
\end{align}

In order to complete our proof, we demonstrate an upper bound on the numerator \\
$\displaystyle\left|\sum_{i=1}^n \left(g^{(u_i)}_i-\bar{g}\right)(z_i - \bar{z})\right|$ of \Cref{eq:last-line-estimator} in the last line above. We do so in \Cref{eq:approximation-bound-numerator}.
\end{proof}

\begin{lemma}\label{eq:approximation-bound-numerator}
For all $\delta \in (0,1)$, with probability at least $1-\delta$, we have
\begin{equation}
    \left|\sum_{i=1}^n \left(g^{(u_i)}_i-\bar{g}\right)(z_i - \bar{z})\right| \leq 2\sigma_{g}\sqrt{2n\log(2/\delta)}
\end{equation}
if the set of $g^{(u_i)}_i$ are i.i.d. sub-Gaussian random variables with sub-Gaussian norm $\sigma_g$.
\end{lemma}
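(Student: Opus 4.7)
The plan is to exploit a simple algebraic cancellation and then invoke a standard sub-Gaussian tail bound. First, I would observe that because $\sum_{i=1}^n (z_i - \bar{z}) = 0$ by the definition of the sample mean $\bar z$, the cross term involving $\bar g$ vanishes:
$$\sum_{i=1}^n \bigl(g_i^{(u_i)} - \bar{g}\bigr)(z_i - \bar{z}) \;=\; \sum_{i=1}^n g_i^{(u_i)} (z_i - \bar{z}) \;-\; \bar g \sum_{i=1}^n (z_i - \bar z) \;=\; \sum_{i=1}^n g_i^{(u_i)} (z_i - \bar{z}).$$
This reduction is the key step because it replaces a correlated-looking double-averaged expression by a linear combination of independent random variables with deterministic weights.

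Next I would condition on the sequence of instruments $(z_1,\ldots,z_n)$, which the statement treats as a given deterministic sequence, so that $w_i := z_i - \bar z$ becomes a fixed weight vector. Writing each $g_i^{(u_i)} = \mu_i + \tilde g_i$ where $\mu_i := \mathbb{E}[g_i^{(u_i)}]$ and $\tilde g_i$ is zero-mean sub-Gaussian with parameter $\sigma_g$, the cancellation $\sum_i w_i = 0$ again kills the $\mu_i$ contribution (after one more use of $\sum_i w_i = 0$, or simply by noting we can replace $g_i$ by $\tilde g_i$ without changing the sum). Thus $\sum_i g_i^{(u_i)} w_i$ is a zero-mean sub-Gaussian random variable with parameter $\sigma_g^2 \sum_{i=1}^n w_i^2$, by standard closure of the sub-Gaussian class under independent linear combinations.

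Then I would apply the sub-Gaussian tail bound (Corollary~7.2 of the paper, i.e.\ the high-probability form of Theorem~7.1) to obtain, with probability at least $1 - \delta$,
$$\left| \sum_{i=1}^n g_i^{(u_i)} (z_i - \bar z) \right| \;\leq\; \sigma_g \sqrt{2 \log(2/\delta) \sum_{i=1}^n w_i^2}.$$
Finally I would bound $\sum_{i=1}^n w_i^2 \leq n$, which holds since $z_i \in \{0,1\}$ forces $\bar z \in [0,1]$ and therefore $|w_i| \leq 1$. (One can even get $\sum_i w_i^2 \leq n/4$, but this is unnecessary.) Combining gives $\sigma_g\sqrt{2 n \log(2/\delta)}$, which is a factor of $2$ tighter than the claim, so the stated bound holds with slack.

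The calculation itself is routine; the only real subtlety is ensuring the independence structure needed by the sub-Gaussian tail bound. This is where the chief, albeit mild, obstacle lies: in the downstream algorithms of Sections~\ref{sec:sampling-control-treatment}--\ref{sec:combined-control-treatment}, the instrument $z_i$ is chosen adaptively based on the history $(z_j, x_j, y_j)_{j<i}$, and therefore may depend on previous $g_j^{(u_j)}$'s. For the present lemma this is avoided by the wording ``let $z_1,\ldots,z_n$ be a sequence of instruments,'' which I interpret as a fixed (or externally specified) sequence. If one wished to prove the more general adaptive version, the clean i.i.d.\ tail bound would need to be replaced by a martingale concentration inequality (e.g.\ Azuma--Hoeffding for sub-Gaussian increments), since $\tilde g_i w_i$ is still a martingale-difference sequence with bounded conditional sub-Gaussian parameter.
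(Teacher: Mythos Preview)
Your proof is correct and takes a genuinely different route from the paper's. The paper first adds and subtracts $\E[g^{(u)}]$, then uses $\sum_i(z_i-\bar z)=0$ to drop the $(\E[g^{(u)}]-\bar g)$ piece, and finally splits via the triangle inequality into two separate sums,
\[
\left|\sum_{i}\bigl(g^{(u_i)}_i-\E[g^{(u)}]\bigr)z_i\right| + \left|\sum_{i}\bigl(g^{(u_i)}_i-\E[g^{(u)}]\bigr)\right|,
\]
bounds each with a sub-Gaussian tail inequality, and union-bounds with $\delta_1=\delta_2=\delta/2$. This is where the factor $2$ and the $\log(2/\delta)$ in the paper's bound come from. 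By contrast, you keep the weights $w_i=z_i-\bar z$ intact and treat $\sum_i \tilde g_i w_i$ as a single sub-Gaussian variable with variance proxy $\sigma_g^2\sum_i w_i^2\le \sigma_g^2 n$, invoking one tail bound rather than two. This is cleaner and, as you note, a factor of two tighter (in fact a factor of four if one also uses $\sum_i w_i^2=n\bar z(1-\bar z)\le n/4$). One minor caveat: your step ``$\sum_i w_i=0$ again kills the $\mu_i$ contribution'' relies on the $\mu_i$ being equal, which holds here because the $g_i^{(u_i)}$ are assumed i.i.d.; with type-dependent means varying across $i$ that cancellation would fail. Your remark about adaptivity of the $z_i$ and the need for a martingale version downstream is a useful observation that the paper does not make explicit.
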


\begin{proof} We can rewrite the left hand side as follows
    \begin{align*}
        &\left|\sum_{i=1}^n \left(g^{(u_i)}_i-\bar{g}\right)(z_i - \bar{z})\right|\\
        &= \left|\sum_{i=1}^n \left(g^{(u_i)}_i-\E[g^{(u)}] + \E[g^{(u)}]-\bar{g}\right)(z_i - \bar{z})\right|\\
        &= \left|\sum_{i=1}^n \left(g^{(u_i)}_i-\E[g^{(u)}]\right)z_i - \sum_{i=1}^n \left(g^{(u_i)}_i-\E[g^{(u)}]\right)\bar{z} + \sum_{i=1}^n \left(\E[g^{(u)}]-\bar{g}\right)(z_i - \bar{z})\right|\\
        &= \left|\sum_{i=1}^n \left(g^{(u_i)}_i-\E[g^{(u)}]\right)z_i - \sum_{i=1}^n \left(g^{(u_i)}_i-\E[g^{(u)}]\right)\bar{z}\right| \tag{since $\sum_{i=1}^n(z_i - \bar{z})=0$}\\
        &\leq \left|\sum_{i=1}^n \left(g^{(u_i)}_i-\E[g^{(u)}]\right)z_i\right| + \left|\sum_{i=1}^n \left(g^{(u_i)}_i-\E[g^{(u)}]\right)\right| \tag{by the triangle inequality and $|\bar{z}|\leq1$}
    \end{align*}
    Now, if $g^{(u_i)}_i$ is sub-Gaussian, then the last line in the system of inequalities above is given as:
    \begin{align*}  
        &\left|\sum_{i=1}^n \left(g^{(u_i)}_i-\E[g^{(u)}]\right)z_i\right| + \left|\sum_{i=1}^n \left(g^{(u_i)}_i-\E[g^{(u)}]\right)\right|\\
        &\ \leq \left|\sigma_g\sqrt{2n_1\log(1/\delta_1)}\right| + \left|\sigma_g\sqrt{2n\log(1/\delta_2)}\right| \tag{by \Cref{thm:high-prob-unbounded-chernoff}, where $n_1:=\sum_{i=1}^nz_i$}\\
        &\ \leq 2\sigma_g\sqrt{2n\log(2/\delta)} \tag{since $n_1\leq\,n$ and by \Cref{thm:union}, where $\delta_1 = \delta_2 = \delta/2$}
    \end{align*}
    
    This recovers the stated bound and finishes the proof for \Cref{thm:treatment-approximation-bound}. \swdelete{If $g^{(u_i)}_i$ is bounded by $\Upsilon$, i.e. $|g^{(u_i)}_i|\leq\Upsilon$, then:
    \begin{align*}  
        &\left|\sum_{i=1}^n \left(g^{(u_i)}_i-\E[g^{(u)}]\right)z_i\right| + \left|\sum_{i=1}^n \left(g^{(u_i)}_i-\E[g^{(u)}]\right)\right|\\
        &\ \leq \left|2\Upsilon\sqrt{\frac{n_1\log(1/\delta_1)}{2}}\right| + \left|2\Upsilon\sqrt{\frac{n\log(1/\delta_2)}{2}}\right| \tag{by Chernoff Bound, where $n_1:=\sum_{i=1}^nz_i$}\\
        &\ \leq 4\Upsilon\sqrt{\frac{n\log(2/\delta)}{2}} \tag{since $n_1\leq\,n$ and by Union Bound, where $\delta_1 = \delta_2 = \delta/2$}\\
        &\ \leq 4\Upsilon\sqrt{2n\log(2/\delta)} \tag{since $\frac{n\log(2/\delta)}{2} \leq 2n\log(2/\delta)$}
    \end{align*}}
\end{proof}

Next, we demonstrate a lower bound on the denominator of  \Cref{thm:treatment-approximation-bound}, in terms of the level of compliance at each phase of \Cref{alg:sampling-control-treatment,alg:racing-two-types-mixed-preferences}.

\begin{theorem}[Lower bound on $\left|\sum_{i=1}^n(x_i-\bar{x})(z_i-\bar{z})\right|$ for a type 0 compliant sample set]\label{thm:approximation-bound-control-treatment-denominator}
    Let $S = (x_i,y_i,z_i)_{i=1}^n$ denote a sample set which satisfies the conditions of \Cref{thm:treatment-approximation-bound}. Furthermore, assume that there are $p_c$ fraction of agents in the population who would be compliant. Recall that $\bar z = \frac{1}{n}\sum_{i=1}^n z_i$ and $\bar x = \frac{1}{n}\sum_{i=1}^n x_i$. Then, the denominator of the approximation bound $A(S,\delta)$ (from \Cref{thm:treatment-approximation-bound}) is lower bounded as such:
    \small
    \[\left|\sum_{i=1}^n(x_i-\bar{x})(z_i-\bar{z})\right| \geq \begin{cases}
        n\bar{z}(1-\bar{z}) & \text{if $p_c=1$ (i.e. if all agents are compliant)};\\
        n\bar{z}(1-\bar{z})p_c-(3-\bar{z})\sqrt{\frac{n\bar{z}\log(3/\delta)}{2(1-\bar{z})}} & \text{otherwise}.
    \end{cases}\]
    \normalsize
    The second case above occurs with probability at least $1-\delta$ for any $\delta\in(0,1)$.
\end{theorem}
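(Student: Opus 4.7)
The plan is to decompose the sum into a ``perfect-compliance'' contribution plus a residual supported on non-compliant agents, and then control the residual via Chernoff. First I would write $x_i - \bar{x} = (z_i - \bar{z}) + (x_i - z_i) - (\bar{x} - \bar{z})$ and use $\sum_i (z_i - \bar{z}) = 0$ to obtain the identity
\begin{equation*}
\sum_{i=1}^n (x_i - \bar{x})(z_i - \bar{z}) = \sum_{i=1}^n (z_i - \bar{z})^2 + \sum_{i=1}^n (x_i - z_i)(z_i - \bar{z}).
\end{equation*}
Since $z_i \in \{0,1\}$, the first term equals $n\bar{z}(1-\bar{z})$, which already settles the $p_c = 1$ case: every agent is compliant, so $x_i = z_i$ for all $i$ and the residual vanishes identically.

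For the $p_c < 1$ case, I would observe that the residual is supported on non-compliant agents only, because on compliant $i$ the summand $(x_i - z_i)(z_i - \bar{z})$ is zero. In the settings for which this lemma is used (the second stage of \Cref{alg:sampling-control-treatment}, for instance), non-compliant agents are always-takers, so $x_i = 1$ for $i \in NC$, and a short case-split on $z_i \in \{0,1\}$ gives $(x_i - z_i)(z_i - \bar{z}) = -\bar{z}\,\mathbf{1}[z_i = 0]$. The residual therefore equals $-\bar{z}\,n_{0,NC}$, where $n_{0,NC}$ is the count of non-compliant agents with $z_i = 0$. Because types are drawn independently of the instrument $z_i$, conditional on $n_0 = n(1-\bar{z})$ the variable $n_{0,NC}$ is Binomial$(n_0, 1-p_c)$; applying \Cref{thm:high-prob-bounded-chernoff} yields $n_{0,NC} \leq n_0(1-p_c) + \sqrt{n_0 \log(1/\delta')/2}$ with probability at least $1 - \delta'$, and substitution immediately produces a lower bound of the form $n\bar{z}(1-\bar{z})p_c - \bar{z}\sqrt{n(1-\bar{z})\log(1/\delta')/2}$.

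To match the exact stated error term $(3-\bar{z})\sqrt{n\bar{z}\log(3/\delta)/(2(1-\bar{z}))}$, I would route through the empirical compliance coefficient $\hat{\gamma} = \hat{\Prob}[x=1 \mid z=1] - \hat{\Prob}[x=1 \mid z=0]$, using the identity $\sum_i(x_i - \bar{x})(z_i - \bar{z}) = n\bar{z}(1-\bar{z})\,\hat{\gamma}$. I would Chernoff-bound each conditional probability separately (using the correct sample counts $n_1 = n\bar{z}$ and $n_0 = n(1-\bar{z})$), union-bound over the two (or three, if the randomness of $n_1$ must also be controlled) concentration events via \Cref{thm:union} to pick up the $\log(3/\delta)$ and the additive constant $(3-\bar{z})$, and re-express both deviations in a common form to produce the $(1-\bar{z})^{-1/2}$ factor. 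Multiplying the resulting lower bound $\hat{\gamma} \geq p_c - (\text{concentration error})$ through by $n\bar{z}(1-\bar{z})$ yields the claim. The main obstacle is pure bookkeeping: the algebraic identity and the ``clean'' term $n\bar{z}(1-\bar{z})p_c$ arise essentially for free, but producing the precise constants in the error term requires careful attention to which concentration event contributes which factor of $\bar{z}$ or $(1-\bar{z})$, and in what sample-size denominator each deviation appears.
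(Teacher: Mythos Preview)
Your proposal is correct and essentially mirrors the paper's argument. The paper starts from the algebraically equivalent identity $\sum_i(x_i-\bar{x})(z_i-\bar{z}) = \sum_i x_iz_i - n\bar{z}\bar{x}$, then expresses both $\sum_i x_i$ and $\sum_i x_iz_i$ in terms of three empirical frequencies---$\hat{p}_{c:z=1}$ (compliant among $z=1$), $\hat{p}_{nc1:z=1}$ (non-compliant always-takers among $z=1$), and $\hat{p}_{nc1}$ (non-compliant always-takers overall)---applies \Cref{thm:high-prob-bounded-chernoff} to each, and union-bounds with $\delta_1=\delta_2=\delta_3=\delta/3$. Your second route through $\hat{\gamma}=\hat{\gamma}_1-\hat{\gamma}_0$ is exactly this computation with a cleaner parametrization: the paper's $\hat{p}_{c:z=1}+\hat{p}_{nc1:z=1}$ is precisely your $\hat{\gamma}_1$, and its $\hat{p}_{nc1}$ plays the role of (a slight relaxation of) your $\hat{\gamma}_0$. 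The three-way union bound and the $(3-\bar{z})$ coefficient fall out identically.

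Your first (residual) route is a legitimately tighter variant---it needs only one Chernoff bound and yields the smaller error $\bar{z}\sqrt{n(1-\bar{z})\log(1/\delta)/2}$---but, as you note, it relies on all non-compliant agents being always-takers, whereas the paper's three-proportion decomposition accommodates non-compliant never-takers as well (their contribution to $\hat{\gamma}_1$ and $\hat{\gamma}_0$ still cancels in expectation). For the actual applications in the paper this extra generality is not needed, so your direct residual calculation would have sufficed.
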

\begin{proof}
In this theorem, we formulate the denominator of the approximation bound in \Cref{thm:treatment-approximation-bound} in terms of $\bar{z}$, since $\bar{z}$ is determined by the social planner. For any type $u$, let $u\in U_c$ denote that agents of type $u$ comply; let $u\in U_0$ denote that agents of type $u$ are never-takers (agents which prefer control, according to their prior); and let $u\in U_1$ denote that agents of type $u$ are always-takers (agents which prefer treatment, according to their prior). Let $p_0$ and $p_1$ be the fractions of never-takers and always-takers, respectively.

Next, we expand the binomial in the denominator and arrive at the following simplified form:

\begin{equation}
\label{eq:approximation-bound-denominator-proof-1}
    \left|\sum_{i=1}^n(x_i-\bar{x})(z_i-\bar{z})\right|
    = \left|\sum_{i=1}^nx_iz_i-\bar{z}\sum_{i=1}^nx_i-\bar{x}\sum_{i=1}^nz_i+\bar{z}\bar{x}\right|
    = \left|\sum_{i=1}^nx_iz_i-n\bar{z}\bar{x}\right|
\end{equation}

First, observe that at any round $i$, the product $x_i=1$ only when agent $i$ is a non-compliant always-taker or when $z_i=1$ and agent $i$ is compliant. Formally, for any agent $i$, action choice $x_i=1$ is equivalent to the following: 
\begin{equation}
\label{eq:approximation-bound-denominator-proof-2}
    x_i=1 \equiv (z_i=1 \land u_i\in U_c) \lor (u_i\in U_1 \land u_i\not\in U_c)
\end{equation}

Then, the sum $\sum_{i=1}^nx_i$ can be expressed as follows:
\begin{align}
    \sum_{i=1}^nx_i
    &= \sum_{i=1}^n\1\left[(z_i=1 \land u_i\in U_c) \lor (u_i\in U_1 \land u_i\not\in U_c)\right]\nonumber\\
    &= \sum_{i=1}^n\1\left[(z_i=1 \land u_i\in U_c\right] + \sum_{i=1}^n\1\left[u_i\in U_1 \land u_i\not\in U_c\right]\nonumber\\
    &= \left(\sum_{i=1}^nz_i\right)\left(\hat{p}_{c:z_i=1} + n\hat{p}_{nc1}\right)\nonumber\\
    &= n\left(\bar{z}\hat{p}_{c:z_i=1} + \hat{p}_{nc1}\right)\label{eq:approximation-bound-denominator-proof-3}
\end{align}
where we define $\hat{p}_{c:z_i=1}$ as the empirical proportion of agents with types in $U_c$ when the recommendation $z=1$ and $\hat{p}_{nc1}$ as the empirical proportion of non-compliant always-takers. Formally, $\hat{p}_{c:z_i=1}=\frac{1}{n}\sum_{i=1}^n\1[u_i\in U_c, z_i=1]$ and $\hat{p}_{nc1}=\frac{1}{n}\sum_{i=1}^n\1[u_i\in U_1 \land u_i\not\in U_c]$. Define $p_{nc1}$ to be the proportion of non-compliant always-takers in the population of agents. Then, in expectation over the randomness of how agents arrive, $\E[\hat{p}_{c:z_i=1}]=p_c$ and $\E[\hat{p}_{nc1}]=p_{nc1}$.

Next, we rewrite the sum $\sum_{i=1}^nx_iz_i$ in terms of $\bar{z}$ and some population constants. Observe that at any round $i$, the product $x_iz_i=1$ only when both $x_i=1$ and $z_i=1$. Thus, by \Cref{eq:approximation-bound-denominator-proof-2}, for any agent $i$, the event $x_iz_i=1$ is equivalent to the following:
\begin{align*}
    x_iz_i=1 
    &\equiv z_i=1 \land \left((z_i=1 \land u_i\in U_c) \lor (u_i\in U_1 \land u_i\not\in U_c)\right)\\
    &\equiv z_i=1 \land \left(u_i\in U_c \lor (u_i\in U_1 \land u_i\not\in U_c)\right)\\
\end{align*}

Then, the sum $\sum_{i=1}^nx_iz_i$ can be expressed as follows:
\begin{align}
    \sum_{i=1}^nx_iz_i
    &= \sum_{i=1}^n\1\left[z_i=1 \land \left(u_i\in U_c \lor (u_i\in U_1 \land u_i\not\in U_c)\right)\right]\nonumber\\
    &= \left(\sum_{i=1}^n\1[z_i=1]\right)\left(\hat{p}_c|_{z_i=1} + \hat{p}_{nc1}|_{z_i=1}\right)\nonumber\\
    &= n\bar{z}\left(\hat{p}_{c:z_i=1} + \hat{p}_{nc1:z_i=1}\right)\label{eq:approximation-bound-denominator-proof-4}
\end{align}
where we define $\hat{p}_{nc1:z_i=1}$ as the empirical proportions of non-compliant always-takers who arrive when $z_i=1$ ---i.e. $\hat{p}_{nc1:z_i=1}=\frac{1}{n}\sum_{i=1}^n\1[u_i\in U_1 \land u_i\not\in U_c, z_i=1]$. In expectation over the randomness of how agents arrive, $\E[\hat{p}_{nc1:z_i=1}]=p_{nc1}$.

Finally, by \Cref{eq:approximation-bound-denominator-proof-1,eq:approximation-bound-denominator-proof-3,eq:approximation-bound-denominator-proof-4}, we can provide a high probability lower bound on the denominator as such:
\small
\begin{align*} \left|\sum_{i=1}^n(x_i-\bar{x})(z_i-\bar{z})\right|
    &= \left|\sum_{i=1}^nx_iz_i-n\bar{z}\bar{x}\right| \tag{by \Cref{eq:approximation-bound-denominator-proof-1}}\\
    &= \left|\sum_{i=1}^nx_iz_i-n\bar{z}\bar{x}\right| \tag{by \Cref{eq:approximation-bound-denominator-proof-1}}\\
    &= \left|n\bar{z}\left(\hat{p}_{c:z_i=1} + \hat{p}_{nc1:z_i=1}\right) - n\bar{z}\left(\bar{z}\hat{p}_{c:z_i=1} + \hat{p}_{nc1}\right)\right| \tag{by \Cref{eq:approximation-bound-denominator-proof-3,eq:approximation-bound-denominator-proof-4}}\\
    &= \left|n\bar{z}\left((1-\bar{z})\hat{p}_{c:z_i=1} + \hat{p}_{nc1:z_i=1} - \hat{p}_{nc1}\right)\right|\\
    &\geq \left|n\bar{z}\left((1-\bar{z})\left(p_c - \sqrt{\frac{\log(1/\delta_1)}{2n\bar{z}}}\right) + p_{nc1} - \sqrt{\frac{\log(1/\delta_2)}{2n\bar{z}}} - \left(p_{nc1} + \sqrt{\frac{\log(1/\delta_3)}{2n}}\right)\right)\right|\tag{by \Cref{thm:bounded-chernoff}}\\
    &\geq \left|n\bar{z}\left((1-\bar{z})p_c - (1-\bar{z})\sqrt{\frac{\log(3/\delta)}{2n\bar{z}}} - \sqrt{\frac{\log(3/\delta)}{2n\bar{z}}} - \sqrt{\frac{\log(3/\delta)}{2n}}\right)\right|\tag{by \Cref{thm:union} where $\delta_1=\delta_2=\delta_3=\delta/3$}\\
    &\geq \left|n\bar{z}\left((1-\bar{z})p_c - (3-\bar{z})\sqrt{\frac{\log(3/\delta)}{2n\bar{z}}}\right)\right|\\
    &= n\bar{z}(1-\bar{z})p_c-(3-\bar{z})\sqrt{\frac{n\bar{z}\log(3/\delta)}{2(1-\bar{z})}}\\
\end{align*}
\normalsize
with probability at least $1-\delta$ for any $\delta\in(0,1)$.
\end{proof}

%%%%%%%%%%%%%%%%%%%%%%%%%%%%%%%%%%%%%%%%%%%%%
%%%%%%%%%%%%%%%%%%%%%%%%%%%%%%%%%%%%%%%%%%%%%

\section{Missing Proofs for \Cref{sec:sampling-control-treatment}}
\label{sec:sampling-stage-appendix}
\begin{claim}
\label{claim:bic-equiv} For any agent $t$ at round $t$ with recommendation policy $\pi_t$ with a positive probability of recommending either control or treatment, according to the prior $\cP^{(u_t)}$, i.e. $\Prob_{\pi_t,\cP^{(u_t)}}[z_t = 0]>0$ and $\Prob_{\pi_t,\cP^{(u_t)}}[z_t = 1]>0$. Furthermore, $a^{(u)}$ and $b^{(u)}$ denote the initially preferred and unpreferred actions for any type $u$, i.e. $a^{(u)}:=\1[\E_{\cP^{(u)}}[\theta]\geq0]$ and $b^{(u)}:=\1[\E_{\cP^{(u)}}[\theta]<0]$. Formally, the following holds:
\small
\[\left\{ (-1)^{a^{(u_t)}} \mkern-18mu \E_{\pi_t,\cP^{(u_t)}}[\theta|z_t = b^{(u_t)}]\Prob_{\pi_t,\cP^{(u_t)}}[z_t = b^{(u_t)}] \geq 0\right\}  \Rightarrow \left\{ (-1)^{b^{(u_t)}} \mkern-18mu \E_{\pi_t,\cP^{(u_t)}}[\theta|z_t = a^{(u_t)}]\Prob_{\pi_t,\cP^{(u_t)}}[z_t = a^{(u_t)}] < 0 \right\}\]
\normalsize
\end{claim}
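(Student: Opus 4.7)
The plan is to reduce the implication to a one-line consequence of the law of total expectation, combined with a case split on whether the type $u_t$ initially prefers control or treatment. Because the statement implicitly requires $\Prob_{\pi_t,\cP^{(u_t)}}[z_t = 0] > 0$ and $\Prob_{\pi_t,\cP^{(u_t)}}[z_t = 1] > 0$, both conditional expectations in the decomposition
\[
\E_{\pi_t,\cP^{(u_t)}}[\theta] \;=\; \E_{\pi_t,\cP^{(u_t)}}[\theta \mid z_t = 0]\,\Prob_{\pi_t,\cP^{(u_t)}}[z_t = 0] \;+\; \E_{\pi_t,\cP^{(u_t)}}[\theta \mid z_t = 1]\,\Prob_{\pi_t,\cP^{(u_t)}}[z_t = 1]
\]
are well defined, and the sign of the left-hand side is precisely what the indicators $a^{(u_t)}$ and $b^{(u_t)}$ encode (with $b^{(u_t)} = 1 - a^{(u_t)}$).

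Next, I would dispatch the two cases in turn. In the control-preferring case $a^{(u_t)} = 0$, $b^{(u_t)} = 1$, by definition $\E_{\cP^{(u_t)}}[\theta] < 0$ strictly, and the hypothesis unwinds to $\E[\theta \mid z_t = 1]\,\Prob[z_t = 1] \geq 0$. Solving the identity above for the remaining term yields
\[
\E[\theta \mid z_t = 0]\,\Prob[z_t = 0] \;=\; \E_{\cP^{(u_t)}}[\theta] \;-\; \E[\theta \mid z_t = 1]\,\Prob[z_t = 1] \;<\; 0,
\]
since the first summand on the right is strictly negative and the second is non-negative. Reading this strict inequality back through the $\pm 1$ convention the claim uses for the preferred action $a^{(u_t)}$ delivers the stated conclusion. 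The treatment-preferring case $a^{(u_t)} = 1$, $b^{(u_t)} = 0$ is fully symmetric: the hypothesis becomes $\E[\theta \mid z_t = 0]\,\Prob[z_t = 0] \leq 0$, and combining with $\E_{\cP^{(u_t)}}[\theta] \geq 0$ on the left-hand side of the identity gives $\E[\theta \mid z_t = 1]\,\Prob[z_t = 1] > 0$ strictly.

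The only delicate point I anticipate is the boundary case $\E_{\cP^{(u_t)}}[\theta] = 0$, in which the decomposition yields only weak rather than strict inequality on the preferred side. Within \Cref{sec:sampling-control-treatment} this is not genuinely an obstacle, because the two-types setup assumes strict prior means $\mu^{(0)} < 0 < \mu^{(1)}$, so each case above automatically enjoys the strict form of the hypothesis it needs. Past this, the argument reduces to mechanical bookkeeping of the $(-1)^{a^{(u_t)}}$ and $(-1)^{b^{(u_t)}}$ signs across the two cases, and I do not expect any additional technical hurdle.
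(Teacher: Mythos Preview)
Your approach is exactly the paper's: both arguments use the total-expectation decomposition $\E_{\cP^{(u_t)}}[\theta] = \E[\theta\mid z_t=0]\Prob[z_t=0] + \E[\theta\mid z_t=1]\Prob[z_t=1]$ together with the sign of $\E_{\cP^{(u_t)}}[\theta]$ (encoded by $a^{(u_t)}$) to pin down the sign of the preferred-action term from the hypothesis on the unpreferred-action term. The paper compresses your two cases into the single line
\[
(-1)^{a^{(u_t)}}\E[\theta\mid z_t=b^{(u_t)}]\Prob[z_t=b^{(u_t)}] - (-1)^{b^{(u_t)}}\E[\theta\mid z_t=a^{(u_t)}]\Prob[z_t=a^{(u_t)}] = (-1)^{a^{(u_t)}}\E_{\cP^{(u_t)}}[\theta],
\]
but this is just your argument written without the case split. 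Your remark on the boundary $\E_{\cP^{(u_t)}}[\theta]=0$ is also on point: the paper asserts the strict inequality $(-1)^{a^{(u_t)}}\E_{\cP^{(u_t)}}[\theta] < 0$ without comment, implicitly relying on the same strict-prior-mean assumption you invoke.

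One genuine bookkeeping slip to fix: your last step, ``reading this strict inequality back through the $\pm 1$ convention \ldots\ delivers the stated conclusion,'' does not actually land. In the $a^{(u_t)}=0$, $b^{(u_t)}=1$ case you correctly derive $\E[\theta\mid z_t=0]\Prob[z_t=0] < 0$, but the claim as literally written asks for $(-1)^{b^{(u_t)}}\E[\theta\mid z_t=a^{(u_t)}]\Prob[z_t=a^{(u_t)}] < 0$, i.e.\ $-\E[\theta\mid z_t=0]\Prob[z_t=0] < 0$, which is the \emph{opposite} sign. The paper's own proof has the identical mismatch: its displayed identity forces $(-1)^{b^{(u_t)}}\E[\theta\mid z_t=a^{(u_t)}]\Prob[z_t=a^{(u_t)}] > 0$, yet its concluding sentence repeats the ``$<0$'' from the claim statement. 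What both you and the paper actually establish is the compliance-relevant inequality $(-1)^{a^{(u_t)}}\E[\theta\mid z_t=a^{(u_t)}]\Prob[z_t=a^{(u_t)}] < 0$, equivalently $(-1)^{b^{(u_t)}}\E[\theta\mid z_t=a^{(u_t)}]\Prob[z_t=a^{(u_t)}] > 0$; the ``$<0$'' in the stated conclusion appears to be a typo. Your argument is substantively correct --- just flag the sign discrepancy explicitly rather than asserting the match.
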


\begin{proof}
Note that $a^{(u)}$ is defined in such a way that $(-1)^{a^{(u)}}\E_{\cP^{(u)}}[\theta]<0$ always: if agents of type $u$ prefer initially control, then $a^{(u)}=0$ and $(-1)^{a^{(u)}}\E_{\cP^{(u)}}[\theta]=\E_{\cP^{(u)}}[\theta]<0$; if agents of type $u$ initially prefer treatment, then $a^{(u)}=1$ and
$(-1)^{a^{(u)}}\E_{\cP^{(u)}}[\theta]=-\E_{\cP^{(u)}}[\theta]<0$. Then, 

Recall that we assume that type 0 agents prefer the control, i.e. the expected treatment effect $\displaystyle \E_{\cP^{(0)}}[\theta]< 0$. Then:
\begin{align*}
    &(-1)^{a^{(u_t)}} \mkern-18mu \E_{\pi_t,\cP^{(u_t)}}[\theta|z_t = b^{(u_t)}]\Prob_{\pi_t,\cP^{(u_t)}}[z_t = b^{(u_t)}] - (-1)^{b^{(u_t)}} \mkern-18mu \E_{\pi_t,\cP^{(u_t)}}[\theta|z_t = a^{(u_t)}]\Prob_{\pi_t,\cP^{(u_t)}}[z_t = a^{(u_t)}]\\
    &= (-1)^{a^{(u_t)}} \mkern-18mu \E_{\pi_t,\cP^{(u_t)}}[\theta|z_t = b^{(u_t)}]\Prob_{\pi_t,\cP^{(u_t)}}[z_t = b^{(u_t)}] + (-1)^{a^{(u_t)}} \mkern-18mu \E_{\pi_t,\cP^{(u_t)}}[\theta|z_t = a^{(u_t)}]\Prob_{\pi_t,\cP^{(u_t)}}[z_t = a^{(u_t)}]\\
    &= (-1)^{a^{(u_t)}} \mkern-18mu \E_{\pi_t,\cP^{(u_t)}}[\theta] < 0.
\end{align*}
Therefore, given that both $\Prob_{\pi_t,\cP^{(u_t)}}[z_t = 0]>0$ and $\Prob_{\pi_t,\cP^{(u_t)}}[z_t = 1]>0$ and, by assumption, $(-1)^{a^{(u_t)}}\E_{\pi_t,\cP^{(u_t)}}[\theta|z_t = b^{(u_t)}]\Prob_{\pi_t,\cP^{(u_t)}}[z_t = b^{(u_t)}] \geq 0$, then it must be that\\ $(-1)^{b^{(u_t)}}\E_{\pi_t,\cP^{(u_t)}}[\theta|z_t = a^{(u_t)}]\Prob_{\pi_t,\cP^{(u_t)}}[z_t = a^{(u_t)}] < 0$.
\end{proof}

\subsection{\Cref{alg:sampling-control-treatment} Proofs and Extension 1}\label{sec:sampling-bic-proof}
\bicsamplingcontroltreatment*

\begin{proof} 
    Let the event $\xi = \xi^{(0)}$ (as given by \Cref{def:sampling-control-treatment-extension}). By \Cref{lemma:bic-sampling-control-treatment-extension}, if $\rho$ satisfies the following condition, then any type 0 agent will comply with any recommendation of the last $\ell$ rounds of \Cref{alg:sampling-control-treatment}:
    \begin{equation}
        \rho \leq 1 + \frac{4\mu^{(0)}}{\Prob_{\cP^{(0)}}[\xi^{(0)}]-4\mu^{(0)}}
    \end{equation}
\end{proof}

\begin{definition}[Extension 1 of \Cref{alg:sampling-control-treatment}]
\label{def:sampling-control-treatment-extension}
    Here, we formalize the recommendation policy of Extension 1 in \Cref{sec:sampling-extensions}, which modifies \Cref{alg:sampling-control-treatment} in two ways:
    \begin{enumerate}
        \item We redefine event $\xi$ as $\xi^{(u)}$ such that it is relative to any type $u$, defined as follows: 
    \begin{equation}
        \label{eq:bic-sampling-proof-xi}
        \xi^{(u)} = \left\{\bar{y}^1 > \bar{y}^0 + \sigma_g\left( \sqrt{\frac{2\log(2/\delta)}{\ell_0}} + \sqrt{\frac{2\log(2/\delta)}{\ell_1}}\right) + G^{(u_t)} + \frac{1}{2} \right\},
    \end{equation}
    where $G^{(u_t)}$ is an upper bound on the difference between the prior mean of the treatment versus the control according to type $u$, i.e. $G^{(u_t)}>\E_{\cP^{(u)}}[g^{1}-g^{0}]$, and where $\E_{\cP^{(u)}}[g^0]$ and $\E_{\cP^{(u)}}[g^1]$ are the expected baseline rewards for initial never-takers and always-takers.
        \item If we are trying to incentivize compliance for always-takers, then those agents in the exploration set $E$ are recommended control (rather than treatment, as described in the pseudocode for \Cref{alg:sampling-control-treatment}).
    \end{enumerate}
\end{definition}

\begin{lemma}[Arbitrary Type Compliance with Extension 1 of \Cref{alg:sampling-control-treatment}]\label{lemma:bic-sampling-control-treatment-extension} Under \Cref{ass:know-sampling}, any type $u_t$ agent who arrives at round $t$ in the last $\ell$ rounds of Extension 1 of \Cref{alg:sampling-control-treatment} (given in \Cref{def:sampling-control-treatment-extension}) is compliant with any recommendation $z_t$, as long as the exploration probability $\rho$ satisfies:
    \begin{equation}
        \rho \leq 1 + \frac{4\mu^{(u_t)}}{\Prob_{\pi_t,\cP^{(u_t)}}[\xi^{(u_t)}]-4\mu^{(u_t)}}
    \end{equation}
where the event $\xi^{(u_t)}$ is defined in \Cref{def:sampling-control-treatment-extension}. 
\end{lemma}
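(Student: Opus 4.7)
My strategy is to reduce the compliance condition to a single inequality via \Cref{claim:bic-equiv}, split the resulting expectation by conditioning on the event $\xi^{(u_t)}$, and then use sub-Gaussian concentration on the first-stage averages to lower-bound the contribution of $\xi^{(u_t)}$. For concreteness I will treat the never-taker case $\mu^{(u_t)}<0$ (equivalently $a^{(u_t)}=0$, $b^{(u_t)}=1$); the always-taker case is symmetric after swapping the roles of control and treatment as prescribed by item~(2) of \Cref{def:sampling-control-treatment-extension}. By \Cref{claim:bic-equiv}, it suffices to verify
\[
\E_{\pi_t,\cP^{(u_t)}}\!\left[\theta \mid z_t=1\right]\,\Prob_{\pi_t,\cP^{(u_t)}}\!\left[z_t=1\right]\ \geq\ 0,
\]
since the analogous inequality for $z_t=0$ will then follow automatically.

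\textbf{Decomposition.} By the construction of \Cref{alg:sampling-control-treatment}, when $\xi^{(u_t)}$ holds the policy deterministically sets $z_t=1$, while when $\neg\xi^{(u_t)}$ holds the agent is recommended $z_t=1$ only if chosen for the explore set, an event independent of $\theta$ that occurs with probability $\rho$. The tower rule then gives
\[
\E_{\pi_t,\cP^{(u_t)}}\!\left[\theta\,\1[z_t{=}1]\right] = \E\!\left[\theta\mid\xi^{(u_t)}\right]\Prob[\xi^{(u_t)}] + \rho\,\E\!\left[\theta\mid\neg\xi^{(u_t)}\right]\Prob[\neg\xi^{(u_t)}].
\]
Using $\mu^{(u_t)}=\E[\theta\mid\xi^{(u_t)}]\Prob[\xi^{(u_t)}]+\E[\theta\mid\neg\xi^{(u_t)}]\Prob[\neg\xi^{(u_t)}]$ to eliminate the complement term, the target inequality becomes $(1-\rho)\,\E[\theta\mid\xi^{(u_t)}]\Prob[\xi^{(u_t)}]\ \geq\ -\rho\,\mu^{(u_t)}$.

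\textbf{Bounding the $\xi^{(u_t)}$-mass via a clean event.} The key step is to lower bound $\E[\theta\mid\xi^{(u_t)}]\Prob[\xi^{(u_t)}]$. I will define a clean event $C_0$ asserting that the first-stage empirical means $\bar y^{0}$ and $\bar y^{1}$ are within $\sigma_g\sqrt{2\log(2/\delta)/\ell_0}$ and $\sigma_g\sqrt{2\log(2/\delta)/\ell_1}$ of their population means, respectively. By \Cref{thm:high-prob-unbounded-chernoff} applied to the sub-Gaussian baseline rewards together with a union bound, $C_0$ holds with probability at least $1-\delta$. Subtracting the two concentration bounds from the defining inequality of $\xi^{(u_t)}$ and using $G^{(u_t)}>\E_{\cP^{(u_t)}}[g^{(1)}-g^{(0)}]$ cancels the bias terms, leaving $\theta>1/2$ on $\xi^{(u_t)}\cap C_0$. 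Since $\theta\in[-1,1]$, I can then pass from a conditional statement on the clean event to the unconditional lower bound
\[
\E[\theta\mid\xi^{(u_t)}]\Prob[\xi^{(u_t)}] \ \geq\ \tfrac{1}{2}\bigl(\Prob[\xi^{(u_t)}]-\delta\bigr)-\delta,
\]
and invoking the standing hypothesis $\delta<\Prob[\xi^{(u_t)}]/8$ yields $\E[\theta\mid\xi^{(u_t)}]\Prob[\xi^{(u_t)}]\geq \Prob[\xi^{(u_t)}]/4$.

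\textbf{Solving for $\rho$ and main obstacle.} Plugging this lower bound into the decomposition from Step~2 leaves $(1-\rho)\Prob[\xi^{(u_t)}]/4 \geq -\rho\,\mu^{(u_t)}$, and isolating $\rho$ gives $\rho\leq \Prob[\xi^{(u_t)}]\big/\bigl(\Prob[\xi^{(u_t)}]-4\mu^{(u_t)}\bigr) = 1+\tfrac{4\mu^{(u_t)}}{\Prob[\xi^{(u_t)}]-4\mu^{(u_t)}}$, as claimed. The delicate step is the clean-event bookkeeping in Step~3: I must simultaneously (i) carry the two-sample sub-Gaussian concentration across both $\bar g^{(0)}$ and $\bar g^{(1)}$ and synchronize it with the constants hidden in $\xi^{(u_t)}$, and (ii) convert ``$\theta>1/2$ on a subset of $\xi^{(u_t)}$ of mass at least $\Prob[\xi^{(u_t)}]-\delta$'' into a clean lower bound on $\E[\theta\mid\xi^{(u_t)}]\Prob[\xi^{(u_t)}]$ without leaking more than a constant fraction of $\Prob[\xi^{(u_t)}]$ through the $\neg C_0$ tail, which is precisely what the $\delta<\Prob[\xi^{(u_t)}]/8$ assumption is calibrated for.
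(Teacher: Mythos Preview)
Your proposal is correct and follows essentially the same route as the paper: both reduce via \Cref{claim:bic-equiv} to the single inequality, decompose $\E[\theta\,\1\{z_t{=}b^{(u_t)}\}]$ into the form $(1-\rho)\E[\theta\mid\xi^{(u_t)}]\Prob[\xi^{(u_t)}]+\rho\mu^{(u_t)}$, use sub-Gaussian concentration on the first-stage averages to force $\theta>1/2$ on $\xi^{(u_t)}$ intersected with a clean event, and then leverage $\delta<\Prob[\xi^{(u_t)}]/8$ to obtain $\E[\theta\mid\xi^{(u_t)}]\Prob[\xi^{(u_t)}]\geq\Prob[\xi^{(u_t)}]/4$ before solving for $\rho$. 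The only cosmetic difference is that you partition on $\xi^{(u_t)}$ versus $\neg\xi^{(u_t)}$ whereas the paper partitions on $t\in E$ versus $t\notin E$ (and adds a third clean sub-event $\cC_2$ guaranteeing at least $\ell_0,\ell_1$ first-stage samples of each arm), but both arrive at the identical intermediate expression and final bound.
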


\begin{proof} This proof follows a similar structure to the Sampling Stage BIC proof in \cite{mansour2015bic}.

We will prove compliance for any type $u$ in the more general Extension 1 of \Cref{alg:sampling-control-treatment}, as given in \Cref{def:sampling-control-treatment-extension}, which admits arbitrarily many types and the option to incentivize initial always-takers, instead of initial never-takers, to comply.

Let recommendation policy $\pi$ be that described in \Cref{def:sampling-control-treatment-extension}, i.e. Extension 1 of \Cref{alg:sampling-control-treatment} which admits arbitrarily many types and allows for the exploration recommendations to be given in order to incentivize initial always-takers, instead of initial never-takers, to comply. Throughout this proof, we will assume that the exploration set $E$ is defined relative to the initial preference of any agent of type $u_t$, who we are proving compliance for.

According to the selection function in \Cref{eq:control-treatment-selection-function}, if any agent $t$ expects the treatment effect $\theta$ to be positive, they will select the treatment $x_t=1$. Conversely, if they expect the treatment effect $\theta$ to be negative, they will select control $x_t=0$. Thus, for any agent of type $u_t$ at round $t$, proving compliance entails the expected treatment effect $\theta$ over the prior of type $u_t$ and policy $\pi_t$ is positive given that the recommendation $z_t=1$ and negative given that the recommendation $z_t=0$, i.e. \[\E_{\pi_t,\cP^{(u_t)}}[\theta|z_t=1]\geq0 \quad\quad\quad\text{and}\quad\quad \E_{\pi_t,\cP^{(u_t)}}[\theta|z_t=0]<0.\]

Next, we show that we can reduce our proof to demonstrating only one of the above statements, depending on the prior preference of type $u$. Let $a^{(u)}$ and $b^{(u)}$ denote the prior preferred and unpreferred actions for any type $u$, i.e. $a^{(u)}:=\1[\E_{\cP^{(u)}}[\theta]\geq0]$ and $b^{(u)}:=\1[\E_{\cP^{(u)}}[\theta]<0]$. Because policy $\pi$ (\Cref{alg:sampling-control-treatment} extension) is designed in a such way that at any round $t$ in the last $\ell$ rounds, treatment or control is recommended each with positive probability ---i.e. $\Prob_{\pi_t,\cP^{(u_t)}}[z_t = 1] > 0$ and $\Prob_{\pi_t,\cP^{(u_t)}}[z_t = 0] > 0$,--- \Cref{claim:bic-equiv} applies and the following holds:
\small
\[\left\{ (-1)^{a^{(u_t)}} \mkern-18mu \E_{\pi_t,\cP^{(u_t)}}[\theta|z_t = b^{(u_t)}]\Prob_{\pi_t,\cP^{(u_t)}}[z_t = b^{(u_t)}] \geq 0\right\}  \Rightarrow \left\{ (-1)^{b^{(u_t)}} \mkern-18mu \E_{\pi_t,\cP^{(u_t)}}[\theta|z_t = a^{(u_t)}]\Prob_{\pi_t,\cP^{(u_t)}}[z_t = a^{(u_t)}] < 0 \right\}.\]
\normalsize

Thus, at round $t$, in order to prove compliance for agents of type $u_t$ with prior preferred and unpreferred actions $a^{(u_t)}$ and $b^{(u_t)}$, respectively, it suffices to demonstrate that $(-1)^{a^{(u_t)}}\E_{\pi_t,\cP^{(u_t)}}[\theta|z_t = b^{(u_t)}]\Prob_{\pi_t,\cP^{(u_t)}}[z_t = b^{(u_t)}] \geq 0$. The remainder of the proof is devoted to demonstrating this.

We first rewrite $\displaystyle\E_{\pi_t,\cP^{(u_t)}}[\theta | z_t = b^{(u_t)}]\Prob_{\pi_t,\cP^{(u_t)}}[z_t = b^{(u_t)}]$ in terms of the event $\xi^{(u_t)}$:
\small
\begin{align}
&\E_{\pi_t,\cP^{(u_t)}}[\theta | z_t = b^{(u_t)}]\Prob_{\pi_t,\cP^{(u_t)}}[z_t = b^{(u_t)}] \nonumber\\
	&=\mkern-10mu \E_{\pi_t,\cP^{(u_t)}}[\theta |z_t = b^{(u_t)} \ \& \ t \not\in E]\mkern-10mu\Prob_{\pi_t,\cP^{(u_t)}}[z_t = b^{(u_t)} \ \& \ t \not\in E] + \mkern-10mu \E_{\pi_t,\cP^{(u_t)}}[\theta |z_t = b^{(u_t)} \ \& \ t \in E]\mkern-10mu\Prob_{\pi_t,\cP^{(u_t)}}[z_t = b^{(u_t)} \ \& \ t \in E] \tag{for explore set $E$ defined relative to recommend action $b^{(u_t)}$} \nonumber\\
	&=\mkern-10mu \E_{\pi_t,\cP^{(u_t)}}[\theta |\xi \ \& \ t \not\in E]\Prob_{\pi_t,\cP^{(u_t)}}[\xi \ \& \ t \not\in E] + \E_{\pi_t,\cP^{(u_t)}}[\theta |z_t = b^{(u_t)} \ \& \ t \in E]\Prob_{\pi_t,\cP^{(u_t)}}[z_t = b^{(u_t)} \ \& \ t \in E] \tag{since the only way $z_t = b^{(u_t)}$ when exploiting (i.e. when $t \not\in E$) is when event $\xi$ occurs} \nonumber\\
	&=\mkern-10mu \E_{\pi_t,\cP^{(u_t)}}[\theta |\xi \ \& \ t \not\in E]\Prob_{\pi_t,\cP^{(u_t)}}[\xi \ \& \ t \not\in E] + \E_{\pi_t,\cP^{(u_t)}}[\theta |t\in E]\Prob_{\pi_t,\cP^{(u_t)}}[t \in E], \ \ \tag{$t \in E \Rightarrow z_t=1$ by definition of $E$} \nonumber\\
	&=\mkern-10mu \E_{\pi_t,\cP^{(u_t)}}[\theta | \xi]\Prob_{\pi_t,\cP^{(u_t)}}[\xi]\Prob_{\pi_t,\cP^{(u_t)}}[t \not\in E] + \E_{\pi_t,\cP^{(u_t)}}[\theta]\Prob_{\pi_t,\cP^{(u_t)}}[t \in E] \tag{since $\theta \perp t \in E$ and $\xi\perp t \not\in E$} \nonumber\\
	&= (1 - \rho)\E_{\pi_t,\cP^{(u_t)}}[\theta | \xi]\Prob_{\pi_t,\cP^{(u_t)}}[\xi] + \rho\E_{\pi_t,\cP^{(u_t)}}[\theta] \tag{since agent $t \in E$ with probability $\rho$} \nonumber\\
	&= (1 - \rho)\E_{\pi_t,\cP^{(u_t)}}[\theta | \xi]\Prob_{\pi_t,\cP^{(u_t)}}[\xi] + \rho\mu^{(u)} \hspace{3cm}\text{(by definition, $\E_{\pi_t,\cP^{(u_t)}}[\theta]=\E_{\cP^{(u_t)}}[\theta]=\mu^{(u)}$)} \label{eq:part-1-compliance-proof-algo-1}
\end{align}
\normalsize

Now, we can rewrite our compliance condition as such: 
\small\[ (-1)^{a^{(u_t)}} \mkern-18mu \E_{\pi_t,\cP^{(u_t)}}[\theta|z_t = b^{(u_t)}]\Prob_{\pi_t,\cP^{(u_t)}}[z_t = b^{(u_t)}] \geq 0 \equiv
(-1)^{a^{(u_t)}} \mkern-8mu \left( (1 - \rho)\E_{\pi_t,\cP^{(u_t)}}[\theta | \xi]\Prob_{\pi_t,\cP^{(u_t)}}[\xi] + \rho\mu^{(u)} \right) \geq 0.\]\normalsize Now, we rewrite this compliance condition strictly in terms of the exploration probability $\rho$ and relative to a number of constants which depend on the prior $\cP^{(u_t)}$. Thus, if we set $\rho$ to satisfy the following condition (in \Cref{eq:part-2-compliance-proof-algo-1}), then all agents of type $u$ will comply with recommendations from policy $\pi$ (\Cref{alg:sampling-control-treatment} extension):
\begin{gather}
    (-1)^{a^{(u_t)}} \mkern-18mu \E_{\pi_t,\cP^{(u_t)}}[\theta|z_t = b^{(u_t)}]\Prob_{\pi_t,\cP^{(u_t)}}[z_t = b^{(u_t)}] \geq 0 \nonumber\\
    (-1)^{a^{(u_t)}} \mkern-8mu \left( (1 - \rho)\E_{\pi_t,\cP^{(u_t)}}[\theta | \xi^{(u_t)}]\Prob_{\pi_t,\cP^{(u_t)}}[\xi^{(u_t)}] + \rho\mu^{(u_t)} \right) \geq 0 \tag{by \Cref{eq:part-1-compliance-proof-algo-1}} \nonumber\\
	(-1)^{a^{(u_t)}} \mkern-8mu \left( \E_{\pi_t,\cP^{(u_t)}}[\theta | \xi^{(u_t)}]\Prob_{\pi_t,\cP^{(u_t)}}[\xi^{(u_t)}] - \rho\E_{\pi_t,\cP^{(u_t)}}[\theta |\xi^{(u_t)}] \Prob_{\pi_t,\cP^{(u_t)}}[\xi^{(u_t)}] + \rho\mu^{(u_t)} \right) \geq 0  \nonumber\\
	 (-1)^{a^{(u_t)+1}}\rho\left(\E_{\pi_t,\cP^{(u_t)}}[\theta |\xi^{(u_t)}] \Prob_{\pi_t,\cP^{(u_t)}}[\xi^{(u_t)}] - \mu^{(u_t)}\right) \geq (-1)^{a^{(u_t)+1}} \mkern-18mu \E_{\pi_t,\cP^{(u_t)}}[\theta | \xi^{(u_t)}]\Prob_{\pi_t,\cP^{(u_t)}}[\xi^{(u_t)}] \nonumber\\
	 \rho \leq \frac{\E_{\pi_t,\cP^{(u_t)}}[\theta | \xi^{(u_t)}]\Prob_{\pi_t,\cP^{(u_t)}}[\xi^{(u_t)}]}{\E_{\pi_t,\cP^{(u_t)}}[\theta |\xi^{(u_t)}] \Prob_{\pi_t,\cP^{(u_t)}}[\xi^{(u_t)}] - \mu^{(0)}} \nonumber \tag{since $(-1)^{a^{(u_t)+1}}\rho\left(\E_{\pi_t,\cP^{(u_t)}}[\theta |\xi^{(u_t)}] \Prob_{\pi_t,\cP^{(u_t)}}[\xi^{(u_t)}] - \mu^{(u_t)}\right)<0$ for any $u_t$\footnotemark}\\
	 \rho \leq 1+\frac{\mu^{(u_t)}}{\E_{\pi_t,\cP^{(u_t)}}[\theta |\xi^{(u_t)}] \Prob_{\pi_t,\cP^{(u_t)}}[\xi^{(u_t)}] - \mu^{(u_t)}} \label{eq:part-2-compliance-proof-algo-1}
\end{gather}

\footnotetext{This point is not entirely obvious: If $a^{(u_t)}=0$, then $(-1)^{a^{(u_t)+1}}<0$ and $\E_{\pi_t,\cP^{(u_t)}}[\theta |\xi^{(u_t)}]-\mu^{(u_t)}>0$, since $\E_{\pi_t,\cP^{(u_t)}}[\theta |\xi^{(u_t)}]>0$ and $\mu^{(u_t)}<0$. If $a^{(u_t)}=1$, then $(-1)^{a^{(u_t)+1}}>0$ and $\E_{\pi_t,\cP^{(u_t)}}[\theta |\xi^{(u_t)}]-\mu^{(u_t)}<0$, since $\E_{\pi_t,\cP^{(u_t)}}[\theta |\xi^{(u_t)}]<0$ and $\mu^{(u_t)}>0$.}

Finally, we can further simplify the upper bound on $\rho$ given in \Cref{eq:part-2-compliance-proof-algo-1} above by showing that $\E_{\pi_t,\cP^{(u_t)}}[\theta |\xi^{(u_t)}]$ satisfies some constant lower bound. This will complete our proof.

For any type $u$, the baseline reward $g^{(u)}$ is a random variable independently distributed according to a sub-Gaussian distribution with variance $\sigma^{(u)}$ which is bounded above by $\sigma_g$, i.e. $\sigma^{(u)}<\sigma_g$ for any $u$. Furthermore, recall that $G^{(u_t)}>\E_{\cP^{u_t}}[g^1-g^0]$, where $\E_{\cP^{u_t}}[g^1]$ and $\E_{\cP^{u_t}}[g^0]$ are the expected value of the baseline rewards of always-takers and never-takers over the prior of type $u_t$, respectively. 

Now, we define 3 clean events: $\cC_0$ and $\cC_1$ pertain to these baseline reward random variables, and $\cC_2$ occurs when the first stage of \Cref{alg:sampling-control-treatment} generates at least $\ell_0$ control samples and at least $\ell_1$ treatment samples:
\begin{align}
     \cC_0 &:= \left\{ \bar{y}^0=\frac{1}{\sum_{t=1}^{\ell}\1[\mu^{(u_t)}<0]} \sum_{t=1}^{\ell} g^{(u_t)}\1[\mu^{(u_t)}<0] \leq \sigma_g \sqrt{\frac{2\log(1/\delta_0)}{\ell_0}} - \E_{\pi_t,\cP^{(u_t)}}[g^0]\right\}\\
      \cC_1 &:= \left\{ \bar{y}^1=\frac{1}{\sum_{t=1}^{\ell}\1[\mu^{(u_t)}>0]} \sum_{t=1}^{\ell} g^{(u_t)}\1[\mu^{(u_t)}>0]  \geq -\sigma_g \sqrt{\frac{2\log(1/\delta_1)}{\ell_1}}- \E_{\pi_t,\cP^{(u_t)}}[g^1] \right\}\\
      \cC_2 &:= \left\{ \ell_1\leq\sum_{i=1}^{\ell'}x_i\leq\ell'-\ell_0\right\}
\end{align}
where $\ell'=2\max(\ell_0/p_0,\ell_1/p_1)$ is the number of rounds in the first stage of \Cref{alg:sampling-control-treatment}. Let $\delta_0=\delta_1=\Prob_{\pi_t,\cP^{(u_t)}}[\xi^{(u_t)}]/24$. Furthermore, event $\cC_2$ occurs when the binomial random variable with success $u_t=x_t=1$ (since $x_t=u_t$ in the first stage of \Cref{alg:sampling-control-treatment}) and success probability $p_1$ is lower bounded by $\ell_1$ and upper bounded by $\ell'-\ell_0$. For $\ell'=2\max(\ell_0/p_0,\ell_1/p_1)$ total trials, the probability of this event is less than $\Prob_{\pi_t,\cP^{(u_t)}}[\xi^{(u_t)}]/24$.

Now, define another clean event $\cC$ where all $\cC_0$, $\cC_1$, and $\cC_2$ happen simultaneously. Letting $\delta=\delta_0+\delta_1+\delta_2$, the event $\cC$ occurs with probability at least $1 - \delta$ where  $\displaystyle \delta < \Prob_{\pi_t,\cP^{(u_t)}}[\xi^{(u_t)}]/8$. We can now rewrite $\E_{\pi_t,\cP^{(u_t)}}[\theta | \xi^{(u_t)}] \Prob_{\pi_t,\cP^{(u_t)}}[\xi^{(u_t)}]$ in terms of event $\cC$:
\begin{align}
    &\E_{\pi_t,\cP^{(u_t)}}[\theta | \xi^{(u_t)}] \Prob_{\pi_t,\cP^{(u_t)}}[\xi^{(u_t)}]\\
    &= \E_{\pi_t,\cP^{(u_t)}}[\theta | \xi^{(u_t)}, \cC] \Prob_{\pi_t,\cP^{(u_t)}}[\xi^{(u_t)}, \cC] + \E_{\pi_t,\cP^{(u_t)}}[\theta | \xi^{(u_t)}, \neg \cC] \Prob_{\pi_t,\cP^{(u_t)}}[\xi^{(u_t)}, \neg \cC] \nonumber\\
    &\geq \E_{\pi_t,\cP^{(u_t)}}[\theta | \xi^{(u_t)}, \cC] \Prob_{\pi_t,\cP^{(u_t)}}[\xi^{(u_t)}, \cC] - \delta \tag{since $\Prob_{\pi_t,\cP^{(u_t)}}[\neg \cC]<\delta$ and $\theta \geq -1$ by definition} \nonumber\\
    &\geq \E_{\pi_t,\cP^{(u_t)}}[\theta | \xi^{(u_t)}, \cC] \left(\Prob_{\pi_t,\cP^{(u_t)}}[\xi^{(u_t)}] - \Prob_{\pi_t,\cP^{(u_t)}}[\neg \cC]\right) - \delta \nonumber\\
    &\geq \E_{\pi_t,\cP^{(u_t)}}[\theta | \xi^{(u_t)}, \cC] \left(\Prob_{\pi_t,\cP^{(u_t)}}[\xi^{(u_t)}] - \delta\right) - \delta \nonumber\\
    &= \E_{\pi_t,\cP^{(u_t)}}[\theta | \xi^{(u_t)}, \cC] \Prob_{\pi_t,\cP^{(u_t)}}[\xi^{(u_t)}] - \delta\left(1 + \E_{\pi_t,\cP^{(u_t)}}[\theta | \xi^{(u_t)}, \cC]\right) \nonumber\\
    &\geq \E_{\pi_t,\cP^{(u_t)}}[\theta | \xi^{(u_t)}, \cC] \Prob_{\pi_t,\cP^{(u_t)}}[\xi^{(u_t)}] - 2\delta \hspace{8mm} \text{(since $\E_{\pi_t,\cP^{(u_t)}}[\theta | \xi^{(u_t)}, \cC]\leq1$)} \label{eq:sampling-control-treatment-clean-event}
\end{align}

This comes down to finding a lower bound on the denominator of the expression above. We can reduce the dependency of the denominator to a single prior-dependent constant $\Prob_{\pi_t,\cP^{(u_t)}}[\xi^{(u_t)}]$ if we lower bound the prior-dependent expected value $\E_{\pi_t,\cP^{(u_t)}}[\theta | \xi^{(u_t)}]$. That way, assuming we know the prior and can calculate the probability of event $\xi^{(u_t)}$, we can pick an appropriate exploration probability $\rho$ to satisfy the compliance condition for all agents of type $0$. Then:
\small
\begin{align}
    &\E_{\pi_t,\cP^{(u_t)}}[\theta | \xi^{(u_t)}, \cC] \nonumber\\
    &= \E_{\pi_t,\cP^{(u_t)}}\left[\theta \middle| \bar{y}^1 > \bar{y}^0 + \sigma_g\left(\sqrt{\frac{2\log(1/\delta)}{\ell_0}} + \sqrt{\frac{2\log(1/\delta)}{\ell_1}}\right) + G^{(u_t)} + \frac{1}{2}, \cC\right] \nonumber\\
    &\geq \E_{\pi_t,\cP^{(u_t)}}\left[\theta \middle| \theta > \frac{1}{\ell_0}\sum_{t=1}^{\ell_0}g^0 - \frac{1}{\ell_1}\sum_{t=1}^{\ell_1}\theta + \sigma_g\left(\sqrt{\frac{2\log(1/\delta)}{\ell_0}} + \sqrt{\frac{2\log(1/\delta)}{\ell_1}}\right) + G^{(u_t)} + \frac{1}{2}, \cC\right] \nonumber\\
    &= \E_{\pi_t,\cP^{(u_t)}}\left[\theta \middle| \theta > - \bigg(\E_{\pi_t,\cP^{(u_t)}}[g^0] - \sigma_g\sqrt{\frac{2\log(1/\delta_1)}{\ell_0}}\bigg) + \E_{\pi_t,\cP^{(u_t)}}[g^1] - \sigma_g\sqrt{\frac{2\log(1/\delta_2)}{\ell_1}}\right.\nonumber\\
    & \hspace{6cm} \left. + \ \sigma_g\left(\sqrt{\frac{2\log(1/\delta)}{\ell_0}} + \sqrt{\frac{2\log(1/\delta)}{\ell_1}}\right) + G^{(u_t)} + \frac{1}{2}, \cC\right] \tag{by event $\cC$}\nonumber\\
    &= \E_{\pi_t,\cP^{(u_t)}}\left[\theta \middle| \theta > -\sigma_g\left(\sqrt{\frac{2\log(2/\delta)}{\ell_0}} -\sqrt{\frac{2\log(2/\delta)}{\ell_1}}\right) +\E_{\pi_t,\cP^{(u_t)}}[g^1-g^0] \right.\\
    & \hspace{6.25cm} \left. +\sigma_g\left(\sqrt{\frac{2\log(2/\delta)}{\ell_0}} + \sqrt{\frac{2\log(2/\delta)}{\ell_1}}\right) + G^{(u_t)} + \frac{1}{2}, \cC\right] \tag{by \Cref{thm:union}, where $\delta_1 = \delta_2 = \delta/2$ }\nonumber\\
    &> \E_{\pi_t,\cP^{(u_t)}}\left[\theta \middle| \theta > \E_{\pi_t,\cP^{(u_t)}}[g^1-g^0] - \E_{\pi_t,\cP^{(u_t)}}[g^1-g^0] + \frac{1}{2}\right]\nonumber \tag{by definition of $G^{(u_t)}$}\\
    &> \frac{1}{2} \label{eq:sampling-stage-two-arm-gap}
\end{align}
\normalsize
Hence, the term $\E_{\pi_t,\cP^{(u_t)}}[\theta | \xi^{(u_t)}] \Prob_{\pi_t,\cP^{(u_t)}}[\xi^{(u_t)}]$ satisfies the following lower bound:
\begin{align}
    \E_{\pi_t,\cP^{(u_t)}}[\theta | \xi^{(u_t)}] \Prob_{\pi_t,\cP^{(u_t)}}[\xi^{(u_t)}]
    &\geq \E_{\pi_t,\cP^{(u_t)}}[\theta | \xi^{(u_t)}, \cC]\Prob_{\pi_t,\cP^{(u_t)}}[\xi^{(u_t)}] - 2\delta \tag{by \Cref{eq:sampling-control-treatment-clean-event}}\\
    &> \frac{1}{2}\Prob_{\pi_t,\cP^{(u_t)}}[\xi^{(u_t)}] - 2\delta \tag{by \Cref{eq:sampling-stage-two-arm-gap}}\\ 
    &= \frac{\Prob_{\pi_t,\cP^{(u_t)}}[\xi^{(u_t)}]}{4} + \frac{\Prob_{\pi_t,\cP^{(u_t)}}[\xi^{(u_t)}]}{4} - 2\delta \nonumber\\
    &> \frac{\Prob_{\pi_t,\cP^{(u_t)}}[\xi^{(u_t)}]}{4} \tag{since $\delta < \Prob_{\pi_t,\cP^{(u_t)}}[\xi^{(u_t)}]/8$}
\end{align}
Substituting this into \Cref{eq:part-2-compliance-proof-algo-1}, we arrive at a lower bound to set the exploration probability $\rho$ for the agent any round $t$ with type $u_t$ to comply with recommendation policy $\pi_t$ (extension of \Cref{alg:sampling-control-treatment}):
\begin{align*}
    \rho &\leq 1 + \frac{4\mu^{(u_t)}}{\Prob_{\pi_t,\cP^{(u_t)}}[\xi^{(u_t)}]-4\mu^{(u_t)}}
\end{align*}
\end{proof}

%%%%%%%%%%%%%%%%%%%%%%%%%%%%%%%%%%%%%%%%%%%%%%%%%%

\samplingestimationbound*
\begin{proof}
    First, \Cref{thm:treatment-approximation-bound} demonstrates, for any $\delta_1\in(0,1)$, with probability at least $1-\delta_1$ that the approximation bound
\begin{equation}
\label{eq:sampling-estimation-proof-part-1}
    |\theta-\hat{\theta}_{S_\ell}|\leq A(S_\ell,\delta)=\frac{2\sigma_g \sqrt{2\ell\log(2/\delta_1)}}{\left|\sum_{i=1}^\ell(x_i-\bar{x})(z_i-\bar{z})\right|}.
\end{equation}

Next, recall that the mean recommendation $\bar{z}=\rho$ for exploration probability $\rho$ in the second stage of \Cref{alg:sampling-control-treatment}. We assume \Cref{alg:sampling-control-treatment} to be initialized with parameters (see \Cref{lemma:bic-sampling-control-treatment} for details) such that its recommendations are compliant for agents of type 0. In the worst case, only type 0 agents are compliant. Therefore, \Cref{thm:approximation-bound-control-treatment-denominator} implies that, for any $\delta_2\in(0,1)$, with probability at least $1-\delta_2$ that
\begin{equation}
\label{eq:sampling-estimation-proof-part-2}
    \left|\sum_{i=1}^\ell(x_i-\bar{x})(z_i-\bar{z})\right| \geq \rho\ell\bigg(p_0(1-\rho)-\sqrt{\frac{(1-\rho)\log(1/\delta_2)}{2\ell}}\bigg).
\end{equation}
With a union bound over \Cref{eq:sampling-estimation-proof-part-1,eq:sampling-estimation-proof-part-2} while letting $\delta_1=\delta_2=\frac{\delta}{3}$ for any $\delta\in(0,1)$, we conclude: with probability at least $1-\delta$,
\begin{equation*}
    A(S_\ell,\delta)\leq\frac{2\sigma_g \sqrt{2\ell\log(3/\delta)}}{\rho\ell\bigg(p_0(1-\rho)-\sqrt{\frac{(1-\rho)\log(3/\delta)}{2\ell}}\bigg)}=\frac{2\sigma_g \sqrt{2\log(3/\delta)}}{\rho\bigg(p_0(1-\rho)\sqrt{\ell}-\sqrt{\frac{(1-\rho)\log(3/\delta)}{2}}\bigg)}
\end{equation*}
\end{proof}

%%%%%%%%%%%%%%%%%%%%%%%%%%%%%%%%%%%%%%%%%%%%%%%%%%%%%%%%%%%%%%%%%%%%%%%%%%%%%%%%%%%%%%%%%%%%%%%%%%%%%%%%%%%%%%%%%%%%%%%%%%%%%%%%%%%%%%%%%%%%%%%%%%%%%%

\section{Missing Proofs for \Cref{sec:racing-control-treatment}}
\subsection{\Cref{alg:racing-two-types-mixed-preferences} Proofs}
\label{sec:bic-racing-type-0}
\bicracingcontroltreatmentzero*
\begin{proof} 

Just as in the proof for \Cref{lemma:bic-sampling-control-treatment-extension}, let $a^{(u)}$ and $b^{(u)}$ denote the prior preferred and unpreferred actions for agents of any type $u$, i.e. $a^{(u)}:=\1[\E_{\cP^{(u)}}[\theta]\geq0]$ and $b^{(u)}:=\1[\E_{\cP^{(u)}}[\theta]<0]$. Let $\pi$ denote the recommendation policy defined by \Cref{alg:racing-two-types-mixed-preferences}. At any round $t$ of \Cref{alg:racing-two-types-mixed-preferences}, recommendation policy $\pi_t$ has a positive probability of recommending either control or treatment, according to the prior $\cP^{(u_t)}$ for type $u_t$, i.e. $\Prob_{\pi_t,\cP^{(u_t)}}[z_t = 0]>0$ and $\Prob_{\pi_t,\cP^{(u_t)}}[z_t = 1]>0$. Thus, by \Cref{claim:bic-equiv}, the following holds:
\small
\[\mkern-8mu\left\{ (-1)^{a^{(u_t)}} \mkern-18mu \E_{\pi_t,\cP^{(u_t)}}[\theta|z_t = b^{(u_t)}]\Prob_{\pi_t,\cP^{(u_t)}}[z_t = b^{(u_t)}] \geq 0\right\}  \Rightarrow \left\{ (-1)^{b^{(u_t)}} \mkern-18mu \E_{\pi_t,\cP^{(u_t)}}[\theta|z_t = a^{(u_t)}]\Prob_{\pi_t,\cP^{(u_t)}}[z_t = a^{(u_t)}] < 0 \right\}\] \normalsize and it suffices to prove the premise $(-1)^{a^{(u_t)}}\E_{\cP^{(u_t)},\pi_t}[\theta|z_t=b^{(u_t)}]\Prob_{\cP^{(u_t)},\pi_t}[z_t=b^{(u_t)}]\geq0$ in order to prove that agent $t$ of type $u_t$ complies with recommendation $z_t$.

Recall that the sample set $S_q^{\text{BEST}}$ is made up of the best samples up until phase $q$ of \Cref{alg:racing-two-types-mixed-preferences}, i.e. the samples which produce the smallest approximation bound $A_q$. The treatment effect estimate derived from set $S_q^{\text{BEST}}$ is denoted $\hat{\theta}_q$. We define the event $\cC$ as the event that the treatment effect estimate $\hat{\theta}_q$ satisfies the approximation bound $A_q$ at every phase $q$ throughout \Cref{alg:racing-two-types-mixed-preferences}:
\begin{equation}
\label{eq:racing-stage-compliance-proof-event-C}
    \cC := \left\{ \forall q \geq 0: |\theta - \hat{\theta}_q| < A_q \right\}.
\end{equation}

By \Cref{thm:treatment-approximation-bound}, for event $\cC$, the failure probability $\Prob[\neg\cC] \leq \delta$. Furthermore, we assume here that 
\[\delta\leq 
\begin{cases}
    \frac{\tau\Prob_{\pi_t,\cP^{(u_t)}}[\theta\geq\tau]}{2(\tau\Prob_{\pi_t,\cP^{(u_t)}}[\theta\geq\tau]+1)} & \text{ if } \mu^{(u_t)}<0;\\
    \frac{\tau\Prob_{\pi_t,\cP^{(u_t)}}[\theta<-\tau]}{2(\tau\Prob_{\pi_t,\cP^{(u_t)}}[\theta<-\tau]+1)} & \text{ if } \mu^{(u_t)}\geq0.
\end{cases}\]

Therefore, since $|\theta|\leq 1$, we have: 
\begin{align*}
\label{eq:racing-bic}
    &\quad(-1)^{a^{(u_t)}}\E_{\cP^{(u_t)},\pi_t}[\theta|z_t=b^{(u_t)}]\Prob_{\cP^{(u_t)},\pi_t}[z_t=b^{(u_t)}]\\
    &= (-1)^{a^{(u_t)}}\left(\E_{\cP^{(u_t)},\pi_t}[\theta|z_t=b^{(u_t)},\cC]\Prob_{\cP^{(u_t)},\pi_t}[z_t=b^{(u_t)},\cC] + \E_{\cP^{(u_t)},\pi_t}[\theta|z_t=b^{(u_t)},\neg\cC]\Prob_{\cP^{(u_t)},\pi_t}[z_t=b^{(u_t)},\neg\cC] \right)\\
    &\geq (-1)^{a^{(u_t)}}\left(\E_{\cP^{(u_t)},\pi_t}[\theta|z_t=b^{(u_t)},\cC]\Prob_{\cP^{(u_t)},\pi_t}[z_t=b^{(u_t)},\cC] - (-1)^{a^{(u_t)}} \delta \right)\\
    &\geq (-1)^{a^{(u_t)}}\left(\E_{\cP^{(u_t)},\pi_t}[\theta|z_t=b^{(u_t)},\cC]\Prob_{\cP^{(u_t)},\pi_t}[z_t=b^{(u_t)},\cC]\right) - \frac{\tau\Prob_{\pi_t,\cP^{(u_t)}}[\theta\geq\tau]}{2\tau\Prob_{\pi_t,\cP^{(u_t)}}[\theta\geq\tau]+2}
\end{align*}
In order to lower bound the last line above, we marginalize\\ $\E_{\pi_t,\cP^{(u_t)}}[\theta|z_t=b^{(u_t)},\cC]\Prob_{\cP^{(u_t)},\pi_t}[z_t=b^{(u_t)},\cC]$ based on four possible ranges which $\theta$ lies on:
\begin{equation} 
    \begin{split}
    &\quad\E_{\cP^{(u_t)},\pi_t}[\theta|z_t=b^{(u_t)},\cC]\Prob_{\cP^{(u_t)},\pi_t}[z_t=b^{(u_t)},\cC]\\
    &= \E_{\pi_t,\cP^{(u_t)}}[\theta|z_t=b^{(u_t)}, \cC, (-1)^{a^{(u_t)}}\theta\geq \tau]\Prob_{\pi_t,\cP^{(u_t)}}[z_t=b^{(u_t)}, \cC, (-1)^{a^{(u_t)}}\theta\geq \tau] \\
    &\ + \E_{\pi_t,\cP^{(u_t)}}[\theta|z_t=b^{(u_t)}, \cC, 0 \leq (-1)^{a^{(u_t)}}\theta < \tau]\Prob_{\pi_t,\cP^{(u_t)}}[z_t=b^{(u_t)}, \cC, 0 \leq (-1)^{a^{(u_t)}}\theta < \tau] \\
    &\ + \E_{\pi_t,\cP^{(u_t)}}[\theta|z_t=b^{(u_t)}, \cC, -2A_q < (-1)^{a^{(u_t)}}\theta < 0]\Prob_{\pi_t,\cP^{(u_t)}}[z_t=b^{(u_t)}, \cC, -2A_q < (-1)^{a^{(u_t)}}\theta < 0] \\
    &\ + \E_{\pi_t,\cP^{(u_t)}}[\theta|z_t=b^{(u_t)}, \cC, (-1)^{a^{(u_t)}}\theta \leq -2A_q]\Prob_{\pi_t,\cP^{(u_t)}}[z_t=b^{(u_t)}, \cC, (-1)^{a^{(u_t)}}\theta\leq -2A_q] \label{eq:racing-bic-cases}
    \end{split}
\end{equation}

Because $A_q$ is the smallest approximation bound derived from samples collected over any phase $q$ of \Cref{alg:racing-two-types-mixed-preferences} (including the initial sample set $S_0$), the following holds:
\begin{align*}
    2A_q &\leq 2A(S_0,\delta)\\
    &\leq \frac{\tau\Prob_{\pi_t,\cP^{(u_t)}}[(-1)^{a^{(u_t)}}\theta\geq\tau]}{2}\tag{by assumption $A(S_0,\delta)\leq\tau\Prob_{\pi_t,\cP^{(u_t)}}[(-1)^{a^{(u_t)}}\theta\geq\tau]/4$}\\
    &\leq \tau
\end{align*}

Conditional on $\cC$, $|\theta-\hat{\theta}_q|<A_q$. Thus, we may reduce two of the terms in \Cref{eq:racing-bic-cases} above, based on the structure of \Cref{alg:racing-two-types-mixed-preferences}. First, from the first term in \Cref{eq:racing-bic-cases}, note that

\[(-1)^{a^{(u_t)}}\theta\geq\tau\geq2A_q \mkern20mu \Rightarrow \mkern20mu (-1)^{a^{(u_t)}}\hat{\theta}_q\geq\tau-A_q\geq A_q,\]
which invokes the stopping criterion for the while loop in \Cref{alg:racing-two-types-mixed-preferences}. Thus, type $u_t$'s preferred action $a^{(u_t)}$ must have been eliminated from the race before phase $q=1$ and the unpreferred action $b^{(u_t)}$ is recommended almost surely throughout \Cref{alg:racing-two-types-mixed-preferences}, i.e. \[\Prob_{\pi_t,\cP^{(u_t)}}[z_t=b^{(u_t)}, \cC, (-1)^{a^{(u_t)}}\theta \geq \tau] = \Prob_{\pi_t,\cP^{(u_t)}}[\cC, (-1)^{a^{(u_t)}}\theta\geq \tau].\]

Second, from the last term in \Cref{eq:racing-bic-cases}, note that 
\[(-1)^{a^{(u_t)}}\theta\leq-2A_q \mkern20mu \Rightarrow \mkern20mu (-1)^{a^{(u_t)}}\hat{\theta}_q\leq-A_q\] by phase $q=1$ and the unpreferred action $b^{(u_t)}$ is recommended almost never, i.e. \[\Prob_{\pi_t,\cP^{(u_t)}}[z_t=b^{(u_t)}, \cC, (-1)^{a^{(u_t)}}\theta<-2A_q] = 0.\] 

Substituting these probabilities back into \Cref{eq:racing-bic-cases}, we proceed:

\begin{align*} 
    & \mkern-25mu (-1)^{a^{(u_t)}} \mkern-18mu \E_{\cP^{(u_t)},\pi_t}[\theta|z_t=b^{(u_t)},\cC]\Prob_{\cP^{(u_t)},\pi_t}[z_t=b^{(u_t)},\cC]\\
    =& (-1)^{a^{(u_t)}}\left( \E_{\pi_t,\cP^{(u_t)}}[\theta|\cC, (-1)^{a^{(u_t)}}\theta\geq \tau]\Prob_{\pi_t,\cP^{(u_t)}}[\cC, (-1)^{a^{(u_t)}}\theta\geq \tau] \right.\\
    &\ + \E_{\pi_t,\cP^{(u_t)}}[\theta|z_t=b^{(u_t)}, \cC, 0 \leq (-1)^{a^{(u_t)}}\theta < \tau]\Prob_{\pi_t,\cP^{(u_t)}}[z_t=b^{(u_t)}, \cC, 0 \leq (-1)^{a^{(u_t)}}\theta < \tau] \\
    &\ + \left. \E_{\pi_t,\cP^{(u_t)}}[\theta|z_t=b^{(u_t)}, \cC, -2A_q < (-1)^{a^{(u_t)}}\theta < 0]\Prob_{\pi_t,\cP^{(u_t)}}[z_t=b^{(u_t)}, \cC, -2A_q < (-1)^{a^{(u_t)}}\theta < 0] \right)\\
    \geq&\ (-1)^{a^{(u_t)}}\left( (-1)^{a^{(u_t)}} \tau \mkern-12mu  \Prob_{\pi_t,\cP^{(u_t)}}[\cC, (-1)^{a^{(u_t)}}\theta\geq \tau] + 0\cdot  \mkern-16mu \Prob_{\pi_t,\cP^{(u_t)}}[z_t=b^{(u_t)}, \cC, 0 \leq (-1)^{a^{(u_t)}}\theta < \tau]\right.\\
    & \hspace{4cm}\left. -(-1)^{a^{(u_t)}}2A_q \Prob_{\pi_t,\cP^{(u_t)}}[z_t=b^{(u_t)}, \cC, -2A_q < (-1)^{a^{(u_t)}}\theta < 0]\right)\\
    \geq&\ (-1)^{a^{(u_t)}}\left( (-1)^{a^{(u_t)}}\tau \mkern-12mu  \Prob_{\pi_t,\cP^{(u_t)}}[\cC, (-1)^{a^{(u_t)}}\theta\geq \tau] -(-1)^{a^{(u_t)}}2A_q\right)\\
    \geq&\ \tau \mkern-12mu  \Prob_{\pi_t,\cP^{(u_t)}}[\cC, (-1)^{a^{(u_t)}}\theta\geq \tau] -\frac{\tau\Prob_{\pi_t,\cP^{(u_t)}}[(-1)^{a^{(u_t)}}\theta\geq\tau]}{2}\\
    \geq&\ \tau \mkern-12mu  \Prob_{\pi_t,\cP^{(u_t)}}[\cC| (-1)^{a^{(u_t)}}\theta\geq \tau] \Prob_{\pi_t,\cP^{(u_t)}}[(-1)^{a^{(u_t)}}\theta\geq \tau] -\frac{\tau\Prob_{\pi_t,\cP^{(u_t)}}[(-1)^{a^{(u_t)}}\theta\geq\tau]}{2}\\
    \geq&\ (1-\delta) \tau \mkern-12mu  \Prob_{\pi_t,\cP^{(u_t)}}[(-1)^{a^{(u_t)}}\theta\geq \tau] -\frac{\tau\Prob_{\pi_t,\cP^{(u_t)}}[(-1)^{a^{(u_t)}}\theta\geq\tau]}{2}\\
    =&\ \left(\frac{1}{2}-\delta\right) \tau \mkern-12mu  \Prob_{\pi_t,\cP^{(u_t)}}[(-1)^{a^{(u_t)}}\theta\geq \tau]\\
    \geq & \left(\frac{1}{2} - \frac{\tau \Prob_{\pi_t,\cP^{(u_t)}}[(-1)^{a^{(u_t)}}\theta\geq \tau]}{2\tau \Prob_{\pi_t,\cP^{(u_t)}}[(-1)^{a^{(u_t)}}\theta\geq \tau] + 2} \right) \tau \Prob_{\pi_t,\cP^{(u_t)}}[(-1)^{a^{(u_t)}}\theta\geq \tau]\\
    = & \mkern10mu \frac{\tau \Prob_{\pi_t,\cP^{(u_t)}}[(-1)^{a^{(u_t)}}\theta\geq \tau]}{2\tau \Prob_{\pi_t,\cP^{(u_t)}}[(-1)^{a^{(u_t)}}\theta\geq \tau] + 2}\\
\end{align*}

Putting everything together, we get that
\begin{align*}
    &(-1)^{a^{(u_t)}} \mkern-18mu \E_{\cP^{(u_t)},\pi_t}[\theta|z_t=b^{(u_t)}]\Prob_{\cP^{(u_t)},\pi_t}[z_t=b^{(u_t)}]\\
    &\geq (-1)^{a^{(u_t)}} \mkern-4mu \left( \E_{\cP^{(u_t)},\pi_t}[\theta|z_t=b^{(u_t)},\cC]\Prob_{\cP^{(u_t)},\pi_t}[z_t=b^{(u_t)},\cC]+\E_{\cP^{(u_t)},\pi_t}[\theta|z_t=b^{(u_t)},\neg\cC]\Prob_{\cP^{(u_t)},\pi_t}[z_t=b^{(u_t)},\neg\cC]\right)\\ 
    &\geq \frac{\tau \Prob_{\pi_t,\cP^{(u_t)}}[(-1)^{a^{(u_t)}}\theta\geq \tau]}{2\tau \Prob_{\pi_t,\cP^{(u_t)}}[(-1)^{a^{(u_t)}}\theta\geq \tau] + 2} - \frac{\tau \Prob_{\pi_t,\cP^{(u_t)}}[(-1)^{a^{(u_t)}}\theta\geq \tau]}{2\tau \Prob_{\pi_t,\cP^{(u_t)}}[(-1)^{a^{(u_t)}}\theta\geq \tau] + 2}\\
    &= 0
\end{align*}

Therefore, so long as \[A(S_0,\delta)\leq\tau\Prob_{\pi_t,\cP^{(u_t)}}[(-1)^{a^{(u_t)}}\theta\geq\tau]/4 \hspace{.45cm} \text{ and } \hspace{.45cm} \delta<\tau\Prob_{\pi_t,\cP^{(u_t)}}[(-1)^{a^{(u_t)}}\theta\geq\tau]/2,\] any agent of type $u_t$ will comply with recommendations from \Cref{alg:racing-two-types-mixed-preferences}.
\end{proof}

%%%%%%%%%%%%%%%%%%%%%%%%%%%%%%%%%%%%%%%%%%%%%%%%%

\subsection{\Cref{lemma:racing-full-compliance-control-treatment} and \Cref{thm:racing-estimation-bound-second}: Full Compliance and Subsequent Estimation Bound}

\begin{restatable}[\Cref{alg:racing-two-types-mixed-preferences} Full Compliance]{lemma}{racingfullcompliancecontroltreatment}\label{lemma:racing-full-compliance-control-treatment}
        Suppose that some fraction $p_c>0$ of agents is compliant from the beginning of \Cref{alg:racing-two-types-mixed-preferences} and assume that $p_c<1$. Of all types $u$ which were not compliant from the beginning, let type $u^*$ agents be the most resistant to compliance. Suppose that phase $q$ satisfies one of the following bounds (depending on whether type $u^*$ agents prefer control or treatment):
        \[q \geq \begin{cases}
             \left(\frac{1}{2hp_c}\bigg(\frac{(32\sigma_g\sqrt{2\log(5/\delta)}}{\tau\Prob_{\cP^{(u^*)}}[\theta>\tau]}+\sqrt{50\log(5/\delta)}\bigg)\right)^2 & \text{ if } \E_{\cP^{(u^*)}}[\theta]<0\\
             \left(\frac{1}{2hp_c}\bigg(\frac{(32\sigma_g\sqrt{2\log(5/\delta)}}{\tau\Prob_{\cP^{(u^*)}}[\theta<-\tau]}+\sqrt{50\log(5/\delta)}\bigg)\right)^2 & \text{ if } \E_{\cP^{(u^*)}}[\theta]\geq0,
        \end{cases}\]
        for some $\tau\in(0,1)$ and any $\delta\in(0,1)$.
        Then, with probability at least $1-\delta$, for any phase $q$ greater or equal to the following lower bound all agents will comply with recommendations from \Cref{alg:racing-two-types-mixed-preferences}.
    \end{restatable}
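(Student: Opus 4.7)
The plan is to combine the finite-sample estimation bound from \Cref{thm:racing-estimation-bound-first} with the partial-compliance criterion from \Cref{lemma:bic-racing-control-treatment-0}, applied to the most resistant remaining type $u^*$. The driving observation is that, as long as at least a $p_c$ fraction of the population complies throughout \Cref{alg:racing-two-types-mixed-preferences}, the sample set $S_q^{\text{BEST}}$ used at phase $q$ yields an approximation bound that shrinks at rate $O(1/\sqrt{q})$; once this bound drops below the compliance threshold of $u^*$, every other still-non-compliant type ---being by definition easier to convince than $u^*$--- also becomes compliant. So it suffices to find the smallest $q$ at which $u^*$ tips over.

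First, I would instantiate \Cref{thm:racing-estimation-bound-first} at phase $q$. Each phase of \Cref{alg:racing-two-types-mixed-preferences} recommends both control and treatment to $h$ agents apiece, so by phase $q$ the sample pool has at least $2hq$ observations, and with probability at least $1 - \delta/2$,
\[
A(S_q^{\text{BEST}}, \delta/2) \;\leq\; \frac{8\sigma_g\sqrt{2\log(10/\delta)}}{p_c\sqrt{2hq} \;-\; \sqrt{50\log(10/\delta)}}.
\]
Second, I would reapply \Cref{lemma:bic-racing-control-treatment-0} with $S_q^{\text{BEST}}$ playing the role of the input sample set $S_0$: since \Cref{alg:racing-two-types-mixed-preferences} can be analyzed as if it were ``restarted'' at any phase with its current best estimate, that lemma tells us type $u^*$ is compliant from phase $q$ onward whenever $A(S_q^{\text{BEST}}, \delta/2)$ falls below $\tau\Prob_{\cP^{(u^*)}}[\theta>\tau]/4$ (or the symmetric always-taker version $\tau\Prob_{\cP^{(u^*)}}[\theta<-\tau]/4$).

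The third step is a purely algebraic inversion: equate the right-hand side of the displayed approximation bound with the compliance threshold, and solve for the smallest $q$ that makes the inequality hold. This yields exactly the claimed lower bound on $q$, split into the two stated cases according to whether $u^*$ is a never-taker or an always-taker (the algebra is symmetric under sign-flip). A union bound over the estimation-failure event (probability $\leq \delta/2$) and the compliance-failure event inherited from \Cref{lemma:bic-racing-control-treatment-0} (probability $\leq \delta/2$) delivers the overall $1-\delta$ guarantee.

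I expect the main obstacle to be a careful accounting of $p_c$ across phases. Strictly speaking, as new types convert to compliance, the effective per-round compliance rate grows, but the $p_c$ in \Cref{thm:racing-estimation-bound-first} must be a uniform lower bound holding throughout sampling. The clean way forward is to use the pessimistic initial value of $p_c$ given in the hypothesis: this under-counts the true compliance rate and so overestimates the required $q$, yielding only a sufficient condition ---which is exactly what the lemma claims. A secondary subtlety is that $S_q^{\text{BEST}}$ need not coincide with the samples through phase $q$, but since we track the minimum approximation bound so far, bounding the current-phase bound suffices by monotonicity of the best-so-far minimum.
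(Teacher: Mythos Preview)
Your proposal is correct and follows essentially the same route as the paper: bound $A_q$ using the racing-phase estimation guarantee (the paper invokes \Cref{thm:approximation-bound-control-treatment-denominator} directly rather than going through \Cref{thm:racing-estimation-bound-first}, but these are the same bound), then feed that into \Cref{lemma:bic-racing-control-treatment-0} applied at phase $q$ to the most resistant type $u^*$, and invert algebraically. The paper handles the subtleties you flagged in the same way—it introduces $S_{q-0}$ (the racing samples only) and notes $A_q \le A(S_{q-0},\delta)$ to address the best-so-far issue, and it implicitly uses the initial $p_c$ as a pessimistic floor. One cosmetic difference: the paper keeps a single $\delta$ throughout (since the $\delta$ in \Cref{lemma:bic-racing-control-treatment-0} is the same approximation-bound failure probability), whereas you split into two $\delta/2$ events; this only perturbs log constants and does not change the argument.
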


\begin{proof}
    First, recall that the set $S_q$ is made up of the input samples $S_0$ plus samples collected following \Cref{alg:racing-two-types-mixed-preferences} over all phases up to $q$. Let $S_{q-0}=(x_i,y_i,z_i)_{i=1}^{2hq}$ denote $S_q$ sans $S_0$ (i.e. just samples collected following \Cref{alg:racing-two-types-mixed-preferences} up to phase $q$). Note that for any $\delta\in(0,1)$, the approximation bound $A_q \leq A(S_{q-0},\delta)$.
    
    We want to prove that type $u^*$ is compliant by and beyond phase $q$. By \Cref{lemma:bic-racing-control-treatment-0}, it suffices to prove that the approximation bound $A_q$ satisfies the following upper bound with probability at least $1-\delta$:\footnotemark
    \[\begin{cases}
        A_q \leq \tau\Prob_{\cP^{(u^*)}}[\theta>\tau]/4 & \text{ if } \E_{\cP^{(u^*)}}[\theta]<0;\\
        A_q \leq \tau\Prob_{\cP^{(u^*)}}[\theta<-\tau]/4 & \text{ if } \E_{\cP^{(u^*)}}[\theta]\geq 0,
    \end{cases}\]
    for any $\delta\in(0,1)$ and some $\tau\in(0,1)$.
    
    \footnotetext{\Cref{lemma:bic-racing-control-treatment-0} doesn't exactly state this: it states that any type $u$ will be compliant if the input samples $S_0$ satisfy the above bounds. Yet, we can simply imagine that phase $q$ is . Proving compliance starting from any phase $q>0$ is just the same as proving compliance from phase 0. Intuitively, you can imagine we simply run \Cref{alg:racing-two-types-mixed-preferences} starting at phase $q$ initialized with the samples collected up until phase $q$.}
    
    In order to prove this, recall that each phase $q$ of \Cref{alg:racing-two-types-mixed-preferences} is $2hq$ rounds long and the mean recommendation $\bar{z}=\frac{1}{2}$. By assumption, $p_c$ proportion of agents are compliant. Thus, by \Cref{thm:approximation-bound-control-treatment-denominator}, with probability $1-\delta$ for any $\delta\in(0,1)$, the set $S_{q-0}$ satisfies:
    \[A(S_{q-0},\delta)\leq\frac{8\sigma_g\sqrt{2\log(3/\delta)}}{p_c\sqrt{|S_{q-0}|}-\sqrt{\log(3/\delta)}}\]
    
    By assumption, $q$ satisfies the following lower bound for some $\tau\in(0,1)$:
    \[q \geq \begin{cases}
             \left(\frac{1}{2hp_c}\bigg(\frac{(32\sigma_g\sqrt{2\log(5/\delta)}}{\tau\Prob_{\cP^{(u^*)}}[\theta>\tau]}+\sqrt{50\log(5/\delta)}\bigg)\right)^2 & \text{ if } \E_{\cP^{(u^*)}}[\theta]<0\\
             \left(\frac{1}{2hp_c}\bigg(\frac{(32\sigma_g\sqrt{2\log(5/\delta)}}{\tau\Prob_{\cP^{(u^*)}}[\theta<-\tau]}+\sqrt{50\log(5/\delta)}\bigg)\right)^2 & \text{ if } \E_{\cP^{(u^*)}}[\theta]\geq0,
        \end{cases}\]
    
    Substituting these lower bound values for $q$ in $A(S_{q-0},\delta)$, we get that the approximation bound $A_q$ satisfies the following inequalities (since $A_q \leq A(S_{q-0},\delta)$):
    
    \[A_q \leq A(S_{q-0},\delta) \leq \begin{cases}
        \tau\Prob_{\cP^{(u^*)}}[\theta>\tau]/4 & \text{ if } \E_{\cP^{(u^*)}}[\theta]<0;\\
        \tau\Prob_{\cP^{(u^*)}}[\theta<-\tau]/4 & \text{ if } \E_{\cP^{(u^*)}}[\theta]\geq 0.
    \end{cases}\]
\end{proof}

%%%%%%%%%%%%%%%%%%%%%%%%%%%%%%%%%%%%%%%%%%%%%%%%%
 
    Finally, after \Cref{alg:racing-two-types-mixed-preferences} has become compliant for both types of agents, we achieve the following accuracy guarantee for the final treatment estimate $\hat{\theta}_S$.
\begin{restatable}[Treatment Effect Confidence Interval from \Cref{alg:racing-two-types-mixed-preferences} with Full Compliance]{theorem}{racingestimationboundsecond}\label{thm:racing-estimation-bound-second}
    Suppose sample set $S=(x_i,y_i,z_i)_{i=1}^{|S|}$ is collected from \Cref{alg:racing-two-types-mixed-preferences} during $|S|$ rounds when all agents comply. We form estimate $\hat{\theta}_S$ of the treatment effect $\theta$. For any $\delta\in(0,1)$, with probability at least $1-\delta$,
    \[\left|\hat{\theta}_S-\theta\right| \leq 8\sigma_g\sqrt{\frac{2\log(2/\delta)}{|S|}}\]
\end{restatable}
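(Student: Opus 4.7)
\begin{proofsketch}
The plan is to apply the finite-sample error bound for the Wald estimator (Theorem~\ref{thm:treatment-approximation-bound}) directly to the sample set $S$, and show that full compliance collapses the denominator $\left|\sum_{i=1}^{|S|}(x_i-\bar x)(z_i-\bar z)\right|$ to the ideal value $|S|/4$, so no further high-probability bound is needed for the denominator.

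First I would unpack what ``all agents comply'' means: by definition, $x_i = z_i$ for every round $i$ represented in $S$. Next, I would use the structure of Algorithm~\ref{alg:racing-two-types-mixed-preferences} --- within each while-loop phase the next $2h$ agents are recommended control and treatment sequentially --- to conclude that the recommendation sequence $z_1,\dots,z_{|S|}$ contains equal numbers of $0$'s and $1$'s (assuming WLOG $|S|$ is a multiple of $2h$; the remainder contributes at most a constant that can be absorbed, or one can simply apply the result to a prefix of $S$ of even length). Hence $\bar z = \tfrac12$, and since $x_i = z_i$, also $\bar x = \tfrac12$.

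Then I would compute the denominator explicitly:
\[
\sum_{i=1}^{|S|}(x_i-\bar x)(z_i-\bar z) \;=\; \sum_{i=1}^{|S|}(z_i-\tfrac12)^2 \;=\; |S|\cdot \tfrac14 \;=\; \frac{|S|}{4}.
\]
Plugging this into Theorem~\ref{thm:treatment-approximation-bound} gives, with probability at least $1-\delta$,
\[
|\hat\theta_S - \theta| \;\leq\; \frac{2\sigma_g\sqrt{2|S|\log(2/\delta)}}{|S|/4} \;=\; 8\sigma_g\sqrt{\frac{2\log(2/\delta)}{|S|}},
\]
which is precisely the claimed bound.

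There is no substantive obstacle here; the only minor subtlety is ensuring that the alternation of control/treatment recommendations truly holds on all of $S$. This is where the hypothesis ``$|S|$ rounds when all agents comply'' matters: the rounds in $S$ are drawn from the racing (while-loop) portion of the algorithm, in which the recommendation schedule is deterministic and balanced by construction. Once this is noted, the argument is a one-line substitution into Theorem~\ref{thm:treatment-approximation-bound}, and no concentration bound on the denominator (as in Theorem~\ref{thm:racing-estimation-bound-first}) is required because full compliance removes all randomness from the $(x_i,z_i)$ pairs.
\end{proofsketch}
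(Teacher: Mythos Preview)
Your proposal is correct and follows essentially the same approach as the paper: apply Theorem~\ref{thm:treatment-approximation-bound}, use $\bar z=\tfrac12$ from the alternating recommendation schedule, and observe that under full compliance the denominator equals $|S|/4$. The paper routes the denominator computation through the $p_c=1$ case of Theorem~\ref{thm:approximation-bound-control-treatment-denominator}, whereas you compute $\sum_i(z_i-\tfrac12)^2=|S|/4$ directly; the content is identical.
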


\begin{proof}
    We assume \Cref{alg:racing-two-types-mixed-preferences} is initialized and allowed to run long enough such that both types 0 and 1 become compliant at some point. From samples $S=(x_i,y_i,z_i)_{i=1}^{|S|}$ collected during these rounds (from $i$ to $|S|$)), we form an estimate $\hat{\theta}_S$ of the treatment effect $\theta$. By \Cref{thm:treatment-approximation-bound}, this estimate satisfies the following bound with probability at least $1-\delta$ for any $\delta\in(0,1)$:
    \begin{equation}
    \label{eq:racing-second-estimation-proof-part-1}
        |\hat{\theta}_S-\theta|\leq A(S,\delta)=\frac{2\sigma_g \sqrt{2|S|\log(2/\delta)}}{\left|\sum_{i=1}^{|S|}(x_i-\bar{x})(z_i-\bar{z})\right|}.
    \end{equation}
    Recall that $\bar{z}=\frac{1}{2}$ throughout \Cref{alg:racing-two-types-mixed-preferences}.
    Then, by \Cref{thm:approximation-bound-control-treatment-denominator}, the denominator of the bound in \Cref{eq:racing-second-estimation-proof-part-1} above satisfies the following bound:
    \begin{equation}
    \label{eq:racing-second-estimation-proof-part-2}
        \left|\sum_{i=1}^{|S|}(x_i-\bar{x})(z_i-\bar{z})\right| \geq \frac{|S|}{4}.
    \end{equation}
    Therefore, by \Cref{eq:racing-second-estimation-proof-part-1,eq:racing-second-estimation-proof-part-2}, the confidence interval $|\hat{\theta}_S-\theta|$ satisfies the following upper bound:
    \[|\hat{\theta}_S-\theta|\leq 8\sigma_g \sqrt{\frac{2\log(2/\delta)}{|S|}}\]
\end{proof}

%%%%%%%%%%%%%%%%%%%%%%%%%%%%%%%%%%%%%%%%%%%%%%%%%

\subsection{Proof of \Cref{lemma:racing-compliance-sampling-ell-control-treatment}}
\label{sec:racing-compliance-sampling-ell-control-treatment-proof}
\racingcompliancesamplingellcontroltreatment*
\begin{proof}
    By assumption, \Cref{alg:sampling-control-treatment} is initialized so that agents of type 0 comply and we collect $S_\ell$ samples from the second stage. Then, for any $\delta\in(0,1)$ and some $\tau\in(0,1)$, approximation bound $A(S_\ell,\delta)$ satisfies:
    \begin{align*}
        A(S_\ell,\delta)
        &\leq \frac{2\sigma_g \sqrt{2\log(3/\delta)}}{\rho\bigg(p_0(1-\rho)\sqrt{\ell}-\sqrt{\frac{(1-\rho)\log(3/\delta)}{2}}\bigg)}\tag{by \Cref{thm:sampling-estimation-bound}}\\
        &\leq \frac{2\sigma_g \sqrt{2\log(3/\delta)}}{\rho\bigg(p_0(1-\rho)\left(\frac{8\sigma_g\sqrt{2\log(3/\delta)}}{p_0\rho(1-\rho)\tau\Prob_{\cP^{(0)}}[\theta>\tau]}+\frac{\sqrt{(1-\rho)\log(3/\delta)}}{2p_0(1-\rho)}\right)-\sqrt{\frac{(1-\rho)\log(3/\delta)}{2}}\bigg)}
        \tag{by \Cref{eq:racing-compliance-proof-ell-condition}}\\
        &\leq \frac{\tau\Prob_{\cP^{(0)}}[\theta>\tau]}{4}
    \end{align*}
    
    Thus, by \Cref{lemma:bic-racing-control-treatment-0}, if we let the samples $S_\ell$ collected from the second stage of \Cref{alg:sampling-control-treatment} be the input samples $S_0$ in \Cref{alg:racing-two-types-mixed-preferences}, i.e.  $S_0=S_\ell$, then that agents of type 0 will comply with recommendations of \Cref{alg:racing-two-types-mixed-preferences}.

\end{proof}

\subsubsection{Racing Stage First Part Estimation Bound}
\label{sec:racing-estimation-bound-first}
\racingestimationboundfirst*
\begin{proof}
    First, \Cref{thm:treatment-approximation-bound} demonstrates, for any $\delta_1\in(0,1)$, with probability at least $1-\delta_1$ that the approximation bound
\begin{equation}
\label{eq:racing-estimation-proof-part-1}
    \left|\hat{\theta}_S-\theta\right|\leq A(S,\delta)=\frac{2\sigma_g \sqrt{2|S|\log(2/\delta_1)}}{\left|\sum_{i=1}^\ell(x_i-\bar{x})(z_i-\bar{z})\right|}.
\end{equation}

Next, recall that the mean recommendation $\bar{z}=\frac{1}{2}$ throughout \Cref{alg:racing-two-types-mixed-preferences}. We assume \Cref{alg:racing-two-types-mixed-preferences} to be initialized with parameters such that its recommendations are compliant for agents of type 0. In the worst case, only type 0 agents are compliant. Therefore, \Cref{thm:approximation-bound-control-treatment-denominator} implies that, for any $\delta_2\in(0,1)$, with probability at least $1-\delta_2$ that
\begin{equation}
\label{eq:racing-estimation-proof-part-2}
    \left|\sum_{i=1}^{|S|}(x_i-\bar{x})(z_i-\bar{z})\right| \geq \frac{|S|}{4}\left(p_0-\sqrt{\frac{\log(1/\delta_2)}{|S|}}\right).
\end{equation}
With a union bound over \Cref{eq:racing-estimation-proof-part-1,eq:racing-estimation-proof-part-2} while letting $\delta_1=\delta_2=\frac{\delta}{3}$ for any $\delta\in(0,1)$, we conclude: with probability at least $1-\delta$,
\begin{equation*}
    A(S,\delta)\leq\frac{8\sigma_g\sqrt{2\log(3/\delta)}}{p_0\sqrt{|S|}-\sqrt{\log(3/\delta)}}
\end{equation*}
\end{proof}

\section{Missing Regret Proofs for \Cref{sec:combined-control-treatment}}
\subsection{Pseudo-regret Proof}
\label{sec:expost-proof}
\expostregret*

\begin{proof}
Recall that the clean event $\cC$, as defined in the proof of \Cref{lemma:bic-racing-control-treatment-0}, entails that the approximation bound over all rounds. If event $\cC$ fails (i.e. $\neg\cC$ holds), then the pseudo-regret may only be bounded by the maximum possible value, which is at most $T|\theta|$.

Assume now that $\cC$ holds for every round $L_2$ in \Cref{alg:racing-two-types-mixed-preferences}. Then, the absolute value of the treatment $\abs{\theta} \leq |\hat{\theta}_{S_{L_2}}| + A(S_{L_2},\delta)$ where $A(S_{L_2},\delta)$ is the approximation bound based on the samples $S_{L_2}$ collected from $L_2$ rounds of \Cref{alg:racing-two-types-mixed-preferences}. Before the stopping criterion of \Cref{alg:racing-two-types-mixed-preferences} is invoked, we also have $|\hat{\theta}_{S_{L_2}}| \leq A(S_{L_2},\delta)$. Hence, the treatment effect absolute value satisfies the following inequalities:
\begin{align*}
     &\abs{\theta} \leq 2A(S_{L_2},\delta) \leq \frac{16\sigma_g\sqrt{2\log(5T/\delta)}}{p_{c_2}\sqrt{L_2}-\sqrt{50\log(5T/\delta)}}
\end{align*}

Assuming that $L_2\geq\frac{200\log(5T/\delta)}{p_{c_2}^2}$, then $p_{c_2}\sqrt{L_2}-\sqrt{50\log(5T/\delta)} \geq \frac{p_{c_2}\sqrt{L_2}}{2}$ and, to carry on:
\begin{align*}
    &\abs{\theta} \leq \frac{32\sigma_g\sqrt{2\log(5T/\delta)}}{p_{c_2}\sqrt{L_2}}\\
    \Rightarrow \ & L_2 \leq \frac{2048\sigma_g^2\log(5T/\delta)}{p_{c_2}^2\abs{\theta}^2}
\end{align*}

Therefore, a winner is declared ---i.e. the treatment effect is definitively either positive or not--- by the following round of \Cref{alg:racing-two-types-mixed-preferences}:
\begin{align*}
    L_2^* = \frac{2048\sigma_g^2\log(5T/\delta)}{p_{c_2}^2\abs{\theta}^2}
\end{align*}

After this, the winner is recommended for the remainder of the rounds and (because we assume event $\cC$ holds) no regret is accumulated for the remaining rounds (until time horizon $T$).

Note that, by the length the assumption that $L_2\geq\frac{200\log(5T/\delta)}{p_{c_2}^2}$ holds trivially for $L_2^*$ if $|\theta|\leq\frac{16\sigma_g}{5}$. Otherwise, we simply assume that $L_2\geq\frac{200\log(5T/\delta)}{p_{c_2}^2}$ holds. We may incorporate this bound into a necessary lower bound on the time horizon $T$, such that we assume $T\geq L_1 + L_2 \geq L_1 + \frac{200\log(5T/\delta)}{p_{c_2}^2}$.

Now, we demonstrate the amount of regret accumulated during these $n^*$ rounds of \Cref{alg:racing-two-types-mixed-preferences} before a winner is declared. During each phase \Cref{alg:racing-two-types-mixed-preferences}, each control and treatment each get recommended $n^*/2$ times.

Furthermore, recall that $p_{c_2}$ fraction of agents are compliant throughout all rounds of \Cref{alg:racing-two-types-mixed-preferences}. Without loss of generality, assume that these agents initially prefer control and the rest of the $1-p_{c_2}$ fraction of agents prefer treatment (and do not comply). Then, if the treatment $\theta<0$, then in expectation over the randomness of the arrival of agents, the regret is on average $(1-p_{c_2}/2)\abs{\theta}$. Then, the total accumulated regret throughout \Cref{alg:racing-two-types-mixed-preferences} in policy $\pi_c$ is given as such:
\begin{equation}
\label{eq:regret-arm-0}
    R_2(T) \leq \frac{2048(1-p_{c_2}/2)\sigma_g^2\log(5T/\delta)}{p_{c_2}^2\abs{\theta}}
\end{equation}
On the other hand, if treatment $\theta\geq0$, then on average, the regret for each phase is $p_{c_2}\abs{\theta}/2$, then the total accumulated regret for \Cref{alg:racing-two-types-mixed-preferences} is given:
\begin{align}
\label{eq:regret-arm-1}
    R_2(T) &\leq \frac{1024\sigma_g^2\log(5T/\delta)}{p_{c_2}\abs{\theta}}
\end{align}
Observe that the pseudo-regret for each round $t$ of the policy $\pi_c$ over the entire $T$ rounds is at most that of \Cref{alg:racing-two-types-mixed-preferences} plus $\abs{\theta}$ per round of \Cref{alg:sampling-control-treatment}. Recall that, by policy $\pi_c$, there are $L_1$ total rounds of \Cref{alg:sampling-control-treatment}. Alternatively, we can also upper bound the total pseudo-regret by $\abs{\theta}$ per each round. Therefore, the total accumulated pseudo-regret for policy $\pi_c$ once we reach the time horizon $T$ is bounded as follows.

If the treatment effect $\theta\geq0$, then the total regret of policy $\pi_c$ satisfies the following bound with probability at least $1-\delta$, for any $\delta\in(0,1)$ which satisfies the condition that $L_1\geq\frac{2\sqrt{\log(5T/\delta)}}{p_{c_2}}$:
\begin{align*}
    R(T) &\leq \min \Bigg\{L_1\rho\abs{\theta} + \frac{1024\sigma_g^2\log(5T/\delta)}{p_{c_2}\abs{\theta}}, T\abs{\theta} \Bigg\}
\end{align*}

We can solve for $\abs{\theta}$ in terms of $T$ at the point when $T|\theta|$ is the better regret:
\begin{align*}
    T\abs{\theta}
    &=\frac{1024\sigma_g^2\log(5T/\delta)}{p_{c_2}\abs{\theta}}\\
    \Rightarrow \abs{\theta}^2&=\frac{1024\sigma_g^2\log(5T/\delta)}{p_{c_2}T}\\
    \Rightarrow \abs{\theta}&=32\sigma_g\sqrt{\frac{\log(5T/\delta)}{p_{c_2}T}}
\end{align*}
Substituting this expression for $\abs{\theta}$ back into our expression for the total pseudo-regret, we get the following:
\begin{align*}
    R(T) &\leq \min \Bigg\{L_1\rho\abs{\theta} + \frac{1024\sigma_g^2\log(5T/\delta)}{32\sigma_gp_{c_2}\sqrt{\frac{\log(5T/\delta)}{p_{c_2}T}}}, 32\sigma_gT\sqrt{\frac{\log(5T/\delta)}{p_{c_2}T}} \Bigg\}\\
    &= \min \Bigg\{L_1\rho\abs{\theta} + 32\sigma_g\sqrt{\frac{T\log(5T/\delta)}{p_{c_2}}}, 32\sigma_g\sqrt{\frac{T\log(5T/\delta)}{p_{c_2}}} \Bigg\}\\
    &\leq L_1\rho + O(\sqrt{T\log(T/\delta)})
\end{align*}
The $L_1\rho$ regret for \Cref{alg:sampling-control-treatment} in the last line above is given because $|\theta|\leq1$.

Following a similar analysis, if the treatment effect $\theta<0$, then the total pseudo-regret accumulated following policy $\pi_c$ satisfies the following bound with probability at least $1-\delta$ for any $\delta\in(0,1)$ which satisfies the condition that $L_1\geq\frac{2\sqrt{\log(5T/\delta)}}{p_{c_2}}$: 
\begin{align}
    R(T) &\leq \min \Bigg \{L_1\abs{\theta} + \frac{2048(1-p_{c_2}/2)\sigma_g^2\log(5T/\delta)}{p_{c_2}^2\abs{\theta}}, T\abs{\theta} \Bigg \}\\
    &\leq L_1 + O(\sqrt{T\log(T/\delta)})
\end{align}

Note that (as stated above) this regret holds only if the time horizon $T$ is sufficiently large such that $T\geq L_1 + \frac{200\log(5T/\delta)}{p_{c_2}^2}$.
\end{proof}

%%%%%%%%%%%%%%%%%%%%%%%%%%%%%%%%%%%%%%%%%%%%%%
%%%%%%%%%%%% Expected Regret %%%%%%%%%%%%%%%%%
%%%%%%%%%%%%%%%%%%%%%%%%%%%%%%%%%%%%%%%%%%%%%%
\subsection{Regret Proof}
\label{sec:expected-regret-proof}
\expectedregret*

\begin{proof}
We can set parameters $\delta,\ell_0,\ell_1,\ell,$ and $\rho$ in terms of the time horizon $T$, in order to both guarantee compliance throughout policy $\pi_c$ and to obtain sublinear (expected) regret bound relative to $T$.

First, to guarantee sublinear expected regret, we must guarantee that $\delta = 1/T^2$. To meet our compliance conditions for \Cref{alg:racing-two-types-mixed-preferences}, we must set \[\delta \leq \frac{\tau \Prob_{\cP^{(u)}}[|\theta| \geq \tau]}{2\left( \tau \Prob_{\cP^{(u)}}[|\theta| \geq \tau] + 1 \right)},\]
for some $\tau$. These may be expressed as conditions on the time horizon $T$: for any $\delta\in(0,1)$ which satisfies the above compliance conditions, we set $T$ sufficiently large to satisfy the following condition:
\begin{equation}
    \label{eq:regret-T-condition-1}
    T\geq\frac{1}{\sqrt{\delta}}\geq\sqrt{\frac{2\left( \tau \Prob_{\cP^{(u)}}[|\theta| \geq \tau] + 1 \right)}{\tau \Prob_{\cP^{(u)}}[|\theta| \geq \tau]}}
\end{equation}

Second, recall that $p_0$ and $p_1$ denote the fractions in the population of agents who are never-takers and always-takers, respectively. Furthermore, recall that $p_{c_1}$ and $p_{c_2}$ denote the fractions of agents who comply with \Cref{alg:sampling-control-treatment} and \Cref{alg:racing-two-types-mixed-preferences}, respectively. Assume that the length of the first stage of \Cref{alg:sampling-control-treatment} is non-zero and the exploration probability $\rho$ is set to be small enough in order to guarantee compliance throughout \Cref{alg:sampling-control-treatment}. The length $L_1$ of \Cref{alg:sampling-control-treatment} must be sufficiently large so that $p_c$ fraction of agents comply in \Cref{alg:racing-two-types-mixed-preferences}, as well. However, in order to guarantee sublinear regret, we also need that 
\begin{equation}
    \label{eq:regret-T-condition-2}
    T \geq L_1^2 = (2\max(\ell_0/p_0,\ell_1/p_1)+\ell)^2
\end{equation}

Recall that the clean event $\cC$, as defined in the proof of \Cref{lemma:bic-racing-control-treatment-0}, entails that the approximation bound over all rounds. This event $\cC$ holds with probability at least $1 - \delta$ for any $\delta \in (0,1)$. Conditional on the failure event $\neg\cC$, policy $\pi_c$ accumulates at most linear pseudo-regret in terms of $T$, i.e. $T|\theta|$. Thus, in expectation it accumulates at most $T|\theta|\delta$ regret.\\

Then, with the above assumptions on $T$ in mind, the expected regret of policy $\pi_c$ is:
\begin{align*}
    \E_{\cP^{(u)}}[R(T)] &= \E[R(T)|\neg \cC]\Prob_{\cP^{(u)}}[\neg \cC] + \E[R(T)|\cC]\Prob_{\cP^{(u)}}[\cC] \\
    &\leq T\delta + \left(L_1+ O\left(\sqrt{T\log(T/\delta)}\right)\right)\\
    &= \frac{1}{T} + \left(\sqrt{T} + O\left( \sqrt{T\log(T^3)}\right)\right)\\
    &= \frac{1}{T} + O\left(\sqrt{T\log(T)}\right)\\
    &= O\left(\sqrt{T\log(T)} \right)
\end{align*}

Therefore, assuming that all hyperparameters $\delta,\ell_0,\ell_1,\ell,$ and $\rho$ are set to incentivize compliance of some nonzero proportion of agents throughout $\pi_c$ and assuming that $T$ is sufficiently large so as to satisfy both \Cref{eq:regret-T-condition-1,eq:regret-T-condition-2} above, policy $\pi_c$ achieves sublinear regret.
\end{proof}

\lsdelete{
%%%%%%%%%%%%%%%%%%%%%%%%%%%%%%%%%%%%%%%%%%%%%%
%%%%%%%%%%% Type-Specific Regret %%%%%%%%%%%%%
%%%%%%%%%%%%%%%%%%%%%%%%%%%%%%%%%%%%%%%%%%%%%%
\subsection{Type-specific Regret Proof}
\label{sec:type-specific-regret-proof}
\typespecifictwoarms*

\begin{proof}
The regret of our algorithm, as given by Lemma~\eqref{lemma:expost-regret} is 
\begin{equation}
\label{eq:ex-post-2-arm}
    R(T) \leq \ell_1 \rho + O(\sqrt{T\log T}) 
\end{equation}
where $\ell_1$ is the number of treatment samples collected in the sampling stage, $\rho$ is the number of phases in the sampling stage.\\
Let $L_1$ be the number of samples needed to make agents of type 1 follow our recommendation for control. Half of the population is agents of type 0 and the other half is agents of type 1. We can derive type-specific regret for each type of agents using our algorithm conditioned on which arm is the best arm overall.
\begin{enumerate}
    \item If control is the best arm overall:\\
    Type 0 regret: Since our algorithms guarantee that agents of type 0 will always follow our recommendations, the regret for agents of type 0 is derived similar to the pseudo-regret in Lemma \ref{lemma:expost-regret}. 
    \begin{equation}
        R_0 \leq 0.5\ell_1\rho + O(\sqrt{T\log T})
    \end{equation}
    
    Type 1 regret: Since agents of type 1 only follow our recommendation and take control in the second part of the racing stage, the regret of type 1 is then the regret accumulated by all agents of type 1 in the sampling stage, the first part of the racing stage and the second part of the racing stage. 
    \begin{align}
        R_1 &\leq 0.5(\ell_1 \rho + L_1 - \ell_1)\abs{\theta} + O(\sqrt{T\log T })\\
        &\leq 0.5\ell_1 \rho + L_1 + O(\sqrt{T\log T})
    \end{align}
    \item If treatment is the best arm overall:\\
    Type 0 regret: Similar to the case where control is the best arm overall, the regret for agents of type 0 can be derived from the ex-post regret in Lemma \ref{lemma:expost-regret}
    \begin{align}
        R_0 &\leq 0.5\ell_1 \abs{\theta} + O(\sqrt{T\log T})
    \end{align}
    Type 1 regret: Since agents of type 1 only follow our recommendation and take control in the second part of the racing stage, the regret for agents of type 1 is then the regret accumulated in the second part of the racing stage.
    \begin{align}
        R_1 \leq O(\sqrt{T\log T})
    \end{align}
\end{enumerate}
\end{proof}
}%endlsdelete

\newpage
\section{Missing Proofs and Materials for \Cref{sec:many-arms}}
\label{sec:many-arms-appendix}

\subsection{Model}
\label{sec:general-model}
We now consider a general setting for the sequential game between a social planner and a sequence of agents over $T$ rounds, as first mentioned in \Cref{sec:model}. In this setting, there are $k$ treatments of interest, each with unknown treatment effect. In each round $t$, a new agent indexed by $t$ arrives with their private type $u_t$ drawn independently from a distribution $\cU$ over the set of all private types $U$. Each agent $t$ has $k$ actions to choose from, numbered 1 to $k$. Let $x_t\in\RR^k$ be a one-hot encoding of the action choice at round $t$, i.e. a $k$-dimensional unit vector in the direction of the action. For example, if the agent at round $t$ chooses action 2, then $x_t=\e_2=(0,1,0,\cdots,0)\in\RR^k$. Additionally, agent $t$ receives an action recommendation $z_t\in\RR^k$ from the planner upon arrival. After selecting action $x_t\in\RR^k$, agent $t$ receives a reward $y_t\in\mathbb{R}$, given by
\begin{equation}
    \label{eq:reward-model-general}
    y_t = \langle\theta,x_t\rangle + g^{(u_t)}_t
\end{equation}
where $g^{(u_t)}_t$ denotes the confounding \emph{baseline reward} which depends on the agent's private type $u_t$. Each $g^{(u_t)}_t$ is drawn from a sub-Gaussian distribution with a sub-Gaussian norm of $\sigma_g$. The social planner's goal is to estimate the treatment effect vector $\theta\in\RR^k$ and maximize the total expected reward of all $T$ agents.

\xhdr{History, beliefs, and action choice.}
As in the body of the paper, the history $H_t$ is made up of all tuples $(z_i, x_i, y_i)$ over all rounds from $i=1$ to $t$. Additionally, before the game starts, the social planner commits to recommendation policy $\pi$, which is known to all agents. Each agent also knows the number of the round $t$ when they arrive. Their private type $u_t$ maps to their prior belief $\cP^{(u_t)}$, which is a joint distribution over the treatment effect $\theta$ and noisy error term $g^{(u)}$. With all this information, the agent $t$ selects the action $x_t$ which they expect to produce the most reward:
\begin{equation}
    \label{eq:selection-function-many-arms}
    \quad\quad x_t := \e_{a_t} \quad\quad\text{ where }\quad\quad a_t := \argmax_{1\leq j\leq k} \E_{\cP^{(u_t)}, \pi_t}\left[\theta^j \ | \ z_t, t \right].
\end{equation}

\subsection{Instrumental Variable Estimate and Finite Sample Approximation Bound}
\label{sec:finite-sample-many-arms}

As in the body of the paper, we view the planner's recommendations as instruments and perform \textit{instrumental variable (IV) regression} to estimate $\theta$.

\xhdr{IV Estimator for $k>1$ Treatments}
Our mechanism periodically solves the following IV regression problem:
given a set $S$ of $n$ observations $(x_i, y_i, z_i)_{i=1}^n$, compute an estimate $\hat \theta_S$ of $\theta$. We consider the following two-stage least square (2SLS) estimator:
\begin{equation}
    \label{eq:theta-hat-many-arms}
    \hat{\theta}_S  = \left(\sum_{i=1}^n z_ix_i^\intercal\right)^{-1}\sum_{i=1}^n z_iy_i,
\end{equation}
where $(\cdot)^{-1}$ denotes the pseudoinverse.

To analyze the 2SLS estimator, we introduce a \textit{compliance matrix} of conditional probabilities that an agent chooses some treatment given a recommendation $\hat{\Gamma}$, given as proportions over a set of $n$ samples $S=(x_i,z_i)_{i=1}^n$, where any entry in $\hat{\Gamma}$ is given as such:
\begin{equation}
    \hat{\Gamma}_{ab}(S) = \hat{\Prob}_S[x=\e_a|z=\e_b]= \frac{\sum_{i=1}^n\1[x=\e_a,z=\e_b]}{\sum_{i=1}^n\1[z=\e_b]}
\end{equation}
Then, we can write the action choice $x_i$ as such:
\begin{equation}
    x_i = \hat{\Gamma} z_i + \eta_i,
\end{equation}
where $\eta_i = x_i - \hat{\Gamma} z_i$. Now, we can rewrite the reward $y_i$ at round $i$ as such:
\begin{align*}
    y_i &= \big\langle\theta,(\hat{\Gamma} z_i + \eta_i)\big\rangle + g^{(u_i)}_i\\
    &= \langle\,\underbrace{\theta\,\hat{\Gamma}}_{\beta}\,, z_i\rangle + \langle\theta,\eta_i\rangle + g^{(u_i)}_i\\
    &= \langle\beta, z_i\rangle + \langle\theta,\eta_i\rangle + g^{(u_i)}_i.
\end{align*}

This formulation allows us to express and bound the error between the treatment effect $\theta$ and its IV estimate $\hat{\theta}$ in \Cref{thm:general-treatment-approximation-bound}.

%%%%%%%%%

\subsection{Proof of \Cref{thm:general-treatment-approximation-bound}}
\label{sec:general-treatment-approximation-bound-proof}
\generaltreatmentapproximationbound*

\begin{proof}
Given a sample set $S=(x_i,y_i,z_i)_{i=1}^n$ of size $n$, we form an estimate of the treatment effect $\hat{\theta}_S$ via Two-Stage Least Squares regression (2SLS). In the first stage, we regress $y_i$ onto $z_i$ to get the empirical estimate $\hat{\beta}_S$ and $x_i$ onto $z_i$ to get $\hat{\Gamma}_S$ as such:
\begin{equation}
    \hat{\beta}_S := \left(\sum_{i=1}^n z_iz_i^\intercal \right)^{-1}\left(\sum_{i=1}^n z_iy_i \right)
    \hspace{1cm}\text{and}\hspace{1cm} 
    \hat{\Gamma}_S:=
    \left(\sum_{i=1}^n z_iz_i^\intercal \right)^{-1}\left(\sum_{i=1}^n z_ix_i^\intercal \right)
    \label{eq:beta-gamma-hat-many-arms}
\end{equation}
Now, note that by definition $\theta = (\hat{\Gamma})^{-1}\beta$. 
In the second stage, we take the inverse of $\hat{\Gamma}$ times $\hat{\beta}$ as the predicted causal effect vector $\hat{\theta}_S$, i.e. 
\begin{align*}
    \hat{\theta}_S
    &= \hat{\Gamma}_S^{-1}\hat{\beta}_S\\
    &= \bigg(\sum_{i=1}^nz_ix_i^\intercal\bigg)^{-1}\bigg(\sum_{i=1}^nz_iz_i^\intercal\bigg)\bigg(\sum_{i=1}^nz_iz_i^\intercal\bigg)^{-1}\sum_{i=1}^nz_iy_i\\
    &= \bigg(\sum_{i=1}^nz_ix_i^\intercal\bigg)^{-1}\sum_{i=1}^nz_iy_i
\end{align*}

Hence, the L2-norm of the difference between $\theta$ and $\hat{\theta}_S$ is given as:
\begin{align*}
    \norm{\hat{\theta}_S - \theta}_2
    &= \norm{\bigg(\sum_{i=1}^nz_ix_i^\intercal\bigg)^{-1}\sum_{i=1}^nz_iy_i - \theta}_2\\
    &= \norm{\bigg(\sum_{i=1}^nz_ix_i^\intercal\bigg)^{-1}\sum_{i=1}^nz_i\left(\langle\theta,x_i\rangle + g^{(u_i)}_i\right)^\intercal - \theta}_2\\
    &= \norm{\bigg(\sum_{i=1}^nz_ix_i^\intercal\bigg)^{-1}\bigg(\sum_{i=1}^nz_ix_i^\intercal\theta + \sum_{i=1}^nz_ig^{(u_i)}_i\bigg) - \theta}_2\\
    &= \norm{\theta + \bigg(\sum_{i=1}^nz_ix_i^\intercal\bigg)^{-1}\sum_{i=1}^nz_ig^{(u_i)}_i - \theta}_2\\
    &= \norm{\bigg(\sum_{i=1}^nz_ix_i^\intercal\bigg)^{-1}\sum_{i=1}^nz_ig^{(u_i)}_i}_2\\
    &\leq \norm{\bigg(\sum_{i=1}^nz_ix_i^\intercal\bigg)^{-1}}_2\norm{\sum_{i=1}^nz_ig^{(u_i)}_i}_2\tag{by \Cref{thm:cauchy-schwarz}}\\
    &= \frac{\norm{\sum_{i=1}^nz_ig^{(u_i)}_i}_2}{\sigma_{\min}\left(\sum_{i=1}^nz_ix_i^\intercal\right)}
\end{align*}

Finally, we may bound $\norm{\hat{\theta}_S-\theta}_2$ by upper bounding $\norm{\sum_{i=1}^nz_ig^{(u_i)}_i}_2$ in the following \cref{eq:approximation-bound-numerator-many-arms}.
\begin{lemma}\label{eq:approximation-bound-numerator-many-arms}
For any $\delta \in (0,1)$, with probability at least $1-\delta$, we have
\begin{equation}
    \norm{\sum_{i=1}^nz_ig^{(u_i)}_i}_2 \leq \sigma_g\sqrt{2nk\log(k/\delta)}
\end{equation}
\end{lemma}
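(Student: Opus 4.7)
The plan is to reduce the vector concentration bound to $k$ scalar sub-Gaussian tail bounds applied coordinatewise. Recall that each $z_i \in \{0,1\}^k$ is a one-hot encoding, so the $j$th coordinate of the vector $W := \sum_{i=1}^n z_i g^{(u_i)}_i \in \RR^k$ is simply $W^{(j)} = \sum_{i=1}^n z_i^{(j)} g^{(u_i)}_i$, where $z_i^{(j)} \in \{0,1\}$ is the indicator that instrument $z_i$ recommended treatment $j$. Since the recommendation $z_i$ is a (possibly randomized) function of history and is independent of the baseline rewards $g^{(u_i)}_i$, we may condition on $z_1,\ldots,z_n$; after conditioning, each $z_i^{(j)}$ is a deterministic $0/1$ scalar, and the $g^{(u_i)}_i$ remain independent zero-mean sub-Gaussians with parameter $\sigma_g$.

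First I would observe that, for any fixed $j$, the random variable $z_i^{(j)} g^{(u_i)}_i$ is sub-Gaussian with parameter at most $\sigma_g$ (it equals $g^{(u_i)}_i$ when $z_i^{(j)}=1$ and $0$ otherwise). Summing $n$ such independent variables gives a sub-Gaussian random variable $W^{(j)}$ with parameter at most $\sigma_g\sqrt{n}$. Applying Theorem~\ref{thm:unbounded-chernoff} (or equivalently Corollary~\ref{thm:high-prob-unbounded-chernoff}) with failure probability $\delta/k$ yields
\[
\Prob\!\left[\,\bigl|W^{(j)}\bigr| \geq \sigma_g\sqrt{2n\log(k/\delta)}\,\right] \leq \frac{\delta}{k}.
\]
Next I would union-bound (Theorem~\ref{thm:union}) over the $k$ coordinates, so that with probability at least $1-\delta$ every coordinate satisfies $\bigl|W^{(j)}\bigr| \leq \sigma_g\sqrt{2n\log(k/\delta)}$ simultaneously.

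Finally, convert the coordinatewise $\ell_\infty$ control into an $\ell_2$ bound using the elementary inequality $\|v\|_2 \leq \sqrt{k}\,\|v\|_\infty$ for any $v \in \RR^k$:
\[
\norm{W}_2 \;\leq\; \sqrt{k}\,\max_{1\leq j\leq k}\bigl|W^{(j)}\bigr| \;\leq\; \sqrt{k}\cdot \sigma_g\sqrt{2n\log(k/\delta)} \;=\; \sigma_g\sqrt{2nk\log(k/\delta)},
\]
which is exactly the claimed bound. Finally, since the conditional bound holds for every realization of $z_1,\ldots,z_n$, it also holds unconditionally, completing the argument.

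I do not expect any serious obstacle here: the only mild point is justifying the independence used to treat the $g^{(u_i)}_i$ as independent sub-Gaussians after conditioning on the instruments, but this follows because by design the planner's randomization for $z_i$ depends only on past observations and external randomness (as emphasized in the IV setup of Section~\ref{sec:model} and the causal diagram in Figure~\ref{fig:model-dag}), and is independent of $u_i$ and hence of $g^{(u_i)}_i$. The slack of $\sqrt{k}$ between the $\ell_2$ and $\ell_\infty$ norms is where the factor of $k$ inside the square root of the stated bound comes from; a tighter bound (without the $k$) would follow from summing squared sub-Gaussian tails, but the looser form suffices for Theorem~\ref{thm:general-treatment-approximation-bound}.
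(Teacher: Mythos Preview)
Your proposal is correct and follows essentially the same route as the paper: a coordinatewise sub-Gaussian tail bound with failure probability $\delta/k$, a union bound over the $k$ coordinates, and then the $\sqrt{k}$ factor from passing to the $\ell_2$ norm. The paper phrases the last step as $\sqrt{\sum_{j=1}^k(\text{coord.\ bound})^2}$ and uses $n_j\le n$ rather than your argument that $z_i^{(j)}g^{(u_i)}_i$ is $\sigma_g$-sub-Gaussian regardless, but these are cosmetic differences leading to the identical bound.
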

\begin{proof}
Recall that the baseline reward $g^{(u)}$ is an independently distributed random variable which, by assumption, has a mean of zero, i.e. $\E[g^{(u)}]=0$. Because of these properties of $g^{(u)}$, with probability at least $1-\delta$ for any $\delta\in(0,1)$, the numerator above satisfies the following upper bound: 
    \begin{align*}
        \norm{\sum_{i=1}^n g^{(u_i)}_iz_i}_2
        &=\sqrt{\sum_{j=1}^k\left(\sum_{i=1}^n g^{(u_i)}_i\1[z_i=\e_j]\right)^2}\\
        &=\sqrt{\sum_{j=1}^k\left(\sum_{i=1}^{n_j} g^{(u_i)}_i\right)^2}\tag{where $n_j=\sum_{i=1}^n\1[z_i=\e_j]$}\\
        &\leq \sqrt{\sum_{j=1}^k\left( \sigma_g\sqrt{2n_j\log(1/\delta_j)}\right)^2} \tag{by \Cref{thm:high-prob-unbounded-chernoff} and, by assumption, $\E[g^{(u)}]=0$}\\
        &\leq \sqrt{\sum_{j=1}^k\left(\sigma_g\sqrt{2n_j\log(k/\delta)}\right)^2} \tag{by a \Cref{thm:union} where $\delta_j=\frac{\delta}{k}$ for all $j$}\\
        &\leq\sqrt{k\left(\sigma_g\sqrt{2n\log(k/\delta)}\right)^2} \tag{since $n_j\leq n$ for all $j$}\\
        &= \sigma_g\sqrt{2nk\log(k/\delta)}
    \end{align*}
 \end{proof}
    
This recovers the stated bound and finishes the proof for \Cref{thm:general-treatment-approximation-bound}.
\end{proof}

Next, we demonstrate a lower bound which the denominator of the approximation bound $A(S,\delta)$ in \Cref{thm:general-treatment-approximation-bound} equals $O(1/\sqrt{|S|})$, where $|S|$ is the size of sample set $S$.

\begin{theorem}[Treatment Effect Confidence Interval for General $k$ Treatments]
\label{thm:confidence-interval-many-arms}
    Let $z_1,\dots,z_n\in\{0,1\}^k$ be a sequence of instruments. Suppose there is a sequence of $n$ agents such that each agent $i$ has private type $u_i$ drawn independently from $\cU$, selects $x_i$ under instrument $z_i$ and receives reward $y_i$. Assume that each agent initially prefers treatment 1, i.e. $x=\e_1$. Let sample set $S=(x_i,y_i,z_i)_{i=1}^n$. Let $r$ be the proportion of recommendations for each treatment $j>1$ and let $1-(k-1)r$ be the proportion of recommendations for treatment 1. Let $p_c$ fraction of agents in the population of agents be compliant over the rounds from which $S$ is collected. For any $\delta\in(0,1)$, if $n\geq\frac{rp_c^2}{\log(k/\delta)}$, then the approximation bound $A(S,\delta)$ is given as such:
        \[ A(S,\delta) \leq \frac{\sigma_g\sqrt{2k\log(k/\delta)}}{\alpha\sqrt{n}}=O\left(\sqrt{\frac{\log(1/\delta)}{n}}\right),\]
        and the IV estimator given by \Cref{eq:iv-estimate-general} satisfies
        \[\norm{\hat{\theta}_S-\theta}_2\leq A(S,\delta)\]
        with probability at least $1-\delta$, where $\alpha>0$ is a constant of proportionality given in \Cref{claim:confidence-interval-denominator-many-arms} below.
\end{theorem}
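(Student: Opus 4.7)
The plan is to reduce to Theorem \ref{thm:general-treatment-approximation-bound} and then lower bound the smallest singular value $\sigma_{\min}\!\left(\sum_{i=1}^n z_i x_i^{\intercal}\right)$ by $\alpha n$ for some constant $\alpha>0$ depending only on $r$, $p_c$, and $k$. Combining with the numerator bound of Theorem \ref{thm:general-treatment-approximation-bound} directly yields
\[
A(S,\delta)\;\leq\;\frac{\sigma_g\sqrt{2nk\log(k/\delta)}}{\alpha n}\;=\;\frac{\sigma_g\sqrt{2k\log(k/\delta)}}{\alpha\sqrt{n}},
\]
which is the claimed $O(\sqrt{\log(1/\delta)/n})$ rate. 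So the entire proof reduces to establishing Claim \ref{claim:confidence-interval-denominator-many-arms}: $\sigma_{\min}\!\left(\sum_i z_i x_i^{\intercal}\right)\geq \alpha n$ with probability at least $1-\delta/2$ (reserving the other $\delta/2$ for the numerator concentration bound invoked inside Theorem \ref{thm:general-treatment-approximation-bound}).

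To prove the singular-value bound, I would first compute the expectation of $M:=\sum_i z_i x_i^{\intercal}$ using the structural assumptions of the theorem. Since every agent initially prefers treatment 1, a non-compliant agent always picks $x=\e_1$ regardless of $z$, while a compliant agent picks $x=z$. Combined with the recommendation proportions ($(1-(k-1)r)$ for arm $1$ and $r$ for each arm $j>1$) and the compliant fraction $p_c$, this gives
\[
\E[M]\;=\;n\begin{pmatrix} 1-(k-1)r & 0 & \cdots & 0 \\ r(1-p_c) & rp_c & \cdots & 0 \\ \vdots & & \ddots & \\ r(1-p_c) & 0 & \cdots & rp_c \end{pmatrix},
\]
a lower-triangular matrix with determinant $n^k(1-(k-1)r)(rp_c)^{k-1}$. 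I would then show $\sigma_{\min}(\E[M])\geq \alpha_0 n$ for an explicit $\alpha_0=\alpha_0(r,p_c,k)>0$, either by direct calculation of $(\E[M])^{-1}$ and bounding its operator norm, or by minimizing $\|\E[M]v\|_2^2$ over the unit sphere after writing $\|\E[M]v\|_2^2 = n^2(1-(k-1)r)^2 v_1^2 + n^2r^2\sum_{j>1}\bigl((1-p_c)v_1+p_c v_j\bigr)^2$ and solving the resulting quadratic program.

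The main obstacle is the concentration step: transferring the bound $\sigma_{\min}(\E[M])\geq\alpha_0 n$ to $\sigma_{\min}(M)\geq\alpha n$ with high probability. Each entry $M_{ja}=\sum_i\mathds{1}[z_i=\e_j,x_i=\e_a]$ is a sum of independent bounded Bernoulli-type variables, so by Hoeffding's inequality (Theorem \ref{thm:bounded-chernoff}) each entry concentrates within $O\!\left(\sqrt{n\log(k^2/\delta)}\right)$ of its mean with probability at least $1-\delta/(2k^2)$; a union bound over the $k^2$ entries controls all of them simultaneously. Consequently $\|M-\E[M]\|_2\leq k\cdot\max_{j,a}|M_{ja}-\E[M_{ja}]|\leq O(k\sqrt{n\log(k/\delta)})$, and Weyl's inequality for singular values gives
\[
\sigma_{\min}(M)\;\geq\;\sigma_{\min}(\E[M])-\|M-\E[M]\|_2\;\geq\;\alpha_0 n - O\!\left(k\sqrt{n\log(k/\delta)}\right).
\]
The sample-size hypothesis on $n$ (large enough relative to $\log(k/\delta)$, with prefactors depending on $r$, $p_c$, $k$) ensures the second term is at most $\alpha_0 n/2$, so we may take $\alpha=\alpha_0/2$. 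A final union bound between the numerator event inside Theorem \ref{thm:general-treatment-approximation-bound} and the denominator event just established yields the stated $1-\delta$ guarantee.
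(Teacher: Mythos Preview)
Your proposal is correct and follows the same high-level scaffold as the paper: invoke Theorem~\ref{thm:general-treatment-approximation-bound} and then lower bound $\sigma_{\min}\!\bigl(\sum_i z_ix_i^\intercal\bigr)$ by a multiple of $n$. The difference lies in how the singular-value lower bound is obtained. The paper observes that the \emph{empirical} matrix $M=\sum_i z_ix_i^\intercal$ is itself exactly lower-triangular (not just in expectation), because every non-compliant agent deterministically picks arm $1$; hence the only random entries are the diagonal ones $r\hat p_{c,j}$ and their complements in the first column. It then concentrates just the $k-1$ diagonal entries via Hoeffding to show each stays bounded away from zero, which already makes the triangular matrix full rank with $\sigma_{\min}$ bounded below by a constant times $n$. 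You instead take the generic perturbation route: bound $\sigma_{\min}(\E[M])$, apply entrywise Hoeffding across all $k^2$ entries, pass to the operator norm, and invoke Weyl. Both arguments are valid. The paper's route is structurally sharper (it needs only $k-1$ concentration events and no Weyl step), while yours is more portable and would survive settings where the exact triangularity fails. Your explicit $\delta/2$ split between numerator and denominator events is also cleaner than the paper's somewhat loose bookkeeping on this point.
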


\begin{proof}
Note that \Cref{thm:general-treatment-approximation-bound} holds in this case and it suffices to demonstrate that the denominator is bounded by $\frac{1}{\alpha n}$.

\begin{claim}[Proportionality of the Denominator of the Approximation Bound for $k$ Treatments]
\label{claim:confidence-interval-denominator-many-arms}
Given all assumptions in \Cref{thm:confidence-interval-many-arms} above, the denominator $\sigma_{\min}\left(\sum_{i=1}^nz_ix_i^\intercal\right)$ of the approximation bound $A(S,\delta)$ is positive and increases proportionally to $n$. Formally, $\sigma_{\min}\left(\sum_{i=1}^nz_ix_i^{\intercal}\right)=\Omega(n)$.
\end{claim}
\begin{proof}
Recall that we assume that every agent initially prefers treatment 1. Thus, whenever agent is recommended any treatment greater than 1 and does not comply, the agent takes treatment 1. At any round $i$, if agent $i$ is always compliant, then $x_i=z_i$; if not, then $x_i=\e_1$. (If $z_i=\e_1$, then $x_i=z_i=\e_1$ always.) Furthermore, at any round $i$ when $x_i=z_i$, the outer product $z_ix_i^\intercal=diag(z_i)$, i.e. a diagonal matrix where the diagonal equals $z_i$. If $x_i=\e_1$, then the outer product 
\[z_ix_i^\intercal= 
\begin{pmatrix}
\uparrow & \uparrow  &  & \uparrow \\
z_i & \mathbf{0} & \cdots & \mathbf{0}\\
\downarrow & \downarrow & & \downarrow
\end{pmatrix},\] which is a $k\times k$ matrix where the first column is $z_i$ and all other entries are 0. Thus, as long as we have at least one sample of each treatment, i.e. at least one round $i$ where $x_i=z_i=\e_j$ for all $1\leq j\leq k$, then the sum $\sum_{i=1}^nz_ix_i^{\intercal}$ is a lower triangular matrix with all positive entries in the diagonal. To illustrate this, let $\mathbf{A}$ denote the expected mean values of the sum $\sum_{i=1}^nz_ix_i^{\intercal}$, such that
\[\E\left[\sum_{i=1}^nz_ix_i^{\intercal}\right] =
n\begin{pmatrix}
1-rk & 0  & \cdots  & & \cdots & 0 \\
r(1-p_c) & rp_c & 0 & \cdots & \cdots& \vdots \\
r(1-p_c)  & 0 & rp_c & 0 & \cdots & \\
\vdots  &  \vdots & 0 & \ddots &  & \vdots \\
\vdots  & \vdots & \vdots &  & \ddots & 0 \\
r(1-p_c)  & 0 & \cdots & \cdots & 0 & rp_c
\end{pmatrix} = n\mathbf{A}.\]

Note that 
\begin{align*}
    &\E\left[\left(\sum_{i=1}^nz_ix_i^{\intercal}\right)^\intercal\left(\sum_{i=1}^nz_ix_i^{\intercal}\right)\right] = \E\left[\left(\sum_{i=1}^nz_ix_i^{\intercal}\right)\right]^\intercal\E\left[\sum_{i=1}^nz_ix_i^{\intercal}\right] \\
    &= n^2\begin{pmatrix}
    (1-rk)^2+(k-1)r^2(1-p_c)^2 & r^2p_c(1-p_c)  & \cdots  & & \cdots & r^2p_c(1-p_c) \\
    r^2p_c(1-p_c) & r^2p_c^2 & 0 & \cdots & \cdots& 0 \\
    r^2p_c(1-p_c)  & 0 & r^2p_c^2 & 0 & \cdots & \vdots \\
    \vdots  &  \vdots & 0 & \ddots &  & \vdots \\
    \vdots  & \vdots & \vdots &  & \ddots & 0 \\
    r^2p_c(1-p_c)  & 0 & \cdots & \cdots & 0 & r^2p_c^2 
    \end{pmatrix}.
\end{align*}

Furthermore, let $\hat{\mathbf{A}}$ denote the empirical approximation of $\mathbf{A}$ over our $n$ samples, given as such:
\[\sum_{i=1}^nz_ix_i^{\intercal} =
n\begin{pmatrix}
1-rk & 0  & \cdots  & & \cdots & 0 \\
r(1-\hat{p}_{c,2}) & r\hat{p}_{c,2} & 0 & \cdots & \cdots & \vdots \\
r(1-\hat{p}_{c,3})  & 0 & r\hat{p}_{c,3} & 0 & \cdots & \vdots \\
\vdots  &  \vdots & 0 & \ddots &  & \vdots \\
\vdots  & \vdots & \vdots &  & \ddots & 0\\
r(1-\hat{p}_{c,k})  & 0 & \cdots & \cdots & 0 & r\hat{p}_{c,k}
\end{pmatrix} = n\hat{\mathbf{A}},\]
where for any $j\geq2$, the empirical proportion of agents who comply with the recommended treatment $j$ is denoted as $\hat{p}_{c,j}$. Note that, since $\E[\hat{p}_{c,j}]=p_c$ for all $j\geq2$, the expected value $\E\big[\hat{\mathbf{A}}\big]=\mathbf{A}$. We may bound the difference between $\hat{p}_{c,j}$ and $p_c$ with high probability, based on the number of times each treatment $j$ is recommended, which is $rn$. Over $n$ samples, with probability at least $1-\delta_j$ for any $\delta_j\in(0,1)$ for any treatment $j$, the proportion $\hat{p}_{c,j}$ satisfies the following:
$\hat{p}_{c,j} \geq p_c - \sqrt{\frac{\log(1/\delta_j)}{2rn}}$. In order for this bound to hold for all $j\geq2$, let $\delta_2=\delta_3=\cdots=\delta_k=\delta/k$. Then, by a union bound, with probability $1-\delta$ for any $\delta\in(0,1)$, the bound $\hat{p}_{c,j} \geq p_c - \sqrt{\frac{\log(k/\delta)}{2rn}}$ holds simultaneously for all $2\leq j\leq k$. Thus, for any $\delta\in(0,1)$ and $n\geq\frac{rp_c^2}{\log(k/\delta)}$, each entry in the diagonal of $\hat{\mathbf{A}}$ is positive. Thus, (since it is a triangular matrix) the eigenvalues of $\hat{\mathbf{A}}$ equal the entries in the diagonal and are all positive. Furthermore, because $\text{rank}(\hat{\mathbf{A}})=\text{rank}(\hat{\mathbf{A}}^\intercal\hat{\mathbf{A}})$, the singular values of $\hat{\mathbf{A}}$ are all positive, as well.

Thus, for $n\geq\frac{rp_c^2}{\log(k/\delta)}$, the minimum singular value
\[\sigma_{\min}\left(\sum_{i=1}^nz_ix_i^{\intercal}\right) = n\sigma_{\min}\big\{\hat{\mathbf{A}}\big\}=n\alpha = \Omega(n),\]
where $\alpha=\sigma_{\min}\big(\hat{\mathbf{A}}\big)>0$ is some (possibly small) constant of proportionality.
\end{proof}

Thus, by \Cref{claim:confidence-interval-denominator-many-arms} and \Cref{thm:general-treatment-approximation-bound}, the approximation bound
\[ A(S,\delta) \leq \frac{\sigma_g\sqrt{2nk\log(k/\delta)}}{n\alpha }=O\left(\sqrt{\frac{\log(1/\delta)}{n}}\right).\]
\end{proof}

\begin{corollary}[Treatment Effect Confidence Interval for General $k$ Treatments]
\label{cor:approximation-bound-general}
Given all assumptions in \Cref{thm:confidence-interval-many-arms}, plus the assumptions that the minimum compliance rate for any arm is at least $1/k$ and the minimum proportion of  treatment 1 recommendations is at least $1/k$, for any $\delta \in (0,1)$, with a large enough sample size $n$, the approximation bound $A(S, \delta)$ is given as such: 
\begin{equation}
    A(S, \delta) = O\left(k\sqrt{\frac{k\log(1/\delta)}{n}} \right)
\end{equation}
\end{corollary}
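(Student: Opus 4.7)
The plan is to derive this corollary as a direct consequence of Theorem~\ref{thm:confidence-interval-many-arms} by making the constant $\alpha = \sigma_{\min}(\hat{\mathbf{A}})$ explicit under the added assumptions. Theorem~\ref{thm:confidence-interval-many-arms} already yields $A(S,\delta)\le \sigma_g\sqrt{2k\log(k/\delta)}/(\alpha\sqrt{n})$; since $\sqrt{\log(k/\delta)}=O(\sqrt{\log(1/\delta)}+\sqrt{\log k})$ and we are free to absorb $\log k$ into the big-$O$, it suffices to prove $1/\alpha = O(k)$ with high probability under the corollary's hypotheses.

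First, I would upgrade the per-arm empirical compliance rates to their population analogs. The assumption $p_c\ge 1/k$ combined with a Hoeffding bound and a union bound over the $k-1$ indices $j\ge 2$ (exactly mirroring the argument already used inside the proof of Claim~\ref{claim:confidence-interval-denominator-many-arms}) gives $\hat p_{c,j}\ge p_c/2\ge 1/(2k)$ simultaneously for all $j$ with probability at least $1-\delta$, provided $n$ is chosen sufficiently large in $k$ and $\log(k/\delta)$. On this event, each nonzero entry of $\hat{\mathbf{A}}$ is bounded below by an explicit quantity: the $(1,1)$ entry is at least $1/k$ by the assumption on treatment-1 recommendations, each diagonal entry $d_j=r\hat p_{c,j}$ for $j\ge 2$ is bounded below in terms of $r$ and $1/(2k)$, and the column-1 entries $\ell_j=r(1-\hat p_{c,j})$ are bounded above by $r$.

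Second, I would exploit the fact that $\hat{\mathbf{A}}$ is lower triangular with at most two nonzero entries per row, so its inverse admits a closed form: the diagonal of $\hat{\mathbf{A}}^{-1}$ is $(1/d_1,\ldots,1/d_k)$, and the only off-diagonal entries lie in column~$1$, equal to $-\ell_j/(d_1 d_j)$ for $j\ge 2$. Using $\sigma_{\min}(\hat{\mathbf{A}})=1/\|\hat{\mathbf{A}}^{-1}\|_2\ge 1/\|\hat{\mathbf{A}}^{-1}\|_F$ and plugging in the bounds from Step~1, I would bound $\|\hat{\mathbf{A}}^{-1}\|_F^2 = \sum_{j}1/d_j^2 + \sum_{j\ge 2}\ell_j^2/(d_1 d_j)^2$ term-by-term in powers of $k$, arriving at $\|\hat{\mathbf{A}}^{-1}\|_F = O(k)$ so that $\alpha = \Omega(1/k)$. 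Substituting back into Theorem~\ref{thm:confidence-interval-many-arms} yields $A(S,\delta) = O\!\left(k\sqrt{k\log(1/\delta)/n}\right)$ after the union bound over the concentration event and the event of Theorem~\ref{thm:confidence-interval-many-arms}.

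The main obstacle is bookkeeping: carefully tracking the $k$-dependencies through every factor in $\|\hat{\mathbf{A}}^{-1}\|_F$ and reconciling the implicit lower bound on $r$ (needed so that the diagonal entries $d_j = r\hat p_{c,j}$ for $j\ge 2$ are not too small) with the upper bound $r\le 1/k$ forced by the hypothesis $1-(k-1)r\ge 1/k$. Once the scaling $d_j = \Theta(1/k^2)$ and $\ell_j = \Theta(1/k)$ is accepted in the worst case, the Frobenius calculation is routine, and the required lower bound on $n$ to absorb the Chernoff slack becomes quadratic in $k$ (matching the ``large enough sample size'' wording).
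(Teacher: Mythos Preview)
Your approach is different from the paper's, and it contains a concrete quantitative gap. You claim that with $d_j=\Theta(1/k^2)$ and $\ell_j=\Theta(1/k)$ the Frobenius calculation yields $\|\hat{\mathbf{A}}^{-1}\|_F=O(k)$, but the diagonal terms alone already give $\sum_{j\ge 2}1/d_j^{2}=(k-1)\cdot\Theta(k^4)=\Theta(k^5)$, so $\|\hat{\mathbf{A}}^{-1}\|_F=\Theta(k^{5/2})$ and hence only $\alpha=\Omega(k^{-5/2})$. This is not merely Frobenius slack: for $j\ge 2$ the vector $\hat{\mathbf{A}}\,\e_j$ has the single nonzero entry $d_j$, so $\sigma_{\min}(\hat{\mathbf{A}})\le\min_{j\ge2}d_j=r\,\hat p_{c,j}=\Theta(1/k^2)$ in the worst-case regime $r\approx 1/k$, $\hat p_{c,j}\approx 1/k$ that you describe. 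No norm inequality on $\hat{\mathbf{A}}^{-1}$ can therefore deliver $\alpha=\Omega(1/k)$ under that reading of the hypotheses.

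The paper does not invert the matrix. It lower-bounds $\sigma_{\min}$ of the \emph{expected} matrix through the variational form $\min_{\|a\|=1}\sum_{j}(a_1v_{j1}+a_jv_{jj})^2$, lower-bounds the diagonal weights by $p_{\min}^{2}$, completes the square, and then uses the assumption $p_{\min}\ge 1/k$ to make the residual negative term vanish, obtaining $\sigma_{\min}^2\ge n/k^2$; a Matrix Chernoff step transfers the bound to the empirical matrix. In that argument the ``minimum compliance rate $\ge 1/k$'' hypothesis functions as a lower bound on the diagonal weights $v_{jj}$ themselves rather than on $p_c$ alone, which is a stronger reading than yours (it forces $d_j=\Omega(1/k)$, not $\Omega(1/k^2)$). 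If you wish to salvage the inverse-norm route you will need to adopt that stronger reading of the hypothesis, and even then you should bound $\|\hat{\mathbf{A}}^{-1}\|_2$ directly via its column structure rather than through the Frobenius norm, since the Frobenius relaxation still costs an extra factor of $\sqrt{k}$.
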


\begin{proof}
Note that \Cref{claim:confidence-interval-denominator-many-arms} holds in this case and it suffices to demonstrate that the $\alpha$ is bounded by $\frac{1}{k }$.\lsdelete{
Recall that by design, \dndelete{the confidence interval for \Cref{alg:sampling-many-arms} and \Cref{alg:racing-many-arms} can be analyzed separately.} the minimum compliance rate assumption applies for \Cref{alg:racing-many-arms}.}
\dndelete{{\bf Part I (Confidence Interval for \Cref{alg:sampling-many-arms})}
Observe that in \Cref{alg:sampling-many-arms}, for each exploration phase $i$, we determine the recommendation by only looking at the $\ell$ samples of treatment $1$ collected from the first phase and any samples of treatment $2 \leq j < i$ collected thus far. Hence, for a particular treatment $2 \leq j < k$, we can derive the treatment effect confidence interval in a similar fashion to the control-treatment setting in \Cref{sec:iv-estimator}. \\ 
After obtaining the estimates for all treatments $2 \leq j < k$, we have to find the treatment effect for treatment $1$. Without loss of generality, we use the average of all treatment $1$ estimates obtained previously as the  estimate of treatment $1$. Hence, the confidence interval for \Cref{alg:sampling-many-arms} is the same as the one in \Cref{sec:iv-estimator}.\\
Part II (Confidence Interval for \Cref{alg:racing-many-arms})}
We focus on the denominator $\sigma_{\min} \left(\sum_{i=1}^n z_i x_i^\intercal\right)$ of the approximation bound $A(S, \delta)$. Note that since $z_i$ and $x_i$ are one-hot encoded vectors, we have:
\begin{align*}
    \E\left[\sum_{i=1}^n z_i x_i^\intercal \right] &= \sum_{j=1}^k \sum_{i \in S_j} x_i \left( \sum_{i \in S_j} x_i  \right)^\intercal \tag{where $S_j = \{i: z_i = \e_j \}$}\\
    &= n\sum_{j=1}^k v_j v_j^\intercal \tag{where vector $v_j = (v_{j1}, 0, \cdots 0, v_{jj}, 0, \cdots 0)\in\RR^k$}
\end{align*}
where $\forall j: v_{j1} = \frac{r}{k} (1 - p_j)$ is the probability of getting a treatment 1 sample when the recommendation is $j>1$, the term $v_{11} = 1 - r$ is the probability of recommending treatment 1 (since we assume agents always comply with treatment 1 recommendations), and the term $v_{jj} = \frac{r}{k} p_j$ is the probability of getting a treatment $j$ sample when the recommendation is $j>1$. By definition, we can write the denominator term squared as:
\begin{align*}
    \sigma_{\min}\left( \E\left[  \sum_{i=1}^n z_i x_i^\intercal \right] \right)^2 &= \min_{a: \norm{a} = 1} a^\intercal \left( n \sum_{j=1}^k v_j v_j^\intercal\right) a\\
    &= n \left[\min_{a: \norm{a} = 1} (a_1 v_{11})^2 + (a_1 v_{21} + a_2 v_{22})^2 + \dots + (a_1 v_{k1} + a_k v_{kk})^2 \right].
\end{align*}
Also, without loss of generality, assume that $a_1 > 0$ and $\forall j > 1: a_j \leq 0 $.\\
Substituting the expression above with algorithm-specific variables, we have:
\begin{align*}
    &\quad (a_1 v_{11})^2 + (a_1 v_{21} + a_2 v_{22})^2 + \dots + (a_1 v_{k1} + a_k v_{kk})^2\\
    &= a_1^2 (1 - r)^2 + \frac{r^2}{k^2} a_1^2 \sum_{j=2}^k (1 - p_j)^2 + \sum_{j=2}^k a_j^2 p_j^2 + \frac{2r^2}{k^2}a_1 \sum_{j=2}^k a_j p_j(1 - p_j) \\
    &\geq a_1^2 (1 - r)^2 + \frac{r^2}{k^2} a_1^2 \sum_{j=2}^k (1 - p_j)^2 + p_{\min}^2 \sum_{j=2}^k a_j^2 + \frac{2r^2}{k^2} a_1 \sum_{j=2}^k a_j \frac{1}{4} \\
    &\geq a_1^2 (1 - r)^2 + \frac{r^2}{k^2} a_1^2 \sum_{j=2}^k (1 - p_j)^2 + p_{\min}^2 (1 - a_1^2) - \frac{r^2 a_1}{2k^2} \sqrt{(k-1)(1-a_1^2)} 
\end{align*}
where the second line is direct substitution, the third line comes from lower bounding all $p_j^2$ terms with the minimum compliance rate $p_{\min}^2$ and lower bounding $p_j(1 - p_j)$ by $1/4$. The last line comes from the fact that $\norm{a}=1$ and from applying \Cref{thm:cauchy-schwarz} on the last term.\lsdelete{Observe that in \Cref{alg:racing-many-arms}, } Since we assume that the probability of recommending treatment 1 is $1 - r \geq \frac{1}{k}$, we have: 
\begin{align*}
    &\quad \sigma_{\min}\left ( \E \left[  \sum_{i=1}^n z_i x_i^\intercal \right] \right )^2\\ &\geq n \left[ \min_{a: \norm{a} = 1} \frac{a^2}{k^2} + \frac{\left( 1 - \frac{1}{k} \right)^2}{k^2} a_1^2 \sum_{j=2}^k (1 - p_j)^2 + p_{\min}^2 (1 - a_1)^2 - \frac{\left(1 - \frac{1}{k} \right)^2 }{2k^2} a_1 \sqrt{(k-1)(1 - a_1^2)} \right]\\
    &\geq n \left[ \min_{a: \norm{a} = 1} \frac{a_1^2}{k^2} + p_{\min}^2 (1 - a_1)^2 - \frac{(k-1)^2}{2k^4} a_1 \sqrt{(k-1)(1 - a_1^2)}  \right]\\
    &\geq n \left[ \min_{a: \norm{a} = 1} \frac{a_1^2}{k^2} + p_{\min}^2 (1 - a_1)^2 - \frac{a_1 \sqrt{(k-1)(1 - a_1^2)}}{2k^2} \right] \\
    &=n \left[ \left( a_1 \sqrt{\frac{1}{k^2} - p_{\min}^2} - \frac{1}{4k^2} \sqrt{\frac{(k-1)(1 - a_1^2)}{\frac{1}{k^2} - p_{\min}^2}} \right)^2 + p_{\min}^2 - \frac{(k-1)(1 - a_1^2)}{16k^4 \left( \frac{1}{k^2} - p_{\min}^2 \right)} \right]\\
    &\geq n\left[ p_{\min}^2 - \frac{(k-1)(1 - a_1^2)}{16k^4 \left( \frac{1}{k^2} - p_{\min}^2 \right)} \right]\\
    &\geq n \left[ p_{\min}^2 - \frac{(1 - a_1^2)}{16k - 16k^3 p_{\min}^2} \right]
\end{align*}
Since we assume that the minimum compliance rate $p_{\min}\geq\frac{1}{k}$, we have:
\begin{align*}
    p_{\min}^2 \geq \frac{1}{k^2} \Rightarrow 16k - 16k^3 p_{\min}^2 \leq 0
\end{align*}
Therefore, we have $\alpha = \frac{1}{k}$ and 
\begin{align*}
     \sigma_{\min}\left ( \E \left [ \sum_{i=1}^n z_i x_i^\intercal \right] \right )^2 \geq \frac{n}{k^2}
\end{align*}

We apply \Cref{thm:matrix-chernoff} to this matrix to get that, with probability at least $1 - \delta$, for $\delta \in (0,1)$: 
\begin{align*}
    \sigma_{\min} \left(\sum_{i=1}^n z_i x_i^\intercal\right) \geq 
    \sqrt{\frac{n}{k^2} - \log(k/\delta) }
\end{align*}

Hence, we have the approximation bound $A(S, \delta)$ for \Cref{alg:racing-many-arms} is given as
\begin{equation*}
    A(S, \delta) \leq \frac{\sigma_g\sqrt{2nk\log(k/\delta)}}{\sqrt{\frac{n}{k^2} - \log(k/\delta)} \sqrt{n}} = O \left(k \sqrt{\frac{k\log(1/\delta)}{n}} \right)
\end{equation*}

\subsection{Extensions of \Cref{alg:sampling-control-treatment,alg:racing-two-types-mixed-preferences} and Recommendation Policy $\pi_c$ to $k$ Treatments}
\label{sec:general-extension-appendix}
We assume that every agent ---regardless of type--- shares the same prior ordering of the treatments, such that all agents prior expected value for treatment 1 is greater than their prior expected value for treatment 2 and so on. First, \Cref{alg:sampling-many-arms} is a generalization  of \Cref{alg:sampling-control-treatment} which serves the same purpose: to overcome complete non-compliance and incentivize some agents to comply eventually. The incentivization mechanism works the same as in \Cref{alg:sampling-control-treatment}, where we begin by allowing all agents to choose their preferred treatment ---treatment 1--- for the first $\ell$ rounds. Based on the $\ell$ samples collected from the first stage, we then define a number of events $\xi^{(u)}_j$ ---which are similar to event $\xi$ from \Cref{alg:sampling-control-treatment}--- that each treatment $j\geq2$ has the largest expected reward of any treatment and treatment 1 has the smallest, according to the prior of type $u$:
\begin{equation}
    \xi^{(u)}_i := \left( \bar{y}_{\ell}^1 + C \leq \min_{1<j<i} \bar{y}_{\ell}^j - C \ \text{ and } \ \max_{1<j<i} \bar{y}_{\ell}^j + C \leq \mu^{(u)}_i \right),
\end{equation}
where $C = \sigma_g\sqrt{\frac{2\log(3/\delta)}{\ell}} + \frac{1}{4}$ for any $\delta\in(0,1)$ and where $\bar{y}_{\ell}^1$ denotes the mean reward for treatment 1 over the $\ell$ samples of the first stage of \Cref{alg:sampling-many-arms}. Thus, if we set the exploration probability $\rho$ small enough, then some subset of agents will comply with all recommendations in the second stage of \Cref{alg:sampling-many-arms}.

% \begin{algorithm}[ht!]
%         \caption{Overcoming complete non-compliance for $k$ treatments}
%         \label{alg:sampling-many-arms}
%     \begin{algorithmic}
%     \STATE {\bfseries Input:} exploration probability $\rho\in(0,1)$, minimum number of samples of any treatment $\ell\in\mathbb{N}$ (assume w.l.o.g. $(\ell/\rho)\in\mathbb{N}$), failure probability $\delta\in(0,1)$, compliant type $u$
%   \STATE \textbf{1st stage:} The first $\ell$ agents are given no recommendation (they choose treatment 1)
%   \FOR{each treatment $i > 1$ in increasing lexicographic order}
%     \IF{$\xi^{(u)}_i$ holds, based on the $\ell$ samples from the first phase and any samples of treatment $2\leq j< i$ collected thus far} 
%         \STATE $a^*_i = i$ 
%     \ELSE
%         \STATE $a^*_i=1$
%     \ENDIF
%     \STATE From the next $\ell/\rho$ agents, pick $\ell$ agents uniformly at random to be in the explore set $E$\footnotemark
%     \FOR{the next $\ell$ rounds}
%       \IF{agent $t$ is in explore set $E$} 
%         \STATE  $z_t = 1$
%       \ELSE
%         \STATE $z_t = a^*$
%       \ENDIF
%       \ENDFOR
%     \ENDFOR
%     \end{algorithmic}
% \end{algorithm}

% \footnotetext{We set the length of each phase $i$ of the second stage to be $\ell/\rho$ so that we get $\ell$ samples of each treatment $i$ and the exploration probability is $\rho$.}

Second, \Cref{alg:racing-many-arms} is a generalization  of \Cref{alg:racing-two-types-mixed-preferences}, which is required to start with at least partial compliance and more rapidly and incentivizes more agents to comply eventually. The incentivization mechanism works the same as in \Cref{alg:sampling-control-treatment}, where we begin by allowing all agents to choose their preferred treatment ---treatment 1--- for the first $\ell$ rounds. Based on the $\ell$ samples collected from the first stage, we then define a number of events ---which are similar to event $\xi$ from \Cref{alg:sampling-control-treatment}--- that each treatment $j\geq2$ has the largest expected reward of any treatment and treatment 1 has the smallest. Thus, if we set the exploration probability $\rho$ small enough, then some subset of agents will comply with all recommendations in the second stage of \Cref{alg:sampling-many-arms}.

\end{proof} 
\dndelete{\Cref{lemma:regret-many-arms}}

\begin{corollary} (Pairwise Treatment Effect Confidence Interval for General $k$ Treatments) 
Given all assumptions in \Cref{cor:approximation-bound-general}, the pairwise approximation bound between any two particular arms $a, b$ is given as
\begin{equation*}
    \abs{(\theta^a - \theta^b) - (\hat{\theta}^a - \hat{\theta}^b)} = A^{ab}(S, \delta) \leq \sqrt{2} A(S, \delta) 
\end{equation*}
where $\hat{\theta}^a$ and $\hat{\theta}^b$ are the IV estimate for the treatment effect of arm $a$ and arm $b$, respectively.
\label{cor:approximation-bound-pairwise}
\end{corollary}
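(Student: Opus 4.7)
The plan is to reduce the pairwise bound to a direct consequence of the existing vector-valued bound in Theorem~\ref{thm:general-treatment-approximation-bound} (and its specialization in Corollary~\ref{cor:approximation-bound-general}), using Cauchy--Schwarz. The key observation is that the quantity of interest is a linear functional of the error vector $\hat\theta_S - \theta \in \RR^k$, so any bound on the $\ell_2$-norm of this vector immediately yields a bound on any such linear functional.

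Concretely, first I would rewrite
\[
(\theta^a - \theta^b) - (\hat\theta_S^a - \hat\theta_S^b) = (\theta^a - \hat\theta_S^a) - (\theta^b - \hat\theta_S^b) = \langle \theta - \hat\theta_S,\, \e_a - \e_b\rangle,
\]
where $\e_a, \e_b \in \RR^k$ are the standard basis vectors along coordinates $a$ and $b$. Next I would apply Cauchy--Schwarz (Lemma~\ref{thm:cauchy-schwarz}) to this inner product:
\[
\abs{\langle \theta - \hat\theta_S,\, \e_a - \e_b\rangle} \le \norm{\theta - \hat\theta_S}_2 \cdot \norm{\e_a - \e_b}_2.
\]
Since $a \ne b$, the vector $\e_a - \e_b$ has exactly two nonzero entries, each of squared magnitude $1$, so $\norm{\e_a - \e_b}_2 = \sqrt{2}$. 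Finally I would invoke Theorem~\ref{thm:general-treatment-approximation-bound} to conclude $\norm{\theta - \hat\theta_S}_2 \le A(S,\delta)$ with probability at least $1-\delta$, which multiplies through to give the claimed $\sqrt{2}\,A(S,\delta)$ bound.

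There is essentially no obstacle here; the result is a one-line deduction once one recognizes the pairwise gap as a fixed linear functional of the error vector. The only thing worth double-checking is that no additional union bound or failure-probability inflation is needed: because the event $\{\norm{\theta - \hat\theta_S}_2 \le A(S,\delta)\}$ simultaneously controls all linear functionals of the error vector, the same $1-\delta$ probability guarantee transfers to every choice of arms $(a,b)$ without any extra factors. Hence the pairwise approximation bound $A^{ab}(S,\delta) := \sqrt{2}\,A(S,\delta)$ holds uniformly over $a,b$ with probability at least $1-\delta$, which is exactly what downstream results such as the racing-stage compliance and regret analysis for $k$ treatments require.
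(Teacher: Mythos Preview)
Your proposal is correct and essentially the same as the paper's proof: both bound the pairwise error by $\sqrt{2}\,\norm{\hat\theta_S - \theta}_2$ via Cauchy--Schwarz and then invoke Theorem~\ref{thm:general-treatment-approximation-bound}. The only cosmetic difference is that the paper first applies the triangle inequality to split $\abs{\hat\theta^a - \theta^a} + \abs{\hat\theta^b - \theta^b}$ and then Cauchy--Schwarz on the resulting two-term sum, whereas you go directly through the inner product $\langle \theta - \hat\theta_S,\, \e_a - \e_b\rangle$; your route is a touch cleaner but otherwise identical in content.
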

\begin{proof}
We have:
\begin{align*}
    \abs{(\theta^a - \theta^b) - (\hat{\theta}^a - \hat{\theta}^b)} &= \abs{(\hat{\theta}^a - \theta^a) +  (\hat{\theta}^b - \theta^b)}\\
    &\leq \abs{\hat{\theta}^a - \theta^a} + \abs{\hat{\theta}^b - \theta^b} \tag{by Triangle Inequality}\\
    &\leq \sqrt{2} \sqrt{(\hat{\theta}^a - \theta^a)^2 + (\hat{\theta}^b - \theta^b)^2} \tag{by \Cref{thm:cauchy-schwarz}}\\
    &\leq \sqrt{2} \sqrt{\sum_{i=1}^k (\hat{\theta}^i - \theta^i)^2}\\
    &\leq \sqrt{2} A(S, \delta)
\end{align*}
This recovers the stated bound and we only pay a small constant ($\sqrt{2}$) to obtain a pairwise approximation bound from our IV estimator.
\end{proof}

\begin{theorem}
    \label{thm:racing-compliance-many-arms}
     Let $G^{vw}$ denote the gap between the causal effects of any arms $v$ and $w$, i.e. $G^{vw} := \theta^v - \theta^w$ and let $G^v$ denote the smallest gap for arm $v$, i.e. $G^v := \theta^v-\max_{w \neq v}\theta^w = \min_{w \neq v}\theta^v-\theta^w$.

    Let $A^{vw}(S,\delta)$ denote a high-probability upper bound (with probability at least $1-\delta$) on the difference between the true gap $G^{vw}$ (for causal effects $\theta^v$ and $\theta^w$ for arms $v$ and $w$) and its estimate $\widehat{G}^{vw}$ based on the sample set $S$, i.e.
    \[\left|G^{vw}-\widehat{G}^{vw}\right| = \left|\theta^{v}-\theta^w-\left(\hat{\theta}^{v}-\hat{\theta}^w\right)\right| < A^{vw}(S,\delta).\]
    
    Furthermore, let $A^v(S,\delta)$ denote a high-probability upper bound on the difference between the true minimum gap $G^v$ for arm $v$ and its empirical estimate $\widehat{G}^v$ based on sample set $S$, i.e.
    \[\left|G^{v}-\widehat{G}^{v}\right| = \left|\theta^{v}-\theta^{w_{\min}}- \left(\hat{\theta}^{v}-\hat{\theta}^{w_{\min}}\right)\right| < A^v(S,\delta),\]
    where $w_{\min}=\argmin_{w\neq v}\theta^v-\theta^w$. For shorthand, let $A^v_q$ denote the best (i.e. smallest) approximation bound $A^v(S_q^{\text{BEST}},\delta)$ by phase $q$.
    
    Recall that \Cref{alg:racing-many-arms} is initialized with samples $S_0 = (x_i,y_i,z_i)_{i=1}^{|S_0|}$. Any agent at time $t$ with type $u_t$ will comply with recommendation $z_t=\e_v$ for arm $v$ from policy $\pi_t$ according to \Cref{alg:racing-many-arms}, if the following holds for some $\tau\in(0,1)$:
    \[A^v(S_0,\delta) \leq \tau\Prob_{\pi_t,\cP^{(u_t)}}[G^v\geq\tau]/4.\]
\end{theorem}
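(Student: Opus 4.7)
The plan is to lift the proof of Lemma~\ref{lemma:bic-racing-control-treatment-0} to the $k$-arm setting by replacing the scalar treatment effect $\theta$ with the pairwise gaps $G^{vw} = \theta^v - \theta^w$. Concretely, to establish compliance with recommendation $z_t = \e_v$, it suffices (via an argument analogous to Claim~\ref{claim:bic-equiv}, but run for every alternative arm) to show
\[
\E_{\pi_t,\cP^{(u_t)}}\bigl[\theta^v - \theta^w \mid z_t=\e_v\bigr]\,\Prob_{\pi_t,\cP^{(u_t)}}[z_t=\e_v] \;\geq\; 0 \qquad \text{for every } w \neq v,
\]
and it is enough to handle the minimizing $w$, which is precisely the quantity governed by $G^v$ and $A^v$.

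First, I would define a clean event
\[
\cC = \Bigl\{\, \forall q \geq 0,\ \forall v,w:\ \bigl|\widehat{G}^{vw}_q - G^{vw}\bigr| < A^{vw}(S_q^{\text{BEST}},\delta) \Bigr\},
\]
which holds with probability at least $1-\delta$ after a union bound built on top of Corollary~\ref{cor:approximation-bound-pairwise}. On $\cC$, the elimination rule of \Cref{alg:racing-many-arms} (``remove $i$ from $B$ once $\hat\theta^*_{q-1}-\hat\theta^i_{q-1} > A_{q-1}$'') becomes predictable: an arm $i$ with $\theta^* - \theta^i > 2A_q$ must already have been pruned by phase $q$, and an arm with $\theta^* - \theta^i < 0$ cannot have been pruned.

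Next, following the case split used in Lemma~\ref{lemma:bic-racing-control-treatment-0}, I would marginalize $\E[\theta^v-\theta^{w^\star} \mid z_t=\e_v,\cC]\Prob[z_t=\e_v,\cC]$ (where $w^\star = \argmin_{w\neq v}\theta^v-\theta^w$) into four regimes for $G^v$:
\begin{enumerate}[itemsep=0mm]
\item $G^v \geq \tau$: under $\cC$ every $w\neq v$ is eliminated by phase 1, so $v$ wins the race and contributes at least $\tau\cdot\Prob[\cC, G^v\geq\tau] \geq (1-\delta)\tau\Prob[G^v\geq\tau]$.
\item $0 \leq G^v < \tau$: contributes a non-negative term, which we simply drop.
\item $-2A^v_q < G^v < 0$: $v$ may be recommended during the race; contribution is at least $-2A^v_q\cdot\Prob[z_t=\e_v,\cC,\cdot]\geq -2A^v_q$.
\item $G^v \leq -2A^v_q$: on $\cC$, the implication $G^v\leq -2A^v_q \Rightarrow \widehat{G}^v_q\leq -A^v_q$ forces $v$ to be eliminated by phase 1, so the probability $\Prob[z_t=\e_v,\cC,G^v\leq -2A^v_q] = 0$.
\end{enumerate}
Accounting for the $\neg\cC$ contribution (bounded by $2\delta$ since gaps lie in $[-2,2]$), summing the regimes, and using $A^v_q \leq A^v(S_0,\delta)\leq \tau\Prob[G^v\geq\tau]/4$ together with the implicit smallness of $\delta$ (as in the binary proof, $\delta \lesssim \tau\Prob[G^v\geq\tau]$), the total expected gap becomes non-negative. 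This establishes compliance with $z_t=\e_v$.

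The main obstacle is bookkeeping: unlike the binary setting, the event $\cC$ must simultaneously control all $O(k^2)$ pairwise gap estimates across all phases, and the case analysis for $G^v$ must be carried out relative to the minimizing alternative $w^\star$ while the algorithm's elimination decisions are driven by a \emph{different} pairwise comparison (against the running $\hat\theta^*$). Ensuring that the ``$v$ gets eliminated by phase~1'' argument in regime~(4) is valid requires translating the minimum-gap condition $G^v\leq -2A^v_q$ into a guarantee about the specific pair $(v,\argmax_w\hat\theta^w)$ via the clean event, which is the most delicate step and where the pairwise bound of Corollary~\ref{cor:approximation-bound-pairwise} does the essential work.
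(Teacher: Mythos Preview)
Your proposal is correct and follows essentially the same route as the paper: reduce to showing $\E_{\pi_t,\cP^{(u_t)}}[G^v \mid z_t=\e_v]\Prob[z_t=\e_v]\geq 0$, introduce a clean event controlling the gap estimates across phases, marginalize into the same four regimes for $G^v$, and close using $A^v_q \leq A^v(S_0,\delta) \leq \tau\Prob[G^v\geq\tau]/4$ together with the smallness of $\delta$. One minor simplification the paper makes that you do not: its clean event tracks only the single gap $G^v$ (via $A^v_q$), not all $O(k^2)$ pairwise gaps, so the union-bound bookkeeping you flag as an obstacle does not arise; the regime-(4) elimination argument then goes through directly because $\widehat{G}^v$ is defined (per the theorem statement) relative to the \emph{true} maximizing alternative $w_{\min}$, so $G^v\leq -2A^v_q$ and $\cC$ immediately yield $\hat\theta^{w_{\min}}-\hat\theta^v \geq A^v_q$, forcing $v$ out of $B$.
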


\begin{proof} Any agent at time $t$ with type $u_t$ will comply with an arm $v$ recommendation $z_t=\e_v$ from policy $\pi_t$ following \Cref{alg:racing-many-arms}, if the following holds: For any two treatments $v,w \in B$,
\begin{align*}
    \E_{\pi_t,\cP^{(u_t)}}[\theta^v - \theta^w | z_t=\e_v]\Prob_{\pi_t,\cP^{(u_t)}}[z_t=\e_v] \geq 0.
\end{align*}
We will prove a stronger statement:
\begin{align*}
    \E_{\pi_t,\cP^{(u_t)}}[\theta^v - \max_{w \neq v} \theta^w| z_t=\e_v]\Prob_{\pi_t,\cP^{(u_t)}}[z_t=\e_v] \geq 0.
\end{align*}

We can prove this in largely the same way as we proved \Cref{lemma:bic-racing-control-treatment-0} in \Cref{sec:bic-racing-type-0}: we simply replace $\theta$ and $A^v_q$ in the proof for \Cref{lemma:bic-racing-control-treatment-0} with $G^v$ and $A^v_q$, respectively.

The clean event $C$ is given as:
\[\cC:=\left(\forall q\geq0: |G^v-\widehat{G}^v|< A^v_q\right).\] By \Cref{cor:approximation-bound-general}, for event $\cC$, the failure probability $\Prob[\neg\cC]\leq\delta.$ We assume that
\[\delta\leq\frac{\tau\Prob_{\pi_t,\cP^{(u_t)}}[G^v\geq\tau]}{2\tau\Prob_{\pi_t,\cP^{(u_t)}}[G^v\geq\tau]+2}.\]

First, we marginalize  $\E_{\pi_t,\cP^{(u_t)}}[\theta^v - \max_{w \neq v} \theta^w| z_t=\e_v]\Prob_{\pi_t,\cP^{(u_t)}}[z_t=\e_v]$ based on the clean event $\cC$, such that 
\begin{align*}
    &\quad\E_{\pi_t,\cP^{(u_t)}}[G^v| z_t=\e_v]\Prob_{\pi_t,\cP^{(u_t)}}[z_t=\e_v]\\
    &= \E_{\pi_t,\cP^{(u_t)}}[G^v| z_t=\e_v,\cC]\Prob_{\pi_t,\cP^{(u_t)}}[z_t=\e_v,\cC] + \E_{\pi_t,\cP^{(u_t)}}[G^v| z_t=\e_v,\neg\cC]\Prob_{\pi_t,\cP^{(u_t)}}[z_t=\e_v,\neg\cC]\\
    &\geq \E_{\pi_t,\cP^{(u_t)}}[G^v| z_t=\e_v,\cC]\Prob_{\pi_t,\cP^{(u_t)}}[z_t=\e_v,\cC] - \delta\\
    &\geq \E_{\pi_t,\cP^{(u_t)}}[G^v| z_t=\e_v,\cC]\Prob_{\pi_t,\cP^{(u_t)}}[z_t=\e_v,\cC] - \delta\\
    &\geq \E_{\pi_t,\cP^{(u_t)}}[G^v| z_t=\e_v,\cC]\Prob_{\pi_t,\cP^{(u_t)}}[z_t=\e_v,\cC] - \frac{\tau\Prob_{\pi_t,\cP^{(u_t)}}[G^v\geq\tau]}{2\tau\Prob_{\pi_t,\cP^{(u_t)}}[G^v\geq\tau]+2}.
\end{align*}

Next, we marginalize $\E_{\cP^{(u_t)},\pi_t}[G^v|z_t=\e_v,\cC]\Prob_{\cP^{(u_t)},\pi_t}[z_t=\e_v,\cC]$ based on four possible ranges which $G^v$ lies on:
\begin{equation} 
    \begin{split}
    &\quad\E_{\cP^{(u_t)},\pi_t}[G^v|z_t=\e_v,\cC]\Prob_{\cP^{(u_t)},\pi_t}[z_t=\e_v,\cC\\
    &=
    \E_{\pi_t,\cP^{(u_t)}}[G^v|z_t=\e_v, \cC, G^v\geq \tau]\Prob_{\pi_t,\cP^{(u_t)}}[z_t=\e_v, \cC, G^v\geq \tau] \\
    &\ + \E_{\pi_t,\cP^{(u_t)}}[G^v|z_t=\e_v, \cC, 0 \leq G^v < \tau]\Prob_{\pi_t,\cP^{(u_t)}}[z_t=\e_v, \cC, 0 \leq G^v < \tau] \\
    &\ + \E_{\pi_t,\cP^{(u_t)}}[G^v|z_t=\e_v, \cC, -2A^v_q < G^v < 0]\Prob_{\pi_t,\cP^{(u_t)}}[z_t=\e_v, \cC, -2A^v_q < G^v < 0] \\
    &\ + \E_{\pi_t,\cP^{(u_t)}}[G^v|z_t=\e_v, \cC, G^v \leq -2A^v_q]\Prob_{\pi_t,\cP^{(u_t)}}[z_t=\e_v, \cC, G^v\leq -2A^v_q] \label{eq:racing-bic-cases-many-arms}
    \end{split}
\end{equation}

Because $A^v_q$ is the smallest approximation bound derived from samples collected over any phase $q$ of \Cref{alg:racing-many-arms} (including the initial sample set $S_0$), the following holds:
\begin{align*}
    2A^v_q &\leq 2A^v(S_0,\delta)\\
    &\leq \frac{\tau\Prob_{\pi_t,\cP^{(u_t)}}[G^v\geq\tau]}{2}\tag{by assumption $A^v(S_0,\delta)\leq\tau\Prob_{\pi_t,\cP^{(u_t)}}[G^v\geq\tau]/4$}\\
    &\leq \tau
\end{align*}

Conditional on $\cC$, $|G^v-\widehat{G}^v_q|<A^v_q$. Thus, if $G^v\geq\tau\geq2A^v_q$, then $\widehat{G}^v_q\geq\tau-A^v_q\geq A^v_q$, which invokes the stopping criterion for the while loop in \Cref{alg:racing-many-arms}. Thus, all other arms must have been eliminated from the race before phase $q=1$ and arm $v$ is recommended almost surely throughout \Cref{alg:racing-many-arms}, i.e. $\Prob_{\pi_t,\cP^{(u_t)}}[z_t=\e_v, \cC, G^v \geq \tau] = \Prob_{\pi_t,\cP^{(u_t)}}[\cC, G^v\geq \tau]$. Similarly, if $G^v\leq-2A^v_q$, then $\widehat{G}^v_q\leq-A^v_q$ by phase $q=1$ and arm $v$ is recommended almost never, i.e. $\Prob_{\pi_t,\cP^{(u_t)}}[z_t=\e_v, \cC, G^v<-2A^v_q] = 0$. Substituting in these probabilities (and substituting minimum possible expected values), we proceed:
\begin{align*} 
    &\quad\E_{\cP^{(u_t)},\pi_t}[G^v|z_t=\e_v,\cC]\Prob_{\cP^{(u_t)},\pi_t}[z_t=\e_v,\cC]\\
    &\geq
   \tau\Prob_{\pi_t,\cP^{(u_t)}}[\cC, G^v\geq \tau]-2A^v_q\Prob_{\pi_t,\cP^{(u_t)}}[z_t=\e_v, \cC, -2A^v_q < G^v < 0] \\
   &\geq
   \tau\Prob_{\pi_t,\cP^{(u_t)}}[\cC, G^v\geq \tau]-\frac{\tau\Prob_{\pi_t,\cP^{(u_t)}}[G^v\geq\tau]}{2}\\
   &\geq
   \tau\Prob_{\pi_t,\cP^{(u_t)}}[\cC|G^v\geq \tau]\Prob_{\pi_t,\cP^{(u_t)}}[G^v\geq \tau]-\frac{\tau\Prob_{\pi_t,\cP^{(u_t)}}[G^v\geq\tau]}{2}\\
   &\geq
   (1-\delta)\tau\Prob_{\pi_t,\cP^{(u_t)}}[G^v\geq \tau]-\frac{\tau\Prob_{\pi_t,\cP^{(u_t)}}[G^v\geq\tau]}{2}\\
   &\geq
   \left(\frac{1}{2}-\frac{\tau\Prob_{\pi_t,\cP^{(u_t)}}[G^v\geq\tau]}{2\tau\Prob_{\pi_t,\cP^{(u_t)}}[G^v\geq\tau]+2}\right)\tau\Prob_{\pi_t,\cP^{(u_t)}}[G^v\geq \tau]\\
    &= \frac{\tau\Prob_{\pi_t,\cP^{(u_t)}}[G^v\geq\tau]}{2\tau\Prob_{\pi_t,\cP^{(u_t)}}[G^v\geq\tau]+2}.
\end{align*}

Putting everything together, we get that
\begin{align*}
   &\quad\E_{\cP^{(u_t)},\pi_t}[G^v|z_t=\e_v]\Prob_{\cP^{(u_t)},\pi_t}[z_t=\e_v]\\
    &= \E_{\cP^{(u_t)},\pi_t}[G^v|z_t=\e_v,\cC]\Prob_{\cP^{(u_t)},\pi_t}[z_t=\e_v,\cC] + \E_{\cP^{(u_t)},\pi_t}[G^v|z_t=\e_v,\neg\cC]\Prob_{\cP^{(u_t)},\pi_t}[z_t=\e_v,\neg\cC]\\
    &\geq \frac{\tau\Prob_{\pi_t,\cP^{(u_t)}}[G^v\geq\tau]}{2\tau\Prob_{\pi_t,\cP^{(u_t)}}[G^v\geq\tau]+2} -\frac{\tau\Prob_{\pi_t,\cP^{(u_t)}}[G^v\geq\tau]}{2\tau\Prob_{\pi_t,\cP^{(u_t)}}[G^v\geq\tau]+2}\\
    &= 0.
\end{align*}

Thus, as long as $A^v(S_0,\delta) \leq \tau\Prob_{\pi,\cP^{(u)}}[G^v\geq\tau]/4$, any agent of type $u$ will comply with a recommendation of arm $v$ from recommendation policy $\pi$ according to \Cref{alg:racing-many-arms}.
\end{proof}

\lsdelete{
\begin{remark}
    Let $S$ be a sample set which produces pairwise approximation bounds between arm 1 and any two arms $a,b>1$, i.e. $A^{a1}(S,\delta)+A^{b1}(S,\delta)$. These arm 1 pairwise bounds can be used as an approximation bound between arms $a$ and $b$:
    \begin{align*}
    |\theta^a-\theta^b - (\hat{\theta}^a - \hat{\theta}^b)| 
    &= |\theta^a-\theta^1 -\theta^b+\theta^1 - (\hat{\theta}^a -\hat{\theta}^1 - \hat{\theta}^b + \hat{\theta}^1)|\\
    &\leq |\theta^a-\theta^1 - (\hat{\theta}^a -\hat{\theta}^1)| + |\hat{\theta}^b - \hat{\theta}^1 - (\theta^b-\theta^1)| \tag{by \Cref{thm:cauchy-schwarz}}\\
    &= |\theta^a-\theta^1 - (\widehat{\theta^a-\theta^1})| + |\theta^b-\theta^1 - (\widehat{\theta^b-\theta^1})|\\
    &\leq A^{a1}(S,\delta) + A^{b1}(S,\delta) \tag{by \Cref{cor:approximation-bound-pairwise}}
\end{align*}
\end{remark}
}

Finally, we present the (expected) regret from the $k$ treatment extension of policy $\pi_c$ given in \Cref{def:policy-pi-many-arms}.

\regretmanyarms*
\begin{proof} Let $\theta^*$ be the best treatment effect overall and the gap between $\theta^*$ and any treatment effect $\theta^i$ be $\Delta_i = \abs{\theta^* - \theta_i}$.
Recall that the clean event $\cC$ entails that the approximation bound holds for all rounds. If event $\cC$ fails, then we can only bound the pseudo-regret by the maximum value, which is at most $T \min_i \Delta_i$. \\
For the rest of this proof, assume that the event $\cC$ holds for every round of \Cref{alg:racing-many-arms}. This proof follows the standard technique from \cite{active-arms-elimination-2006}. Since $\cC$ holds, we have $\Delta_i \leq A(S_{L_2}, \delta) + \abs{\hat{\theta^*} - \hat{\theta^i}}$ for any treatment $i$, where $A(S_{L_2}, \delta)$ is the approximation bound based on $S_{L_2}$ samples of \Cref{alg:racing-many-arms}. Before the stopping criteria is invoked, we also have $\abs{\hat{\theta^*} - \hat{\theta^i}} \leq A(S_{L_2, \delta})$. Hence, the gap between the best treatment effect and any other treatment effect is:
\begin{align*}
    \Delta_i \leq 2 A(S_{L_2}, \delta) \leq  \frac{2\sigma_g\sqrt{2k\log(2kT/\delta)}}{\sqrt{\frac{L_2}{k^2} - \log(k/\delta) }},
\end{align*}
where $\sigma_g$ is the variance parameter for the baseline reward $g^{(u)}$ and $\alpha_2$ is defined as in \Cref{claim:confidence-interval-denominator-many-arms} relative to the proportion $r_2=1/|B|$ of recommendations for each treatment during \Cref{alg:racing-many-arms} and the proportion of compliant agents $p_{c_2}$. Hence, we must have eliminated treatment $i$ by round \[L_2=\frac{8k\sigma_g^2\log(2kT/\delta)}{\Delta_i^2 \left(\frac{1}{k^2} - \log(k/\delta) \right)},\]
assuming that $L_2\geq\frac{p_c^2}{k\log(k/\delta)}$ (in order to satisfy the criterion for \Cref{thm:confidence-interval-many-arms}).
During \Cref{alg:racing-many-arms}, the social planner gives out $|B|$ recommendations for each treatment $i \in B$ sequentially. Hence, the contribution of each treatment $i$ for each phase is $\Delta$. Conditioned on event $\cC$, the treatment $a^*$ at the end of \Cref{alg:racing-many-arms} is the best treatment overall; so, no more regret is collected after \Cref{alg:racing-many-arms} is finished. 

If treatment 1 is not the winner, then we accumulate $R_1(T)=\Delta_i\left((1-k\rho)L_1+L_2/k\right)$ regret for treatment 1. If some other treatment $i>1$ is not the winner, then we accumulate $R_i(T)=\Delta_i\left(\rho L_1+L_2/k\right)$ regret for treatment $i$. Hence, the total regret accumulated in \Cref{alg:racing-many-arms} is:
\begin{align*}
    R(T) \leq \Delta_i\left((1-\rho)L_1+\left(\frac{k-1}{k}\right)L_2\right)
    \leq (1-\rho)L_1\Delta_i +\frac{8(k-1)\sigma_g^2\log(2kT/\delta)}{\Delta_i\left(\frac{1}{k^2} - \log(k/\delta) \right)}
\end{align*}
Observe that the pseudo-regret of the combined recommendation policy is at most that of \Cref{alg:racing-many-arms} plus $\Delta = \min_i \Delta_i$ per each round of \Cref{alg:sampling-many-arms}. Alternatively, we can also upper bound the regret by $\Delta$ per each round of the combined recommendation policy. Following the same argument as \Cref{lemma:expost-regret}, we can derive the pseudo-regret of the policy $\pi_c$ for $k$ treatments: 
\begin{align*}
    R(T) \leq \min\left(L_1(1-\rho)\Delta_i +\frac{8(k-1)\sigma_g^2\log(2kT/\delta)}{\Delta_i\left(\frac{1}{k^2} - \log \left(k/\delta\right) \right)}, T\Delta \right) \leq L_1 + O(k\sqrt{kT\log(kT/\delta)}).
\end{align*}

For the expected regret, we can set the parameters $\delta$ and $L_1$ in terms of the time horizon $T$, in order to both guarantee compliance throughout policy $\pi_c$ and to obtain sublinear expected regret bound relative to $T$. 

First, we must guarantee that the failure probability $\delta$ in \Cref{alg:racing-many-arms} is small, i.e. $\delta = 1/T^2$. To meet our compliance condition for \Cref{alg:racing-many-arms}, we must set 
\begin{align*}
    \delta \leq \frac{\tau \Prob_{\cP^{(u)}}[\theta \geq \tau]}{2(\tau \Prob_{\cP^{(u)}}[\theta \geq \tau] + 1)}
\end{align*}
for some constant $\tau \in (0,1)$. Hence, we can set $T$ sufficiently large such that, for any $\delta in (0,1)$, we have 
\begin{align*}
    T \geq \frac{1}{\sqrt{\delta}} \geq \sqrt{\frac{2(\tau \Prob_{\cP^{(u)}}[\theta \geq \tau] + 1)}{\tau \Prob_{\cP^{(u)}}[\theta \geq \tau]}}
\end{align*}
We also recall that the length $L_1$ of \Cref{alg:sampling-many-arms} needs to be sufficiently large so that $p_{c_2}$ fraction of agents comply in \Cref{alg:racing-many-arms}. Moreover, we accumulate linear regret in each round of \Cref{alg:sampling-many-arms}. Hence, in order to guarantee sublinear regret, we also require that $T$ satisfies the following:
\begin{align*}
    T \geq L_1^2 = (\ell+\ell/\rho)^2
\end{align*}

Finally, recall that the clean event $\cC$ in \Cref{alg:racing-many-arms} holds with probability at least $1 - \delta$ for any $\delta \in (0,1)$. Conditioned on the failure event $\neg \cC$, policy $\pi_c$ accumulates at most linear pseudo-regret in terms of $T$. Thus, in expectation, it accumulates at most $T \max_{i, j} \abs{\theta^i - \theta^j} \delta$ regret
 
Therefore, we can derive the expected regret of $k$ treatment recommendation policy $\pi_c$ as:
\begin{align*}
    \E_{\cP^{(u)}}[R(T)] &= \E[R(T) | \neg \cC] \Prob_{\cP^{(u)}}[\neg \cC] + \E[R(T)|\cC]\Prob_{\cP^{(u)}}[\cC]\\
    &\leq T\delta + (L_1 + O(\sqrt{kT\log(kT/\delta)}))\\
    &= \frac{1}{T} + (\sqrt{T} + O(k\sqrt{kT \log(kT)}))\\
    &= \frac{1}{T} + O(k\sqrt{kT\log(kT)})\\
    &= O(k\sqrt{kT\log(kT)})
\end{align*}
Therefore, assuming that all hyperparameters $\delta, L_1$ are set to incentivize compliance for some nonzero proportion of agents throughout $\pi_c$ and assuming that $T$ is sufficiently large so as to satisfy the conditions above, policy $\pi_c$ (for $k$ treatments) achieves sublinear regret.
\end{proof}

\section{Experiments Omitted from \Cref{sec:experiment}}

In this section, we present additional experiments to evaluate \Cref{alg:sampling-control-treatment} and \Cref{alg:racing-two-types-mixed-preferences}, which were previously omitted from \Cref{sec:experiment}. Our code is available here: \url{https://github.com/DanielNgo207/Incentivizing-Compliance-with-Algorithmic-Instruments}. We are interested in (1) the effect of different prior choices on the exploration probability $\rho$, (2) comparing the approximation bound in \Cref{alg:sampling-control-treatment} to that of \Cref{alg:racing-two-types-mixed-preferences} and (3) the total regret accumulated by the combined recommendation policy. Firstly, we observed that the exploration probability $\rho$ in \Cref{fig:plot} is small, leading to slow improvement in accuracy of \Cref{alg:sampling-control-treatment}. Since $\rho$ depends on the event $\xi$ (as defined in \Cref{eq:rho}), we want to investigate whether changes in the agents' priors would increase the exploration probability. Secondly, we claimed earlier in the paper that the estimation accuracy increases much quicker in \Cref{alg:racing-two-types-mixed-preferences} compared to \Cref{alg:sampling-control-treatment}. This improvement motivates the social planner to move to \Cref{alg:racing-two-types-mixed-preferences}, granted there is a large enough portion of agents that comply with the recommendations. Finally, while we provide a regret guarantee in \Cref{lemma:expost-regret}, it is not immediately clear how the magnitude of \Cref{alg:sampling-control-treatment} length $L_1$ would affect the overall regret. There is a tradeoff: if we run \Cref{alg:sampling-control-treatment} for a small number of rounds, then it would not affect the regret by a significant amount, but a portion of the agents in \Cref{alg:racing-two-types-mixed-preferences} may not comply. For our combined recommendation policy, we run \Cref{alg:sampling-control-treatment} until it is guaranteed that type 0 agents will comply in \Cref{alg:racing-two-types-mixed-preferences}.

\paragraph{Experimental Description}
For \Cref{alg:sampling-control-treatment}, we consider a setting with two types of agents: type 0 who are initially never-takers, and type 1 who are initially always takers. 
For \Cref{alg:racing-two-types-mixed-preferences}, we consider a setting with two types of agents: type 0 who are compliant, and type 1 who are initially always-takers. We let each agent's prior on the treatment be a truncated Gaussian distribution between $-1$ and $1$. The noisy baseline reward $g_t^{(u_t)}$ for each type $u$ of agents is drawn from a Gaussian distribution $\cN(\mu_{g^{(u)}}, 1)$, with its mean $\mu_{g^{(u)}}$ also drawn from a Gaussian prior. We let each type of agents have equal proportion in the population, i.e. $p_0 = p_1 = 0.5$. 

For the first experiment, we are interested in finding the correlation between the exploration probability $\rho$ and different prior parameters, namely the difference between mean baseline rewards $\mu_{g^{(1)}} - \mu_{g^{(0)}}$ and the variance of Gaussian prior on the treatment effect $\theta$. Similar to the experiment in \Cref{sec:experiment}, we use Monte Carlo simulation by running the first stage of \Cref{alg:sampling-control-treatment} with varying choices of the two prior parameters above. From these initial samples, we calculate the probability of event $\xi$, and subsequently the exploration probability $\rho$. For the second experiment, we are interested in finding when agents of type 1 also comply with the recommendations. This shift in compliance depends on a constant $\tau$ (as defined in \Cref{lemma:bic-racing-control-treatment-0}). We find two values of the constant $\tau$ that minimizes the number of samples needed to guarantee that agents of type 0 and type 1 are compliant in \Cref{alg:racing-two-types-mixed-preferences} (as defined in \Cref{lemma:racing-compliance-sampling-ell-control-treatment}). After this, \Cref{alg:racing-two-types-mixed-preferences} is run for increasing number of rounds. Similar to the \Cref{alg:sampling-control-treatment} experiment, we repeated calculate the IV estimate of the treatment effect and compare it to the naive OLS estimate over the same samples as a benchmark. On a separate attempt, we evaluate the combined recommendation policy by running \Cref{alg:sampling-control-treatment} and \Cref{alg:racing-two-types-mixed-preferences} successively using the priors above. We calculate the accumulated regret of this combined policy using the pseudo-regret notion (as defined in \Cref{def:pseudo-regret}).
\def \hfillx {\hspace*{-\textwidth} \hfill}

\begin{table}[ht]
        \begin{minipage}{0.475\textwidth}
            \centering
            \begin{tabular}[t]{|c|c|}
\hline
Expected gap & Exploration \\ $\E[\mu_{g^{(1)}}-\mu_{g^{(0)}}]$ & probability $\rho$ \\ \hline
-0.5 & 0.004480 \\ \hline
-0.4 & 0.004975 \\ \hline
-0.3 & 0.007936 \\ \hline
-0.2 & 0.003984 \\ \hline
-0.1 & 0.003488 \\ \hline
0.1  & 0.003984 \\ \hline
0.2  & 0.004480 \\ \hline
0.3  & 0.003488 \\ \hline
0.4  & 0.004480 \\ \hline
0.5  & 0.003488 \\ \hline
\end{tabular}
            \caption{Upper bounds on exploration probability $\rho$ to incentivize partial compliance with respect to different gaps $\E[\mu_{g^{(1)}}-\mu_{g^{(0)}}]$}
            \label{table:rho-gap}
        \end{minipage}
        \hfillx
        \begin{minipage}{0.5\textwidth}
            \centering
            \begin{tabular}[t]{|c|c|}
\hline
Variance in prior & Exploration \\ over treatment effect & probability $\rho$  \\ \hline
0.1 & 0.002561 \\ \hline
0.2 & 0.003112 \\ \hline
0.3 & 0.002561 \\ \hline
0.4 & 0.002561 \\ \hline
0.5 & 0.003982 \\ \hline
0.6 & 0.004643 \\ \hline
0.7 & 0.005422 \\ \hline
0.8 & 0.002790 \\ \hline
0.9 & 0.001389 \\ \hline
1   & 0.003488 \\ \hline
\end{tabular}
            \caption{Upper bounds on exploration probability $\rho$ to incentivize partial compliance with respect to different variances in the prior over treatment effect $\theta$}
            \label{table:rho-var}
        \end{minipage}
    \end{table}
    
% \begin{table}[ht]
% \begin{tabular}{|c|c|}
% \hline
% Gap between $\E[\mu_{g_1}]$ and $\E[\mu_{g_0}]$   & $\rho$      \\ \hline
% -0.5 & 0.004480 \\ \hline
% -0.4 & 0.004975 \\ \hline
% -0.3 & 0.007936 \\ \hline
% -0.2 & 0.003984 \\ \hline
% -0.1 & 0.003488 \\ \hline
% 0.1  & 0.003984 \\ \hline
% 0.2  & 0.004480 \\ \hline
% 0.3  & 0.003488 \\ \hline
% 0.4  & 0.004480 \\ \hline
% 0.5  & 0.003488 \\ \hline
% \end{tabular}
% \end{table}

% \begin{table}[ht]
% \begin{tabular}{|c|c|}
% \hline
% Variance in prior over treatment effect & $\rho$       \\ \hline
% 0.1 & 0.002561 \\ \hline
% 0.2 & 0.003112 \\ \hline
% 0.3 & 0.002561 \\ \hline
% 0.4 & 0.002561 \\ \hline
% 0.5 & 0.003982 \\ \hline
% 0.6 & 0.004643 \\ \hline
% 0.7 & 0.005422 \\ \hline
% 0.8 & 0.002790 \\ \hline
% 0.9 & 0.001389 \\ \hline
% 1   & 0.003488 \\ \hline
% \end{tabular}
% \end{table}

\paragraph{Results}
In \Cref{table:rho-gap} and \Cref{table:rho-var}, we calculate the exploration probability $\rho$ with different initialization of the agents' priors. In \Cref{table:rho-gap}, we let the mean baseline reward of type 1  $\mu_{g^{(1)}}$ be drawn from $\cN(0.5, 1)$ and the mean baseline reward of type 0 $\mu_{g^{(0)}}$ be drawn from $\cN(c, 1)$ with $c \in [0, 1]$. The gap between these priors is defined as $\E[\mu_{g^{(1)}}] - \E[\mu_{g^{(0)}}]$. We observe that the exploration probability does not change monotonically with increasing gap between mean baseline reward. In \Cref{table:rho-var}, we calculate the exploration probability $\rho$ with different variance in prior over treatment effect $\theta$. Similarly, in \Cref{table:rho-gap}, we observe that the exploration probability $\rho$ does not change monotonically with increasing variance in prior over $\theta$. In both tables, $\rho$ value lies between $[0.001, 0.008]$, which implies infrequent exploration by \Cref{alg:sampling-control-treatment}. This slow rate of exploration is also reflected in \Cref{fig:plot}, which motivates the social planner to transition to \Cref{alg:racing-two-types-mixed-preferences}.

\begin{figure}[ht]
\includegraphics[width=.75\linewidth]{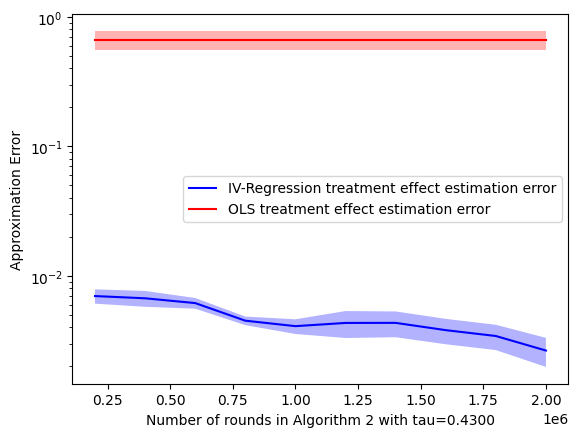}
\centering
\vspace{-6mm}
\caption{Approximation bound using IV regression and OLS during
\Cref{alg:racing-two-types-mixed-preferences} with $\tau = 0.43$. The $y$-axis uses a log scale. Results are averaged over 5 runs; error bars represent one standard error.}
\label{fig:racing-alone}
\end{figure}

In \Cref{fig:racing-alone}, we compare the approximation bound on $\abs{\theta - \hat{\theta}}$ between the IV estimate $\hat{\theta}$ and the naive estimate for \Cref{alg:racing-two-types-mixed-preferences}. In our experiments, the constant $\tau$ generally lies within $[0.4, 0.6]$. Similar to the experiment in \Cref{sec:experiment}, we let the hidden treatment effect $\theta = 0.5$, type 0 and type 1 agents' priors on the treatment effect be $\cN(-0.5, 1)$ and $\cN(0.9, 1)$ --- each truncated into $[-1,1]$ --- respectively. We also let the mean baseline reward for type 0 and type 1 agents be $\mu_{g^{(0)}} \sim \cN(0,1)$ and $\mu_{g^{(1)}} \sim \cN(0.1,1)$, respectively. With these priors, we have found a suitable value of $\tau = 0.43$ for \Cref{alg:racing-two-types-mixed-preferences}. Instead of using the theoretical bound on $\ell$ in \Cref{lemma:racing-compliance-sampling-ell-control-treatment}, we compare the approximation bound $\abs{\theta - \hat{\theta}}$ with the conditions in \Cref{lemma:bic-racing-control-treatment-0}. 
In \Cref{fig:racing-alone}, the IV estimate consistently outperform the naive estimate for any number of rounds. Furthermore, we observe that the scale of the IV estimate approximation bound in \Cref{fig:racing-alone} is much smaller than that of \Cref{fig:plot}. This difference shows the improvement of \Cref{alg:racing-two-types-mixed-preferences} over \Cref{alg:sampling-control-treatment} on estimating the treatment effect $\theta$. It takes \Cref{alg:racing-two-types-mixed-preferences} a small number of rounds to get a better estimate than \Cref{alg:sampling-control-treatment} due to the small exploration probability $\rho$.

\end{document}